\documentclass[11pt,onecolumn,letterpaper]{article}
\ifdefined\pdfminorversion\pdfminorversion=7\fi

\usepackage[textwidth=6.75in,textheight=9.25in]{geometry}
\usepackage[T1]{fontenc}
\usepackage{lmodern}

\usepackage{amsmath,amssymb,amsthm,mathtools}
\usepackage[dvipsnames,svgnames]{xcolor}

\usepackage[round,sort&compress]{natbib}

\usepackage{microtype}
\usepackage{enumitem}
\usepackage{graphicx}
\graphicspath{{./}{figures/}}
\usepackage{float}
\usepackage{booktabs}
\usepackage{longtable}
\usepackage{array}
\usepackage[hypertexnames=false]{hyperref}
\usepackage[font=small,labelfont=bf,skip=6pt]{caption}
\usepackage{placeins}
\usepackage{wrapfig}

\allowdisplaybreaks
\newtheorem{theorem}{Theorem}
\newtheorem{lemma}[theorem]{Lemma}
\newtheorem{proposition}[theorem]{Proposition}
\newtheorem{corollary}[theorem]{Corollary}

\theoremstyle{definition}
\newtheorem{definition}[theorem]{Definition}
\newtheorem{assumption}{Assumption}

\theoremstyle{remark}

\numberwithin{equation}{section}

\usepackage[tight,k-tight]{minitoc}

\mtcsetdepth{parttoc}{3}

\mtcsetfeature{parttoc}{before}{}
\mtcsetfeature{parttoc}{pagestyle}{}
\mtcsetfeature{parttoc}{after}{}

\usepackage{lmodern}
\usepackage{microtype}
\usepackage[dvipsnames,svgnames]{xcolor}
\usepackage[most]{tcolorbox}
\usepackage{hyperref}
 
\colorlet{inkblue}{RoyalBlue!80!black}
\definecolor{softgray}{RGB}{95,95,95}
 
\hypersetup{colorlinks=true, 
            linkcolor=inkblue,     
            citecolor=Peach,
            urlcolor=inkblue}
 
\newcommand{\papertitle}{From Global to Tail Convergence: \\
Dynamics of $\alpha$-Stable Score-Based Generative Models}
\newcommand{\papersubtitle}{}
\newcommand{\paperdate}{October 2026}
 
\newcommand{\aff}[1]{\textsuperscript{#1}}
 
\newcommand{\paperabstract}{
Score-based generative models driven by $\alpha$-stable Lévy processes have been proposed as an alternative to Brownian diffusion models, with empirical evidence suggesting improved sampling and better modeling of heavy-tailed data. Yet theoretical guarantees for these models remain limited. We develop a non-asymptotic convergence analysis for scale-preserving $\alpha$-stable generative dynamics, covering both compactly supported and power-law targets. In $p$-Wasserstein distance, we quantify the effects of early stopping, initialization, score approximation, and time discretization, and show exponential decay of the initialization error. A comparison with Brownian dynamics reveals that global Wasserstein convergence alone does not fundamentally distinguish the two models: both initialization errors decay exponentially with the forward horizon on the common class of heavy-tailed targets. A sharp separation emerges instead in the tails. At any positive generation time, stable dynamics produces power-law tails with index $\alpha$, whereas Brownian dynamics remains light-tailed. For a target with tail index $\nu$, asymptotically correct extreme-quantile scaling is possible precisely when $\alpha=\nu$. These results identify tail-index matching as a principled mechanism behind the ability of stable generative models to capture extreme events.
}
\newcommand{\paperkeywords}{
Score-based generative models $\cdot$ 
$\alpha$-stable Lévy processes $\cdot$ 
Heavy-tailed distributions $\cdot $
Non-asymptotic convergence $\cdot $
Wasserstein bound $\cdot$ 
Tail-index matching
}
 
\newcommand{\makepapertitle}{\thispagestyle{empty}\begingroup
  \noindent{\color{inkblue}\rule{\linewidth}{1.4pt}}\par
  \vspace{5pt}
  \noindent{\sffamily\footnotesize\color{softgray}PREPRINT\hfill\paperdate}\par
  \vspace{22pt}
  \noindent{\fontsize{23}{28}\selectfont\bfseries\color{inkblue}\papertitle\par}
  \vspace{7pt}
  \noindent{\fontsize{15.5}{19}\selectfont\itshape\color{softgray}\papersubtitle\par}
  \vspace{20pt}
  \noindent{\large
    Ianis Hammani\aff{1,2}\qquad
    Claire Boyer\aff{2,3}\par}
  \vspace{8pt}
  \noindent{\small\color{softgray}\aff{1}Ecole Normale Supérieur Paris-Saclay, France\\[1pt]
    \aff{2}Universit\'e Paris-Saclay, CNRS, Inria, Laboratoire de
    math\'ematiques d'Orsay, 91405 Orsay, France\\[1pt]
    \aff{3}Institut Universitaire de France\par}
  \vspace{6pt}
  \noindent{\small\color{softgray}Correspondence:
    \href{mailto:ianis.hammani@ens-paris-saclay.fr}{\texttt{ianis.hammani@ens-paris-saclay.fr}}\par}
  \vspace{16pt}
  \begin{tcolorbox}[enhanced, sharp corners, boxrule=0pt,
      colback=inkblue!4, colframe=inkblue!4,
      borderline west={2.2pt}{0pt}{inkblue},
      left=12pt, right=12pt, top=9pt, bottom=9pt]
    {\sffamily\bfseries\footnotesize\color{inkblue}ABSTRACT}\par
    \vspace{5pt}
    {\small\paperabstract\par
    }
    \vspace{7pt}
    {\footnotesize\sffamily\bfseries\color{inkblue}Keywords}\enspace
    {\footnotesize\paperkeywords}
  \end{tcolorbox}
  \vspace{10pt}
  \endgroup
}

\begin{document}
\pagestyle{plain}
\makepapertitle
\setcounter{tocdepth}{3}

\doparttoc
\faketableofcontents

\section{INTRODUCTION}
\label{sec:intro}

Score-based generative models learn to reverse a noising process that
maps the data distribution to a tractable reference law
\citep{Song2021}. Standard constructions use Brownian motion, based on Gaussian noising transitions.
However, its continuous,
light-tailed paths evolve through local increments, making distant or rare
modes harder to explore and recover
\citep{Yoon2023,Paquet2024LevyProtein}. Similarly, Brownian score-based models may also
underrepresent power-law tails \citep{Pandey2025} and be inadequately
initialized for heavy-tailed targets \citep{Fassina2026}. 
These limitations
motivate considering non-Gaussian noising processes \citep{Nachmani2021,Deasy2021}.

Isotropic $\alpha$-stable noise represents a heavy-tailed alternative, where the parameter $\alpha$ controls the tails
and jump scaling. In particular, $\alpha=2$ recovers the Gaussian case, while $\alpha\in(1,2)$ yields heavy-tailed noise.
\citet{Yoon2023} use an $\alpha$-stable noise in the continuous scale-preserving L\'evy--It\^o
model (LIM), with explicit forward transitions and fractional score. LIM
and the discrete model of \citet{Shariatian2025} have been reported to
accelerate and diversify sampling, improve tail and rare-mode coverage, and
better explore high-dimensional spaces
\citep{Yoon2023,Shariatian2025,Popov2025,Paquet2024LevyProtein}. Why stable
noise helps remains only partially understood theoretically: for the discrete model in the finite-sample regime,
\citet{cherkaoui2026heavytails} show that its initialization gain can be
offset by a harder estimation problem in the reverse dynamics.

In continuous time, the main difficulty lies in constructing a tractable backward dynamics. For jump processes, the exact time reversal is generally no longer an additive Lévy SDE
\citep{ConfortiLeonard2022,Qi2026}, making it impractical for generative modeling. 
\citet{Popov2025} circumvent this difficulty by introducing a family of backward SDEs with additive Lévy noise that, while not pathwise reversals, exactly reproduce the forward marginal flow in reverse.
 
Non-asymptotic convergence guarantees are well developed for Brownian diffusion models \citep{Conforti2025KL,Strasman2024,Gao2025}, while general Lévy-type denoising Markov models have recently been studied through a unified generator framework \citep{Ren2025}. In contrast, quantitative guarantees for continuous $\alpha$-stable score-based models remain largely unexplored, particularly for heavy-tailed data.

\paragraph{Contributions.} We develop a convergence theory for continuous $\alpha$-stable generative models and compare their global and tail behavior with Brownian dynamics.
Our main contributions are:
\begin{itemize}
\item Building on a fractional Tweedie identity, we relate reverse-drift regularity to the Lipschitz modulus of the optimal denoiser and establish the required control for both compactly supported and power-law targets.

\item For $1\le p<\alpha$, we establish a non-asymptotic $W_p$ bound for the practical $\alpha$-stable sampler, separating early stopping, initialization, score approximation, and discretization errors. For power-law targets, we further show that the Brownian initialization error also decays exponentially despite the loss of the contractive coupling available in the stable case. Thus, global Wasserstein convergence, established so far, alone does not fundamentally separate the two dynamics.

\item A sharp distinction emerges in the tails. For exact-score dynamics initialized at the reference laws, stable outputs have power-law tails with index $\alpha$, whereas Brownian outputs are light-tailed at every positive generation time. For a target with tail index $\nu$, the stable extreme-quantile ratio vanishes for $\alpha>\nu$, diverges for $\alpha<\nu$, and converges to a finite nonzero limit precisely when $\alpha=\nu$. Hence, tail-index matching is the unique regime allowing asymptotically correct extreme-quantile scaling. 

\end{itemize}

Complete proofs and the protocols for numerical experiments are deferred to the appendices.
\section{\texorpdfstring{$\alpha$}{alpha}-STABLE DYNAMICS}
\label{sec:setup}

In this section, we introduce the scale-preserving forward process and its marginally exact additive reverse dynamics, then we characterize its regularity and dissipativity through the denoiser representation of the fractional score.

\subsection{Forward and Backward Dynamics}
For any $\alpha \in (1, 2]$, an isotropic $\alpha$-stable L\'evy process $(L_t^\alpha)_{t\ge0}$ on $\mathbb{R}^d$ has stationary independent increments and characteristic function $\mathbb{E}[e^{i\xi\cdot L_t^\alpha}]=e^{-t\left\lVert \xi\right\rVert^\alpha}$ for $\xi\in \mathbb{R}^d$. It satisfies the scaling relation $L_t^\alpha\overset{\mathcal L}=t^{1/\alpha}L_1^\alpha$.
Since $L_t^2\overset{\mathcal L}=\sqrt{2}W_t$ for a standard Brownian motion $W$, we henceforth reserve ``$\alpha$-stable'' for $\alpha\in(1,2)$ and treat the conventionally normalized Brownian case separately.

Denote the standard isotropic $\alpha$-stable law of $L_1^\alpha$ by $\mathrm{S}\alpha\mathrm{S}_d(1)$, 
with density $\varphi_\alpha$, which is smooth and satisfies
$\varphi_\alpha(z)\asymp\|z\|^{-(d+\alpha)}$ as $\|z\|\to\infty$
\citep{Applebaum2009,SamorodnitskyTaqqu1994}. Following \citet{Yoon2023},
we consider the scale-preserving stochastic differential equation (SP-SDE)
\begin{equation}\label{eq:spsde}
  \mathrm{d}X_t=-\frac{\beta(t)}{\alpha}X_t\,\mathrm{d}t
  +\beta(t)^{1/\alpha}\,\mathrm{d}L_t^\alpha,
  \qquad X_0\sim\pi_{\mathrm{data}},
\end{equation}
where $\beta>0$ is measurable and locally bounded, and $X_0$ is
independent of $L^\alpha$. For $t>0$, write $\pi_t=\mathrm{Law}(X_t)$
and use the same symbol for its density. Setting
\begin{equation*}
  a(t):=\exp\!\left(-\frac1\alpha\int_0^t\beta(s)\,\mathrm{d}s\right),
  \qquad
  \gamma(t)^\alpha:=1-a(t)^\alpha,
\end{equation*}
its solution satisfies
\[
  X_t\overset{\mathcal L}=a(t)X_0+\gamma(t)Z,
  \qquad Z\sim\mathrm{S}\alpha\mathrm{S}_d(1),
\]
with $Z$ independent of $X_0$. In particular, the SP-SDE preserves the isotropic stable law $\mathrm{S}\alpha\mathrm{S}_d(1)$, and the conditional density of $X_t$ given $X_0=x_0$ takes the explicit form $\pi_t(x\mid x_0)=\gamma(t)^{-d}\varphi_\alpha\big((x-a(t)x_0)/\gamma(t)\big)$. 

The reverse dynamics involves the fractional score
\begin{equation}\label{eq:score_def}
  S_t^{(\alpha)}(x)
  :=\frac{\Delta^{(\alpha-2)/2}\nabla\pi_t(x)}{\pi_t(x)},
\end{equation}
where $\Delta^{(\alpha-2)/2}\nabla$ has Fourier multiplier
$-i\xi\|\xi\|^{\alpha-2}$ under the convention
$\widehat f(\xi)=\int f(x)e^{i\xi\cdot x}\,\mathrm{d}x$;
for $\alpha=2$, this reduces to the classical score $\nabla\log\pi_t$.
Since the exact time reversal of \eqref{eq:spsde} has a state-dependent jump
kernel \citep{ConfortiLeonard2022}, we instead use the marginally exact
additive family of \citet[Theorem~1]{Popov2025}.

\begin{proposition}[Marginally exact reverse dynamics, \citet{Popov2025}]
\label{prop:family}
Fix $\eta>0$ and an early-stopping time $0<\varepsilon<T$, write $\bar\beta(\tau):=\beta(T-\tau)$, and assume $\mathbb{E}\left\lVert X_0\right\rVert<\infty$. Let
\begin{equation}\label{eq:backward_drift}
  \bar b(\tau,x)
  :=\frac{\bar\beta(\tau)}{\alpha}x
  +(1+\eta)\bar\beta(\tau)S_{T-\tau}^{(\alpha)}(x).
\end{equation}
Let $\overleftarrow L^\alpha$ be an isotropic $\alpha$-stable process
independent of $\overleftarrow Z_0$. Under the well-posedness and
Fokker--Planck uniqueness conditions of Appendix~\ref{app:matching_marginals},
the solution of
\begin{equation}\label{eq:backward}
  \mathrm{d}\overleftarrow Z_\tau
  =\bar b(\tau,\overleftarrow Z_\tau)\,\mathrm{d}\tau
  +\bigl(\eta\bar\beta(\tau)\bigr)^{1/\alpha}
   \mathrm{d}\overleftarrow L_\tau^\alpha,
  \qquad
  \overleftarrow Z_0\sim\pi_T,
\end{equation}
satisfies
\(
  \mathrm{Law}(\overleftarrow Z_\tau)=\pi_{T-\tau}
\)
for $0\le\tau\le T-\varepsilon$.
\end{proposition}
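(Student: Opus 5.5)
The plan is to show that the reversed marginal flow $\rho_\tau:=\pi_{T-\tau}$ solves the Fokker--Planck (forward Kolmogorov) equation associated with \eqref{eq:backward}. Once this is established, the Fokker--Planck uniqueness assumption of Appendix~\ref{app:matching_marginals} identifies it with $\mathrm{Law}(\overleftarrow Z_\tau)$, since both start from $\pi_T$ at $\tau=0$. Let $\mathcal D:=-(-\Delta)^{\alpha/2}$ denote the generator of $L^\alpha$, whose Fourier symbol is $-\|\xi\|^\alpha$. The first step is to write the forward Fokker--Planck equation for \eqref{eq:spsde} in the weak sense, tested against Schwartz functions:
\[
\partial_t\pi_t=\frac{\beta(t)}{\alpha}\nabla\cdot(x\pi_t)+\beta(t)\,\mathcal D\pi_t .
\]
This is most cleanly checked on the Fourier side using the explicit representation $X_t\overset{\mathcal L}=a(t)X_0+\gamma(t)Z$. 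One has $\widehat{\pi_t}(\xi)=\widehat{\pi_{\mathrm{data}}}(a(t)\xi)\,e^{-\gamma(t)^\alpha\|\xi\|^\alpha}$, and differentiating in $t$ with $a'=-\beta a/\alpha$ and $(\gamma^\alpha)'=\beta a^\alpha$ yields exactly the two terms. The assumption $\mathbb E\|X_0\|<\infty$ makes $\widehat{\pi_{\mathrm{data}}}$ differentiable, which justifies this computation.

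The key algebraic step is to rewrite the fractional Laplacian term as a divergence of a drift built from the fractional score. By \eqref{eq:score_def}, $\pi_tS_t^{(\alpha)}=\Delta^{(\alpha-2)/2}\nabla\pi_t$, which has multiplier $-i\xi\|\xi\|^{\alpha-2}$. Under the convention $\widehat f(\xi)=\int f e^{i\xi\cdot x}$, the divergence has multiplier $-i\xi\cdot$, so
\[
\nabla\cdot\bigl(\pi_tS_t^{(\alpha)}\bigr)\ \widehat{}\ =(-i\xi)\cdot(-i\xi\|\xi\|^{\alpha-2})\widehat{\pi_t}=-\|\xi\|^\alpha\widehat{\pi_t},
\]
that is, $\mathcal D\pi_t=\nabla\cdot(\pi_tS_t^{(\alpha)})$. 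The forward equation can then be split for any $\eta>0$ as
\[
\partial_t\pi_t=\nabla\cdot\Bigl(\bigl(\tfrac{\beta}{\alpha}x+(1+\eta)\beta S_t^{(\alpha)}\bigr)\pi_t\Bigr)-\eta\beta\,\mathcal D\pi_t .
\]
Setting $\rho_\tau=\pi_{T-\tau}$ gives $\partial_\tau\rho_\tau=-\partial_t\pi_t|_{t=T-\tau}=-\nabla\cdot(\bar b(\tau,\cdot)\rho_\tau)+\eta\bar\beta(\tau)\mathcal D\rho_\tau$. This is precisely the Fokker--Planck equation of \eqref{eq:backward}: the additive noise $(\eta\bar\beta)^{1/\alpha}\mathrm d\overleftarrow L^\alpha$ has generator $\eta\bar\beta\,\mathcal D$, and $\mathcal D$ is self-adjoint.

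Next I would check that $\mathrm{Law}(\overleftarrow Z_\tau)$ also satisfies this equation weakly. Applying Itô's formula for Lévy SDEs to a test function $f\in C_c^\infty$ gives $\frac{d}{d\tau}\mathbb E f(\overleftarrow Z_\tau)=\mathbb E[\bar b\cdot\nabla f+\eta\bar\beta\mathcal Df](\overleftarrow Z_\tau)$, where the martingale part vanishes thanks to the well-posedness and integrability conditions of the appendix. The restriction $\tau\le T-\varepsilon$ keeps $t\ge\varepsilon>0$, where $\pi_t$ is smooth and positive because it is a convolution with the smooth, strictly positive density $\gamma(t)^{-d}\varphi_\alpha(\cdot/\gamma(t))$, so the score is well defined.

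I expect the main obstacle to be the rigorous justification of the weak formulations rather than the algebra. Specifically, one must show that $\bar b\,\rho_\tau$ is integrable enough for the weak form to make sense, with $\int|\bar b|\rho_\tau<\infty$ uniformly on $[0,T-\varepsilon]$. This follows from $\int\|\pi_tS_t^{(\alpha)}\|\,dx<\infty$, which I would obtain by writing $\pi_tS_t^{(\alpha)}$ as the mixture over $x_0$ of the conditional quantities $\gamma^{-d}\gamma^{1-\alpha}(\Delta^{(\alpha-2)/2}\nabla\varphi_\alpha)((x-ax_0)/\gamma)$, together with the integrability of $\Delta^{(\alpha-2)/2}\nabla\varphi_\alpha$, which decays like $\|z\|^{-(d+1)}$ after order $\alpha-1$ of smoothing. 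The uniqueness of the Fokker--Planck solution itself is assumed in the statement, so it does not need to be proved here.
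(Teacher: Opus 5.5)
Your proposal is correct and follows essentially the same route as the paper: it too checks, via $\operatorname{div}(\pi_tS_t^{(\alpha)})=\Delta^{\alpha/2}\pi_t$ and the $\eta$-splitting of the forward equation, that $\tau\mapsto\pi_{T-\tau}$ solves the weak reverse Fokker--Planck equation. It then uses (P1) for the SDE marginals and the uniqueness in (P2) to conclude; since that uniqueness holds only in the class $\mathcal C_1([0,T-\varepsilon])$, you should also note that $\pi_{T-\tau}$ is narrowly continuous with finite first moments. One minor slip in your integrability remark: by the conditional-score identity, $\Delta^{(\alpha-2)/2}\nabla\varphi_\alpha(z)=-z\varphi_\alpha(z)/\alpha$, which decays like $\|z\|^{-(d+\alpha-1)}$ rather than $\|z\|^{-(d+1)}$, but it is still integrable because $\alpha>1$ and gives $\int\|\pi_tS_t^{(\alpha)}\|\,\mathrm dx\le\gamma(t)^{1-\alpha}\mathbb E\|Z\|/\alpha$, so your argument goes through.
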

The parameter $\eta>0$ is a genuine degree of freedom in the reverse sampler.
Every choice preserves the same marginal evolution, but changes the backward
noise intensity and drift.
The family interpolates between deterministic and stochastic sampling:
$\eta\downarrow0$ yields the probability-flow ODE, while $\eta=1$ matches
the forward noise intensity, paralleling the probability-flow ODE and
reverse-time SDE of Brownian variance-preserving (VP) models \citep{Song2021}.

Equation~\eqref{eq:backward} defines the ideal reverse dynamics. In practice,
three approximations are required to obtain a tractable sampler: (i) the
unknown fractional score $S^{(\alpha)}_t$ is replaced by a learned
approximation $s_\theta$; (ii) the inaccessible initialization $\pi_T$ is
replaced by the reference distribution
$\pi_\infty^\alpha=\mathrm{S}\alpha\mathrm{S}_d(1)$; and (iii) the
continuous-time dynamics is approximated by an exponential integrator on a
uniform grid of $N$ steps, with early stopping at time $\varepsilon>0$. 
We
denote by $\hat\pi_N^{(\beta,\theta,\eta,\varepsilon)}$ the law generated by
the resulting algorithm, which we study in the next sections.

\subsection{Fractional Tweedie Formula}

As in Gaussian score-based models, the fractional score admits a Tweedie-type
representation in terms of the optimal denoiser. The following identity,
essentially due to \citet{Yoon2023}, extends the classical Tweedie formula
\citep{Efron2011} to the $\alpha$-stable setting.
\begin{proposition}[Fractional Tweedie formula, \citet{Yoon2023}]\label{prop:tweedie}
For the SP-SDE \eqref{eq:spsde} with any initial distribution, the denoiser and fractional score are related, at every $t>0$, by
\begin{equation}\label{eq:tweedie}
  m_t(x)
=\frac{x+\alpha\gamma(t)^\alpha S_t^{(\alpha)}(x)}{a(t)},
\end{equation}
where $m_t(x):=\mathbb{E}[X_0\mid X_t=x]$ is the conditional mean,
which is MSE-optimal when $X_0\in L^2$.
\end{proposition}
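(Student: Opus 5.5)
The plan is to compute $\Delta^{(\alpha-2)/2}\nabla\pi_t$ directly from the mixture representation $\pi_t(x)=\int \gamma^{-d}\varphi_\alpha\bigl((x-a x_0)/\gamma\bigr)\,\pi_{\mathrm{data}}(\mathrm{d}x_0)$, writing $a=a(t)$ and $\gamma=\gamma(t)$. The argument rests on a single-kernel identity for the stable density,
\begin{equation*}
  \Delta^{(\alpha-2)/2}\nabla\varphi_\alpha(z)=-\frac{z}{\alpha}\,\varphi_\alpha(z),
\end{equation*}
which is the fractional analogue of $\nabla\varphi_2=-\tfrac{z}{2}\varphi_2$ for $L_1^2\sim\mathcal N(0,2I)$. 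I would verify it in Fourier space. Under the paper's convention $\widehat{\varphi_\alpha}(\xi)=e^{-\|\xi\|^\alpha}$, the left-hand side has transform $-i\xi\|\xi\|^{\alpha-2}e^{-\|\xi\|^\alpha}$. On the other side, multiplication by $z$ corresponds to $-i\nabla_\xi$, so $\widehat{z\varphi_\alpha}=-i\nabla_\xi e^{-\|\xi\|^\alpha}=i\alpha\xi\|\xi\|^{\alpha-2}e^{-\|\xi\|^\alpha}$. Dividing by $-\alpha$ gives equality. Both sides are integrable functions with matching transforms; $z\varphi_\alpha\in L^1$ because $\alpha>1$. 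Hence they agree pointwise, since both are continuous.

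Next, rescale. For $g(x)=\gamma^{-d}\varphi_\alpha((x-c)/\gamma)$, the multiplier $-i\xi\|\xi\|^{\alpha-2}$ is homogeneous of degree $\alpha-1$, which gives
\begin{equation*}
  \Delta^{(\alpha-2)/2}\nabla g(x)=\gamma^{1-\alpha}\gamma^{-d}\bigl(\Delta^{(\alpha-2)/2}\nabla\varphi_\alpha\bigr)\bigl((x-c)/\gamma\bigr)=-\frac{x-c}{\alpha\gamma^\alpha}\,g(x).
\end{equation*}
Applying this with $c=a x_0$ and integrating against $\pi_{\mathrm{data}}(\mathrm{d}x_0)$ yields
\begin{equation*}
  \Delta^{(\alpha-2)/2}\nabla\pi_t(x)=-\frac{1}{\alpha\gamma^\alpha}\int (x-a x_0)\,\pi_t(x\mid x_0)\,\pi_{\mathrm{data}}(\mathrm{d}x_0).
\end{equation*}
Dividing by $\pi_t(x)>0$ and using Bayes' formula gives $S_t^{(\alpha)}(x)=-(x-a\,m_t(x))/(\alpha\gamma^\alpha)$. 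Rearranging this is exactly \eqref{eq:tweedie}.

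The main obstacle is justifying the interchange of the nonlocal operator with the $x_0$-integral for an arbitrary initial law, which may not even have a first moment. I would handle this on the Fourier side. We have $\widehat{\pi_t}(\xi)=\widehat{\pi_{\mathrm{data}}}(a\xi)\,e^{-\gamma^\alpha\|\xi\|^\alpha}$, which is integrable against $\|\xi\|^{\alpha-1}$ because of the stable factor. So $\Delta^{(\alpha-2)/2}\nabla\pi_t$ is well defined as an inverse Fourier integral, and Fubini applies because the characteristic function is bounded by one. It remains to make $m_t(x)$ meaningful when $X_0$ is not integrable. The kernel $x_0\mapsto \|x_0\|\varphi_\alpha((x-ax_0)/\gamma)$ is integrable against any probability measure, since $\|z\|\varphi_\alpha(z)$ is bounded. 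Hence $\mathbb{E}[\|X_0\|\mid X_t=x]<\infty$ for every $x$, and the identity holds pointwise for all $t>0$.
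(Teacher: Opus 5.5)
Your proposal is correct and follows the paper's proof essentially step for step. The paper also establishes the linear conditional fractional score $-(x-a(t)x_0)/(\alpha\gamma(t)^\alpha)$ by a Fourier computation, done directly on the shifted, rescaled kernel rather than through your unit-scale identity plus homogeneity, which is only a cosmetic difference. It then commutes $\Delta^{(\alpha-2)/2}\nabla$ with the mixing integral via Fubini, using the $x_0$-independent envelope $(1+\|\xi\|^{\alpha-1})e^{-\gamma(t)^\alpha\|\xi\|^\alpha}$, and uses boundedness of $x_0\mapsto\|x_0\|\varphi_\alpha((x-a(t)x_0)/\gamma(t))$ to make the conditional mean finite without any moment assumption, exactly as you do.
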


In what follows, we therefore quantify the regularity of $m_t$ through its global Lipschitz constant
\[
C_t:=\sup_{x\in\mathbb{R}^d}\|\nabla m_t(x)\|_{\mathrm{op}}.
\]

\subsection{Dissipativity}

The denoiser representation \eqref{eq:tweedie} yields dissipativity estimates for the fractional score and one-sided Lipschitz bounds for the reverse drift, which determine the contractivity and stability of the sampling dynamics.
\begin{proposition}[Dissipativity and one-sided Lipschitz estimates]
\label{prop:dissipativity}
Let $t\in(0,T]$, set $\tau=T-t$, and suppose $C_t<\infty$. Then, for all
$x,y\in\mathbb{R}^d$, the score and reverse drift satisfy
\begin{align}
  \langle S_t^{(\alpha)}(y)-S_t^{(\alpha)}(x),x-y\rangle &\ge C_t^{(\alpha)}\left\lVert x-y\right\rVert^2,\notag\\
  \langle\bar b(\tau,x)-\bar b(\tau,y),x-y\rangle &\le \bar L(\tau)\left\lVert x-y\right\rVert^2,\label{eq:osl}
\end{align}
\begin{align*}
  \text{where} \quad C_t^{(\alpha)}&:=\frac{1-a(t)C_t}{\alpha\gamma(t)^\alpha},\\
  \phantom{\text{where}} \quad  \bar L(\tau)&:=\beta(T{-}\tau)\Big(\tfrac1{\alpha}-(1+\eta)\,C^{(\alpha)}_{T-\tau}\Big).
\end{align*}
\end{proposition}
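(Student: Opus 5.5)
The plan is to invert the fractional Tweedie formula of Proposition~\ref{prop:tweedie} so that the score is written as an affine function of $x$ plus a multiple of the denoiser. Solving \eqref{eq:tweedie} for the score gives
\[
S_t^{(\alpha)}(x)=\frac{a(t)m_t(x)-x}{\alpha\gamma(t)^\alpha}.
\]
This holds at every $t>0$, with $\gamma(t)>0$ since $\int_0^t\beta>0$.

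Both inequalities then reduce to a one-sided bound on the increments of $m_t$, which follows from the Lipschitz constant $C_t$. Set $u:=x-y$. The score identity gives
\[
\langle S_t^{(\alpha)}(y)-S_t^{(\alpha)}(x),u\rangle=\frac{\|u\|^2-a(t)\langle m_t(x)-m_t(y),u\rangle}{\alpha\gamma(t)^\alpha}.
\]
By the mean value theorem in integral form along the segment $[y,x]$, together with Cauchy--Schwarz,
\[
\langle m_t(x)-m_t(y),u\rangle=\int_0^1\langle\nabla m_t(y+su)u,u\rangle\,\mathrm ds\le C_t\|u\|^2 .
\]
Since $a(t)>0$, substituting this bound yields the first inequality with constant $C_t^{(\alpha)}=(1-a(t)C_t)/(\alpha\gamma(t)^\alpha)$. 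No sign condition on $1-a(t)C_t$ is needed, because the bound only uses the upper estimate.

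For the drift, write $\bar\beta(\tau)=\beta(T-\tau)=\beta(t)>0$ and apply \eqref{eq:backward_drift}:
\[
\langle\bar b(\tau,x)-\bar b(\tau,y),u\rangle=\frac{\bar\beta(\tau)}{\alpha}\|u\|^2-(1+\eta)\bar\beta(\tau)\langle S_t^{(\alpha)}(y)-S_t^{(\alpha)}(x),u\rangle.
\]
Because $(1+\eta)\bar\beta(\tau)>0$, the first inequality gives the upper bound $\bar\beta(\tau)\big(\tfrac1\alpha-(1+\eta)C_t^{(\alpha)}\big)\|u\|^2=\bar L(\tau)\|u\|^2$.

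The main obstacle is regularity. The mean value step requires $m_t$ to be differentiable, or at least Lipschitz. Since $\pi_t$ is a convolution with the smooth, positive kernel $\varphi_\alpha$, the function $m_t$ is smooth, which justifies the step. If only the Lipschitz property were available, $C_t$ could instead be read as a Lipschitz constant and the same Cauchy--Schwarz bound $\langle m_t(x)-m_t(y),u\rangle\le C_t\|u\|^2$ would apply directly.
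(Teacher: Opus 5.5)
Your proposal is correct and follows the same route as the paper. You invert the fractional Tweedie formula \eqref{eq:tweedie}, bound the denoiser increment's quadratic form by $C_t\|x-y\|^2$, and substitute into \eqref{eq:backward_drift}, spelling out the mean-value and regularity details that the paper's two-line proof leaves implicit.
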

The reverse drift is therefore contractive whenever
$C_t^{(\alpha)}>1/(\alpha(1+\eta))$. We note here that increasing $\eta$ facilitates contractivity.

\section{COMPACTLY SUPPORTED TARGETS}
\label{sec:bound_compactly}

\subsection{Non-asymptotic \texorpdfstring{$W_p$}{Wp} bound}
\label{sec:mainbound}

We bound
\[
W_p\left(\pi_{\mathrm{data}},
\hat\pi_N^{(\beta,\theta,\eta,\varepsilon)}\right)
\]
for the exponential-integrator (EI) sampler defined in
Appendix~\ref{app:ei_scheme}, Equation~\eqref{eq:app_ei}.
It integrates the dissipative linear part in~\eqref{eq:dissip_reform}
and the stochastic convolution exactly, while freezing the score
residual over each step.

\paragraph{Backward processes.}
Choose a uniform grid of $[0,T-\varepsilon]$ as $\tau_k=kh$, for $k=0,\dots,N$, of constant time step $h=(T-\varepsilon)/N$.
To disentangle the different sources of approximation error, we introduce three backward processes: (i) the exact process $\overleftarrow Z$ solves \eqref{eq:backward} with the exact score and is initialized from  $\overleftarrow Z_0\sim\pi_T$; 
(ii) the ideal process  $\overleftarrow Y$ also solves \eqref{eq:backward} but is initialized from $\overleftarrow Y_0\sim\pi_\infty^\alpha$ instead; 
and (iii) the (piecewise-constant) numerical process $\bar X^{\theta,h}$ is the EI approximation driven by the learned score $s_\theta$, initialized from the same distribution $\pi_\infty^\alpha$. 
All three processes use the same backward L\'evy process
$\overleftarrow L^\alpha$; the EI noise is its exponentially weighted
stochastic integral in~\eqref{eq:app_ei}. 
Consequently, the only differences between the three dynamics arise from the initialization, the score approximation, and the time discretization.

\paragraph{Standing assumptions.} The analysis rests on the following assumptions, bearing on the noise schedule, score approximation, temporal score regularity, and spatial denoiser regularity after early stopping.
\begin{assumption}[Noise schedule]\label{ass:schedule}
The schedule $\beta:[0,\infty)\to(0,\infty)$ is continuous, nondecreasing, and satisfies $B(T)=\int_0^T\beta(t)\,\mathrm{d} t\to\infty$ as $T\to\infty$; for a given horizon it is used restricted to $[0,T]$.
\end{assumption}

\begin{assumption}[Score approximation error]\label{ass:score}
The spatial modeling error of the network $s_\theta$ is uniformly bounded in $L^p$ over the backward horizon: there exists $\mathrm{err}\ge0$ with
\begin{equation*}
  \sup_{0\le k\le N-1}\big\lVert s_\theta(\tau_k,\bar X^{\theta,h}_{\tau_k})-S^{(\alpha)}_{T-\tau_k}(\bar X^{\theta,h}_{\tau_k})\big\rVert_{L^p(\Omega)}\le\mathrm{err} .
\end{equation*}
\end{assumption}

\begin{assumption}[Temporal regularity of the score]\label{ass:time}
For the early-stopping threshold $\varepsilon\in(0,T)$ there is $L^{\mathrm{time}}_\varepsilon>0$, possibly depending on $T$, such that, for all $x\in\mathbb{R}^d$ and all $s,t\in[\varepsilon,T]$,
\begin{equation*}
  \left\lVert S_t^{(\alpha)}(x)-S^{(\alpha)}_s(x)\right\rVert\le L^{\mathrm{time}}_\varepsilon\left\lvert t-s\right\rvert\,(1+\left\lVert x\right\rVert).
\end{equation*}
\end{assumption}

\begin{assumption}[Uniform denoiser regularity]\label{ass:denoiser}
For every early-stopping threshold $\varepsilon>0$, $\bar C_\varepsilon:=\sup_{t\ge\varepsilon}C_t<\infty$.
\end{assumption}

The assumptions play distinct roles. Assumption~\ref{ass:schedule} fixes
the time scale, while Assumptions~\ref{ass:score}--\ref{ass:time} control the two approximations made
by the numerical sampler. They are stable analogues of standard
Brownian SGM conditions \citep{Strasman2024,Conforti2025KL}.
Finally, Assumption~\ref{ass:denoiser} controls the spatial regularity of the score after early stopping; it is target-dependent and is verified separately in the regimes studied.

\paragraph{Error decomposition.}
Since
$\mathrm{Law}(\overleftarrow Z_{T-\varepsilon})=\pi_\varepsilon$, one has
\begin{equation}\label{eq:decomp}
W_p\left(\pi_{\mathrm{data}},\hat\pi_N^{(\beta,\theta,\eta,\varepsilon)}\right)
\le \colorbox{yellow!30}{$\mathcal E_0$}
+\colorbox{Peach!30}{$\mathcal E_1$}
+\colorbox{RoyalBlue!30}{$\mathcal E_2$},
\end{equation}
$$ \text{where } \quad
\left\{
\begin{array}{l}
\mathcal E_0=W_p(\pi_{\mathrm{data}},\pi_\varepsilon), \\
\mathcal E_1\!=\!W_p(\mathrm{Law}(\overleftarrow Z_{T-\varepsilon}),
\mathrm{Law}(\overleftarrow Y_{T-\varepsilon})), \\
\mathcal E_2=W_p(\mathrm{Law}(\overleftarrow Y_{T-\varepsilon}),
\mathrm{Law}(\bar X^{\theta,h}_{T-\varepsilon})),
\end{array}
\right.
$$ 
respectively correspond to the stopping,
initialization, and score+discretization errors.

For clarity of exposition, we state hereafter the result for a constant noise schedule. See Appendix~\ref{app:bound} for a  general version, allowing time-dependent schedules.
\begin{theorem}[Constant schedule]\label{cor:rate}
Suppose $\pi_{\mathrm{data}}$ is supported in $\bar B(0,R)$. Fix $\eta>0$, choose $p\in[1,\alpha)$, a constant noise schedule $\beta\equiv\beta_0$, and arbitrary times $0<\varepsilon<T$. Under Assumptions~\ref{ass:schedule}--\ref{ass:time}, for $h$ small enough there exist $\mathsf c_0,\dots,\mathsf c_4>0$ depending only on $(\alpha,\eta,d,p,\beta_0,R,\varepsilon,L^{\mathrm{time}}_\varepsilon)$ such that
\begin{align}\label{eq:rate}
    W_p\bigl(\pi_{\mathrm{data}},\hat\pi_N^{(\beta,\theta,\eta,\varepsilon)}\bigr)
    &\le \colorbox{yellow!30}{$\mathsf c_0\varepsilon^{1/\alpha}$}+
    \colorbox{Peach!30}{$\mathsf c_1 e^{-\rho_\star(T-\varepsilon)}$} \\
    &\qquad +\colorbox{RoyalBlue!30}{$\mathsf c_2(T-\varepsilon)\mathrm{err}
    +\mathsf c_3 \tfrac h\varepsilon$} +\colorbox{RoyalBlue!30}{$\mathsf c_4
    \bigl(1+T+\log(1+\tfrac1\varepsilon)\bigr)h^{1/\alpha}$}, \notag
  \end{align}
with $\rho_\star=\beta_0\frac{1+\eta}{\alpha}$. 
\end{theorem}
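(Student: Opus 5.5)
We bound the three terms of the decomposition \eqref{eq:decomp} separately, with all three backward processes driven by the same $\overleftarrow L^\alpha$ (synchronous coupling), so that jumps cancel in the differences. For a target supported in $\bar B(0,R)$ we first check Assumption~\ref{ass:denoiser} with explicit constants. Since $m_t$ takes values in $\bar B(0,R)$, its Jacobian is $a(t)$ times the posterior covariance contracted against the score of $\varphi_\alpha$ rescaled by $\gamma(t)$. Smoothness of $\varphi_\alpha$ and $\|\nabla\log\varphi_\alpha\|_\infty<\infty$ give $C_t\lesssim a(t)R^2/\gamma(t)^2$ up to constants depending on $(\alpha,d)$. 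Hence $\bar C_\varepsilon<\infty$, and $a(t)C_t\to0$ exponentially as $t\to\infty$.

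\textbf{Stopping and initialization errors.}
For $\mathcal E_0$ we couple $X_0$ with $X_\varepsilon=a(\varepsilon)X_0+\gamma(\varepsilon)Z$. This gives
\[
\mathcal E_0\le (1-a(\varepsilon))R+\gamma(\varepsilon)\|Z\|_{L^p}.
\]
Here $\|Z\|_{L^p}<\infty$ because $p<\alpha$, and $\gamma(\varepsilon)^\alpha=1-e^{-\beta_0\varepsilon}\le\beta_0\varepsilon$, so $\mathcal E_0\le\mathsf c_0\varepsilon^{1/\alpha}$. The linear term is $O(\varepsilon)$, which is absorbed.

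For $\mathcal E_1$, set $\Delta_\tau=\overleftarrow Z_\tau-\overleftarrow Y_\tau$. The difference solves a random ODE with no noise term. By Proposition~\ref{prop:dissipativity},
\[
\tfrac{d}{d\tau}\|\Delta_\tau\|\le \bar L(\tau)\|\Delta_\tau\|.
\]
Expanding $\bar L$ with $C_t^{(\alpha)}=(1-a C_t)/(\alpha\gamma^\alpha)$ and $\gamma^\alpha\le1$ gives
\[
\bar L(\tau)\le \frac{\beta_0}{\alpha}\Big(1-(1+\eta)(1-a(t)C_t)\Big)=-\rho_\star+\frac{\beta_0}{\alpha}+\frac{\beta_0(1+\eta)}{\alpha}\,a(t)C_t,\qquad t=T-\tau .
\]
This is the step I expect to be the main obstacle: the naive bound leaves $-\eta\beta_0/\alpha$, not $-\rho_\star$. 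To recover the rate $\rho_\star$, I would not drop $\gamma^\alpha$. Instead I would use $1/\gamma(t)^\alpha=1+a^\alpha/\gamma^\alpha$ and exploit the exact structure $x/\alpha$ plus the reference score. For the stable reference $\pi^\alpha_\infty$ the score equals $-x/\alpha$, so its contribution $-(1+\eta)/\alpha$ cancels the $+1/\alpha$. The perturbation terms are integrable in $t$, since $\int_\varepsilon^\infty a(t)\bar C_\varepsilon\,dt<\infty$ and $a^\alpha/\gamma^\alpha$ is integrable on $[\varepsilon,\infty)$. After Grönwall they only affect the constant $\mathsf c_1$, which yields
\[
\mathcal E_1\le \mathsf c_1e^{-\rho_\star(T-\varepsilon)}\,W_p(\pi_T,\pi^\alpha_\infty).
\]
Finally, $W_p(\pi_T,\pi^\alpha_\infty)\le a(T)R\le R$, using the optimal coupling $Z$ versus $a(T)X_0+\gamma(T)Z$ adjusted by scaling. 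If exact recovery of $\rho_\star$ fails, the fallback is the rate that the one-sided Lipschitz constant actually delivers.

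\textbf{Score and discretization error.}
For $\mathcal E_2$, set $e_\tau=\overleftarrow Y_\tau-\bar X^{\theta,h}_\tau$. The EI scheme integrates the linear part and the stochastic convolution exactly, so $e$ satisfies a noise-free equation. Its forcing is the frozen residual
\[
(1+\eta)\beta_0\big[S_{T-\tau}(\overleftarrow Y_\tau)-s_\theta(\tau_k,\bar X_{\tau_k})\big],
\]
which we split into four pieces:
\begin{enumerate}[label=(\roman*)]
\item $S_{T-\tau}(\overleftarrow Y_\tau)-S_{T-\tau}(\bar X_\tau)$, controlled by the one-sided Lipschitz bound \eqref{eq:osl} as above;
\item the time increment, bounded by Assumption~\ref{ass:time} as $L^{\mathrm{time}}_\varepsilon h(1+\|\bar X\|)$, with uniform $L^p$ moments of $\bar X$ for $p<\alpha$;
\item the space increment $S_{T-\tau}(\bar X_\tau)-S_{T-\tau}(\bar X_{\tau_k})$. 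By Proposition~\ref{prop:tweedie}, the score is Lipschitz with constant $(1+a\bar C_\varepsilon)/(\alpha\gamma^\alpha)\lesssim 1+1/t$. Together with the one-step increment $\|\bar X_\tau-\bar X_{\tau_k}\|_{L^p}\lesssim h+h^{1/\alpha}$ (the stable increment scaling, again requiring $p<\alpha$), this gives a local error of order $(1+1/(T-\tau))h^{1/\alpha}$;
\item the network error $\mathrm{err}$ from Assumption~\ref{ass:score}.
\end{enumerate}
We sum these with a Grönwall bound in $L^p$ over the bounded net contraction factor. The $h$-terms accumulate to $T\cdot h$. Piece (iii) integrates to
\[
\int_\varepsilon^T(1+1/t)\,dt\,h^{1/\alpha}\approx(T+\log(1+1/\varepsilon))\,h^{1/\alpha}.
\]
The $1/\varepsilon$ singularity, entering through the Lipschitz constant at the last step, produces $h/\varepsilon$. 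The network error contributes $(T-\varepsilon)\mathrm{err}$. Together these give \eqref{eq:rate}, with "$h$ small" needed to keep the discrete Grönwall factors bounded.
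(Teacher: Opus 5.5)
Your overall structure (three-term decomposition, synchronous coupling, one-sided Lipschitz Gr\"onwall) matches the paper. However, the step that is supposed to produce the rate $\rho_\star$ in $\mathcal E_1$ fails as written. Your proposed repair does not change the rate. With $1/\gamma^\alpha=1+a^\alpha/\gamma^\alpha$, the linear coefficient of $\bar b$ is $\frac{\beta_0}{\alpha}\bigl(1-\frac{1+\eta}{\gamma(t)^\alpha}\bigr)=-\frac{\eta\beta_0}{\alpha}-\frac{(1+\eta)\beta_0}{\alpha}\frac{a(t)^\alpha}{\gamma(t)^\alpha}$. Cancelling $+1/\alpha$ against $-(1+\eta)/\alpha$ therefore leaves exactly $-\eta/\alpha$, and the integrable perturbations only change the constant. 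Equivalently, $-\bar L(\tau)/\beta_0=(1+\eta)C^{(\alpha)}_{T-\tau}-1/\alpha\to\eta/\alpha$. So the one-sided Lipschitz argument alone gives at best $e^{-\eta\beta_0(T-\varepsilon)/\alpha}$: your inequality $\mathcal E_1\le\mathsf c_1e^{-\rho_\star(T-\varepsilon)}W_p(\pi_T,\pi_\infty^\alpha)$ is unjustified, and your fallback proves the theorem only with the smaller rate $\eta\beta_0/\alpha$.

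The missing $\beta_0/\alpha$ is exactly the factor you discarded in ``$a(T)R\le R$''. Forward synchronous coupling gives $W_p(\pi_T,\pi_\infty^\alpha)\le a(T)W_p(\pi_{\mathrm{data}},\pi_\infty^\alpha)\le a(T)(R+c_{d,\alpha,p})$ with $a(T)=e^{-\beta_0T/\alpha}$ (not $a(T)R$). Writing $\chi_t=(1+\eta)C_t^{(\alpha)}-1/\alpha$, you then get
\[
a(T)\exp\Bigl(-\beta_0\!\int_\varepsilon^T\chi_t\,\mathrm dt\Bigr)\le e^{\mathfrak D}\,e^{-\rho_\star(T-\varepsilon)},\qquad \mathfrak D:=\beta_0\!\int_\varepsilon^\infty\Bigl(\frac\eta\alpha-\chi_t\Bigr)_+\mathrm dt\le\frac{(1+\eta)\beta_0\bar C_\varepsilon}{\alpha\gamma(\varepsilon)^\alpha}\int_\varepsilon^\infty a(t)\,\mathrm dt<\infty,
\]
using $(\eta/\alpha-\chi_t)_+\le\frac{1+\eta}{\alpha}\,a(t)C_t/\gamma(t)^\alpha$. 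This is the paper's mechanism: rate $\eta\beta_0/\alpha$ from the reverse coupling plus $\beta_0/\alpha$ from the forward contraction. Working with $(\eta/\alpha-\chi_t)_+$ also avoids your intermediate bound via $\gamma^\alpha\le1$, which reverses direction whenever $a(t)C_t>1$, as can happen at small $t$.

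For $\mathcal E_2$ your route genuinely differs from the paper's. You run a continuous-time Gr\"onwall on $e_\tau$, apply \eqref{eq:osl} to the full drift difference, and evaluate the space increment along $\bar X$. The paper instead uses a discrete recursion at grid points, places the space increment on the ideal process (its $R^{(1)}_k$, which needs only moments of $\overleftarrow Y$), and bounds the one-step factor $\delta_k$ explicitly. Your version avoids the $\delta_k$ algebra, but two of your claims need real work.

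\emph{Moments of $\bar X$.} Uniform $L^p$ moments of $\bar X$ are not available a priori, since nothing is assumed on $s_\theta$ beyond Assumption~\ref{ass:score}. You must:
\begin{enumerate}
\item write $\|\bar X\|_{L^p}\le\|e\|_{L^p}+\|\overleftarrow Y\|_{L^p}$;
\item prove a $T$-uniform moment bound for $\overleftarrow Y$, e.g.\ by synchronous coupling with $\overleftarrow Z$, whose marginals are $\pi_{T-\tau}$;
\item feed the resulting $O(h)\|e\|$ terms back into Gr\"onwall.
\end{enumerate}
The last step is where a smallness condition on $h$ is needed to keep the constants uniform in $T$.

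\emph{Singularity of the increment.} The drift part of $\bar X_\tau-\bar X_{\tau_k}$ carries the same $1/\gamma(T-\tau)^\alpha$ singularity as the Lipschitz constant. The local error is therefore of order $(1+1/t)^2h+(1+1/t)h^{1/\alpha}$, and $\mathsf c_3h/\varepsilon$ comes from integrating $(1+1/t)^2h$ over $[\varepsilon,T]$, an accumulation over all steps rather than a last-step effect.

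Finally, the bound $C_t\lesssim a(t)R^2/\gamma(t)^2$ needs the Lipschitz constant $\Lambda$ of $\nabla\log\varphi_\alpha$. Mere boundedness of $\nabla\log\varphi_\alpha$ only gives $C_t=O(R/\gamma(t))$, which still suffices for $\bar C_\varepsilon<\infty$.
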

Following the Brownian analysis of \citet{Conforti2025KL}, the proof first recasts the
backward drift so that its linear part becomes dissipative. The stationary
fractional score is linear, $S^{(\alpha)}_\infty(x)=-x/\alpha$, and subtracting
it yields
\begin{align}\label{eq:dissip_reform}
  \bar b(\tau,x)={}&-\tfrac{\eta}{\alpha}\,\beta(T-\tau)\,x\\*
  &+(1+\eta)\beta(T-\tau)\bigl[S^{(\alpha)}_{T-\tau}(x)-S^{(\alpha)}_\infty(x)\bigr].\notag
\end{align}
Equation \eqref{eq:backward} is thus a perturbation of a stable
Ornstein--Uhlenbeck process of rate $\eta\beta(T-\tau)/\alpha$ by a term
vanishing at the reference, a structure essential for the analysis of $\mathcal E_2$. 
The same reformulation of the Brownian reverse SDE gives a rate $\beta(T-\tau)/2$, exactly recovered here at $\alpha=2$ and $\eta=1$.

The five terms of (\ref{eq:rate}) have distinct origins. The first is the early-stopping bias, the price of not integrating the dynamics down to $t=0$; it decreases with $\varepsilon$. The second is the initialization error of the backward dynamics, which decays exponentially in the horizon $T$ at the rate $\rho_\star$, itself increasing in $\eta$. The next two control score approximation and drift discretization. The last discretization term, of order $h^{1/\alpha}$, reflects stable fluctuations over an EI step, where the nonlinear drift in~\eqref{eq:dissip_reform} is frozen; its bound also absorbs drift and temporal-freezing contributions. This matches the usual $h^{1/2}$ term in the $W_2$ analysis of the Brownian dynamics by \citet{Strasman2024}.

Decreasing $\varepsilon$ therefore reduces the first term but inflates the last three, through the singular behavior of the score near forward time $t=0$. The restriction $p<\alpha$ is required
for the stable stationary law to have a finite $p$-th moment. 

\subsection{Denoiser regularity}
\label{sec:denoiser_decay}

The proof of Theorem~\ref{cor:rate} uses Assumption~\ref{ass:denoiser}, which the following proposition verifies for compactly supported targets. It also shows that the denoiser Lipschitz constant vanishes asymptotically.
\begin{proposition}[Decay of $C_t$]\label{prop:ct_decay}
If $\pi_{\mathrm{data}}$ is supported in $\bar B(0,R)$, then with $\Lambda:=\sup_u\left\lVert \nabla^2\log \varphi_\alpha(u)\right\rVert<\infty$,
\begin{equation}\label{eq:ct_decay}
  C_t\le\frac{d\,\Lambda\,R^2\,a(t)}{\gamma(t)^2}\xrightarrow[\;t\to\infty\;]{}0 .
\end{equation}
\end{proposition}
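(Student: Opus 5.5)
The plan is to write the Jacobian of the denoiser as a posterior covariance-type expression and bound it using the compact support together with the bounded Hessian of $\log\varphi_\alpha$.

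\emph{Posterior density.} By Bayes' rule, the posterior of $X_0$ given $X_t=x$ has density proportional to $\varphi_\alpha\bigl((x-a x_0)/\gamma\bigr)\,\pi_{\mathrm{data}}(\mathrm{d}x_0)$, where $a=a(t)$ and $\gamma=\gamma(t)$. Set $g(x,x_0):=\log\varphi_\alpha((x-ax_0)/\gamma)$, which is smooth because $\varphi_\alpha$ is smooth and positive. Then
\[
m_t(x)=\frac{\int x_0 e^{g(x,x_0)}\pi_{\mathrm{data}}(\mathrm{d}x_0)}{\int e^{g(x,x_0)}\pi_{\mathrm{data}}(\mathrm{d}x_0)}.
\]
Differentiating under the integral is justified because the support is compact and $\nabla_x g=\gamma^{-1}\nabla\log\varphi_\alpha(\cdot)$ is bounded, since $\nabla\log\varphi_\alpha$ is bounded by its decay asymptotics. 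This gives
\[
\nabla m_t(x)=\mathrm{Cov}_{x}\bigl(X_0,\ \nabla_x g(x,X_0)\bigr),
\]
the cross-covariance under the posterior. Its $(i,j)$ entry is $\mathbb{E}[X_0^i\,\partial_j g]-\mathbb{E}[X_0^i]\,\mathbb{E}[\partial_j g]$.

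\emph{Key estimate.} Symmetrize with an independent posterior copy $X_0'$:
\[
\nabla m_t(x)=\tfrac12\,\mathbb{E}\bigl[(X_0-X_0')\otimes(\nabla_x g(x,X_0)-\nabla_x g(x,X_0'))\bigr].
\]
Now $\nabla_x g(x,x_0)=\gamma^{-1}\nabla\log\varphi_\alpha\bigl((x-ax_0)/\gamma\bigr)$. By the mean value theorem with $\Lambda$ as defined,
\[
\|\nabla_x g(x,x_0)-\nabla_x g(x,x_0')\|\le \gamma^{-1}\Lambda\,\|a(x_0-x_0')/\gamma\|=\Lambda a\gamma^{-2}\|x_0-x_0'\|.
\]
Since $\|x_0-x_0'\|\le 2R$, each rank-one term $u\otimes v$ has operator norm $\|u\|\|v\|\le 4R^2\Lambda a/\gamma^2$. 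This gives $C_t\le 2\Lambda R^2 a/\gamma^2$, which is better than the stated bound. A cruder entrywise (Frobenius) estimate yields the stated form: summing $d^2$ entries, each bounded by $\Lambda R^2 a/\gamma^2$ up to constants, gives a factor $d$. I would present whichever estimate matches $d\Lambda R^2a/\gamma^2$ cleanly; the symmetrized bound already implies it for $d\ge2$, and for $d=1$ I would tighten via $\tfrac12\cdot 4R^2/2$ using the variance bound $\mathrm{Var}\le R^2$ in place of $\|x_0-x_0'\|^2\le 4R^2$. I would also check that $\Lambda<\infty$. This is asserted in the statement, but it follows because $\varphi_\alpha$ is smooth and positive with $\nabla^2\log\varphi_\alpha(u)=O(\|u\|^{-2})$ at infinity. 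I would take this from the known asymptotic expansion of stable densities, and this is the main technical point if it must be justified.

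\emph{Limit.} Under Assumption~\ref{ass:schedule}, $a(t)=e^{-B(t)/\alpha}\to0$ and $\gamma(t)^\alpha=1-a(t)^\alpha\to1$, so $a/\gamma^2\to0$ as $t\to\infty$. The main obstacle is the constant bookkeeping, namely matching the factor $d$ precisely; the structural argument is the covariance representation combined with the Lipschitz bound on $\nabla\log\varphi_\alpha$.
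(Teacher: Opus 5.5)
Your proposal is correct and is essentially the paper's argument. Both proofs use the covariance identity $\nabla m_t(x)=\gamma^{-1}\mathrm{Cov}_{\mu_{t,x}}(X_0,s(U))$, the $\Lambda$-Lipschitz property of $s=\nabla\log\varphi_\alpha$, and a posterior second-moment bound by $R^2$; the paper centers the second covariance argument at $s((x-a\,m_t(x))/\gamma)$ instead of symmetrizing with an independent copy, and the two manipulations are equivalent.

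Your variance refinement, $\tfrac12\mathbb{E}\|X_0-X_0'\|^2=\mathbb{E}\|X_0-m_t(x)\|^2\le R^2$, holds in every dimension, not only $d=1$. It gives $C_t\le\Lambda aR^2/\gamma^2$ directly, which is exactly the paper's intermediate bound, so the factor $d$ is pure slack and the Frobenius bookkeeping is unnecessary.
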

Consequently, Assumption~\ref{ass:denoiser} is satisfied for compactly supported targets. Compact support also ensures the well-posedness conditions required by Proposition~\ref{prop:family}, so that the convergence guarantee of Theorem~\ref{cor:rate} applies for any early-stopping time $0<\varepsilon<T$. Moreover, since $C_t\to0$, the dissipativity constant satisfies
\(
C_t^{(\alpha)}\longrightarrow \frac{1}{\alpha}.
\)
The reverse drift is therefore contractive whenever the corresponding forward time is sufficiently large. Any possible lack of contractivity is thus confined to a finite transient regime, whose contribution can be absorbed into the constants of the convergence bound; see Appendix~\ref{app:bound}.
Section~\ref{sec:heavy} shows that the same asymptotic decay persists beyond compact support, for a class of heavy-tailed target distributions.

\section{HEAVY-TAILED TARGETS}
\label{sec:heavy}

We now turn to heavy-tailed target distributions, where compact-support arguments no longer apply. Such distributions naturally arise in applications where rare but extreme events carry a significant fraction of the information, including financial and insurance losses, network traffic, and climate records \citep{Resnick2007, Pandey2025}.
Our next result shows that the convergence guarantees established in the previous section extend beyond compactly supported targets to a broad class of power-law distributions, as specified by the following assumption. 
\begin{assumption}[Heavy-tailed target]\label{ass:heavy} The target
$\pi_{\mathrm{data}}$ admits a bounded, positive and continuous density with an asymptotically radial power-law tail of index $\nu>1$:
\begin{equation}\label{eq:a5}
  \pi_{\mathrm{data}}(x)=\bigl(c_{\pi_{\mathrm{data}}}+o(1)\bigr)\left\lVert x\right\rVert^{-(d+\nu)}
\end{equation}
when $\left\lVert x\right\rVert\to\infty$ uniformly in the direction $x/\left\lVert x\right\rVert$.
\end{assumption}

In order to extend the convergence analysis to heavy-tailed targets, the main
additional difficulty is to control the denoiser regularity without relying on bounded
support. Under Assumption~\ref{ass:heavy}, its value at the origin is
controlled by the finite first moment,
\[
\|m_t(0)\|
\le
M_1:=\mathbb E[\|X_0\|]
<\infty.
\]
The next result characterizes the denoiser regularity for heavy-tailed targets.
\begin{proposition}[Denoiser regularity]
\label{prop:heavy_denoiser}
Under Assumptions~\ref{ass:schedule} and~\ref{ass:heavy}, the following hold.
\begin{itemize}\itemsep2pt
\item[(i)] If $1<\nu<\alpha$, then
$C_t\ge a(t)^{-1}\xrightarrow[t\to\infty]{}\infty.$
\item[(ii)] If $\alpha\le\nu<1+\alpha$, Assumption~\ref{ass:heavy} alone does not ensure $C_t<\infty$: for each such $\nu$, there is a smooth, positive density in dimension one satisfying this assumption for which $C_t=\infty$ for every $t>0$.

\item[(iii)] If $\nu\ge1+\alpha$, then for every $\varepsilon>0$ there exists $c_\varepsilon<\infty$, independent of $t$, such that
$C_t\le c_\varepsilon a(t)$ for all $t\ge\varepsilon$.
Consequently, $C_t\to0$ and Assumption~\ref{ass:denoiser} holds.
\end{itemize}
\end{proposition}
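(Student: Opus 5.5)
The common tool for all three parts is the covariance representation of the denoiser Jacobian. Writing $\pi_t(x)=\int \gamma^{-d}\varphi_\alpha((x-ax_0)/\gamma)\,\pi_{\mathrm{data}}(\mathrm dx_0)$ with $a=a(t)$ and $\gamma=\gamma(t)$, and differentiating $m_t(x)=\int x_0\,\pi_t(x\mid x_0)\pi_{\mathrm{data}}(\mathrm dx_0)/\pi_t(x)$ under the integral, we get
\[
\nabla m_t(x)=\gamma^{-1}\,\mathrm{Cov}\bigl(X_0,\;\nabla\log\varphi_\alpha((x-aX_0)/\gamma)\,\big|\,X_t=x\bigr).
\]
Every estimate below comes from bounding this posterior covariance. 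The only properties of the kernel we use are that $\nabla\log\varphi_\alpha$ is bounded and globally Lipschitz (constant $\Lambda$), and that $\varphi_\alpha(u)\asymp\|u\|^{-(d+\alpha)}$.

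\textbf{Part (i).} Here we use Tweedie's formula (Proposition~\ref{prop:tweedie}) together with tail comparison. Since $\nu<\alpha$, the convolution $\pi_t=\mathrm{Law}(aX_0+\gamma Z)$ has tails dominated by the data term, so $\pi_t(x)\sim c\,a^{\nu}\|x\|^{-(d+\nu)}$. The "big jump" heuristic for posteriors then says that when $\|x\|$ is large the posterior concentrates on $X_0\approx x/a$, i.e.\ $m_t(x)=x/a+o(\|x\|)$. We would make this rigorous by splitting $\int x_0\,\pi_t(x\mid x_0)\pi_{\mathrm{data}}(x_0)\,\mathrm dx_0$ into the region $\|ax_0-x\|\le\delta\|x\|$ and its complement; the complement is negligible because $\nu<\alpha$ and $\mathbb E\|X_0\|<\infty$ (recall $\nu>1$). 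Finally, $m_t$ is smooth, and a smooth map with $m_t(x)-m_t(0)\sim x/a$ along rays cannot have Lipschitz constant smaller than $1/a$, by the mean value inequality. This gives $C_t\ge a(t)^{-1}$.

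\textbf{Part (ii).} We build a one-dimensional counterexample: a smooth positive density whose tail is $c|x|^{-1-\nu}$ up to $o(1)$ factors, but with small, steep oscillations. Take $\pi_{\mathrm{data}}=c|x|^{-1-\nu}(1+\sum_k\epsilon_k\psi((x-x_k)/w_k))$, with $\epsilon_k\to0$ so that the $o(1)$ condition holds. The widths $w_k$ must shrink fast enough that the derivative-scale ratio $\epsilon_k/w_k$ blows up in a way the stable kernel cannot smooth out. Then at the points $x=ax_k$ the posterior has covariance growing without bound, which forces $C_t=\infty$. The reason this is possible when $\alpha\le\nu<1+\alpha$ is that the posterior mass attributed to the big jump is of order comparable to the kernel tail, $|x|^{-1-\alpha}$ against $|x|^{-1-\nu}$. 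A local bump at distance $|x_0|$ therefore contributes a covariance of order $|x_0|^2\cdot(\text{relative weight})$, and $|x_0|^{2}|x_0|^{-1-\nu}/|x_0|^{-1-\alpha}$ diverges when $\nu<1+\alpha$. I expect that placing bumps with $\epsilon_k$ chosen to make this product diverge works for every $t$ at once, since $a$ and $\gamma$ only rescale positions. \textbf{This is the main obstacle}: the bump parameters must be tuned so that the divergence holds uniformly in $t$ while the density stays smooth and positive.

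\textbf{Part (iii).} Since $\nabla\log\varphi_\alpha$ is $\Lambda$-Lipschitz, Cauchy--Schwarz applied to the covariance gives $\|\nabla m_t(x)\|_{\mathrm{op}}\le\Lambda a\gamma^{-2}\operatorname{tr}\mathrm{Cov}(X_0\mid X_t=x)$. It therefore suffices to bound $\sup_x\mathbb E[\|X_0-m_t(x)\|^2\mid X_t=x]$ uniformly in $t\ge\varepsilon$; note $\gamma(t)\ge\gamma(\varepsilon)>0$. We split the posterior into two regions.
\begin{itemize}
\item On $\|ax_0-x\|\le\|x\|/2$, the kernel ratio $\varphi_\alpha((x-ax_0)/\gamma)/\pi_t(x)$ is controlled using the lower bound $\pi_t(x)\gtrsim \|x\|^{-(d+\min(\alpha,\nu))}$. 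The resulting second moment is of order $\int_{\|x_0\|\gtrsim\|x\|/a}\|x_0\|^{2}\|x_0\|^{-(d+\nu)}\,\mathrm dx_0$ multiplied by $\|x\|^{d+\alpha}$.
\item Off that region, the kernel is $\lesssim\|x\|^{-(d+\alpha)}$, and its contribution, divided by $\pi_t(x)$, is bounded by moments of order at most $\alpha$ \textemdash{} which are finite when $\nu\ge1+\alpha$.
\end{itemize}
Since $\nu\ge1+\alpha$, powers balance and the supremum is finite. Keeping track of the factors of $a$, the bound is uniform on $t\ge\varepsilon$, yielding $C_t\le c_\varepsilon a(t)$. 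Assumption~\ref{ass:schedule} then gives $a(t)\to0$, so $C_t\to0$ and Assumption~\ref{ass:denoiser} holds.
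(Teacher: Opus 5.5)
Part (i) is fine and is essentially the paper's argument: the spatial asymptotics $m_t(re)=a^{-1}re+o(r)$ combined with the mean value inequality.

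\textbf{Gap in (iii).} Your reduction, via the Lipschitz bound on $s=\nabla\log\varphi_\alpha$, to a uniform bound on $\operatorname{tr}\mathrm{Cov}(X_0\mid X_t=x)$ cannot work, because that quantity is unbounded in $x$ whenever $\nu<2+\alpha$. Fix $t$ and take $\|x\|$ large. On the big-jump region $\|ax_0-x\|\le\|x\|/2$, the unnormalized posterior integrates to an amount of order $\|x\|^{-(d+\nu)}$, while the normalizer is of order $\|x\|^{-(d+\alpha)}$. So the posterior puts mass of order $\|x\|^{\alpha-\nu}$ at $\|x_0\|\asymp\|x\|/a$. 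Since $m_t$ stays bounded, the posterior variance grows like $\|x\|^{2+\alpha-\nu}$. At $\nu=1+\alpha$ this is $\|x\|$. Your own first-bullet expression, $(\|x\|/a)^{2-\nu}\|x\|^{d+\alpha}$, diverges too, so the powers do not balance.

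The missing idea is to use that $s$ is \emph{bounded} ($\Lambda_1<\infty$) on the big-jump region, not only Lipschitz. Write $\nabla m_t(x)=\gamma^{-1}\int(y-m_t(x))[s((x-ay)/\gamma)-s(x/\gamma)]^\top w_x(y)\,\mathrm dy$.
\begin{itemize}
\item Where $\|ay\|\le\|x\|/2$ (or $x$ is bounded), the stable envelopes give $w_x\le C_\varepsilon\pi$. The Lipschitz bound then yields $C_\varepsilon a(M_2+M_*M_1)$, where $M_*$ is a uniform bound on $\mathbb E[\|X_0\|\mid X_t=x]$; this bound itself needs $\nu-\alpha\ge1$.
\item On $\|ay\|>\|x\|/2$, bounded $s$ leaves only the first moment $\int_{\|ay\|>\|x\|/2}\|y\|w_x(y)\,\mathrm dy\lesssim a^{\nu-1}\|x\|^{1+\alpha-\nu}$. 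This is bounded exactly when $\nu\ge1+\alpha$, and $a^{\nu-1}\le a$ because $\nu-1>1$.
\end{itemize}
This is where the threshold $1+\alpha$ actually comes from.

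\textbf{Gap in (ii).} This part rests on the same confusion; it is not merely an unfinished tuning problem.
\begin{itemize}
\item Wrong mechanism: for stable noise, $\nabla m_t=\gamma^{-1}\mathrm{Cov}(X_0,s(U)\mid X_t=x)$ with $s$ bounded. It is not a multiple of $\mathrm{Cov}(X_0\mid X_t=x)$ as in the Gaussian case. So unbounded posterior covariance does not force $C_t=\infty$. By the computation above, every target satisfying Assumption~\ref{ass:heavy} with $1+\alpha\le\nu<2+\alpha$ has unbounded posterior covariance, yet (iii) gives $C_t<\infty$.
\item Wrong exponent: your count $|x_0|^{2}|x_0|^{-1-\nu}/|x_0|^{-1-\alpha}=|x_0|^{2+\alpha-\nu}$ diverges for $\nu<2+\alpha$, not $\nu<1+\alpha$.
\item Wrong tuning: shrinking widths work against you. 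The kernel smooths at scale $\gamma/a$ in $x_0$-space, so only the bump mass $\mu_k\asymp\epsilon_kw_k|x_k|^{-1-\nu}$ survives. Near $x=ax_k$, the bump changes $m_t'$ by roughly $x_k\mu_k\,g'(x-ax_k)/\mathsf D(x)\asymp\mu_k|x_k|^{2+\alpha}$, where $g(v)=\gamma^{-1}\varphi_\alpha(v/\gamma)$ and $\mathsf D=\pi_t\asymp|x|^{-1-\alpha}$.
\end{itemize}
The relevant requirement is therefore $\epsilon_kw_k|x_k|^{1+\alpha-\nu}\to\infty$; the ratio $\epsilon_k/w_k$ is not the right quantity. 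For example, $\epsilon_k=1/k$, $w_k=e^{-k}$, $x_k=k$ fails. The paper avoids bumps and uses a fixed-frequency modulation
\[
\pi(y)=c(1+y^2)^{-(1+\nu)/2}\bigl[1+\lambda(1+y^2)^{-\vartheta/2}\cos y\bigr],\qquad \max\{0,2-\nu\}<\vartheta<1+\alpha-\nu.
\]
The oscillation survives convolution with the factor $\widehat g(1/a)=e^{-\gamma^\alpha a^{-\alpha}}>0$. This gives $m_t'(x)=-K_tx^{1+\alpha-\nu-\vartheta}\sin(x/a)+o(x^{1+\alpha-\nu-\vartheta})$, which is unbounded for every $t>0$.
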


For $\nu\ge1+\alpha$, we can therefore apply the convergence analysis of
Theorem~\ref{cor:rate} without imposing any additional regularity assumption
on the denoiser.

\begin{theorem}[$W_p$-bound for heavy-tailed targets]
\label{thm:main_ht}
Let $\pi_{\mathrm{data}}$ satisfy Assumption~\ref{ass:heavy} with
$\nu\ge1+\alpha$. Under the remaining hypotheses of Theorem~\ref{cor:rate},
the same bound holds. In the drift estimates, the support-radius control of
$\lVert m_t(0)\rVert$ is replaced by $M_1$. The resulting
constants may additionally depend on
$\mu_p=\mathbb E[\|X_0\|^p]^{1/p}$ and on the quantitative constants in
Assumption~\ref{ass:heavy}.
\end{theorem}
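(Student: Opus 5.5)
The plan is to rerun the proof of Theorem~\ref{cor:rate} term by term along the decomposition \eqref{eq:decomp}, and to locate every place where compact support of $\pi_{\mathrm{data}}$ was used. In each such place we substitute the moment and denoiser controls that hold under Assumption~\ref{ass:heavy} with $\nu\ge 1+\alpha$. Two preliminary facts are needed. First, since $\nu>\alpha>p\ge1$, the moments $\mu_p$ and $M_1$ are finite. Second, Proposition~\ref{prop:heavy_denoiser}(iii) gives $C_t\le c_\varepsilon a(t)$ for all $t\ge\varepsilon$. This verifies Assumption~\ref{ass:denoiser} with $\bar C_\varepsilon\le c_\varepsilon$. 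It also gives $C_t^{(\alpha)}\to 1/\alpha$, so Proposition~\ref{prop:dissipativity} supplies the same one-sided Lipschitz bound $\bar L(\tau)$. That bound is contractive outside a transient window of forward times whose length depends only on $(\alpha,\eta,\beta_0,c_\varepsilon)$. For the well-posedness of Proposition~\ref{prop:family}, we note that $\mathbb E\|X_0\|<\infty$ holds. The required Lipschitz continuity and linear growth of $\bar b$ on $[0,T-\varepsilon]$ follow from \eqref{eq:tweedie}: $\|m_t(x)\|\le M_1+c_\varepsilon a(t)\|x\|$.

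\emph{Early stopping $\mathcal E_0$.} We use the coupling $X_\varepsilon=a(\varepsilon)X_0+\gamma(\varepsilon)Z$. It gives
$\mathcal E_0\le (1-a(\varepsilon))\mu_p+\gamma(\varepsilon)\,\mathbb E[\|Z\|^p]^{1/p}$.
For a constant schedule, $1-a(\varepsilon)=O(\varepsilon)$ and $\gamma(\varepsilon)=O(\varepsilon^{1/\alpha})$, so $\mu_p$ replaces $R$ and we obtain $\mathsf c_0\varepsilon^{1/\alpha}$.

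\emph{Initialization $\mathcal E_1$.} We couple $\overleftarrow Z$ and $\overleftarrow Y$ synchronously through the common $\overleftarrow L^\alpha$. The noise then cancels, and \eqref{eq:osl} yields
$\|\overleftarrow Z_\tau-\overleftarrow Y_\tau\|\le e^{\int_0^\tau \bar L}\|\overleftarrow Z_0-\overleftarrow Y_0\|$.
The transient part of $\int\bar L$ is bounded by a constant, and on the remainder $\bar L\to -\beta_0\eta/\alpha$ up to an $O(a)$ correction, whose integral is finite. Following the general proof, this gives the rate $\rho_\star$. For the initial gap, the coupling $\pi_T=\mathrm{Law}(a(T)X_0+\gamma(T)Z)$ versus $Z$ gives $W_p(\pi_T,\pi^\alpha_\infty)\le a(T)\mu_p+(1-\gamma(T))\mathbb E\|Z\|^p{}^{1/p}$, again with $\mu_p$ in place of $R$.

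\emph{Score and discretization $\mathcal E_2$.} This is where the main obstacle lies. The EI error analysis needs the same two ingredients as before. The first is a uniform $L^p$ moment bound on $\overleftarrow Y_\tau$ and on $\bar X^{\theta,h}_\tau$. The second is a growth bound on the score residual $S^{(\alpha)}_t-S^{(\alpha)}_\infty$, which enters the drift-freezing and temporal-freezing terms. By \eqref{eq:tweedie},
$S^{(\alpha)}_t(x)-S^{(\alpha)}_\infty(x)=\frac{a(t)m_t(x)-a(t)^\alpha x}{\alpha\gamma(t)^\alpha}$,
and $\|a(t)m_t(x)\|\le a(t)(M_1+c_\varepsilon a(t)\|x\|)$. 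This is exactly the compact-support estimate with $R$ replaced by $M_1$, so the residual is Lipschitz and of linear growth with constants uniform on $[\varepsilon,T]$. For the moments, the law of $\overleftarrow Y_\tau$ is compared with $\pi_{T-\tau}$ through the $\mathcal E_1$ bound, and $\pi_t$ has $p$-th moment at most $\mu_p+\mathbb E\|Z\|^p{}^{1/p}$. The numerical process is handled by a discrete Gr\"onwall argument using the dissipative linear part, Assumption~\ref{ass:score}, and the fact that $p<\alpha$ keeps stable increments in $L^p$. The remaining terms, $\mathsf c_2(T-\varepsilon)\mathrm{err}$, $\mathsf c_3 h/\varepsilon$ and $\mathsf c_4(\cdots)h^{1/\alpha}$, then follow verbatim from the proof of Theorem~\ref{cor:rate}.

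The delicate check is that no step of that proof used $\|X_0\|\le R$ pointwise, for instance inside conditional expectations when bounding $\nabla m_t$ or $\partial_t S$. Such steps have been absorbed, through Proposition~\ref{prop:heavy_denoiser}(iii) and Assumption~\ref{ass:time}, into $c_\varepsilon$ and $L^{\mathrm{time}}_\varepsilon$. As a result, the constants depend only on the listed quantities together with $\mu_p$, $M_1$, and the tail constants in Assumption~\ref{ass:heavy}.
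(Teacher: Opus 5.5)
Your proposal is correct and is essentially the paper's proof. Proposition~\ref{prop:heavy_denoiser}(iii) replaces Proposition~\ref{prop:ct_decay} in verifying Assumption~\ref{ass:denoiser}, which gives finite transient non-contractivity and a finite mixing deficit; $\mu_p<\infty$ controls the bias and moment terms; and the remaining use of compact support, the score-at-the-origin bound in the one-step fluctuation estimate, is rerun with $\|m_t(0)\|\le M_1$. Your minor variations (the direct coupling for $W_p(\pi_T,\pi_\infty^\alpha)$, a separate discrete Gr\"onwall for the scheme's moments) are harmless, but note that the input you use without comment, $\|m_t(0)\|\le M_1$, is not automatic for a conditional mean: the paper's Lemma~\ref{app:lem_mtzero} derives it from the fact that the posterior weight $\varphi_\alpha(a(t)X_0/\gamma(t))$ at $x=0$ is nonincreasing in $\|X_0\|$, so conditioning can only decrease $\mathbb E\|X_0\|$.
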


\section{COMPARISON WITH BROWNIAN DYNAMICS}

We now compare \(\alpha\)-stable and Brownian dynamics on heavy-tailed targets from two complementary perspectives: global convergence of the generated distribution (which primarily reflects reconstruction of the bulk) and preservation of its tail behavior.

\subsection{Global comparison}\label{sec:global_comparison}

Our goal is to determine whether $\alpha$-stable and Brownian dynamics exhibit a meaningful separation at the level of the generated distribution as a whole, in $W_p$. For the Brownian model, we consider a VP-SDE with a general noise schedule and its associated reverse-time dynamics. We denote its forward marginals by $\pi_t^{(2)}$ and its stationary law by $\pi_\infty^{(2)}=\mathcal N(0,I_d)$. Since the stable and Brownian formulations share the same algebraic structure, we use $a(t)$, $\beta(t)$, $\gamma(t)$, and $C_t$ for the corresponding quantities in both settings. When a distinction is needed, we make it explicit, e.g.\ writing $\mathcal E_1^{(\alpha)}$ for the stable initialization error and $\mathcal E_1^{(2)}$ for its Brownian counterpart.

Following the error decomposition of Section~\ref{sec:bound_compactly}, we discuss the two dynamics in terms of denoiser regularity, mixing of the reverse dynamics, discretization, and early-stopping error.

\paragraph{Denoiser regularity.}
We first compare the regularity of the denoiser, as measured by its Lipschitz constant $C_t$.
Under stable noise, Propositions~\ref{prop:ct_decay}
and~\ref{prop:heavy_denoiser} give $C_t\to0$ for compactly supported
targets and power-law targets with $\nu\ge1+\alpha$. 
In the Brownian case, the behavior actually depends on the target type.
\begin{proposition}[Brownian denoiser regularity]\label{prop:brown_ct_divergence}
Under Assumption~\ref{ass:schedule}, the Brownian VP denoiser satisfies:
\begin{itemize}
  \item If $\pi_{\mathrm{data}}$ is supported on $\bar B(0,R)$, then
  $C_t\le a(t)R^2/\gamma(t)^2\to0$ as $t\to\infty$.
  \item Under Assumption~\ref{ass:heavy},
  $\nabla m_t(x)\to a(t)^{-1}I_d$ as $\lVert x\rVert\to\infty$ for every $t>0$;
  hence $C_t\ge a(t)^{-1}\to\infty$ as $t\to\infty$.
\end{itemize}
\end{proposition}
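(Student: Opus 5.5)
The proof rests on the Gaussian covariance representation of the denoiser Jacobian. For the VP-SDE, $X_t=a(t)X_0+\gamma(t)G$ with $G\sim\mathcal N(0,I_d)$ and $\gamma(t)^2=1-a(t)^2$. The posterior of $X_0$ given $X_t=x$ has density proportional to $\pi_{\mathrm{data}}(x_0)\exp\!\bigl(-\|x-a(t)x_0\|^2/(2\gamma(t)^2)\bigr)$. Differentiating $m_t(x)=\mathbb E[X_0\mid X_t=x]$ under the integral gives the standard identity
\[
\nabla m_t(x)=\frac{a(t)}{\gamma(t)^2}\,\mathrm{Cov}\bigl(X_0\mid X_t=x\bigr).
\]
Both items follow from this identity. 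In the heavy-tailed case we first check that differentiation under the integral is legitimate, which holds because the Gaussian factor dominates any polynomial growth.

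\textbf{Compact support.} The posterior is supported in $\bar B(0,R)$, so $\mathrm{Cov}(X_0\mid X_t=x)\preceq \mathbb E[X_0X_0^\top\mid X_t=x]$. Its operator norm is at most $\mathbb E[\|X_0\|^2\mid X_t=x]\le R^2$. Hence $C_t\le a(t)R^2/\gamma(t)^2$. Under Assumption~\ref{ass:schedule}, $B(T)\to\infty$, so $a(t)\to0$ and $\gamma(t)\to1$, and the bound tends to $0$.

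\textbf{Heavy tails.} Fix $t$, write $a=a(t)$, $\gamma=\gamma(t)$, and substitute $x_0=x/a+(\gamma/a)u$. The posterior of $U=(X_0-x/a)a/\gamma$ has density proportional to $\pi_{\mathrm{data}}(x/a+\gamma u/a)\,e^{-\|u\|^2/2}$. Therefore
\[
\mathrm{Cov}(X_0\mid X_t=x)=(\gamma/a)^2\,\mathrm{Cov}(U),
\qquad
\nabla m_t(x)=a^{-1}\mathrm{Cov}(U).
\]
It then suffices to show $\mathrm{Cov}(U)\to I_d$ as $\|x\|\to\infty$.

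By Assumption~\ref{ass:heavy}, for fixed $u$,
\[
\frac{\pi_{\mathrm{data}}(x/a+\gamma u/a)}{\pi_{\mathrm{data}}(x/a)}
=\frac{c+o(1)}{c+o(1)}\Bigl(\frac{\|x/a+\gamma u/a\|}{\|x/a\|}\Bigr)^{-(d+\nu)}\to1 .
\]
This convergence is uniform on balls $\|u\|\le K$, using the uniformity of the tail asymptotic in direction.

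Next we need domination to pass to the limit in the moments $\int u^{\otimes k}(\cdot)\,e^{-\|u\|^2/2}\,du$ for $k=0,1,2$. We split the region into three parts.
\begin{itemize}
\item Where $\|x/a+\gamma u/a\|\ge \|x\|/(2a)$, the ratio is bounded by a constant, because the density is comparable to $\|\cdot\|^{-(d+\nu)}$ for large arguments.
\item On the complementary region, $\|u\|\ge \|x\|/(2\gamma)$. Since $\pi_{\mathrm{data}}$ is bounded, the ratio is at most $\|\pi_{\mathrm{data}}\|_\infty\,C\,\|x/a\|^{d+\nu}$.
\item Against $e^{-\|u\|^2/2}$, this polynomial factor is killed by the Gaussian tail $\int_{\|u\|\ge\|x\|/(2\gamma)}\|u\|^2e^{-\|u\|^2/2}\,du$, which decays like $e^{-\|x\|^2/(8\gamma^2)}$.
\end{itemize}
Dominated convergence on the first part plus this vanishing remainder gives convergence of the zeroth, first and second moments to those of $\mathcal N(0,I_d)$. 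Hence $\mathrm{Cov}(U)\to I_d$ and $\nabla m_t(x)\to a(t)^{-1}I_d$.

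It follows that $C_t\ge \limsup_{\|x\|\to\infty}\|\nabla m_t(x)\|_{\mathrm{op}}=a(t)^{-1}$. Since $a(t)\to0$ under Assumption~\ref{ass:schedule}, $C_t\to\infty$. The main obstacle is this domination step, namely controlling the region where the shifted argument is near the origin. It is handled by combining boundedness of the density with the Gaussian factor, as above.
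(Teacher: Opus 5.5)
Your proposal is correct and follows essentially the same route as the paper. Both arguments use the Gaussian covariance identity $\nabla m_t(x)=(a/\gamma^2)\mathrm{Cov}(X_0\mid X_t=x)$ with the $R^2$ bound in the compact case; for heavy tails, both use the rescaling $U=(aX_0-x)/\gamma$ with $\rho_x(u)\to1$, a near/far split (power-law envelope on the near region, $\|\pi_{\mathrm{data}}\|_\infty$ times the Gaussian tail on the far region), and conclude $\mathrm{Cov}(U)\to I_d$. The only cosmetic differences are that you justify the covariance identity directly instead of citing Dytso et al., and you split on $\|x+\gamma u\|\ge\|x\|/2$ rather than $\|u\|\le\|x\|/(2\gamma)$, which works equally well.
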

Thus, for heavy-tailed targets with $\nu\ge1+\alpha$, denoiser regularity separates the two models: stable noise asymptotically flattens the denoiser, yielding eventual contractivity, whereas Brownian noise produces the opposite behavior. 

\begin{wrapfigure}{r}{0.5\linewidth} 
\includegraphics[width=0.8\linewidth]{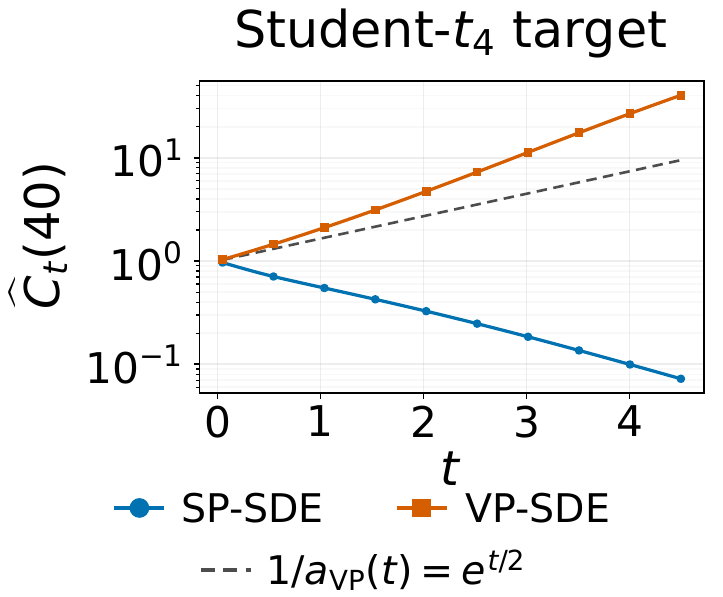}
\caption{Illustration of $\widehat C_t(40)$ on a Student-$t_4$ target for the stable ($\alpha=1.5$) and VP models.}
\label{fig:c_t}
\end{wrapfigure}
Figure~\ref{fig:c_t} illustrates this dichotomy on a Student-$t_4$ target with $\beta\equiv1$ in both models
through $\widehat C_t(40)$, the maximum numerically evaluated $|m_t^{\prime}|$ on a grid of $[-40,40]$: 
it decreases under stable noise ($\alpha=1.5$) and grows under Brownian noise, consistently with the lower bound $C_t\ge1/a(t)=e^{t/2}$; see Appendix~\ref{app:exp:ct} for details.

Crucially, this regularity gap does not imply a separation in global convergence. As we show next, the loss of Brownian contractivity makes the synchronous-coupling argument ineffective, but does not prevent the Brownian reverse dynamics from mixing.

\paragraph{Mixing.}
Here, mixing error denotes the propagated initialization error. For stable dynamics, the one-sided Lipschitz bound yields a contractive
synchronous coupling, as
$\mathcal{E}_1^{(\alpha)}\le W_p(\pi_{\mathrm{data}},\pi_\infty^\alpha)
e^{-\Phi^{(\alpha)}(T)}$ where 
\begin{equation*}
  \Phi^{(\alpha)}(T):=\frac{1+\eta}{\alpha}\int_\varepsilon^T\frac{\beta(t)\bigl(1-a(t)C_t\bigr)}{\gamma(t)^\alpha}\,\mathrm{d} t.
\end{equation*}
Combined with the decay of $C_t$, this gives an
explicit exponential rate for the initialization error.
\begin{corollary}[Stable mixing error]\label{cor:mixing}
Under Assumptions~\ref{ass:schedule} and~\ref{ass:heavy} with $\nu\ge1+\alpha$, for every $p\in[1,\alpha)$ stable noise mixes exponentially in $B$,
\begin{equation*}
  \mathcal{E}_1^{(\alpha)}\le C\,W_p(\pi_{\mathrm{data}},\pi_\infty^\alpha)\,e^{-\frac{1+\eta}{\alpha}(B(T)-B(\varepsilon))}\xrightarrow[T\to\infty]{}0,
\end{equation*}
where $C<\infty$ is independent of $T$.
\end{corollary}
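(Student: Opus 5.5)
We start from the synchronous-coupling bound stated just before the corollary,
\[
\mathcal E_1^{(\alpha)}\le W_p(\pi_{\mathrm{data}},\pi_\infty^\alpha)\,e^{-\Phi^{(\alpha)}(T)} .
\]
It comes from driving $\overleftarrow Z$ and $\overleftarrow Y$ with the same $\overleftarrow L^\alpha$. The jumps then cancel in the difference $\overleftarrow Z_\tau-\overleftarrow Y_\tau$, which obeys a random ODE. The one-sided Lipschitz estimate \eqref{eq:osl} of Proposition~\ref{prop:dissipativity} gives
\[
\tfrac{\mathrm d}{\mathrm d\tau}\|\overleftarrow Z_\tau-\overleftarrow Y_\tau\|\le \bar L(\tau)\,\|\overleftarrow Z_\tau-\overleftarrow Y_\tau\| .
\]
Since $\bar L(\tau)=\beta(T-\tau)/\alpha-(1+\eta)\beta(T-\tau)C^{(\alpha)}_{T-\tau}$, Grönwall's lemma and an optimal initial coupling yield the displayed bound. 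Strictly, $\int\bar L$ also contains the positive term $\beta/\alpha$, which must be tracked. We will work with the exact exponent $-\int_0^{T-\varepsilon}\bar L(\tau)\,\mathrm d\tau$ and rewrite it in forward time $t=T-\tau\in[\varepsilon,T]$.

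The main step is a lower bound on $\Phi^{(\alpha)}(T)$, more precisely on $-\int\bar L$, of the form $\frac{1+\eta}{\alpha}(B(T)-B(\varepsilon))-K$ with $K$ independent of $T$. Write
\[
\frac{1-a(t)C_t}{\gamma(t)^\alpha}=1-\Big[1-\frac{1-a(t)C_t}{1-a(t)^\alpha}\Big]=1-\frac{a(t)C_t-a(t)^\alpha}{\gamma(t)^\alpha},
\]
so that the integrand equals $\beta(t)$ minus the error $\beta(t)\,\frac{a(t)C_t-a(t)^\alpha}{\gamma(t)^\alpha}$. We bound this error using Proposition~\ref{prop:heavy_denoiser}(iii): for $t\ge\varepsilon$ one has $C_t\le c_\varepsilon a(t)$, hence $a(t)C_t\le c_\varepsilon a(t)^2$. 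Also $\gamma(t)^\alpha\ge\gamma(\varepsilon)^\alpha>0$, because $a$ is nonincreasing and $\beta>0$ by Assumption~\ref{ass:schedule}. The error integrand is therefore at most $\beta(t)\bigl(c_\varepsilon a(t)^2+a(t)^\alpha\bigr)/\gamma(\varepsilon)^\alpha$.

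Next we substitute $u=B(t)$, using $a(t)=e^{-B(t)/\alpha}$. This gives
\[
\int_\varepsilon^\infty\beta(t)\,a(t)^{q}\,\mathrm dt=\int_{B(\varepsilon)}^\infty e^{-qu/\alpha}\,\mathrm du<\infty\qquad (q=2,\alpha),
\]
where $B(T)\to\infty$ comes from Assumption~\ref{ass:schedule}. The total error is thus bounded by a finite $K_\varepsilon$ independent of $T$, and we obtain $\Phi^{(\alpha)}(T)\ge\frac{1+\eta}{\alpha}(B(T)-B(\varepsilon))-\frac{1+\eta}{\alpha}K_\varepsilon$. Setting $C=e^{(1+\eta)K_\varepsilon/\alpha}$ gives the displayed estimate.

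The step I expect to be the main obstacle is reconciling the exact Grönwall exponent with $\Phi^{(\alpha)}$, namely handling the $+\beta/\alpha$ term in $\bar L$ so that the net rate is $\frac{1+\eta}{\alpha}$ and not $\frac{\eta}{\alpha}$. My first attempt would be the dissipative reformulation \eqref{eq:dissip_reform}: the linear part gives rate $\eta\beta/\alpha$, and the score remainder $S_t^{(\alpha)}-S_\infty^{(\alpha)}$ has one-sided constant governed by $C_t^{(\alpha)}-1/\alpha$. Via Tweedie this is $\frac{a(t)^\alpha-a(t)C_t}{\alpha\gamma(t)^\alpha}$, which is again integrable against $\beta$ by the argument above. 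If the exact exponent only yields rate $\eta/\alpha$, I would state the bound with $\Phi^{(\alpha)}$ exactly as the preceding display defines it. That display already takes the coupling inequality with exponent $\Phi^{(\alpha)}$ as given, so the corollary then reduces purely to the integrability estimate above.

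Finally, $W_p(\pi_{\mathrm{data}},\pi_\infty^\alpha)<\infty$ for $p<\alpha$, because both laws have finite $p$-th moments: $\pi_{\mathrm{data}}$ since $\nu\ge1+\alpha>p$, and $\pi_\infty^\alpha$ since $p<\alpha$. The right-hand side therefore tends to $0$ as $T\to\infty$, again because $B(T)\to\infty$.
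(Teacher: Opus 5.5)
Your core estimate is the paper's. The deficit you subtract from $\Phi^{(\alpha)}(T)$ is exactly $\mathfrak D^{\mathrm{mix}}_\varepsilon$ (restricted to $[\varepsilon,T]$), because $\frac{\eta}{\alpha}-\chi_t=\frac{1+\eta}{\alpha}\,\frac{a(t)C_t-a(t)^\alpha}{\gamma(t)^\alpha}$. The paper bounds it as you do, via $C_t\le c_\varepsilon a(t)$, $\gamma(t)\ge\gamma(\varepsilon)$ and $\int_\varepsilon^\infty\beta a^2=\frac{\alpha}{2}a(\varepsilon)^2$. The $-a^\alpha$ term has a favourable sign and can simply be dropped. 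So, taking the display preceding the corollary as given, your proof is complete and coincides with the paper's.

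What is wrong is your account of that display, and the repair you propose for the obstacle you flagged would fail. Synchronous coupling plus Gr\"onwall couples $\pi_T$ with $\pi_\infty^\alpha$, not $\pi_{\mathrm{data}}$ with $\pi_\infty^\alpha$. It gives
\[
\mathcal E_1^{(\alpha)}\le W_p(\pi_T,\pi_\infty^\alpha)\exp\Bigl(\int_0^{T-\varepsilon}\bar L(\tau)\,\mathrm d\tau\Bigr),\qquad \int_0^{T-\varepsilon}\bar L(\tau)\,\mathrm d\tau=\frac{B(T)-B(\varepsilon)}{\alpha}-\Phi^{(\alpha)}(T).
\]
The dissipative reformulation cannot remove the positive term, because it only rewrites the same drift. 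Its one-sided constant, $\bar\beta(\tau)\bigl(-\frac{\eta}{\alpha}+\frac{1+\eta}{\alpha}\frac{a(t)C_t-a(t)^\alpha}{\gamma(t)^\alpha}\bigr)$ at $t=T-\tau$, equals $\bar L(\tau)$ identically. On its own it therefore still yields only the rate $\eta/\alpha$.

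The missing ingredient is forward contraction (Lemma~\ref{app:lem_forward}). Run the SP-SDE from $\pi_{\mathrm{data}}$ and from $\pi_\infty^\alpha$ with the same noise. The strong-solution formula gives $X_T-X_T'=a(T)(X_0-X_0')$, while $\pi_\infty^\alpha$ is invariant. Hence $W_p(\pi_T,\pi_\infty^\alpha)\le a(T)\,W_p(\pi_{\mathrm{data}},\pi_\infty^\alpha)$ with $a(T)=e^{-B(T)/\alpha}$.

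Since $a(T)e^{(B(T)-B(\varepsilon))/\alpha}=e^{-B(\varepsilon)/\alpha}\le1$, this factor absorbs exactly the term $\frac1\alpha(B(T)-B(\varepsilon))$. The Gr\"onwall bound becomes $W_p(\pi_{\mathrm{data}},\pi_\infty^\alpha)\,e^{-\Phi^{(\alpha)}(T)}$, after which your integrability argument finishes the proof. The extra $1/\alpha$ in the rate $(1+\eta)/\alpha$ comes from the forward Ornstein--Uhlenbeck contraction, not from any property of the backward drift.
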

For Brownian dynamics, Proposition~\ref{prop:brown_ct_divergence} gives
$a(t)C_t\ge1$, so the analogous synchronous-coupling bound is vacuous.
Nevertheless, a direct analysis of the initialization error still yields
exponential mixing.
\begin{theorem}[Brownian mixing error]\label{thm:gauss_init}
Under Assumption~\ref{ass:schedule}, suppose $\pi_{\mathrm{data}}$ satisfies Assumption~\ref{ass:heavy} with $\nu>2$. Let $\mu_{T,\varepsilon}^{(2)}$ denote the law at reverse time $T-\varepsilon$ of the Brownian reverse SDE with the exact score, initialized at the reference $\pi_\infty^{(2)}$ in place of the forward marginal $\pi_T^{(2)}$. For $p\in[1,\nu)$ and any $q\in(p,\nu)$ there is $C_q<\infty$, independent of $\varepsilon$, such that for $T$ large enough
\begin{equation*}
  \mathcal{E}_1^{(2)}=W_p(\mu_{T,\varepsilon}^{(2)},\pi_\varepsilon^{(2)})\le C_q\,e^{-\frac{q-p}{2pq}B(T)}\xrightarrow[T\to\infty]{}0.
\end{equation*}
\end{theorem}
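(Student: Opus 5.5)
The plan is to avoid synchronous coupling altogether, since Proposition~\ref{prop:brown_ct_divergence} makes it vacuous. Instead I would work with a divergence that contracts under \emph{any} common Markov kernel, and then convert it into $W_p$ by moment interpolation.

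Let $P_{T,\varepsilon}$ denote the Markov kernel of the exact-score Brownian reverse SDE from reverse time $0$ to $T-\varepsilon$. By the marginal-exactness of the reversal, $\pi_\varepsilon^{(2)}=\pi_T^{(2)}P_{T,\varepsilon}$. By definition, $\mu^{(2)}_{T,\varepsilon}=\pi_\infty^{(2)}P_{T,\varepsilon}$. The data-processing inequality then gives
\[
\mathrm{TV}\bigl(\mu^{(2)}_{T,\varepsilon},\pi^{(2)}_\varepsilon\bigr)\le \mathrm{TV}\bigl(\pi_\infty^{(2)},\pi_T^{(2)}\bigr).
\]
To bound the right-hand side, write $\pi_T^{(2)}=\mathrm{Law}(aX_0+\gamma G)$ with $a=a(T)=e^{-B(T)/2}$ and $\gamma^2=1-a^2$. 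By the triangle inequality,
\[
\mathrm{TV}\le \mathbb E\bigl[\min(1,c\,a\|X_0\|/\gamma)\bigr]+\mathrm{TV}\bigl(\mathcal N(0,\gamma^2I),\mathcal N(0,I)\bigr)\lesssim a\,M_1+a^2 .
\]
Here $M_1<\infty$ because $\nu>1$. Hence $\mathrm{TV}\lesssim e^{-B(T)/2}$.

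Next comes the interpolation step. Take a maximal coupling $(X,Y)$ of $\mu^{(2)}_{T,\varepsilon}$ and $\pi^{(2)}_\varepsilon$, so that $\mathbb P(X\neq Y)=\mathrm{TV}$. By H\"older with exponents $q/p$ and $q/(q-p)$,
\[
W_p^p\le \mathbb E\bigl[\|X-Y\|^p\mathbf 1_{X\ne Y}\bigr]\le \bigl(\mathbb E\|X-Y\|^q\bigr)^{p/q}\,\mathrm{TV}^{(q-p)/q}.
\]
This yields $W_p\le 2\bigl(M_q(\mu)+M_q(\pi_\varepsilon)\bigr)\,\mathrm{TV}^{(q-p)/(pq)}$, where $M_q$ denotes the $q$-th moment norm. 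Substituting the TV bound gives exactly the rate $e^{-\frac{q-p}{2pq}B(T)}$. It therefore remains to bound the $q$-th moments uniformly in $T$ (large) and $\varepsilon$.

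For $\pi_\varepsilon^{(2)}$ this is immediate: $\|aX_0+\gamma G\|_{L^q}\le \|X_0\|_{L^q}+\|G\|_{L^q}$, which is finite since $q<\nu$.

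The main obstacle is $\mu^{(2)}_{T,\varepsilon}$, whose law is not explicit. I would use the $h$-transform representation of the exact reversal:
\[
\mu^{(2)}_{T,\varepsilon}(\mathrm dx)=\mathbb E\bigl[r_T(X_T)\mid X_\varepsilon=x\bigr]\,\pi^{(2)}_\varepsilon(\mathrm dx),\qquad r_T:=\frac{\mathrm d\pi^{(2)}_\infty}{\mathrm d\pi^{(2)}_T}.
\]
This gives $\mathbb E_\mu\|Y\|^q=\mathbb E\bigl[\|X_\varepsilon\|^q r_T(X_T)\bigr]$ along the forward process. Fix $q'\in(q,\nu)$ and let $s=q'/(q'-q)$ be the conjugate exponent. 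H\"older then bounds this by
\[
\|X_\varepsilon\|_{L^{q'}}^{q}\,\bigl(\mathbb E_{\pi_T}[r_T^{s}]\bigr)^{1/s}.
\]
It thus suffices to show that the order-$s$ Rényi-type integral $\int \varphi^s\,(\pi_T^{(2)})^{1-s}$ stays bounded as $T\to\infty$, where $\varphi$ is the standard Gaussian density. For this I would lower-bound $\pi_T^{(2)}(x)\ge \mathbb P(\|X_0\|\le1)\,\gamma^{-d}(2\pi)^{-d/2}e^{-(\|x\|+a)^2/(2\gamma^2)}$ by restricting to $\|X_0\|\le 1$. The integrand is then at most of order
\[
\exp\Bigl(-\tfrac{s}{2}\|x\|^2+\tfrac{s-1}{2\gamma^2}(\|x\|+a)^2\Bigr),
\]
which is integrable uniformly once $\gamma^2>(s-1)/s$, that is, for $T$ large enough. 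This is where the restriction ``$T$ large enough'' enters.

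Finally, all constants depend only on $q$, $q'$, $d$, $\|X_0\|_{L^{q'}}$ and $M_1$, and not on $\varepsilon$. This gives $C_q$ independent of $\varepsilon$, as claimed.
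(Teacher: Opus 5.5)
Your argument is correct, and it has the same skeleton as the paper's proof: total-variation contraction under the reverse kernel, a moment bound for $\mu^{(2)}_{T,\varepsilon}$ via the likelihood ratio $\pi_\infty^{(2)}/\pi_T^{(2)}$ evaluated along the forward process with H\"older, and TV--moment interpolation. Your maximal-coupling version of the interpolation is equivalent to the paper's bound $2^{p-1}\int\|x\|^p\,\mathrm d|\mu-\nu|$.

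Where you differ is in the two auxiliary estimates, and the difference matters:
\begin{itemize}
\item For $\mathrm{TV}(\pi_T^{(2)},\pi_\infty^{(2)})$, the paper uses Gaussian log-Sobolev entropy decay plus Pinsker. This gives $\sqrt{K_0/2}\,a_T$ with $K_0=\mathrm{KL}(\pi_{\mathrm{data}}\,\|\,\pi_\infty^{(2)})$, and needs a bounded density and $M_2<\infty$. Your convexity bound over the Gaussian mixture, plus the variance mismatch, needs only $M_1$.
\item For the R\'enyi-type factor, the paper lower-bounds $\pi_T^{(2)}$ by Jensen using $M_2$. This gives $K_T\to1$ and $\lambda_T\to0$, so the factor tends to one. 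Your restriction to $\{\|X_0\|\le1\}$ needs only positive mass near the origin. It yields a bounded but not asymptotically sharp factor, and your constant also depends on $\mathbb P(\|X_0\|\le1)$, which you should add to your list.
\end{itemize}
So your quantitative steps do not use $\nu>2$ at all, while the paper's route gives cleaner, more canonical constants.

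One step needs to be made explicit. The $h$-transform formula requires more than marginal exactness $\pi_T^{(2)}P_{T,\varepsilon}=\pi_\varepsilon^{(2)}$. It needs well-posedness of the reverse SDE and the identification $P_{T,\varepsilon}(x,\cdot)=\mathrm{Law}(X_\varepsilon\mid X_T=x)$ for $\pi_T^{(2)}$-a.e.\ $x$, which then holds $\pi_\infty^{(2)}$-a.e.\ by positivity of $\pi_T^{(2)}$. The paper obtains this from the global Lipschitz bound on the score over $[\varepsilon,T]$ and the finite-entropy time-reversal theorem of Cattiaux et al. That theorem uses $K_0<\infty$, so in that form of the argument $\nu>2$ is still needed there.
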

Figure~\ref{fig:exp2} illustrates the decay of the initialization error on a Student-$t_4$ target. For each model, we compare samplers initialized from the forward marginal $\pi_T$ and from the stationary reference, using the same numerically evaluated score, step size $h=0.0125$, and stopping time $\varepsilon=0.05$. The resulting $W_{1.1}$ and $W_{1.4}$ discrepancies decrease with $T$ for both stable and Brownian dynamics, eventually approaching the finite-sample baseline obtained from two independent samples from $\pi_\varepsilon$. This is consistent with the predicted loss of initialization dependence in both models, including the non-contractive Brownian regime; see Appendix~\ref{app:exp:init}.

Thus, on the common class of targets with $\nu\ge1+\alpha$, both initialization errors decay
exponentially in $W_p$, despite fundamentally different mechanisms:
contractive coupling for stable dynamics and a direct mixing argument for
Brownian dynamics. Global initialization error therefore does not separate
the two models.
\begin{figure}[!htbp]
    \centering\includegraphics[width=0.7\linewidth]{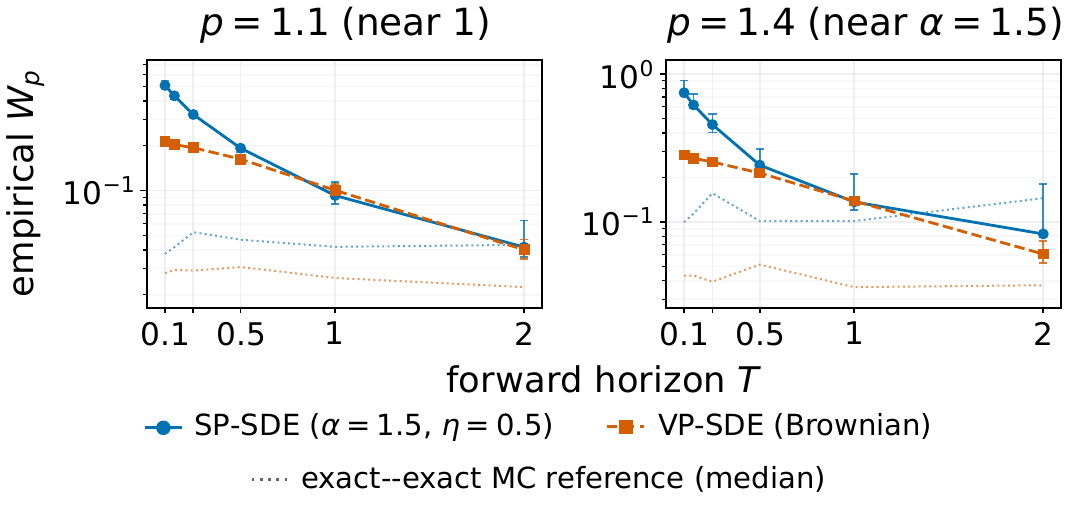}
  \caption{
  Empirical $W_{1.1}$ and $W_{1.4}$ discrepancies between samplers initialized from the stationary reference and from the exact forward marginal $\pi_T$, for stable and Brownian dynamics on a Student-$t_4$ target. Both discrepancies decrease with $T$ toward the finite-sample baseline (dotted), estimated from independent samples from $\pi_\varepsilon$. Points and bars report medians and $90\%$ bootstrap intervals over $12$ runs; see Appendix~\ref{app:exp:init}.}
  \label{fig:exp2}
\end{figure}

\paragraph{Early stopping and discretization.}
Having found no qualitative separation in the initialization error, we finally compare the scaling of the remaining two terms. For the stable sampler, both directly reflect the self-similarity of the driving process:
$\gamma(\varepsilon)\asymp(\beta_0\varepsilon)^{1/\alpha}$ and
$L_h^\alpha\overset{\mathcal L}=h^{1/\alpha}L_1^\alpha$ yield the respective upper bounds
$\varepsilon^{1/\alpha}$ and $h^{1/\alpha}$.  
For $\alpha<2$, $L^\alpha$ is a pure-jump process and $1/\alpha>1/2$ is its self-similarity index. It expresses superdiffusive scaling, in a Lévy-flight sense: heavy-tailed jumps permit occasional long-range moves and thereby provide a mechanism for broader exploration of a large space \citep{viswanathan2008levy}. At $\alpha=2$, both exponents reduce to the Brownian scaling $1/2$.

A rigorous comparison of these bounds with Brownian dynamics is not currently
available under the heavy-tailed assumptions considered here. In particular,
\citet{Strasman2024} obtain an $h^{1/2}$ term for the same
exponential-integrator construction, but under global score regularity and
strong log-concavity and without the early-stopping error. The shared
construction nevertheless suggests $\varepsilon^{1/2}$ and $h^{1/2}$ as the
corresponding Brownian scalings. If these rates extend to the present
heavy-tailed setting, the stable bounds would exhibit more favorable exponents
for $\alpha<2$. This comparison is therefore suggestive rather than
conclusive: it identifies a possible scaling mechanism consistent with the
empirical sampling gains reported for Lévy-based generative models
\citep{Yoon2023,Paquet2024LevyProtein}.

\subsection{Tail behavior}
\label{sec:tce}

Global convergence in $W_p$ does not by itself guarantee that the generated distribution reproduces the extreme behavior of the target. In particular, it does not determine whether the generated law has the correct tail index or assigns the correct mass to extreme events. We therefore complement the Wasserstein analysis with two tail-sensitive quantities motivated by classical Extreme Value Theory \citep[EVT,][]{deHaanFerreira2006,GhoshResnick2009,Resnick2007}. The first, based on the mean-excess function, identifies the asymptotic tail regime; the second compares generated and target quantiles at increasingly extreme probability levels.

\paragraph{Tail regime.}
\label{sec:tail_regime}
A target-independent criterion, called limiting relative mean excess, distinguishes power-law tails from lighter tails, such as Gaussian tails.
\begin{definition}[Limiting relative mean excess]\label{def:tce}
For a probability measure $\mu$ on $\mathbb{R}^d$, let $F_\mu(r)=\mu(\{x:\|x\|\le r\})$  denote its radial CDF.  For a level $\kappa\in(0,1)$, let $q_\kappa(\mu)=\inf\{r:F_\mu(r)\ge\kappa\}$ be its radial quantile of level $\kappa$. Then, for $u$ such that $1-F_\mu(u)>0$, introduce the mean-excess function $e_\mu(u)=\mathbb{E}_\mu[\left\lVert X\right\rVert-u\mid\left\lVert X\right\rVert>u]$ so that the \emph{limiting relative mean excess} is given by
\begin{equation*}
  \Gamma(\mu):=\lim_{u\to\infty}\frac{e_\mu(u)}{u},
\end{equation*}
whenever the limit exists.
\end{definition}

Note that if $q_\kappa(\mu)\to\infty$ as $\kappa\to1$, then the limiting relative mean excess can be re-expressed as
$\Gamma(\mu)=\lim_{\kappa\to1}e_\mu(q_\kappa(\mu))/q_\kappa(\mu)$. This condition is satisfied by the distributions considered here.

We also introduce $\overline G_\mu(\cdot):= 1-F_\mu(\cdot)$, the radial survival function of $\mu$.  In particular, Assumption~\ref{ass:heavy} implies that $\overline G_{\pi_{\mathrm{data}}}(r)\sim
c'_{\pi_{\mathrm{data}}}r^{-\nu}$ as $r\to\infty$, with
$c'_{\pi_{\mathrm{data}}}=\omega_{d-1}c_{\pi_{\mathrm{data}}}/\nu$, where $\omega_{d-1}$ is the surface area of the unit sphere. 
\begin{proposition}[Tail-regime dichotomy]\label{prop:tce} \hfill\\
\textit{(i)} If $\overline G_\mu(r)\sim c_\mu r^{-\vartheta}$ for some
$c_\mu>0$ and $\vartheta>1$, then $e_\mu(u)\sim u/(\vartheta-1)$ and
\begin{equation*}
  \Gamma(\mu)=\frac{1}{\vartheta-1}>0.
\end{equation*}
\textit{(ii)} If $X\sim\mu$ decomposes as $X=G+R$ with $G$ a nondegenerate isotropic Gaussian vector and $\left\lVert R\right\rVert\le K$ almost surely, then $e_\mu(u)=\mathcal O(1)$ and $\Gamma(\mu)=0$.
\end{proposition}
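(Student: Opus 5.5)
For part \textit{(i)} I will rewrite the mean excess as an integral of the survival function. Setting $\overline G=\overline G_\mu$, the layer-cake formula gives
\[
e_\mu(u)=\frac{1}{\overline G(u)}\int_u^\infty \overline G(r)\,\mathrm{d}r .
\]
Since $\overline G(r)\sim c_\mu r^{-\vartheta}$ with $\vartheta>1$, the tail integral is finite. For any $\delta>0$ and $u$ large, sandwiching $\overline G(r)$ between $(1\pm\delta)c_\mu r^{-\vartheta}$ on $[u,\infty)$ yields
\[
\int_u^\infty\overline G(r)\,\mathrm{d}r=(1+o(1))\,\frac{c_\mu u^{1-\vartheta}}{\vartheta-1}.
\]
Dividing by $\overline G(u)=(1+o(1))c_\mu u^{-\vartheta}$ gives $e_\mu(u)\sim u/(\vartheta-1)$, hence $\Gamma(\mu)=1/(\vartheta-1)>0$. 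This is simply Karamata's theorem for the regularly varying function $\overline G$, so the step is routine.

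For part \textit{(ii)} the plan is to reduce to one-dimensional Gaussian-type tail estimates for $\|G\|$. Write $G\sim\mathcal N(m,\sigma^2 I_d)$; isotropy allows a mean $m$, which I absorb into $R$ by enlarging $K$ to $K+\|m\|$, so that $G$ is centered. The triangle inequality gives $\|G\|-K\le\|X\|\le\|G\|+K$. Hence
\[
\overline G_\mu(u)\ge \mathbb P(\|G\|>u+K),\qquad \mathbb P(\|X\|>r)\le\mathbb P(\|G\|>r-K).
\]
Using again $e_\mu(u)=\overline G_\mu(u)^{-1}\int_u^\infty\overline G_\mu(r)\,\mathrm{d}r$, I bound
\[
e_\mu(u)\le\frac{\int_{u-K}^\infty \overline H(s)\,\mathrm{d}s}{\overline H(u+K)},
\]
where $\overline H(s)=\mathbb P(\|G\|>s)$ is the chi-type survival function with density proportional to $s^{d-1}e^{-s^2/(2\sigma^2)}$.

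The remaining step, and the only one needing care, is the explicit asymptotics of $\overline H$. I would use
\[
\overline H(s)=(1+o(1))\,c\,s^{d-2}e^{-s^2/(2\sigma^2)}
\]
(Mills-ratio type, obtained by L'Hôpital or integration by parts), which gives
\[
\int_{s}^\infty\overline H=(1+o(1))\,c\,\sigma^2 s^{d-3}e^{-s^2/(2\sigma^2)}.
\]
The ratio above is then of order
\[
u^{-1}\exp\!\Bigl(\frac{(u+K)^2-(u-K)^2}{2\sigma^2}\Bigr)=u^{-1}e^{2Ku/\sigma^2}.
\]
This blows up, so the crude shift is too lossy, and this is the main obstacle. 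I would instead compare at matched scales, using conditioning on $R$ (or on the joint law with $G$) and the fact that the Gaussian tail is log-concave. For a log-concave radial tail, the mean-excess function is nonincreasing once the density is log-concave beyond some point, so it is at most of order $\sigma^2/u$ for $G$ alone.

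For the perturbed variable I would argue that $\|X\|$ has density bounded above and below by shifted versions of the chi density. This gives a hazard rate $h_X(r)\ge (r-K)/\sigma^2-O(1/r)$, and $e_\mu(u)\le \sup_{r\ge u}1/h_X(r)$ up to constants. Concretely, I would verify the hazard bound via $\mathbb P(\|X\|>r+s)/\mathbb P(\|X\|>r)\le \sup_{\|y\|\le K}\mathbb P(\|G+y\|>r+s)/\mathbb P(\|G+y\|>r)\cdot$(bounded factor). The uniform-in-$y$ Gaussian estimate $\mathbb P(\|G+y\|>r+s)\le e^{-s(r-K)/\sigma^2}\,\mathbb P(\|G+y\|>r)$ follows from the log-concavity of the law of $\|G+y\|$, which is a noncentral chi distribution. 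Integrating in $s$ then gives $e_\mu(u)=\mathcal O(1/u)$, so certainly $e_\mu(u)=\mathcal O(1)$ and $\Gamma(\mu)=0$.
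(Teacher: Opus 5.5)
Part (i) is correct and matches the paper's argument (survival-function representation of $e_\mu$ plus Karamata). Part (ii) has a genuine gap in the ``matched-scales'' step. Bounding $\mathbb P(\|X\|>r+s)/\mathbb P(\|X\|>r)$ by $\sup_{\|y\|\le K}\mathbb P(\|G+y\|>r+s)/\mathbb P(\|G+y\|>r)$ is a ratio-of-mixtures inequality. It requires $G$ to remain $\mathcal N(0,\sigma^2I_d)$ conditionally on $R$, i.e.\ $R$ independent of $G$. The same is true of your claim that $\|X\|$ has a density sandwiched between shifted chi densities.

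The statement allows arbitrary dependence, and the paper needs it: in Lemma~\ref{app:lem_gaussplusbdd} the remainder $R_\tau$ is adapted, not independent of $G_\tau$. Without independence, both the step and your intermediate conclusion $e_\mu(u)=\mathcal O(1/u)$ are false. Take the radial rounding $X=K\lfloor\|G\|/K\rfloor\,G/\|G\|$. It satisfies $\|X-G\|\le K$ and has no density. It also satisfies $\mathbb P(\|X\|>nK+s)=\mathbb P(\|X\|>nK)$ for $0<s<K$, which contradicts your hazard bound, and $e_\mu(nK)\ge K$ for every $n$. So $\mathcal O(1)$ is the correct order in this generality. (Under independence your route does go through, since the density $f$ of $\|G+y\|$ satisfies $(\log f)'(r)\le (d-1)/r-(r-\|y\|)/\sigma^2$.)

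The repair is one step from your crude shift bound. Its only defect is that it applies $\overline G_\mu(r)\le\overline H(r-K)$ on all of $[u,\infty)$. Since $\overline G_\mu$ is nonincreasing, bound $\int_u^{u+2K}\overline G_\mu\le 2K\,\overline G_\mu(u)$ directly, and use the shift only for $r\ge u+2K$. There $r-K\ge u+K$, which matches the level in your lower bound $\overline G_\mu(u)\ge\overline H(u+K)$:
\[
e_\mu(u)\le 2K+\frac{\int_{u+K}^\infty\overline H(s)\,\mathrm ds}{\overline G_\mu(u)}\le 2K+\frac{\int_{u+K}^\infty\overline H(s)\,\mathrm ds}{\overline H(u+K)}=2K+e_Q(u+K),\qquad Q=\|G\|.
\]
Both comparisons follow pathwise from $\bigl|\|X\|-\|G\|\bigr|\le K$, so no independence is needed. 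Your Mills-ratio asymptotics then give $e_Q(v)\sim\sigma^2/v$, hence $e_\mu(u)=\mathcal O(1)$ and $\Gamma(\mu)=0$. This is exactly the paper's proof, written there as $(V-u)^+\le 2K\mathbf 1_{\{V>u\}}+(Q-u-K)^+$ together with $\{Q>u+K\}\subseteq\{V>u\}$.
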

Thus, $\Gamma(\mu)$ identifies a qualitative difference in tail regime, but does not measure whether the generated distribution reproduces the target at a given extreme probability level. To capture this second aspect, we compare their extreme quantiles directly.

\paragraph{Extreme-quantile matching.}
\label{sec:emr}

For a level $\kappa\in(0,1)$, define the \emph{extreme-quantile ratio} of $\mu$ with respect to a target reference $\pi$ by
\begin{equation*}
  \mathcal M_\mu(\kappa):=\frac{q_\kappa(\mu)}{q_\kappa(\pi)}.
\end{equation*}
Thus, $\mathcal M_\mu(\kappa)\approx1$ means that generated and target distributions reach comparable radial scales at probability level $\kappa$. For power-law tails, its asymptotic behavior follows directly from the corresponding tail indices and constants.

\begin{lemma}[Extreme-quantile ratio under power-law tails]\label{lem:emr_asymp}
Consider a distribution $\mu$ such that $\overline G_\mu(r)\sim c_\mu r^{-\alpha}$ for $c_\mu,\alpha>0$. Then, its quantile function satisfies $q_\kappa(\mu)\sim c_\mu^{1/\alpha}(1-\kappa)^{-1/\alpha}$ as $\kappa\uparrow1$.
Consider in addition a target distribution $\pi$ such that $\overline G_\pi(r)\sim c_\pi r^{-\nu}$ for
$c_\pi,\nu>0$. Then, the extreme-quantile ratio of $\mu$ w.r.t.\ $\pi$ satisfies
\begin{equation}\label{eq:emr_asymp}
  \mathcal M_\mu(\kappa)\sim c_\mu^{1/\alpha}\,c_\pi^{-1/\nu}\,(1-\kappa)^{\frac1\nu-\frac1\alpha}\qquad(\kappa\uparrow1).
\end{equation}
\end{lemma}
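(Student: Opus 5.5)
The plan is to first prove the single-distribution quantile asymptotic and then obtain \eqref{eq:emr_asymp} by dividing the two asymptotics. Write $q_\kappa=q_\kappa(\mu)$ and $\overline G=\overline G_\mu$. Since $\overline G(r)>0$ for all large $r$, the radial CDF never reaches $1$, so $q_\kappa\to\infty$ as $\kappa\uparrow1$. The key observation is that $q_\kappa$ is sandwiched through the survival function. By right-continuity of $F_\mu$ and the definition of the infimum, we have $F_\mu(q_\kappa)\ge\kappa$. For every $\delta\in(0,1)$ we also have $F_\mu((1-\delta)q_\kappa)<\kappa$. Hence
\begin{equation*}
  \overline G(q_\kappa)\le 1-\kappa<\overline G\bigl((1-\delta)q_\kappa\bigr).
\end{equation*}

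Next, I would insert the hypothesis $\overline G(r)\sim c_\mu r^{-\alpha}$ into both sides, which is legitimate because $q_\kappa\to\infty$. This gives
\begin{equation*}
  (1+o(1))\,c_\mu q_\kappa^{-\alpha}\le 1-\kappa\le (1+o(1))\,c_\mu(1-\delta)^{-\alpha}q_\kappa^{-\alpha}.
\end{equation*}
Rearranging and taking $\alpha$-th roots yields
\begin{equation*}
  (1-\delta)\le\liminf_{\kappa\uparrow1}\frac{q_\kappa}{c_\mu^{1/\alpha}(1-\kappa)^{-1/\alpha}}
  \le\limsup_{\kappa\uparrow1}\frac{q_\kappa}{c_\mu^{1/\alpha}(1-\kappa)^{-1/\alpha}}\le 1 .
\end{equation*}
Letting $\delta\downarrow0$ gives $q_\kappa(\mu)\sim c_\mu^{1/\alpha}(1-\kappa)^{-1/\alpha}$. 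This inversion step is the only place needing care, because $\overline G$ may have jumps or flat pieces, so one cannot simply write $\overline G(q_\kappa)=1-\kappa$. The sandwich with the factor $(1-\delta)$ handles this without any continuity assumption.

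Applying the same argument to $\pi$ with index $\nu$ and constant $c_\pi$ gives $q_\kappa(\pi)\sim c_\pi^{1/\nu}(1-\kappa)^{-1/\nu}$. Since asymptotic equivalence is preserved under quotients of positive quantities, we obtain
\begin{equation*}
  \mathcal M_\mu(\kappa)=\frac{q_\kappa(\mu)}{q_\kappa(\pi)}\sim c_\mu^{1/\alpha}c_\pi^{-1/\nu}(1-\kappa)^{-\frac1\alpha+\frac1\nu},
\end{equation*}
which is exactly \eqref{eq:emr_asymp}.
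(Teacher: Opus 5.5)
Your argument is correct, but the rearranged display has its bounds reversed. Write $\rho_\kappa:=c_\mu^{-1/\alpha}(1-\kappa)^{1/\alpha}q_\kappa$. Then $\overline G(q_\kappa)\le1-\kappa$ gives $\rho_\kappa^\alpha\ge1+o(1)$, and $1-\kappa<\overline G\bigl((1-\delta)q_\kappa\bigr)$ gives $\rho_\kappa^\alpha\le(1+o(1))(1-\delta)^{-\alpha}$. The correct statement is therefore
\[
1\le\liminf_{\kappa\uparrow1}\rho_\kappa\le\limsup_{\kappa\uparrow1}\rho_\kappa\le(1-\delta)^{-1},
\]
not $(1-\delta)\le\liminf\le\limsup\le1$. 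Both bounds still collapse to $1$ as $\delta\downarrow0$, so this is a slip and the conclusion is unaffected.

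Your route differs from the paper's in how the inversion is obtained. The paper argues by citation: it applies the asymptotic inversion theorem for regularly varying functions (Theorem~16 in Kevei's lecture notes) to get $q_\kappa(\mu)\sim(c_\mu/(1-\kappa))^{1/\alpha}$. It then remarks in one line that this holds for generalized quantiles under either inverse convention, and takes the ratio exactly as you do.

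Your sandwich $\overline G(q_\kappa)\le1-\kappa<\overline G\bigl((1-\delta)q_\kappa\bigr)$ is precisely the justification of that remark. This makes your proof self-contained and visibly insensitive to atoms or flat stretches of $F_\mu$. The cited theorem buys generality in exchange: it covers tails $r^{-\alpha}\ell(r)$ with arbitrary slowly varying $\ell$, where the quantile acquires a de Bruijn conjugate factor that your explicit solve for $q_\kappa$ does not produce. For the pure power-law hypothesis of this lemma, your elementary argument is all that is needed.
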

Lemma~\ref{lem:emr_asymp} separates tail mismatch as follows. A discrepancy between the tail indices produces a polynomial divergence or collapse of $\mathcal M_\mu(\kappa)$ as $\kappa\uparrow1$. When the indices match ($\alpha=\nu$) this polynomial discrepancy disappears and
\(
\mathcal M_\mu(\kappa)
\longrightarrow
\left({c_\mu}/{c_\pi}\right)^{1/\alpha}.
\)

\paragraph{Stable/Brownian separation in the tails.}
Fix $0<\varepsilon<T<\infty$. Write
$\mu_{\tau}^{(\alpha)}:=\mathrm{Law}(\overleftarrow Y_{\tau})$ for the stable
exact-score reverse law initialized at $\pi_\infty^\alpha$, and define
$\mu_{\tau}^{(2)}$ analogously for the Brownian reverse SDE initialized at
$\pi_\infty^{(2)}$. The symbol $\mu$ denotes a generated law at reverse time
$\tau\in(0,T-\varepsilon]$, whereas $\pi$ denotes a forward marginal.
In particular, $\mu_{T-\varepsilon}^{(2)}=\mu_{T,\varepsilon}^{(2)}$
in Theorem~\ref{thm:gauss_init}.
\begin{proposition}[Stable/Brownian tail separation]
\label{prop:tail_separation}
Suppose Assumption~\ref{ass:heavy} holds, so that
\(
\overline G_{\pi_{\mathrm{data}}}(r)
\sim
c'_{\pi_{\mathrm{data}}}r^{-\nu}.
\)
At every generation time $\tau\in(0,T-\varepsilon]$, the Brownian output $\mu_{\tau}^{(2)}$ has sub-Gaussian radial tails, whereas the $\alpha$-stable output satisfies
\[
\overline G_{\mu_{\tau}^{(\alpha)}}(r)
\sim
c_{\tau} r^{-\alpha},
\qquad
c_{\tau}\in(0,\infty).
\]
In terms of relative mean excess, this gives
\begin{equation*}
\Gamma(\mu_{\tau}^{(\alpha)})
=
\frac{1}{\alpha-1} \,
> \, 0
=
\Gamma(\mu_{\tau}^{(2)}).
\end{equation*}
For the extreme-quantile ratio,
\(
\mathcal M_{\mu_{\tau}^{(2)}}(\kappa)
\xrightarrow[\kappa \to 1]{} 0,
\)
while
\begin{equation*}
\mathcal M_{\mu_{\tau}^{(\alpha)}}(\kappa)
\underset{\kappa\to 1}{\sim}
c_{\tau}^{1/\alpha}
(c'_{\pi_{\mathrm{data}}})^{-1/\nu}
(1-\kappa)^{\frac{1}{\nu}-\frac{1}{\alpha}}.
\end{equation*}
\end{proposition}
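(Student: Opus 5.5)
Once the two tail statements are established, the rest follows from earlier results. Given $\overline G_{\mu_\tau^{(\alpha)}}(r)\sim c_\tau r^{-\alpha}$ with $\alpha>1$, Proposition~\ref{prop:tce}(i) gives $\Gamma(\mu_\tau^{(\alpha)})=1/(\alpha-1)$. Lemma~\ref{lem:emr_asymp}, applied with $c_\mu=c_\tau$ and $c_\pi=c'_{\pi_{\mathrm{data}}}$, gives the stated quantile ratio. For the Brownian case, sub-Gaussian radial tails give $q_\kappa(\mu^{(2)}_\tau)=\mathcal O(\sqrt{\log(1/(1-\kappa))})$. Since $q_\kappa(\pi_{\mathrm{data}})\asymp(1-\kappa)^{-1/\nu}$, the ratio tends to $0$. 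For $\Gamma(\mu^{(2)}_\tau)=0$, I would either produce a decomposition $G+R$ and invoke Proposition~\ref{prop:tce}(ii), or bound $e_\mu(u)=\int_u^\infty\overline G(r)\,dr/\overline G(u)$ directly using a Gaussian-type tail bound. The direct bound additionally requires a matching lower bound on $\overline G(u)$. To avoid that, I would aim to prove the decomposition form.

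\textbf{Brownian tails.} The reverse SDE is
\[
\mathrm d Y=\bar\beta\bigl(Y/2+\nabla\log\pi^{(2)}_{T-\tau}(Y)\bigr)\,\mathrm d\tau+\sqrt{\bar\beta}\,\mathrm dW .
\]
By the Tweedie formula, $\nabla\log\pi_t(y)=(a(t)m_t(y)-y)/\gamma(t)^2$, so the drift equals $\lambda(\tau)Y+\bar\beta\,a\,m_{T-\tau}(Y)/\gamma^2$ with $\lambda=\bar\beta(1/2-1/\gamma^2)$. The key claim is that $m_t$ grows at most linearly, and more precisely that $\|a(t)m_t(y)\|\le\|y\|+K_t$. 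This says that the posterior mean of $a X_0$ does not overshoot the observation by much. For power-law data it should follow because the posterior concentrates near $y/a$ at large $\|y\|$, which is consistent with $\nabla m_t\to a^{-1}I$ in Proposition~\ref{prop:brown_ct_divergence}. Then $Y_\tau=\Phi(\tau)Y_0+\int\Phi\sqrt{\bar\beta}\,\mathrm dW+\int\Phi\,\bar\beta a m/\gamma^2\,\mathrm ds$. The first two terms form a Gaussian $G$. The nonlinear term is the remainder, but it is only linearly bounded, not bounded. I would therefore control it by Gronwall: on $[0,\tau]$ with $\tau\le T-\varepsilon$, all coefficients are bounded, so $\sup_s\|Y_s\|\le C(\|Y_0\|+\sup_s\|\int\Phi\,\mathrm dW\|+1)$. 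That bound is sub-Gaussian by Fernique/Borell--TIS. This gives sub-Gaussian tails, which suffices for the quantile claim and for $\Gamma=0$ via the direct mean-excess route, provided a lower bound on $\overline G(u)$ is available. I expect to get that lower bound from a Girsanov comparison with the Gaussian part, since the Gaussian part is nondegenerate.

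\textbf{Stable tails (main obstacle).} Here the drift is $-\frac{\eta}{\alpha}\bar\beta Y+(1+\eta)\bar\beta\,(S_{T-\tau}-S_\infty)(Y)$. By Tweedie~\eqref{eq:tweedie}, $S_t-S_\infty=\bigl(a\,m_t(y)-a^\alpha y\bigr)/(\alpha\gamma^\alpha)$. By Proposition~\ref{prop:heavy_denoiser}(iii) (when $\nu\ge1+\alpha$; in general I would assume $C_t<\infty$ on $[\varepsilon,T]$), this drift is globally Lipschitz in space on the relevant time window. The solution is then $Y_\tau=\Psi(\tau)Y_0+\int_0^\tau\Psi(\tau,s)(\eta\bar\beta)^{1/\alpha}\,\mathrm d\overleftarrow L+\text{drift remainder}$, where $\Psi$ solves the linear part. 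The stochastic convolution term is exactly isotropic $\alpha$-stable with some scale $\sigma_\tau>0$, and $Y_0$ is also stable. Their independent sum $\Sigma$ is therefore $\mathrm{S}\alpha\mathrm{S}$ with radial tail $\sim c\,r^{-\alpha}$. The difficulty is the remainder, which is Lipschitz in $Y$ and hence of the same order as $Y$. To handle it I would use the single-big-jump principle for Lévy-driven SDEs with Lipschitz drift. The event $\{\|Y_\tau\|>r\}$ is asymptotically driven by one large jump of size $\asymp r$, occurring at time $s$. After that jump, the flow of the ODE $\dot y=\bar b(\cdot,y)$ transports it, and along directions of large magnitude this flow is asymptotically linear. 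This linearity holds because $m_t(y)$ is asymptotically linear in $y$: Proposition~\ref{prop:heavy_denoiser} suggests $\nabla m_t\to$ a constant multiple of $I$, namely $0$ when $\nu>\alpha$. Integrating the jump measure against the linearized flow multipliers yields $c_\tau=\int_0^\tau(\eta\bar\beta(s))\,|\kappa(\tau,s)|^\alpha\,\mathrm ds\cdot c_{\mathrm{jump}}+c_0|\kappa(\tau,0)|^\alpha$, which is finite and positive. Rigorously, I would sandwich $Y_\tau$ between $(1\pm\delta)$ times the linearized-flow image of $\Sigma$ plus $\mathcal O(1)$ errors on the big-jump event. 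The remaining errors would be controlled by Lipschitz Gronwall estimates together with a truncation of the small jumps, which have all moments. This asymptotic-linearity step is where the real work lies.
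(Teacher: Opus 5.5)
Your reduction of the statement to its two tail claims is correct: Proposition~\ref{prop:tce}(i), Lemma~\ref{lem:emr_asymp}, and the Brownian quantile ratio from a sub-Gaussian upper bound all go through as you describe. Both tail claims, however, have genuine gaps.

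\emph{Stable part.} The statement covers every $\nu>1$. Your argument needs a globally Lipschitz drift and the derivative-level limit $\nabla m_t\to\mathrm{const}\cdot I_d$, and both fail in general. Proposition~\ref{prop:heavy_denoiser}(ii) gives, for each $\nu\in[\alpha,1+\alpha)$, a target satisfying Assumption~\ref{ass:heavy} for which $m_t'$ oscillates with unbounded amplitude. There $C_t=\infty$, so assuming $C_t<\infty$ adds a hypothesis the statement does not grant, and no linearization of the flow through $\nabla m_t$ exists.

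What survives, and suffices, is the zeroth-order expansion $m_s(x)=\varkappa_{\nu,s}x+\mathsf r_s(x)$ with $\mathsf r_s$ uniformly sublinear on $[\varepsilon,T]$. This comes from two-centre convolution asymptotics, and local Lipschitz continuity plus linear growth already give well-posedness. Your remainder looks ``of the same order as $Y$'' only because you split off $-\frac{\eta}{\alpha}\bar\beta$. Take instead the asymptotic linear coefficient $\ell_\nu=\bar\beta\bigl(\frac1\alpha-\frac{1+\eta}{\alpha\gamma^\alpha}\bigr)+\frac{(1+\eta)\bar\beta a}{\alpha\gamma^\alpha}\varkappa_{\nu}$. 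Then the remainder satisfies $\|\widetilde{\mathsf r}(u,x)\|\le\delta\|x\|+K_\delta$ for every $\delta>0$.

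With this split, no single-big-jump principle or small-jump truncation is needed. Let $\mathcal W$ be the exactly stable linear process driven by the same noise and initial value. Gronwall gives $\sup_{u\le\tau}\|\overleftarrow Y_u-\mathcal W_u\|\le A_\delta\sup_{u\le\tau}\|\mathcal W_u\|+B_\delta$ with $A_\delta\to0$ as $\delta\to0$. The stable maximal bound $\mathbb P(\sup_{u\le\tau}\|\mathcal W_u\|>R)=\mathcal O(R^{-\alpha})$ then makes the difference negligible in the tail. This gives exactly your constant, with multipliers $\exp(\int\ell_\nu)$.

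\emph{Brownian part.} Gronwall plus Fernique gives sub-Gaussian tails and $\mathcal M_{\mu^{(2)}_\tau}(\kappa)\to0$, but not $\Gamma(\mu^{(2)}_\tau)=0$. The Girsanov route will not supply the missing lower bound. A Girsanov--H\"older estimate loses a factor $r'>1$ in the Gaussian exponent, and mismatched two-sided bounds $e^{-c_1u^2}\lesssim\overline G(u)\lesssim e^{-c_2u^2}$ with $c_1>c_2$ give no control of $e_\mu(u)/u$.

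The missing fact is that the Brownian score itself is uniformly bounded on $[\varepsilon,T]\times\mathbb R^d$. Write $\nabla\log\pi^{(2)}_s(y)=\gamma_s^{-1}\mathbb E[U\mid X_s=y]$ with $U=(a_sX_0-y)/\gamma_s$. Under the power-law tail this conditional mean tends to $0$ at spatial infinity. This is your heuristic that the posterior concentrates near $y/a$, but in the form $\|a_tm_t(y)-y\|\le K$, not $\|a_tm_t(y)\|\le\|y\|+K$. Keep the linear part $\bar\beta Y/2$ and treat the whole score term as the remainder. Variation of constants then gives $\overleftarrow Y_\tau=G_\tau+R_\tau$, with $G_\tau$ a centred isotropic Gaussian and $\|R_\tau\|$ bounded almost surely by a deterministic constant. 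Proposition~\ref{prop:tce}(ii) then applies directly. Your split, with $\lambda=\bar\beta(1/2-1/\gamma^2)$ and remainder $\bar\beta a m/\gamma^2$, throws away exactly this cancellation.
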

The proof decomposes the stable output into an exactly $\alpha$-stable leading
term and a uniformly sublinear remainder, which is negligible in the tail. The
tail constant is $c_{\tau}=C_{d,\alpha}\rho_{\tau}^\alpha$, where $C_{d,\alpha}$ is the
radial tail constant of $\mathrm{S}\alpha\mathrm{S}_d(1)$ and $\rho_{\tau}$ is the scale of
the leading stable component (see Appendix~\ref{app:tce} for details).

The two diagnostics reveal complementary aspects of tail reconstruction:
$\Gamma$ distinguishes light from power-law tails, whereas $\mathcal M_\mu$
measures the scale of extreme quantiles. Their joint behavior is summarized
in Table~\ref{tab:tail_regimes}.
\begin{table}[!htbp]
\centering
\caption{Tail regimes under Assumption~\ref{ass:heavy}, at fixed
$0<\varepsilon<T<\infty$, as in Proposition~\ref{prop:tail_separation}.}
\label{tab:tail_regimes}
\renewcommand{\arraystretch}{1.35}
\begin{tabular}{lcc}
\toprule
 & $\Gamma(\mu_{\tau})$
 & $\mathcal M_{\mu_{\tau}}(\kappa)$ as $\kappa\uparrow1$ \\
\midrule
Brownian
& $0$
& $0$ \\
Stable, $\alpha>\nu$
& $\frac{1}{\alpha-1}$
& $0$ \\
Stable, $\alpha=\nu$
& $\frac{1}{\nu-1}$
& $\left(c_{\tau}/c'_{\pi_{\mathrm{data}}}\right)^{1/\nu}$ \\
Stable, $\alpha<\nu$
& $\frac{1}{\alpha-1}$
& $+\infty$ \\
\bottomrule
\end{tabular}
\end{table}

Thus, producing heavy tails is not sufficient for correctly calibrated
extremes. 
Indeed, if $\alpha>\nu$, the generated tail is lighter than the target one and the extreme-quantile ratio vanishes. 
If $\alpha<\nu$, the generated tail is heavier so that the extreme-quantile ratio diverges. 
Tail-index matching, $\alpha=\nu$, is therefore the unique stable regime
allowing asymptotically correct extreme-quantile scaling; exact matching
further requires the tail constants to agree.

Figure~\ref{fig:numerical_tails} illustrates the three tail-reconstruction regimes predicted by Proposition~\ref{prop:tail_separation} for Student-$t_\nu$ targets. With stable noise ($\alpha=1.5$), as $\kappa\uparrow1$ the extreme-quantile ratio decreases for $\nu<\alpha$, remains approximately constant for $\nu=\alpha$, and increases for $\nu>\alpha$, consistently with the predicted scaling
$(1-\kappa)^{1/\nu-1/\alpha}$. In the index-matched case $\nu=\alpha=1.5$, the Brownian ratio instead decreases toward zero, 
illustrating its failure to reproduce the target tail scale. 
The off-match experiments reproduce the
predicted qualitative trends, although the available sample size does not
reliably resolve their asymptotic exponents.

\begin{figure}[!htbp]
\centering
\includegraphics[width=0.35\linewidth]{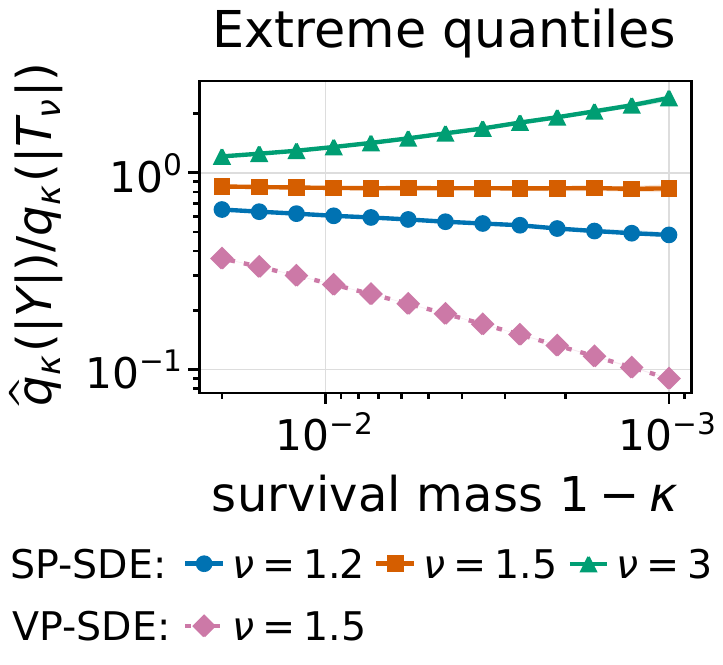}
 \caption{Generated-to-target extreme-quantile ratios for
  Student-$t_\nu$ targets. Stable samplers use
  $(\alpha,\eta)=(1.5,0.5)$; the VP baseline uses $\nu=1.5$.
  Curves show medians over $12$ runs; bands give the inter-seed $16$--$84\%$ range.
  See Appendix~\ref{app:exp:tails}.} 
\label{fig:numerical_tails}
\end{figure}

Overall, the exact-score dynamics initialized at the reference laws differ
in their tails: Brownian outputs remain light-tailed at finite generation
time, while stable outputs inherit the noise tail index. This distinction
is not determined by global $W_p$ convergence. Tail-index matching
therefore provides a possible mechanism behind the improved tail coverage
observed empirically with L\'evy-based generative models
\citep{Yoon2023,Pandey2025,Paquet2024LevyProtein,Shariatian2025}.

\FloatBarrier
\section*{CONCLUSION}
Our results distinguish global accuracy from tail reconstruction in $\alpha$-stable generative models. Our tail analysis includes the index-matched regime $\nu=\alpha$, whereas our global convergence guarantees require stronger tail assumptions. Tail-index matching explains when stable dynamics can reproduce the correct asymptotic scale of extreme events, yet the role of tail constants, anisotropy, and multivariate dependence remains largely unexplored. It is also unclear whether Lévy noise provides advantages beyond tail reconstruction, for instance through faster exploration of multimodal landscapes or improved finite-step sampling.  Finally, extending the tail analysis to learned scores is essential: estimation errors in low-density regions may interact strongly with the jump structure precisely where the stable dynamics is meant to improve tail coverage. 

\section*{AI Use Statement}

In this work, we used generative AI tools, including ChatGPT, to assist with English writing and editing, literature checks, mathematical proofreading, and the implementation and verification of some numerical experiments. Generative AI tools were not used to define the research questions, formulate the main methodological ideas, or determine the scientific conclusions of the work.

All AI-assisted mathematical arguments were independently checked by the authors, and no proof or theoretical claim was accepted without verification. AI-assisted code and numerical experiments were reviewed, tested, and validated by the authors, including by checking their consistency with the underlying mathematical formulation and the reported results. Literature-related suggestions were verified against the original sources before being used or cited.

We take full responsibility for the final content of this work, including its text, mathematical claims, numerical results, and conclusions.

\bibliographystyle{unsrtnat}
\bibliography{ref,software_refs}

\begin{thebibliography}{41}
\providecommand{\natexlab}[1]{#1}
\providecommand{\url}[1]{\texttt{#1}}
\expandafter\ifx\csname urlstyle\endcsname\relax
  \providecommand{\doi}[1]{doi: #1}\else
  \providecommand{\doi}{doi: \begingroup \urlstyle{rm}\Url}\fi

\bibitem[Song et~al.(2021)Song, Sohl-Dickstein, Kingma, Kumar, Ermon, and
  Poole]{Song2021}
Yang Song, Jascha Sohl-Dickstein, Diederik~Pieter Kingma, Abhishek Kumar,
  Stefano Ermon, and Ben Poole.
\newblock Score-based generative modeling through stochastic differential
  equations.
\newblock In \emph{International Conference on Learning Representations}, 2021.
\newblock URL \url{https://openreview.net/forum?id=PxTIG12RRHS}.

\bibitem[Yoon et~al.(2023)Yoon, Park, Kim, and Lim]{Yoon2023}
Eunbi Yoon, Keehun Park, Sungwoong Kim, and Sungbin Lim.
\newblock Score-based generative models with {L}{\'e}vy processes.
\newblock In \emph{Advances in Neural Information Processing Systems},
  volume~36, pages 40694--40707, 2023.
\newblock \doi{10.52202/075280-1772}.
\newblock URL \url{https://doi.org/10.52202/075280-1772}.

\bibitem[Paquet et~al.(2024)Paquet, Soleymani, Viktor, and
  Michalowski]{Paquet2024LevyProtein}
Eric Paquet, Farzan Soleymani, Herna~Lydia Viktor, and Wojtek Michalowski.
\newblock Annealed fractional {L}{\'e}vy--{I}t{\^o} diffusion models for
  protein generation.
\newblock \emph{Computational and Structural Biotechnology Journal},
  23:\penalty0 1641--1653, 2024.
\newblock \doi{10.1016/j.csbj.2024.04.009}.

\bibitem[Pandey et~al.(2025)Pandey, Pathak, Xu, Mandt, Pritchard, Vahdat, and
  Mardani]{Pandey2025}
Kushagra Pandey, Jaideep Pathak, Yilun Xu, Stephan Mandt, Michael Pritchard,
  Arash Vahdat, and Morteza Mardani.
\newblock Heavy-tailed diffusion models.
\newblock In \emph{International Conference on Learning Representations}, 2025.
\newblock URL \url{https://arxiv.org/abs/2410.14171}.

\bibitem[Fassina et~al.(2026)Fassina, Cardoso, Le~Corff, and
  Romary]{Fassina2026}
Tiziano Fassina, Gabriel Cardoso, Sylvain Le~Corff, and Thomas Romary.
\newblock Generalizing score-based generative models for heavy-tailed
  distributions, 2026.
\newblock URL \url{https://arxiv.org/abs/2603.00772}.

\bibitem[Nachmani et~al.(2021)Nachmani, San~Roman, and Wolf]{Nachmani2021}
Eliya Nachmani, Robin San~Roman, and Lior Wolf.
\newblock Non-gaussian denoising diffusion models, 2021.
\newblock URL \url{https://arxiv.org/abs/2106.07582}.

\bibitem[Deasy et~al.(2021)Deasy, Simidjievski, and Li{\`o}]{Deasy2021}
Jacob Deasy, Nikola Simidjievski, and Pietro Li{\`o}.
\newblock Heavy-tailed denoising score matching, 2021.
\newblock URL \url{https://arxiv.org/abs/2112.09788}.

\bibitem[Shariatian et~al.(2025)Shariatian, {{\c{S}}im{\c{s}}ekli}, and
  Durmus]{Shariatian2025}
Dario Shariatian, Umut {{\c{S}}im{\c{s}}ekli}, and Alain Durmus.
\newblock Heavy-tailed diffusion with denoising {L}{\'e}vy probabilistic
  models.
\newblock In \emph{International Conference on Learning Representations}, 2025.
\newblock URL \url{https://arxiv.org/abs/2407.18609}.

\bibitem[Popov et~al.(2025)Popov, Yermekova, Sadekova, Khrapov, and
  Kudinov]{Popov2025}
Vadim Popov, Assel Yermekova, Tasnima Sadekova, Artem Khrapov, and Mikhail
  Kudinov.
\newblock Improved sampling algorithms for {L}{\'e}vy--{I}t{\^o} diffusion
  models.
\newblock In \emph{International Conference on Learning Representations}, 2025.
\newblock URL \url{https://openreview.net/forum?id=XxCgeWSTNp}.

\bibitem[Cherkaoui et~al.(2026)Cherkaoui, Halconruy, and
  Ocello]{cherkaoui2026heavytails}
Hamza Cherkaoui, H{\'e}l{\`e}ne Halconruy, and Antonio Ocello.
\newblock Do heavy tails help diffusion? on the subtle trade-off between
  initialization and training, 2026.
\newblock URL \url{https://arxiv.org/abs/2605.13175}.

\bibitem[Conforti and L{\'e}onard(2022)]{ConfortiLeonard2022}
Giovanni Conforti and Christian L{\'e}onard.
\newblock Time reversal of markov processes with jumps under a finite entropy
  condition.
\newblock \emph{Stochastic Processes and their Applications}, 144:\penalty0
  85--124, 2022.
\newblock \doi{10.1016/j.spa.2021.10.002}.

\bibitem[Qi et~al.(2026)Qi, Wang, and Zhang]{Qi2026}
Tianfu Qi, Jun Wang, and Jun Zhang.
\newblock Generator-guided inverse sampling for {L}{\'e}vy-driven generative
  models, 2026.
\newblock URL \url{https://arxiv.org/abs/2608.10384}.

\bibitem[Conforti et~al.(2025)Conforti, Durmus, and
  Gentiloni~Silveri]{Conforti2025KL}
Giovanni Conforti, Alain Durmus, and Marta Gentiloni~Silveri.
\newblock {KL} convergence guarantees for score diffusion models under minimal
  data assumptions.
\newblock \emph{SIAM Journal on Mathematics of Data Science}, 7\penalty0
  (1):\penalty0 86--109, 2025.
\newblock \doi{10.1137/23M1613670}.

\bibitem[Strasman et~al.(2025)Strasman, Ocello, Boyer, Le~Corff, and
  Lemaire]{Strasman2024}
Stanislas Strasman, Antonio Ocello, Claire Boyer, Sylvain Le~Corff, and Vincent
  Lemaire.
\newblock An analysis of the noise schedule for score-based generative models.
\newblock \emph{Transactions on Machine Learning Research}, 2025.
\newblock URL \url{https://openreview.net/forum?id=BlYIPa0Fx1}.

\bibitem[Gao et~al.(2025)Gao, Nguyen, and Zhu]{Gao2025}
Xuefeng Gao, Hoang~M. Nguyen, and Lingjiong Zhu.
\newblock Wasserstein convergence guarantees for a general class of score-based
  generative models.
\newblock \emph{Journal of Machine Learning Research}, 26\penalty0
  (43):\penalty0 1--54, 2025.
\newblock URL \url{https://www.jmlr.org/papers/v26/24-0902.html}.

\bibitem[Ren et~al.(2026)Ren, Rotskoff, and Ying]{Ren2025}
Yinuo Ren, Grant~M. Rotskoff, and Lexing Ying.
\newblock A unified approach to analysis and design of denoising markov models.
\newblock \emph{Journal of Machine Learning Research}, 27\penalty0
  (105):\penalty0 1--69, 2026.
\newblock URL \url{https://www.jmlr.org/papers/v27/25-0693.html}.

\bibitem[Applebaum(2009)]{Applebaum2009}
David Applebaum.
\newblock \emph{{L}{\'e}vy Processes and Stochastic Calculus}, volume 116 of
  \emph{Cambridge Studies in Advanced Mathematics}.
\newblock Cambridge University Press, 2 edition, 2009.
\newblock \doi{10.1017/CBO9780511809781}.

\bibitem[Samorodnitsky and Taqqu(1994)]{SamorodnitskyTaqqu1994}
Gennady Samorodnitsky and Murad~S. Taqqu.
\newblock \emph{Stable Non-Gaussian Random Processes: Stochastic Models with
  Infinite Variance}.
\newblock Chapman \& Hall, 1994.

\bibitem[Efron(2011)]{Efron2011}
Bradley Efron.
\newblock Tweedie's formula and selection bias.
\newblock \emph{Journal of the American Statistical Association}, 106\penalty0
  (496):\penalty0 1602--1614, 2011.
\newblock \doi{10.1198/jasa.2011.tm11181}.

\bibitem[Resnick(2007)]{Resnick2007}
Sidney~I. Resnick.
\newblock \emph{Heavy-Tail Phenomena: Probabilistic and Statistical Modeling}.
\newblock Springer, New York, 2007.
\newblock \doi{10.1007/978-0-387-45024-7}.

\bibitem[Viswanathan et~al.(2008)Viswanathan, Raposo, and
  da~Luz]{viswanathan2008levy}
Gandhimohan~M. Viswanathan, Ernesto~Pessoa Raposo, and Marcos
  Gomes~Eleut{\'e}rio da~Luz.
\newblock {L\'e}vy flights and superdiffusion in the context of biological
  encounters and random searches.
\newblock \emph{Physics of Life Reviews}, 5\penalty0 (3):\penalty0 133--150,
  2008.
\newblock \doi{10.1016/j.plrev.2008.03.002}.
\newblock URL \url{https://doi.org/10.1016/j.plrev.2008.03.002}.

\bibitem[de~Haan and Ferreira(2006)]{deHaanFerreira2006}
Laurens de~Haan and Ana Ferreira.
\newblock \emph{Extreme Value Theory: An Introduction}.
\newblock Springer, New York, 2006.
\newblock \doi{10.1007/0-387-34471-3}.

\bibitem[Ghosh and Resnick(2010)]{GhoshResnick2009}
Souvik Ghosh and Sidney~I. Resnick.
\newblock A discussion on mean excess plots.
\newblock \emph{Stochastic Processes and their Applications}, 120\penalty0
  (8):\penalty0 1492--1517, 2010.
\newblock \doi{10.1016/j.spa.2010.04.002}.

\bibitem[R{\"o}ckner et~al.(2020)R{\"o}ckner, Xie, and
  Zhang]{RocknerXieZhang2020}
Michael R{\"o}ckner, Longjie Xie, and Xicheng Zhang.
\newblock Superposition principle for non-local
  {F}okker--{P}lanck--{K}olmogorov operators.
\newblock \emph{Probability Theory and Related Fields}, 178\penalty0
  (3--4):\penalty0 699--733, 2020.
\newblock \doi{10.1007/s00440-020-00985-8}.
\newblock URL \url{https://arxiv.org/abs/1910.11022v1}.

\bibitem[Bogdan et~al.(2014)Bogdan, Grzywny, and
  Ryznar]{BogdanGrzywnyRyznar2014}
Krzysztof Bogdan, Tomasz Grzywny, and Micha{\l} Ryznar.
\newblock Density and tails of unimodal convolution semigroups.
\newblock \emph{Journal of Functional Analysis}, 266\penalty0 (6):\penalty0
  3543--3571, 2014.
\newblock URL \url{https://arxiv.org/abs/1305.0976v2}.

\bibitem[Kaleta and Sztonyk(2015)]{KaletaSztonyk2015}
Kamil Kaleta and Pawe{\l} Sztonyk.
\newblock Estimates of transition densities and their derivatives for jump
  {L\'evy} processes.
\newblock \emph{Journal of Mathematical Analysis and Applications},
  431\penalty0 (1):\penalty0 260--282, 2015.
\newblock \doi{10.1016/j.jmaa.2015.05.061}.
\newblock URL \url{https://arxiv.org/abs/1307.1302v3}.

\bibitem[Hult and Lindskog(2007)]{HultLindskog2007}
Henrik Hult and Filip Lindskog.
\newblock Extremal behavior of stochastic integrals driven by regularly varying
  {L\'evy} processes.
\newblock \emph{The Annals of Probability}, 35\penalty0 (1):\penalty0 309--339,
  2007.
\newblock \doi{10.1214/009117906000000548}.
\newblock URL \url{https://arxiv.org/abs/math/0703802}.

\bibitem[Bingham et~al.(2006)Bingham, Goldie, and Omey]{BinghamGoldieOmey2006}
Nicholas~Hugh Bingham, Charles~M. Goldie, and Edward Omey.
\newblock Regularly varying probability densities.
\newblock \emph{Publications de l'Institut Math{\'e}matique}, 80\penalty0
  (94):\penalty0 47--57, 2006.
\newblock \doi{10.2298/PIM0694047B}.
\newblock URL \url{https://www.ma.imperial.ac.uk/~bin06/Papers/bgo.pdf}.

\bibitem[Denuit et~al.(2025)Denuit, Ortega-Jim{\'e}nez, and
  Robert]{DenuitOrtegaRobert2025}
Michel Denuit, Patricia Ortega-Jim{\'e}nez, and Christian-Yann Robert.
\newblock Conditional expectations given the sum of independent random
  variables with regularly varying densities.
\newblock \emph{ASTIN Bulletin}, 55\penalty0 (2):\penalty0 449--485, 2025.
\newblock \doi{10.1017/asb.2025.11}.

\bibitem[Dytso et~al.(2021)Dytso, Poor, and Shamai]{DytsoPoorShamai2021}
Alex Dytso, Harold~Vincent Poor, and Shlomo Shamai.
\newblock A general derivative identity for the conditional mean estimator in
  {Gaussian} noise and some applications, 2021.
\newblock URL \url{https://arxiv.org/abs/2104.01883v1}.

\bibitem[Cattiaux et~al.(2023)Cattiaux, Conforti, Gentil, and
  L{\'e}onard]{CattiauxConfortiGentilLeonard}
Patrick Cattiaux, Giovanni Conforti, Ivan Gentil, and Christian L{\'e}onard.
\newblock Time reversal of diffusion processes under a finite entropy
  condition.
\newblock \emph{Annales de l'Institut Henri Poincar{\'e}, Probabilit{\'e}s et
  Statistiques}, 59\penalty0 (4):\penalty0 1844--1881, 2023.
\newblock \doi{10.1214/22-AIHP1320}.
\newblock URL \url{https://arxiv.org/abs/2104.07708v3}.

\bibitem[Bakry et~al.(2014)Bakry, Gentil, and Ledoux]{BakryGentilLedoux2014}
Dominique Bakry, Ivan Gentil, and Michel Ledoux.
\newblock \emph{Analysis and Geometry of Markov Diffusion Operators}.
\newblock Springer, Cham, 2014.
\newblock \doi{10.1007/978-3-319-00227-9}.

\bibitem[Kevei(n.d.)]{KeveiRegularVariation}
P{\'e}ter Kevei.
\newblock Regularly varying functions.
\newblock Lecture notes, University of Szeged, n.d.
\newblock URL
  \url{https://www.math.u-szeged.hu/~kevei/tanitas/jegyzet/RegVar_notes.pdf}.

\bibitem[{National Institute of Standards and Technology}(n.d.)]{NISTDLMF}
{National Institute of Standards and Technology}.
\newblock {NIST Digital Library of Mathematical Functions}, n.d.
\newblock URL \url{https://dlmf.nist.gov/8.11#E2}.
\newblock Section 8.11(i), equation 8.11.2; consulted on 10 September 2026.

\bibitem[Schilling(2016)]{Schilling2016}
Ren{\'e}~L. Schilling.
\newblock An introduction to {L\'evy} and {Feller} processes.
\newblock Lecture notes, 2016.
\newblock URL \url{https://arxiv.org/abs/1603.00251}.

\bibitem[Hill(1975)]{Hill1975}
Bruce~M. Hill.
\newblock A simple general approach to inference about the tail of a
  distribution.
\newblock \emph{The Annals of Statistics}, 3\penalty0 (5):\penalty0 1163--1174,
  1975.
\newblock \doi{10.1214/aos/1176343247}.

\bibitem[Foss et~al.(2009)Foss, Korshunov, and
  Zachary]{FossKorshunovZachary2009}
Sergey Foss, Dmitry Korshunov, and Stan Zachary.
\newblock Convolutions of long-tailed and subexponential distributions.
\newblock \emph{Journal of Applied Probability}, 46\penalty0 (3):\penalty0
  756--767, 2009.
\newblock \doi{10.1239/jap/1253279850}.

\bibitem[Chambers et~al.(1976)Chambers, Mallows, and
  Stuck]{ChambersMallowsStuck1976}
John~M. Chambers, Colin~Lingwood Mallows, and Bart~W. Stuck.
\newblock A method for simulating stable random variables.
\newblock \emph{Journal of the American Statistical Association}, 71\penalty0
  (354):\penalty0 340--344, 1976.
\newblock \doi{10.1080/01621459.1976.10480344}.

\bibitem[Bradbury et~al.(2018)Bradbury, Frostig, Hawkins, Johnson, Leary,
  Maclaurin, Necula, Paszke, Vander{P}las, Wanderman-{M}ilne, and
  Zhang]{SoftwareBradbury2018JAX}
James Bradbury, Roy Frostig, Peter Hawkins, Matthew~James Johnson, Chris Leary,
  Dougal Maclaurin, George Necula, Adam Paszke, Jake Vander{P}las, Skye
  Wanderman-{M}ilne, and Qiao Zhang.
\newblock {JAX}: composable transformations of {P}ython+{N}um{P}y programs,
  2018.
\newblock URL \url{https://github.com/jax-ml/jax}.
\newblock Version 0.7.2.

\bibitem[Harris et~al.(2020)Harris, Millman, {van der Walt}, Gommers, Virtanen,
  Cournapeau, Wieser, Taylor, Berg, Smith, Kern, Picus, Hoyer, {van Kerkwijk},
  Brett, Haldane, {Fernández del Río}, Wiebe, Peterson, Gérard-Marchant,
  Sheppard, Reddy, Weckesser, Abbasi, Gohlke, and
  Oliphant]{SoftwareHarris2020NumPy}
Charles~R. Harris, K.~Jarrod Millman, Stéfan~J {van der Walt}, Ralf Gommers,
  Pauli Virtanen, David Cournapeau, Eric Wieser, Julian Taylor, Sebastian Berg,
  Nathaniel~J. Smith, Robert Kern, Matti Picus, Stephan Hoyer, Marten~H. {van
  Kerkwijk}, Matthew Brett, Allan Haldane, Jaime {Fernández del Río}, Mark
  Wiebe, Pearu Peterson, Pierre Gérard-Marchant, Kevin Sheppard, Tyler Reddy,
  Warren Weckesser, Hameer Abbasi, Christoph Gohlke, and Travis~E. Oliphant.
\newblock Array programming with {NumPy}.
\newblock \emph{Nature}, 585:\penalty0 357--362, 2020.
\newblock \doi{10.1038/s41586-020-2649-2}.

\bibitem[Virtanen et~al.(2020)Virtanen, Gommers, Oliphant, Haberland, Reddy,
  Cournapeau, Burovski, Peterson, Weckesser, Bright, {van der Walt}, Brett,
  Wilson, Millman, Mayorov, Nelson, Jones, Kern, Larson, Carey, Polat, Feng,
  Moore, {VanderPlas}, Laxalde, Perktold, Cimrman, Henriksen, Quintero, Harris,
  Archibald, Ribeiro, Pedregosa, {van Mulbregt}, and {SciPy 1.0
  Contributors}]{SoftwareVirtanen2020SciPy}
Pauli Virtanen, Ralf Gommers, Travis~E. Oliphant, Matt Haberland, Tyler Reddy,
  David Cournapeau, Evgeni Burovski, Pearu Peterson, Warren Weckesser, Jonathan
  Bright, St{\'e}fan~J. {van der Walt}, Matthew Brett, Joshua Wilson, K.~Jarrod
  Millman, Nikolay Mayorov, Andrew R.~J. Nelson, Eric Jones, Robert Kern, Eric
  Larson, C~J Carey, {\.I}lhan Polat, Yu~Feng, Eric~W. Moore, Jake
  {VanderPlas}, Denis Laxalde, Josef Perktold, Robert Cimrman, Ian Henriksen,
  E.~A. Quintero, Charles~R. Harris, Anne~M. Archibald, Ant{\^o}nio~H. Ribeiro,
  Fabian Pedregosa, Paul {van Mulbregt}, and {SciPy 1.0 Contributors}.
\newblock {{SciPy} 1.0: Fundamental Algorithms for Scientific Computing in
  Python}.
\newblock \emph{Nature Methods}, 17:\penalty0 261--272, 2020.
\newblock \doi{10.1038/s41592-019-0686-2}.
\newblock URL \url{https://doi.org/10.1038/s41592-019-0686-2}.

\end{thebibliography}

\clearpage
\appendix
\thispagestyle{plain}

\addcontentsline{toc}{section}{Appendix}
\part{Appendix}
\parttoc
\clearpage

\section*{Notation}
\addcontentsline{toc}{section}{Notation}

The following notation is used throughout the supplement. Vectors live in
$\mathbb{R}^d$ and all stable laws are isotropic; $\alpha\in(1,2)$ and $\nu>1$
are fixed, and subscripts $t$ (resp.\ $\tau$) denote forward (resp.\ reverse)
time.

{\footnotesize
\renewcommand{\arraystretch}{1.12}
\setlength{\LTleft}{0pt}\setlength{\LTright}{0pt plus 1fill}
\newcolumntype{Y}{>{$\relax}p{0.19\linewidth}<{$}}

\begin{longtable}{@{} Y @{\hspace{0.6em}}>{\raggedright\arraybackslash}p{0.27\linewidth}@{\hspace{1.2em}} Y @{\hspace{0.6em}}>{\raggedright\arraybackslash}p{0.27\linewidth}@{}}
\caption{Notation used throughout the supplement.}\label{tab:notation}\\
\toprule
\multicolumn{1}{@{}l@{\hspace{0.6em}}}{\normalfont\bfseries Symbol}
& \normalfont\bfseries Meaning
& \multicolumn{1}{l@{\hspace{0.6em}}}{\normalfont\bfseries Symbol}
& \normalfont\bfseries Meaning \\
\midrule
\endfirsthead
\toprule
\multicolumn{1}{@{}l@{\hspace{0.6em}}}{\normalfont\bfseries Symbol}
& \normalfont\bfseries Meaning
& \multicolumn{1}{l@{\hspace{0.6em}}}{\normalfont\bfseries Symbol}
& \normalfont\bfseries Meaning \\
\midrule
\endhead
\bottomrule
\endfoot

\multicolumn{4}{@{}l@{}}{\textit{Indices, horizon, and schedule}}\\[0.2em]
\alpha\in(1,2) & stability index of the driving noise
& \nu & target tail index, $\nu>1$ (stable $W_p$ bounds use $\nu\ge1+\alpha$)\\
d,\ p & dimension; Wasserstein / moment order, $p\in[1,\alpha)$
& T,\ \varepsilon & terminal time and early-stopping threshold\\
\beta(t) & noise schedule; $\beta_{\min}=\beta(\varepsilon)$ and $\beta(T)$ is its maximum on $[\varepsilon,T]$
& B(t) & integrated schedule, $\int_0^t\beta(s)\,\mathrm ds$; $B_\varepsilon=B(\varepsilon)$, $B_T=B(T)$\\
\bar\beta(\tau) & reversed schedule, $\beta(T-\tau)$
& a(t) & signal scale, $e^{-B(t)/\alpha}$ (in VP, $a_t=e^{-B(t)/2}$)\\
\gamma(t) & noise scale, $\gamma(t)^\alpha=1-a(t)^\alpha$ (in VP, $\gamma_t^2=1-a_t^2$)
& \eta/\alpha & coefficient of the linear dissipative term\\
\widetilde m(\tau) & exponential attenuator, $\exp(-\frac{\eta}{\alpha}\int_0^\tau\bar\beta)$
& h,\ N,\ \tau_k & step size, number of steps, and grid points; $h=(T-\varepsilon)/N$\\
\Delta B & per-step injected noise, $\Delta B\le h\beta(T)$
& & \\
\addlinespace[0.45em]

\multicolumn{4}{@{}l@{}}{\textit{Processes and laws}}\\[0.2em]
L^\alpha_t,\ \overleftarrow L{}^\alpha_\tau & forward and backward isotropic $\alpha$-stable driving processes
& \nu_\alpha & Lévy measure, $\nu_\alpha(\|z\|>\rho)=C_{d,\alpha}\rho^{-\alpha}$\\
X_t & forward process, $X_0\sim\pi_{\mathrm{data}}$
& \overleftarrow Y_\tau & exact-score reverse process initialized at $\pi_\infty^\alpha$\\
\bar X^{\theta,h}_{\tau_k} & exponential-integrator (EI) scheme
& \pi_t,\ \pi_t^{(2)} & stable and Brownian forward marginals and densities\\
\mu_\tau^{(\alpha)},\ \mu_\tau^{(2)} & stable and Brownian exact-score outputs under reference initialization; $\mu_\tau$ locally
& \widehat\pi_N^{(\beta,\theta,\eta,\varepsilon)} & sampled law after $N$ steps\\
\pi_\infty^\alpha & stationary stable law, $\mathrm S\alpha\mathrm S_d(1)$
& \pi_\infty^{(2)} & stationary VP law, $\mathcal N(0,I_d)$\\
\mathrm{Law}(\cdot),\ \overset{\mathrm d}{=} & law of a variable; equality in law
& \overleftarrow Z_\tau & exact-score reverse process initialized at $\pi_T$, with law $\pi_{T-\tau}$\\
\addlinespace[0.45em]

\multicolumn{4}{@{}l@{}}{\textit{Fractional score, denoiser, and drift}}\\[0.2em]
\Delta^{(\alpha-2)/2}\nabla & fractional gradient, multiplier $-i\xi\|\xi\|^{\alpha-2}$
& S_t^{(\alpha)} & marginal fractional score\\
S_t^{(\alpha)}(\cdot\mid x_0) & conditional fractional score
& m_t(x) & posterior barycenter; MSE-optimal denoiser when $X_0\in L^2$\\
C_t & global denoiser modulus, $\sup_x\|\nabla m_t(x)\|_{\mathrm{op}}$
& \bar C_\varepsilon & uniform modulus, $\sup_{t\ge\varepsilon}C_t$, in Assumption~\ref{ass:denoiser}\\
C_t^{(\alpha)} & score dissipativity constant, $(1-a(t)C_t)/(\alpha\gamma(t)^\alpha)$; $C_t^{(2)}$ is its VP analogue
& \bar b(\tau,x),\ \bar L(\tau) & backward drift and one-sided Lipschitz constant\\
S_t^{(\alpha),\mathrm{res}} & stationary-score residual, $x+\alpha S_t^{(\alpha)}(x)=\alpha(S_t^{(\alpha)}-S_\infty^{(\alpha)})$
& L_t^{\mathrm{res}},\ C_t^{\mathrm{res}} & Lipschitz and one-sided Lipschitz constants of the residual\\
s=\nabla\log \varphi_\alpha & score of the standard stable density
& s_\theta,\ h_\theta & learned score and learned residual map $x+\alpha s_\theta$\\
\chi_t & direct-time contraction coefficient, $(1+\eta)C_t^{(\alpha)}-1/\alpha$
& \mathfrak A_\varepsilon & cumulative non-contractivity after early stopping\\
\mathfrak D_\varepsilon^{\mathrm{mix}} & cumulative deficit from the asymptotic mixing rate
& \mathcal G_{\varepsilon,h} & uniform bound on products of EI one-step factors\\
\addlinespace[0.45em]

\multicolumn{4}{@{}l@{}}{\textit{Constants of the standard stable law}}\\[0.2em]
\varphi_\alpha & density of $\mathrm S\alpha\mathrm S_d(1)$
& k_{d,\alpha} & density tail constant, $\varphi_\alpha(z)\sim k_{d,\alpha}\|z\|^{-(d+\alpha)}$\\
C_{d,\alpha} & radial tail constant, $\omega_{d-1}k_{d,\alpha}/\alpha$
& c_{d,\alpha,p} & $p$-moment constant, $\mathbb E[\|Z\|^p]^{1/p}$, finite iff $p<\alpha$\\
c_\varphi,\ C_\varphi & two-sided envelope constants of $\varphi_\alpha$
& \Lambda_1,\ \Lambda_2 & $\sup\|s\|$ and $\sup\|\nabla s\|$\\
\Lambda & $\sup\|\nabla^2\log \varphi_\alpha\|$
& & \\
\addlinespace[0.45em]

\multicolumn{4}{@{}l@{}}{\textit{Target distribution and its tail}}\\[0.2em]
\pi_{\mathrm{data}}\ (\pi\text{ locally}) & target law and density, represented by the same symbol
& \overline G_{\pi_{\mathrm{data}}}(r) & radial survival, $\mathbb P(\|X_0\|>r)$\\
c_{\pi,-},\ c_{\pi,+},\ R_\pi & two-sided density-envelope constants and threshold
& M_1,\ M_2 & $\mathbb E\|X_0\|$ and $\mathbb E\|X_0\|^2$\\
S_{\varepsilon,T} & uniform bound on the smoothed Brownian score for $s\in[\varepsilon,T]$
& M_*^{(p)} & uniform conditional $p$-th moment\\
B_\varepsilon^{(p)} & uniform moment bound for the ideal reverse process
& R & support radius in the compact case\\
\addlinespace[0.45em]

\multicolumn{4}{@{}l@{}}{\textit{Effective tail scales of the generated law}}\\[0.2em]
\rho_\tau\ (\rho_{\nu,\tau}\text{ locally}) & scale of the exactly stable leading term $\mathcal W_\tau$
& E(\tau,s),\ E_\nu(\tau,s) & gains of the corresponding linear flows from $s$ to $\tau$\\
c_{\tau} & tail constant, $\overline G_{\mu_\tau^{(\alpha)}}(r)\sim c_{\tau}r^{-\alpha}$
& \sigma_\tau^2,\ K_\tau & Brownian variance and bounded-remainder radius\\
\addlinespace[0.45em]

\multicolumn{4}{@{}l@{}}{\textit{Tail-regime and extreme-quantile functionals}}\\[0.2em]
e_\mu(u) & radial mean excess, $\mathbb E_\mu[\|X\|-u\mid\|X\|>u]$
& \Gamma(\mu) & limiting relative mean excess, $\lim_{u\to\infty}e_\mu(u)/u$\\
q_\kappa(\mu) & radial $\kappa$-quantile
& \mathcal M_\mu(\kappa) & generated-to-target extreme-quantile ratio\\
\operatorname{osc}_S(f) & oscillation, $\sup_Sf-\inf_Sf$
& & \\
\addlinespace[0.45em]

\multicolumn{4}{@{}l@{}}{\textit{Error terms and metrics}}\\[0.2em]
\mathcal E_0,\ \mathcal E_1,\ \mathcal E_2 & early-stopping, mixing, and score-plus-discretization errors
& \mathrm{err} & score-approximation error in Assumption~\ref{ass:score}\\
L_\varepsilon^{\mathrm{time}} & time-regularity constant in Assumption~\ref{ass:time}
& W_p & order-$p$ Wasserstein distance\\
\|\cdot\|_{\mathrm{TV}} & total-variation distance
& \mathrm{KL}(\cdot\,\|\,\cdot) & Kullback--Leibler divergence\\
\addlinespace[0.45em]

\multicolumn{4}{@{}l@{}}{\textit{Miscellaneous}}\\[0.2em]
I_d & $d\times d$ identity matrix
& \varphi_\gamma,\ \varphi & Brownian density of scale $\gamma$; standard Brownian density\\
\omega_{d-1} & surface area of $\mathbb S^{d-1}$
& \bar B(0,R) & closed Euclidean ball of radius $R$\\
\langle\cdot,\cdot\rangle,\ \|\cdot\| & Euclidean inner product and norm
& (x)_+ & positive part, $\max(x,0)$\\
\end{longtable}
}
The constants $k_{d,\alpha}$, $C_{d,\alpha}$, and $c_{d,\alpha,p}$ are,
respectively, density-tail, radial-tail, and moment constants; despite their
similar notation, they are not interchangeable.

\section{Proofs for Section~\ref{sec:setup}: Stable Dynamics}\label{app:setup}

The explicit stable Ornstein--Uhlenbeck transition law gives both scale
preservation and the conditional score. The Fourier convention stated below
\eqref{eq:score_def} then identifies the marginal score and the reverse
marginals.

\subsection{Strong solution and scale preservation}

\begin{lemma}[Strong solution]\label{app:lem_strong}
The SP-SDE \eqref{eq:spsde} admits the unique strong solution
\begin{equation}\label{eq:app_strong}
X_t=a(t)X_0+\int_0^t \frac{a(t)}{a(s)}\,\beta(s)^{1/\alpha}\,\mathrm{d} L^\alpha_s ,
\end{equation}
and $X_t\overset{\mathcal L}=a(t)X_0+\gamma(t)Z$ with
$Z\sim\mathrm{S}\alpha\mathrm{S}_d(1)$ independent of $X_0$. If
$X_0\sim\mathrm{S}\alpha\mathrm{S}_d(1)$, then $X_t\sim\mathrm{S}\alpha\mathrm{S}_d(1)$ for
all $t$.
\end{lemma}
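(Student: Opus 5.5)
The plan is to treat \eqref{eq:spsde} as a linear SDE with additive noise and to solve it by variation of constants. Since $\beta$ is continuous and positive, $a(t)=\exp(-\frac1\alpha\int_0^t\beta)$ is absolutely continuous, strictly positive, and satisfies $a'(t)=-\frac{\beta(t)}{\alpha}a(t)$. I apply the integration-by-parts formula to $a(t)^{-1}X_t$. Because $a^{-1}$ has finite variation and is continuous, no bracket term appears, and
\[
  \mathrm{d}\bigl(a(t)^{-1}X_t\bigr)=a(t)^{-1}\beta(t)^{1/\alpha}\,\mathrm{d}L^\alpha_t .
\]
The integrand $a(s)^{-1}\beta(s)^{1/\alpha}$ is deterministic and locally bounded, so the stochastic integral against $L^\alpha$ is well defined; for instance, it is a Wiener-type integral against a semimartingale. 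Integrating and multiplying by $a(t)$ gives \eqref{eq:app_strong}. For uniqueness, I take the difference $D_t$ of two solutions with the same $X_0$ and the same driving process. It solves the pathwise linear ODE $\mathrm{d}D_t=-\frac{\beta(t)}{\alpha}D_t\,\mathrm{d}t$ with $D_0=0$, so $D\equiv0$. Existence can also be checked directly by differentiating \eqref{eq:app_strong}.

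For the law, I compute the characteristic function of $I_t:=\int_0^t \frac{a(t)}{a(s)}\beta(s)^{1/\alpha}\,\mathrm{d}L^\alpha_s$. For a deterministic integrand $f$, the independent-increments property gives $\mathbb E[e^{i\xi\cdot\int f\,\mathrm{d}L^\alpha}]=\exp(-\|\xi\|^\alpha\int|f(s)|^\alpha\,\mathrm{d}s)$. I would prove this first for step functions and then pass to the limit. Here
\[
  \int_0^t \frac{a(t)^\alpha}{a(s)^\alpha}\beta(s)\,\mathrm{d}s = a(t)^\alpha\bigl[a(s)^{-\alpha}\bigr]_0^t=1-a(t)^\alpha=\gamma(t)^\alpha ,
\]
using $\frac{\mathrm d}{\mathrm ds}a(s)^{-\alpha}=\beta(s)a(s)^{-\alpha}$. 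Hence $I_t\overset{\mathcal L}=\gamma(t)Z$. Moreover $I_t$ is measurable with respect to the increments of $L^\alpha$, which are independent of $X_0$, and this gives the decomposition $X_t\overset{\mathcal L}=a(t)X_0+\gamma(t)Z$.

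For stationarity, I take $X_0\sim \mathrm{S}\alpha\mathrm{S}_d(1)$ independent of $Z$. Then the characteristic function of $a(t)X_0+\gamma(t)Z$ is $\exp(-(a(t)^\alpha+\gamma(t)^\alpha)\|\xi\|^\alpha)=e^{-\|\xi\|^\alpha}$.

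The only delicate step is justifying the characteristic-function formula for the stochastic integral with a time-dependent deterministic integrand. Approximating $f$ uniformly on $[0,t]$ by step functions handles this: the integrals converge in probability, and the exponents $\int|f_n|^\alpha$ converge to $\int|f|^\alpha$.
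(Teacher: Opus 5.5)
Your proposal is correct and follows the paper's proof essentially step for step: variation of constants gives existence and uniqueness, and the characteristic function $\exp(-\|\xi\|^\alpha\int|f|^\alpha)$ of a deterministic stable integral is obtained from simple integrands by approximation. The scale computation $\int_0^t (a(t)/a(s))^\alpha\beta(s)\,\mathrm{d}s=\gamma(t)^\alpha$ and the identity $a(t)^\alpha+\gamma(t)^\alpha=1$ then give the decomposition and the invariance. One small point: \eqref{eq:spsde} is posed for $\beta$ only measurable and locally bounded, and uniform step approximation needs the continuity you took from Assumption~\ref{ass:schedule}; approximating in $L^\alpha$ (or boundedly a.e.) instead, as the paper does, removes that restriction without changing anything else.
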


\begin{proof}
Variation of constants gives \eqref{eq:app_strong} and uniqueness.
For a deterministic scalar $k\in L^\alpha$, independence of stable increments
gives
$\mathbb{E} e^{i\xi\cdot\int k\,\mathrm{d} L^\alpha}
=\exp(-\|\xi\|^\alpha\int|k|^\alpha)$,
first for simple $k$ and then by approximation.
Here the scale to the power $\alpha$ is
\[
 \int_0^t\left(\frac{a(t)}{a(s)}\right)^\alpha\beta(s)\,\mathrm{d} s
 =\int_0^t\beta(s)e^{-[B(t)-B(s)]}\,\mathrm{d} s
 =1-e^{-B(t)}=\gamma(t)^\alpha.
\]
The integral is independent of $X_0$, and scale preservation follows from
$a(t)^\alpha+\gamma(t)^\alpha=1$.
\end{proof}

\subsection{Proof of Proposition~\ref{prop:tweedie}: Fractional Tweedie formula}

The following conditional-score identity is standard for the
L\'evy--It\^o model \citep{Yoon2023}. We include its short derivation to fix
our Fourier and stable-scale conventions.

\begin{lemma}[Conditional score]\label{app:lem_cond}
The conditional fractional score is exactly linear:
\begin{equation}\label{eq:app_cond}
S^{(\alpha)}_t(x\mid x_0)=-\frac{1}{\alpha\gamma(t)^\alpha}\bigl(x-a(t)x_0\bigr).
\end{equation}
\end{lemma}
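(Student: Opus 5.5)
The plan is to compute the fractional score of the explicit transition density $\pi_t(x\mid x_0)=\gamma(t)^{-d}\varphi_\alpha\bigl((x-a(t)x_0)/\gamma(t)\bigr)$ directly in Fourier variables. Then, by translation and dilation covariance of the operator $\Delta^{(\alpha-2)/2}\nabla$, we reduce the statement to an identity for the standard density $\varphi_\alpha$ alone.

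\textbf{Step 1 (identity for $\varphi_\alpha$).} With the convention $\widehat f(\xi)=\int f(x)e^{i\xi\cdot x}\,\mathrm dx$, one has $\widehat{\varphi_\alpha}(\xi)=e^{-\|\xi\|^\alpha}$. Applying the multiplier $-i\xi\|\xi\|^{\alpha-2}$ gives
\[
\widehat{\Delta^{(\alpha-2)/2}\nabla\varphi_\alpha}(\xi)=-i\xi\|\xi\|^{\alpha-2}e^{-\|\xi\|^\alpha}=\frac{i}{\alpha}\nabla_\xi e^{-\|\xi\|^\alpha}.
\]
Since multiplication by $x$ corresponds to $-i\nabla_\xi$ under this convention, we have $\widehat{x\varphi_\alpha}(\xi)=-i\nabla_\xi\widehat{\varphi_\alpha}(\xi)$. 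Hence $\Delta^{(\alpha-2)/2}\nabla\varphi_\alpha(u)=-\tfrac1\alpha u\,\varphi_\alpha(u)$. This also confirms $S^{(\alpha)}_\infty(x)=-x/\alpha$. Both sides are integrable, because $\|u\|\varphi_\alpha(u)\asymp\|u\|^{-(d+\alpha-1)}$ with $\alpha>1$, so the Fourier identities hold in $L^1$ and pointwise by continuity.

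\textbf{Step 2 (scaling and translation).} Write $f(x)=\gamma^{-d}\varphi_\alpha((x-ax_0)/\gamma)$. Translation commutes with the multiplier, so it only shifts the argument. For the dilation, $\widehat f(\xi)=e^{i a\xi\cdot x_0}\widehat{\varphi_\alpha}(\gamma\xi)$. The multiplier is homogeneous of degree $\alpha-1$, so
\[
\Delta^{(\alpha-2)/2}\nabla f(x)=\gamma^{1-\alpha}\gamma^{-d}\bigl(\Delta^{(\alpha-2)/2}\nabla\varphi_\alpha\bigr)\!\left(\frac{x-ax_0}{\gamma}\right).
\]
Substituting Step 1 gives $-\gamma^{1-\alpha}\alpha^{-1}\gamma^{-1}(x-ax_0)f(x)$. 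Dividing by $f(x)>0$ (stable densities are positive) yields \eqref{eq:app_cond}.

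The only delicate point is bookkeeping: the sign conventions of the Fourier transform and the homogeneity exponent. I would double-check them against the case $\alpha=2$. There $\varphi_2$ is the $\mathcal N(0,2I)$ density, whose score is $-u/2$, matching $-u/\alpha$. I would also check the stationary case $a=0,\gamma=1$.
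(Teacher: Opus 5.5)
Your proposal is correct and is essentially the paper's argument. The paper works directly from $\widehat f(\xi)=e^{ia x_0\cdot\xi-\gamma^\alpha\|\xi\|^\alpha}$: it computes $\mathcal F[(x-ax_0)f]=-\alpha\gamma^\alpha(-i\xi\|\xi\|^{\alpha-2})\widehat f$, then inverts and divides by $f>0$; you instead first prove the identity for $\varphi_\alpha$ and transfer it by translation and the degree-$(\alpha-1)$ homogeneity of the multiplier, and your integrability remarks (using $\alpha>1$) supply the justification for pointwise Fourier inversion that the paper leaves implicit.
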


\begin{proof}
This is the conditional fractional-score formula of
\citet[Theorem~4.4]{Yoon2023}, with our stable-scale convention.
Indeed, for $f(x)=\pi_t(x\mid x_0)$, $a=a(t)$ and $\gamma=\gamma(t)$,
\[
 \widehat f(\xi)=e^{iax_0\cdot\xi-\gamma^\alpha\|\xi\|^\alpha},
 \qquad
 \mathcal F[(x-ax_0)f](\xi)
 =-\alpha\gamma^\alpha(-i\xi\|\xi\|^{\alpha-2})\widehat f(\xi).
\]
Fourier inversion and division by $f>0$ give \eqref{eq:app_cond}.
\end{proof}

\begin{proof}[\normalfont\bfseries Proof of Proposition~\ref{prop:tweedie}]
For $t>0$, the envelope
$(1+\|\xi\|^{\alpha-1})e^{-\gamma(t)^\alpha\|\xi\|^\alpha}$ is integrable
and independent of $x_0$. Fourier inversion and Fubini therefore allow
$\Delta^{(\alpha-2)/2}\nabla$ to commute with the mixing integral.
For fixed $(t,x)$, the map
$x_0\mapsto\|x_0\|\varphi_\alpha((x-a(t)x_0)/\gamma(t))$ is bounded by the
stable density envelope. Thus the conditional first moment is finite even
without an unconditional moment assumption. Integrating
\eqref{eq:app_cond} against the conditional law yields
\[
 S_t^{(\alpha)}(x)
 =\int S_t^{(\alpha)}(x\mid x_0)\,
       \mathbb{P}(X_0\in\mathrm{d} x_0\mid X_t=x)
 =\frac{a(t)m_t(x)-x}{\alpha\gamma(t)^\alpha}.
\]
\end{proof}

\subsection{Proof of Proposition~\ref{prop:family}: Matching marginals}\label{app:matching_marginals}

The exact time reversal of \eqref{eq:spsde} is a jump process whose jump
kernel is the stable L\'evy measure tilted by a ratio of marginals,
$\bar\beta(\tau)\nu_\alpha(\mathrm{d}z)\,\pi_{T-\tau}(x+z)/\pi_{T-\tau}(x)$
\citep{ConfortiLeonard2022}. It is not driven by a pure L\'evy process and is not
used here. We justify instead the matching-marginals family
\eqref{eq:backward} in two steps.

Set $H:=T-\varepsilon$ and $\bar\beta(\tau):=\beta(T-\tau)$. For $r\ge1$, write
$\mathcal P_r(\mathbb R^d)$ for the probability measures with finite $r$-th
moment. For a compact interval $I$, let $\mathcal C_1(I)$ denote the narrowly continuous curves
$(\mu_t)_{t\in I}$ in $\mathcal P(\mathbb R^d)$ such that $\int_I\!\int_{\mathbb
R^d}\|x\|\,\mu_t(\mathrm{d}x)\,\mathrm{d}t<\infty$.
Here $\Delta^{\alpha/2}=-(-\Delta)^{\alpha/2}$ denotes the generator with
Fourier multiplier $-\|\xi\|^\alpha$. In the weak sense, the forward
Fokker--Planck Cauchy problem on
$[\varepsilon,T]$ is
\begin{equation}\label{eq:app_forward_fp}
\begin{aligned}
 \int\varphi\,\mathrm{d}\nu_t-
 \int\varphi\,\mathrm{d}\pi_\varepsilon
 =\int_\varepsilon^t\!\int
 \Bigl[-\frac{\beta(s)}{\alpha}x\!\cdot\!\nabla\varphi(x)
 +\beta(s)\Delta^{\alpha/2}\varphi(x)\Bigr]
 \,\nu_s(\mathrm{d}x)\,\mathrm{d}s,
\end{aligned}
\end{equation}
for $t\in[\varepsilon,T]$ and every $\varphi\in C_c^2(\mathbb R^d)$. The
corresponding reverse problem on $[0,H]$ is
\begin{equation}\label{eq:app_backward_fp}
\begin{aligned}
 \int\varphi\,\mathrm{d}\mu_\tau-
 \int\varphi\,\mathrm{d}\pi_T
 ={}&\int_0^\tau\!\int
 \bar b(u,x)\!\cdot\!\nabla\varphi(x)
 \,\mu_u(\mathrm{d}x)\,\mathrm{d}u\\
 &+\eta\int_0^\tau\bar\beta(u)
 \int\Delta^{\alpha/2}\varphi(x)\,\mu_u(\mathrm{d}x)\,\mathrm{d}u.
\end{aligned}
\end{equation}

\begin{proposition}[Specialization of Popov et al.'s theorem]
\label{app:prop_popov_adaptation}
Assume $\pi_{\mathrm{data}}\in\mathcal P_1(\mathbb R^d)$ and the following
sufficient weak formulation of the well-posedness requirements underlying
\citet[Theorem~1 and Appendix~A]{Popov2025}:
\begin{enumerate}[label=\textup{(P\arabic*)},leftmargin=2.8em,itemsep=1pt,topsep=2pt]
 \item the forward SDE \eqref{eq:spsde}, restricted to $[\varepsilon,T]$
 with initial law $\pi_\varepsilon$, and the reverse SDE
 \eqref{eq:backward} admit weak solutions whose marginal curves belong,
 respectively, to $\mathcal C_1([\varepsilon,T])$ and
 $\mathcal C_1([0,H])$;
 \item $(\pi_t)_{t\in[\varepsilon,T]}$ is the unique solution of
 \eqref{eq:app_forward_fp} in $\mathcal C_1([\varepsilon,T])$, and
 \eqref{eq:app_backward_fp} has a unique solution in $\mathcal C_1([0,H])$.
\end{enumerate}
Then every weak solution of \eqref{eq:backward} whose marginal curve belongs
to $\mathcal C_1([0,H])$ satisfies
$\mathrm{Law}(\overleftarrow Z_\tau)=\pi_{T-\tau}, \qquad 0\le\tau\le H$.
\end{proposition}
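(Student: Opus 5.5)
The plan is to show that the forward marginal flow, read in reverse time, solves the reverse Fokker--Planck problem \eqref{eq:app_backward_fp}. Uniqueness in (P2) then forces every admissible weak solution of \eqref{eq:backward} to have marginals $\pi_{T-\tau}$. Concretely, set $\mu_\tau:=\pi_{T-\tau}$ for $\tau\in[0,H]$. By (P1) and (P2), $(\pi_t)_{t\in[\varepsilon,T]}\in\mathcal C_1([\varepsilon,T])$ solves \eqref{eq:app_forward_fp}. Time reversal preserves narrow continuity and the integrated first-moment condition, so $(\mu_\tau)\in\mathcal C_1([0,H])$.

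Next, I will rewrite the forward generator term as the reverse one. Differentiating \eqref{eq:app_forward_fp} in $t$ and substituting $t=T-\tau$ gives, for $\varphi\in C_c^2$,
\[
\frac{\mathrm d}{\mathrm d\tau}\int\varphi\,\mathrm d\mu_\tau=\bar\beta(\tau)\int\Bigl[\tfrac1\alpha x\cdot\nabla\varphi-\Delta^{\alpha/2}\varphi\Bigr]\mathrm d\pi_{T-\tau}.
\]
The key identity to establish, for $t\ge\varepsilon$, is
\[
\int\Delta^{\alpha/2}\varphi\,\mathrm d\pi_t=\int S^{(\alpha)}_t\cdot\nabla\varphi\,\mathrm d\pi_t.
\]
This is the fractional integration by parts $\Delta^{\alpha/2}=\Delta^{(\alpha-2)/2}\nabla\cdot\nabla$, which I will check on the Fourier side: the multiplier $-\|\xi\|^\alpha$ equals $(-i\xi\|\xi\|^{\alpha-2})\cdot(i\xi)$ up to the adjoint sign, and combined with \eqref{eq:score_def} this gives $\pi_tS_t^{(\alpha)}=\Delta^{(\alpha-2)/2}\nabla\pi_t$. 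The justification will use the integrable envelope $(1+\|\xi\|^{\alpha-1})e^{-\gamma(t)^\alpha\|\xi\|^\alpha}$, as in the proof of Proposition~\ref{prop:tweedie}. Integrability of $S_t^{(\alpha)}$ against $\pi_t$ follows from the Tweedie representation and $\pi_{\mathrm{data}}\in\mathcal P_1$.

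With this identity, split $-\Delta^{\alpha/2}\varphi=-(1+\eta)\Delta^{\alpha/2}\varphi+\eta\Delta^{\alpha/2}\varphi$. Convert only the first part into the drift term $(1+\eta)\bar\beta S^{(\alpha)}_{T-\tau}\cdot\nabla\varphi$. This yields exactly $\bar b\cdot\nabla\varphi+\eta\bar\beta\Delta^{\alpha/2}\varphi$ integrated against $\mu_\tau$, which is \eqref{eq:app_backward_fp} after integrating in $\tau$ with $\mu_0=\pi_T$. I also need $\tau\mapsto\int\bar b\cdot\nabla\varphi\,\mathrm d\mu_\tau$ to be locally integrable; this holds because $\|S^{(\alpha)}_t(x)\|\lesssim\gamma(t)^{-\alpha}(\|x\|+\|m_t(x)\|)$ and $\mathbb E\|m_t(X_t)\|\le M_1$.

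Finally, let $\overleftarrow Z$ be any weak solution with marginal curve in $\mathcal C_1([0,H])$. By Itô's formula for $\varphi\in C_c^2$ and the Lévy generator $\eta\bar\beta\Delta^{\alpha/2}$, its marginals solve \eqref{eq:app_backward_fp}. Since $(\mu_\tau)$ solves it too, uniqueness in (P2) gives $\mathrm{Law}(\overleftarrow Z_\tau)=\pi_{T-\tau}$. I expect the main obstacle to be the rigorous fractional integration by parts, namely the sign and adjoint bookkeeping for the odd multiplier $-i\xi\|\xi\|^{\alpha-2}$ and the integrability needed to apply Fubini/Parseval with $\varphi\in C_c^2$, for which $\widehat{\nabla\varphi}$ decays only polynomially. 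I would first try Parseval on $\widehat{\pi_t}$, which has Gaussian-like stable decay, and move every multiplier onto that side.
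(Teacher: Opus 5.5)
Your route is the paper's: show that $\mu_\tau=\pi_{T-\tau}$ solves \eqref{eq:app_backward_fp} using $\operatorname{div}(\pi_tS_t^{(\alpha)})=\Delta^{\alpha/2}\pi_t$. Then use It\^o's formula to show that any admissible weak solution's marginals solve the same problem, and conclude by uniqueness in (P2). Your integrability checks (Tweedie plus $\mathbb E\|m_t(X_t)\|\le M_1$) and the Parseval justification supply details that the paper delegates to Popov et al.

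There is, however, a sign error in your key identity. The correct statement is
\[
\int\Delta^{\alpha/2}\varphi\,\mathrm d\pi_t=-\int S^{(\alpha)}_t\cdot\nabla\varphi\,\mathrm d\pi_t ,
\]
because $\int\varphi\,\operatorname{div}G=-\int\nabla\varphi\cdot G$ for $G=\pi_tS_t^{(\alpha)}$. Two checks confirm it. At $\alpha=2$ it reads $\int\Delta\varphi\,\pi=-\int\nabla\varphi\cdot\nabla\pi$. Also, stationarity of $\mathrm{S}\alpha\mathrm{S}_d(1)$ under the forward generator, with $S^{(\alpha)}_\infty(x)=-x/\alpha$, forces the minus sign. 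Your own Fourier remark already exhibits it: $(-i\xi\|\xi\|^{\alpha-2})\cdot(i\xi)=+\|\xi\|^\alpha$, and the ``adjoint sign'' you mention is exactly this minus.

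Used as written, your identity turns $-(1+\eta)\Delta^{\alpha/2}\varphi$ into $-(1+\eta)\bar\beta S^{(\alpha)}_{T-\tau}\cdot\nabla\varphi$, which is the wrong drift. The corrected identity gives $+(1+\eta)\bar\beta S^{(\alpha)}_{T-\tau}\cdot\nabla\varphi$, which is what you wrote down and what \eqref{eq:backward_drift} requires. So the conclusion survives, but the displayed step must be fixed.

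Once corrected, the Parseval argument works as you anticipate. It needs only $\nabla\varphi,\Delta^{\alpha/2}\varphi\in L^1$ and $(1+\|\xi\|^{\alpha-1})\widehat{\pi_t}\in L^1$; the latter holds because $|\widehat{\pi_t}(\xi)|\le e^{-\gamma(t)^\alpha\|\xi\|^\alpha}$. Hence the slow decay of $\widehat{\nabla\varphi}$ is harmless. A minor point: instead of differentiating \eqref{eq:app_forward_fp} in $t$, whose time integrand is only locally integrable, substitute $t=T-\tau$ directly in the integrated identity.
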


\begin{proof}
Specialize the Fokker--Planck calculation of
\citet[Theorem~1 and Appendix~A]{Popov2025} to
$\mu(x,t)=-\beta(t)x/\alpha$, $\sigma_t=\beta(t)^{1/\alpha}$ and
$\eta_t=\eta$. For $\mu_\tau=\pi_{T-\tau}$, the identity
$\operatorname{div}(\mu_\tau S_{T-\tau}^{(\alpha)})
=\Delta^{\alpha/2}\mu_\tau$ gives, in the weak sense,
\[
 -\operatorname{div}(\bar b\,\mu_\tau)
 +\eta\bar\beta(\tau)\Delta^{\alpha/2}\mu_\tau
 =-\frac{\bar\beta(\tau)}{\alpha}\operatorname{div}(x \mu_\tau)
  -\bar\beta(\tau)\Delta^{\alpha/2}\mu_\tau
 =\partial_\tau \mu_\tau.
\]
This curve belongs to $\mathcal C_1([0,H])$ and starts from $\pi_T$.
Under \textup{(P1)} the reverse marginals solve the same weak problem;
uniqueness in \textup{(P2)} identifies them with $\mu_\tau$.
\end{proof}

\begin{proposition}[Verification of the Popov--et--al. well-posedness hypothesis]
\label{app:prop_popov_wellposedness}
Assume $\pi_{\mathrm{data}}\in\mathcal P_1(\mathbb R^d)$ and that the drift
\eqref{eq:backward_drift} is Borel measurable, globally Lipschitz in $x$
uniformly in $\tau\in[0,H]$, and satisfies
$\sup_{\tau\in[0,H]}\|\bar b(\tau,0)\|<\infty$. Then conditions
\textup{(P1)}--\textup{(P2)} of
Proposition~\ref{app:prop_popov_adaptation} hold. Consequently,
Proposition~\ref{prop:family} holds.

Moreover, the stated regularity conditions on $\bar b$ follow from
\begin{equation*}
 \bar C_{\varepsilon,T}:=\sup_{t\in[\varepsilon,T]}C_t<\infty.
\end{equation*}
In particular, since $\bar C_{\varepsilon,T}\le\bar C_\varepsilon$, Assumption~\ref{ass:denoiser} together with $\pi_{\mathrm{data}}\in\mathcal P_1(\mathbb R^d)$ verifies the conditions of Proposition~\ref{prop:family}.
In that case,
\begin{equation}\label{eq:app_bLip}
 \sup_{\tau\in[0,H]}\mathrm{Lip}_x\bigl(\bar b(\tau,\cdot)\bigr)
 \le\|\beta\|_{L^\infty([\varepsilon,T])}
 \Bigl(\tfrac1\alpha+(1+\eta)
 \tfrac{1+\bar C_{\varepsilon,T}}
 {\alpha\gamma(\varepsilon)^\alpha}\Bigr).
\end{equation}
\end{proposition}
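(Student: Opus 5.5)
The proof has two parts. First, the regularity of $\bar b$ is derived from $\bar C_{\varepsilon,T}<\infty$. Second, (P1)--(P2) are verified for a Lipschitz drift perturbed by additive stable noise, and Proposition~\ref{app:prop_popov_adaptation} is then applied.

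\textbf{Regularity of the drift.} For $t\in[\varepsilon,T]$, the fractional Tweedie formula (Proposition~\ref{prop:tweedie}) gives
\[
S_t^{(\alpha)}(x)=\frac{a(t)m_t(x)-x}{\alpha\gamma(t)^\alpha}.
\]
Hence
\[
\mathrm{Lip}\bigl(S_t^{(\alpha)}\bigr)\le\frac{1+a(t)C_t}{\alpha\gamma(t)^\alpha}\le\frac{1+\bar C_{\varepsilon,T}}{\alpha\gamma(\varepsilon)^\alpha},
\]
using $a\le1$ and the monotonicity of $\gamma$ in $t$. Inserting this into \eqref{eq:backward_drift} and bounding $\bar\beta$ by $\|\beta\|_{L^\infty([\varepsilon,T])}$ yields \eqref{eq:app_bLip}. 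For the value at the origin,
\[
\|S_t^{(\alpha)}(0)\|=\frac{a(t)\|m_t(0)\|}{\alpha\gamma(t)^\alpha}.
\]
Writing $m_t(0)$ as a posterior mean and comparing with the conditional density shows that $\|m_t(0)\|$ is bounded uniformly on $[\varepsilon,T]$. One way to see this is to use the two-sided envelope $\varphi_\alpha\asymp(1+\|z\|)^{-(d+\alpha)}$: the posterior density at $x=0$ is proportional to $\varphi_\alpha(-a x_0/\gamma)\,\pi_{\mathrm{data}}(\mathrm dx_0)$. Its first moment is therefore at most a constant times $\int\|x_0\|\,\pi_{\mathrm{data}}(\mathrm dx_0)$ divided by a lower bound on the normalizer, and both are controlled using $\mathcal P_1$ together with the compactness of $[\varepsilon,T]$. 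Measurability of $\bar b$ in $(\tau,x)$ follows from the continuity of $t\mapsto S_t^{(\alpha)}(x)$ via Fubini, since $\pi_t$ is jointly continuous. Because $\beta$ is continuous, $\sup_\tau\|\bar b(\tau,0)\|<\infty$.

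\textbf{Condition (P1).} With a globally Lipschitz drift of linear growth and additive Lévy noise $(\eta\bar\beta)^{1/\alpha}\,\mathrm d\overleftarrow L^\alpha$, standard Picard iteration gives pathwise existence and uniqueness of a strong solution. Since $\pi_T\in\mathcal P_1$ (by Lemma~\ref{app:lem_strong} and $\alpha>1$), the stable integral has finite first moment. Grönwall's inequality applied to $\mathbb E\sup_\tau\|\overleftarrow Z_\tau\|$, localized first with stopping times, gives $\int_0^H\!\int\|x\|\,\mu_\tau\,\mathrm d\tau<\infty$. Narrow continuity follows from the càdlàg paths, which have no fixed jump times. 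The forward problem is covered by the explicit solution in Lemma~\ref{app:lem_strong}.

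\textbf{Condition (P2).} This is the main obstacle. I would invoke the superposition principle for nonlocal Fokker--Planck--Kolmogorov operators \citep{RocknerXieZhang2020}. The coefficients have linear growth, and the curves in $\mathcal C_1$ satisfy the integrability conditions required there. Hence any solution in $\mathcal C_1$ of \eqref{eq:app_forward_fp} or \eqref{eq:app_backward_fp} is the marginal flow of a martingale-problem solution, that is, of a weak solution of the corresponding SDE. Pathwise uniqueness under Lipschitz coefficients, combined with Yamada--Watanabe, gives uniqueness in law. Therefore the Fokker--Planck solution is unique, and in the forward case it coincides with $(\pi_t)$. The point needing most care is checking the growth and moment conditions of the superposition theorem, in particular that the drift integrability $\int\!\int|\bar b|\,\mathrm d\mu\,\mathrm d\tau<\infty$ follows from linear growth and membership in $\mathcal C_1$. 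The truncation of large jumps needed for the $\alpha$-stable Lévy measure with $\alpha>1$ is harmless. Once (P1)--(P2) hold, Proposition~\ref{app:prop_popov_adaptation} yields Proposition~\ref{prop:family}. Since $\bar C_{\varepsilon,T}\le\bar C_\varepsilon$, Assumption~\ref{ass:denoiser} suffices.
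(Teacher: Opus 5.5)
Your proposal is correct and follows essentially the same route as the paper: the fractional Tweedie formula gives the Lipschitz bound and the bound at the origin (posterior numerator at most $C\,\mathbb E\|X_0\|$, normalizer bounded below on $[\varepsilon,T]$); strong well-posedness after removing the additive noise, plus Gr\"onwall, gives (P1); and the R\"ockner--Xie--Zhang superposition principle combined with well-posedness of the martingale problem gives (P2). One detail to tighten: Gr\"onwall on $\mathbb E\sup_\tau\|\overleftarrow Z_\tau\|$ needs a maximal bound on the stable integral, not only a fixed-time first moment — the paper uses Doob's $L^r$ inequality with $1<r<\alpha$ — although a fixed-time bound on $\mathbb E\|\overleftarrow Z_\tau\|$ already suffices for membership in $\mathcal C_1$.
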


\begin{proof}
Subtracting the additive stochastic integral reduces either SDE to a
pathwise Carath\'eodory equation with a Lipschitz vector field of linear
growth. This gives a non-explosive strong solution and uniqueness from
every starting time. The additive stable integral is a martingale with finite $r$-th
moments for $1<r<\alpha$; Doob's $L^r$ inequality gives a finite maximal
first moment on $[0,H]$. Gr\"onwall therefore gives \textup{(P1)}.
The jump measures are $\beta(t)\nu_\alpha$ and
$\eta\bar\beta(\tau)\nu_\alpha$, with bounded multipliers. Their required
characteristics satisfy
\[
 \int_{\|z\|\le1}\|z\|^2\nu_\alpha(\mathrm{d} z)<\infty,
 \qquad
 \int_{\|z\|>1}\log(1+\|z\|)\nu_\alpha(\mathrm{d} z)<\infty.
\]
Together with linear drift growth, these bounds verify condition~(1.17)
of \citet[arXiv version~1]{RocknerXieZhang2020}. Extend the coefficients past the fixed
interval with the same bounds. Strong well-posedness gives well-posed
martingale problems; their Corollary~1.8 then gives \textup{(P2)}.
Finally, \eqref{eq:tweedie} gives
$\mathrm{Lip}(S_s^{(\alpha)})\le
(1+a(s)C_s)/(\alpha\gamma(s)^\alpha)$, hence \eqref{eq:app_bLip}.
In the integral formula for $m_s(0)$ the numerator is uniformly bounded
by $C\mathbb{E}\|X_0\|$ and the positive continuous denominator is bounded
away from zero on $[\varepsilon,T]$. This also bounds $\bar b(\tau,0)$;
the same formula gives its measurability.
\end{proof}

\subsection{Proof of Proposition~\ref{prop:dissipativity}: Dissipativity}

\begin{proof}[\normalfont\bfseries Proof of Proposition~\ref{prop:dissipativity}]
By \eqref{eq:tweedie} and the definition of $C_t$,
\[
 \langle S_t^{(\alpha)}(x)-S_t^{(\alpha)}(y),x-y\rangle
 \le-\frac{1-a(t)C_t}{\alpha\gamma(t)^\alpha}\|x-y\|^2.
\]
This proves the score estimate. Substitution into \eqref{eq:backward_drift}
gives \eqref{eq:osl}, with
$\bar L(\tau)=\bar\beta(\tau)(1/\alpha-(1+\eta)C_{T-\tau}^{(\alpha)})$.
\end{proof}

\subsection{Measurability and transient anti-contractivity}

\begin{lemma}[Measurability of the denoiser modulus]\label{app:lem_ct_measurable}
Assume $\pi_{\mathrm{data}}\in\mathcal P_1(\mathbb R^d)$. Then
$(t,x)\mapsto\nabla m_t(x)$ is continuous on $(0,\infty)\times\mathbb R^d$ and
$t\mapsto C_t$ is Borel measurable, with values in $[0,\infty]$.
\end{lemma}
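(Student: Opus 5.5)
We will work from the explicit mixture representation of the denoiser. For $t>0$ write $a=a(t)$, $\gamma=\gamma(t)$ and $k_t(x,x_0):=\gamma^{-d}\varphi_\alpha\big((x-ax_0)/\gamma\big)$, so that
\[
 \pi_t(x)=\int k_t(x,x_0)\,\pi_{\mathrm{data}}(\mathrm d x_0),\qquad
 m_t(x)=\frac{\int x_0\,k_t(x,x_0)\,\pi_{\mathrm{data}}(\mathrm d x_0)}{\pi_t(x)} .
\]
The plan has three steps: (a) show that numerator and denominator are $C^1$ jointly in $(t,x)$ on $(0,\infty)\times\mathbb R^d$, with derivatives obtained by differentiating under the integral; (b) conclude that $\nabla m_t(x)$ is continuous; (c) deduce measurability of $C_t$ as a supremum of continuous functions.

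\textbf{Step (a): domination.} The functions $a,\gamma$ are continuous in $t$, since $B$ is continuous under local boundedness of $\beta$. On a compact set $K=[t_1,t_2]\times \bar B(0,r)$ with $t_1>0$, we have $\gamma\ge\gamma(t_1)>0$. The function $\varphi_\alpha$ is smooth, and $\varphi_\alpha$ and $\nabla\varphi_\alpha$ are bounded; indeed $\|\nabla\varphi_\alpha(u)\|\lesssim(1+\|u\|)^{-(d+\alpha+1)}$ by the standard derivative estimates of \citet{KaletaSztonyk2015}.

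\emph{Integrands.} The integrands of $\pi_t$ and $\nabla\pi_t$ are bounded uniformly over $(t,x)\in K$ and $x_0$. Those of the numerator and its $x$-gradient are bounded by $C_K\|x_0\|$, which is integrable because $\pi_{\mathrm{data}}\in\mathcal P_1$.

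\emph{Conclusion of (a).} Dominated convergence then gives joint continuity of these four quantities. For $\nabla_x$ of the integrals, we differentiate under the integral sign using the mean value theorem with the same uniform bounds.

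\textbf{Step (b): continuity of $\nabla m_t$.} We have $\pi_t(x)>0$, since $\varphi_\alpha>0$. Hence the quotient rule gives
\[
\nabla m_t(x)=\frac{\nabla N_t(x)}{\pi_t(x)}-\frac{N_t(x)\otimes\nabla\pi_t(x)}{\pi_t(x)^2},
\]
where $N_t$ is the numerator. Each term is jointly continuous by step (a).

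\textbf{Step (c): measurability of $C_t$.} For each fixed $x$, the map $t\mapsto\|\nabla m_t(x)\|_{\mathrm{op}}$ is continuous. Since $x\mapsto\nabla m_t(x)$ is continuous, the supremum over $\mathbb R^d$ equals the supremum over $\mathbb Q^d$. Therefore $C_t=\sup_{x\in\mathbb Q^d}\|\nabla m_t(x)\|_{\mathrm{op}}$ is a countable supremum of continuous functions. It is thus lower semicontinuous, hence Borel, with values in $[0,\infty]$.

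\textbf{Main obstacle.} The only delicate point is the domination in step (a). There one must avoid any moment beyond the first: the factor $x_0$ in the numerator produces the bound $\|x_0\|$, and we use only boundedness of $\varphi_\alpha$ and $\nabla\varphi_\alpha$ rather than their decay. Boundedness of $\nabla\varphi_\alpha$ is immediate from Fourier inversion, since $\|\xi\|e^{-\|\xi\|^\alpha}\in L^1$. Derivatives in $t$ are not needed: continuity in $t$ follows from dominated convergence with continuity of $a$ and $\gamma$.
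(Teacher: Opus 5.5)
Your proof is correct and follows essentially the same route as the paper: you write $m_t$ as a quotient of numerator and normalizer, justify differentiation under the integral and joint dominated convergence using only boundedness of $\varphi_\alpha,\nabla\varphi_\alpha$, the first moment, and $\gamma$ bounded below on compact time intervals, and then express $C_t$ as a supremum over $\mathbb Q^d$. Your extra observation that $C_t$ is in fact lower semicontinuous is a harmless strengthening of the Borel measurability the paper states.
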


\begin{proof}
Write $m=N/D$ using the conditional numerator and normalizer.
Boundedness of $\varphi_\alpha,\nabla \varphi_\alpha$, the first moment of $\pi$, and
$\inf_{[\varepsilon,T]}\gamma>0$ justify differentiation and joint
dominated convergence. Since $D>0$, $\nabla m$ is jointly continuous.
Consequently $C_t=\sup_{x\in\mathbb Q^d}\|\nabla m_t(x)\|_{\mathrm{op}}$
is a countable supremum of continuous functions, hence Borel measurable.
\end{proof}

Define the direct-time contraction coefficient
\begin{equation}\label{eq:app_chi_direct}
 \chi_t:=(1+\eta)C_t^{(\alpha)}-\frac1\alpha,
 \qquad
 \bar L(\tau)=-\bar\beta(\tau)\chi_{T-\tau},
\end{equation}
and, for a fixed early-stopping time $\varepsilon>0$, the cumulative non-contractivity $\mathfrak
A_\varepsilon :=\int_\varepsilon^\infty\beta(t)(\chi_t)_-\,\mathrm d t$.
By Lemma~\ref{app:lem_ct_measurable}, this integral is well defined.

\begin{lemma}[Finite transient amplification]\label{app:lem_transient}
Under Assumptions~\ref{ass:schedule} and~\ref{ass:denoiser}, and $\pi_{\mathrm{data}}\in\mathcal P_1(\mathbb R^d)$, one has $\chi_t\to\eta/\alpha$. In particular, there exists
$t_+\ge\varepsilon$, independent of the horizon $T$, such that
\begin{equation}\label{eq:app_eventual_half}
 \chi_t\ge\frac{\eta}{2\alpha},\qquad t\ge t_+,
\end{equation}
and $\mathfrak A_\varepsilon<\infty$. More precisely,
\begin{equation}\label{eq:app_debt_bound}
 \mathfrak A_\varepsilon
 \le K_\varepsilon\bigl(B(t_+)-B(\varepsilon)\bigr),
 \qquad
 K_\varepsilon:=\frac1\alpha+
 \frac{1+\eta}{\alpha\gamma(\varepsilon)^\alpha}
 \bigl(a(\varepsilon)\bar C_\varepsilon-1\bigr)_+.
\end{equation}
\end{lemma}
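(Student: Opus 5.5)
We prove the convergence $\chi_t\to\eta/\alpha$ first, then choose $t_+$, and finally bound the negative part of $\chi_t$ on the transient window $[\varepsilon,t_+]$.

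\textbf{Convergence of $\chi_t$.} By definition,
\[
\chi_t=(1+\eta)\frac{1-a(t)C_t}{\alpha\gamma(t)^\alpha}-\frac1\alpha .
\]
Assumption~\ref{ass:denoiser} gives $0\le C_t\le\bar C_\varepsilon<\infty$ for all $t\ge\varepsilon$. Assumption~\ref{ass:schedule} gives $B(t)\to\infty$, hence $a(t)=e^{-B(t)/\alpha}\to0$ and $\gamma(t)^\alpha=1-a(t)^\alpha\to1$. Consequently $a(t)C_t\le a(t)\bar C_\varepsilon\to0$, and so $\chi_t\to(1+\eta)/\alpha-1/\alpha=\eta/\alpha$.

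\textbf{Choice of $t_+$.} Because $\eta>0$, this limit lets us pick $t_+\ge\varepsilon$ with $\chi_t\ge\eta/(2\alpha)$ for every $t\ge t_+$. Neither $\chi_t$ nor the bound above involves $T$: $C_t$ depends only on the forward marginal $\pi_t$, and $\bar C_\varepsilon$ is a supremum over all $t\ge\varepsilon$. Hence $t_+$ is independent of the horizon. The map $t\mapsto\beta(t)(\chi_t)_-$ is measurable by Lemma~\ref{app:lem_ct_measurable}, so $\mathfrak A_\varepsilon$ is well defined.

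\textbf{Bound on the transient window.} For $t\ge t_+$ we have $(\chi_t)_-=0$, so
\[
\mathfrak A_\varepsilon=\int_\varepsilon^{t_+}\beta(t)(\chi_t)_-\,\mathrm dt .
\]
On $[\varepsilon,t_+]$ we bound $(\chi_t)_-$ pointwise. Write
\[
\chi_t=\frac{\eta}{\alpha\gamma(t)^\alpha}+\frac{1}{\alpha}\Bigl(\frac{1}{\gamma(t)^\alpha}-1\Bigr)-\frac{(1+\eta)\,a(t)C_t}{\alpha\gamma(t)^\alpha}.
\]
The first two terms are nonnegative because $\gamma(t)^\alpha\le1$, so
\[
(\chi_t)_-\le\frac{(1+\eta)\,a(t)C_t}{\alpha\gamma(t)^\alpha}.
\]
Both $a$ and $C_t$ need controlling here. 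Since $a$ is nonincreasing and $\gamma$ is nondecreasing, $a(t)\le a(\varepsilon)$ and $\gamma(t)^\alpha\ge\gamma(\varepsilon)^\alpha$, while $C_t\le\bar C_\varepsilon$. This gives
\[
(\chi_t)_-\le\frac{(1+\eta)\,a(\varepsilon)\bar C_\varepsilon}{\alpha\gamma(\varepsilon)^\alpha}.
\]

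This right-hand side is not literally the constant $K_\varepsilon$ of \eqref{eq:app_debt_bound}, which contains $(a(\varepsilon)\bar C_\varepsilon-1)_+$ plus $1/\alpha$. To match that form, start instead from the cruder bound
\[
(\chi_t)_-\le\frac1\alpha+\frac{1+\eta}{\alpha\gamma(t)^\alpha}\bigl(a(t)C_t-1\bigr)_+,
\]
obtained by dropping the positive $(1+\eta)/(\alpha\gamma^\alpha)$ contribution only where $a(t)C_t>1$. Monotonicity then gives $(a(t)C_t-1)_+\le(a(\varepsilon)\bar C_\varepsilon-1)_+$ and $1/\gamma(t)^\alpha\le1/\gamma(\varepsilon)^\alpha$, so $(\chi_t)_-\le K_\varepsilon$ for every $t\in[\varepsilon,t_+]$. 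Integrating against $\beta$ yields
\[
\mathfrak A_\varepsilon\le K_\varepsilon\bigl(B(t_+)-B(\varepsilon)\bigr)<\infty .
\]

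The only delicate step is this pointwise envelope, specifically making the monotonicity of $a$ and $\gamma$ together with the uniform bound $\bar C_\varepsilon$ yield exactly $K_\varepsilon$. The rest is elementary.
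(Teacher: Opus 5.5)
Your proof is correct and follows the paper's argument. Convergence of $\chi_t$ comes from $a(t)C_t\le a(t)\bar C_\varepsilon\to0$, which gives a $T$-independent $t_+$; monotonicity of $a,\gamma$ and $C_t\le\bar C_\varepsilon$ give the pointwise envelope $(\chi_t)_-\le K_\varepsilon$ on $[\varepsilon,\infty)$; and integrating against $\beta$ over $[\varepsilon,t_+]$ finishes.

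One wording fix concerns the envelope $(\chi_t)_-\le\frac1\alpha+\frac{1+\eta}{\alpha\gamma(t)^\alpha}(a(t)C_t-1)_+$. It comes from the exact identity $-\chi_t=\frac1\alpha+\frac{1+\eta}{\alpha\gamma(t)^\alpha}(a(t)C_t-1)$ by dropping the second term when it is nonpositive (i.e.\ when $a(t)C_t\le1$), not from dropping a positive contribution where $a(t)C_t>1$. Your alternative bound $(1+\eta)a(\varepsilon)\bar C_\varepsilon/(\alpha\gamma(\varepsilon)^\alpha)$ is also valid but is not needed.
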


\begin{proof}
The bound $a(t)C_t\le a(t)\bar C_\varepsilon\to0$ gives
$C_t^{(\alpha)}\to1/\alpha$ and $\chi_t\to\eta/\alpha$.
Thus \eqref{eq:app_eventual_half} holds after a finite $t_+$ independent
of $T$. Moreover, for $t\ge\varepsilon$,
\[
 (\chi_t)_-\le\frac1\alpha+
 \frac{1+\eta}{\alpha\gamma(\varepsilon)^\alpha}
       (a(\varepsilon)\bar C_\varepsilon-1)_+=K_\varepsilon.
\]
The negative part vanishes after $t_+$; integration gives
\eqref{eq:app_debt_bound}.
\end{proof}

\section{Proof of Theorem~\ref{cor:rate}: The Non-Asymptotic Bound}\label{app:bound}

We follow the error-propagation structure of
\citet[Appendix~C.2]{Strasman2024}, adapting it to stable noise in $L^p$,
$p<\alpha$, and allowing a finite non-contractive phase.

We prove the bound under Assumptions~\ref{ass:schedule}--\ref{ass:time} and~\ref{ass:denoiser} for a
target supported in $\bar B(0,R)$. Appendix~\ref{app:separation} treats the
heavy-tailed case. Propositions~\ref{prop:ct_decay} and~\ref{prop:heavy_denoiser}, respectively, verify Assumption~\ref{ass:denoiser} for these two target classes.
Write $\bar\beta(\tau)=\beta(T-\tau)$.

\subsection{Dissipative reformulation and the EI scheme}\label{app:ei_scheme}

Since $S^{(\alpha)}_\infty(x)=-x/\alpha$, define the stationary-score residual map
$S^{(\alpha),\mathrm{res}}_t(x):=x+\alpha S^{(\alpha)}_t(x)
=\alpha\bigl(S^{(\alpha)}_t(x)-S^{(\alpha)}_\infty(x)\bigr)$. Then $\bar
b(\tau,x)=-\frac{\eta}{\alpha}\,\bar\beta(\tau)\,x+\frac{1+\eta}{\alpha}\,\bar\beta(\tau)\,S^{(\alpha),\mathrm{res}}_{T-\tau}(x)$,
and $S^{(\alpha),\mathrm{res}}_\infty\equiv0$. On the uniform grid
$0=\tau_0<\cdots<\tau_N=T-\varepsilon$, the EI scheme is
\begin{equation}\label{eq:app_ei}
\bar X^{\theta,h}_{\tau_{k+1}}=\widetilde m_k\,\bar X^{\theta,h}_{\tau_k}
+\frac{1+\eta}{\alpha}\int_{\tau_k}^{\tau_{k+1}}\frac{\widetilde m(\tau_{k+1})}{\widetilde
m(u)}\,\bar\beta(u)\,h_\theta(\tau_k,\bar X^{\theta,h}_{\tau_k})\,\mathrm{d} u
+\int_{\tau_k}^{\tau_{k+1}}\frac{\widetilde m(\tau_{k+1})}{\widetilde
m(u)}\,\bigl(\eta\,\bar\beta(u)\bigr)^{1/\alpha}\,\mathrm{d}\overleftarrow{L}{}^\alpha_u ,
\end{equation}
where $h_\theta(t,x):=x+\alpha s_\theta(t,x)$ and the exponential attenuator and its
one-step ratio are $\widetilde
m(\tau):=\exp\Bigl(-\tfrac{\eta}{\alpha}\int_0^\tau\bar\beta(v)\,\mathrm{d} v\Bigr)$ and $\widetilde m_k:=\frac{\widetilde m(\tau_{k+1})}{\widetilde m(\tau_k)}$. The attenuator is
decreasing. Define $L^{\mathrm{res}}_{t}:=1+\frac{1+a(t)C_t}{\gamma(t)^\alpha}$ and $C^{\mathrm{res}}_{t}:=1-\alpha C_t^{(\alpha)}$, which respectively bound the Lipschitz and
one-sided Lipschitz constants of
$S_t^{(\alpha),\mathrm{res}}$.

\subsection{The early-stopping bias}

\begin{lemma}[Early-stopping bias]\label{app:lem_bias}
For every $\varepsilon\in(0,T)$ and $p\in[1,\alpha)$,
\begin{equation*}
\mathcal{E}_0=W_p(\pi_{\mathrm{data}},\pi_\varepsilon)\;\le\;\bigl(1-a(\varepsilon)\bigr)\,\mathbb{E}[\left\lVert X_0\right\rVert^p]^{1/p}+\gamma(\varepsilon)\,c_{d,\alpha,p}.
\end{equation*}
\end{lemma}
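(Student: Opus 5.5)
The plan is a direct synchronous coupling built from the explicit representation of the forward marginal in Lemma~\ref{app:lem_strong}. Take $X_0\sim\pi_{\mathrm{data}}$ and an independent $Z\sim\mathrm{S}\alpha\mathrm{S}_d(1)$, and set $X_\varepsilon:=a(\varepsilon)X_0+\gamma(\varepsilon)Z$. By Lemma~\ref{app:lem_strong}, $X_\varepsilon\sim\pi_\varepsilon$. Hence $(X_0,X_\varepsilon)$ is an admissible coupling of $\pi_{\mathrm{data}}$ and $\pi_\varepsilon$, and
\[
W_p(\pi_{\mathrm{data}},\pi_\varepsilon)\le \bigl\lVert X_0-X_\varepsilon\bigr\rVert_{L^p(\Omega)}.
\]

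Next, write the difference as
\[
X_0-X_\varepsilon=\bigl(1-a(\varepsilon)\bigr)X_0-\gamma(\varepsilon)Z,
\]
and apply Minkowski's inequality in $L^p(\Omega)$, which is valid because $p\ge1$. This gives
\[
\lVert X_0-X_\varepsilon\rVert_{L^p}\le\bigl(1-a(\varepsilon)\bigr)\,\mathbb{E}[\lVert X_0\rVert^p]^{1/p}+\gamma(\varepsilon)\,\mathbb{E}[\lVert Z\rVert^p]^{1/p}.
\]
Two facts are used here. First, $a(\varepsilon)\in(0,1]$, so $1-a(\varepsilon)\ge0$ and the absolute value can be dropped. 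Second, $\gamma(\varepsilon)\ge0$ by definition.

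By the notation table, $\mathbb{E}[\lVert Z\rVert^p]^{1/p}=c_{d,\alpha,p}$. This constant is finite because $p<\alpha$: the density satisfies $\varphi_\alpha(z)\asymp\lVert z\rVert^{-(d+\alpha)}$, so the radial integral $\int^\infty r^{p-\alpha-1}\,\mathrm dr$ converges. This moment finiteness is the only point that needs care, and it is exactly where the restriction $p\in[1,\alpha)$ is used.

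If $\mathbb{E}\lVert X_0\rVert^p=\infty$, the bound holds trivially. In the compact case the moment is at most $R^p$, so the right-hand side is finite.

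Finally, the bound yields the rate $\varepsilon^{1/\alpha}$ quoted in Theorem~\ref{cor:rate}. For small $\varepsilon$, one has $1-a(\varepsilon)=O(\varepsilon)$ and $\gamma(\varepsilon)=(1-e^{-B(\varepsilon)})^{1/\alpha}=O(\varepsilon^{1/\alpha})$, so the $\gamma(\varepsilon)$ term dominates.
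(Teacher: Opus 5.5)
Your proposal is correct and follows essentially the same route as the paper. Like the paper, you couple $\pi_{\mathrm{data}}$ and $\pi_\varepsilon$ through $(X_0,\,a(\varepsilon)X_0+\gamma(\varepsilon)Z)$ from Lemma~\ref{app:lem_strong} and apply Minkowski's inequality in $L^p$, with $p<\alpha$ guaranteeing $c_{d,\alpha,p}<\infty$.
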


\begin{proof}
By Lemma~\ref{app:lem_strong},
$X_\varepsilon\overset{\mathcal L}=a(\varepsilon)X_0+\gamma(\varepsilon)Z_\alpha$ with
$Z_\alpha\sim\mathrm{S}\alpha\mathrm{S}_d(1)$ independent of $X_0$. Coupling
$\pi_{\mathrm{data}}$ and $\pi_\varepsilon$ through
$(X_0,\,a(\varepsilon)X_0+\gamma(\varepsilon)Z_\alpha)$ and applying Minkowski's inequality,
$W_p(\pi_{\mathrm{data}},\pi_\varepsilon)\le\left\lVert
X_0-a(\varepsilon)X_0-\gamma(\varepsilon)Z_\alpha\right\rVert_{L^p}\le(1-a(\varepsilon))\left\lVert
X_0\right\rVert_{L^p}+\gamma(\varepsilon)\left\lVert Z_\alpha\right\rVert_{L^p}$.
\end{proof}

\subsection{Exponential contraction of the initialization error}

\begin{lemma}[Forward contraction]\label{app:lem_forward}
For every $p\in[1,\alpha)$, $W_p(\pi_T,\pi_\infty^\alpha)\le
a(T)\,W_p(\pi_{\mathrm{data}},\pi_\infty^\alpha)$.
\end{lemma}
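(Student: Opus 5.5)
The plan is a synchronous coupling of two copies of the forward SP-SDE \eqref{eq:spsde} driven by the same stable noise. Let $(X_0,Y_0)$ be an optimal $W_p$ coupling of $\pi_{\mathrm{data}}$ and $\pi_\infty^\alpha$. Such a coupling exists because both laws have finite $p$-th moment: $p<\alpha$ for the stable law, and the data assumption covers $\pi_{\mathrm{data}}$. If $W_p(\pi_{\mathrm{data}},\pi_\infty^\alpha)=\infty$, the statement is trivial.

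Take $(X_0,Y_0)$ independent of one forward Lévy process $L^\alpha$. By Lemma~\ref{app:lem_strong}, the strong solutions are
\[
X_T=a(T)X_0+\int_0^T\tfrac{a(T)}{a(s)}\beta(s)^{1/\alpha}\,\mathrm{d}L^\alpha_s,\qquad
Y_T=a(T)Y_0+\int_0^T\tfrac{a(T)}{a(s)}\beta(s)^{1/\alpha}\,\mathrm{d}L^\alpha_s .
\]
The same lemma gives $\mathrm{Law}(X_T)=\pi_T$. Scale preservation gives $\mathrm{Law}(Y_T)=\pi_\infty^\alpha$, since $Y_0\sim\mathrm{S}\alpha\mathrm{S}_d(1)$.

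The stochastic integrals are identical, so they cancel exactly: $X_T-Y_T=a(T)(X_0-Y_0)$. Since $(X_T,Y_T)$ is a coupling of $(\pi_T,\pi_\infty^\alpha)$,
\[
W_p(\pi_T,\pi_\infty^\alpha)\le \mathbb E\|X_T-Y_T\|^p{}^{1/p}=a(T)\,\mathbb E\|X_0-Y_0\|^p{}^{1/p}=a(T)W_p(\pi_{\mathrm{data}},\pi_\infty^\alpha).
\]

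There is no real obstacle. The only points needing care are the following:
\begin{itemize}
\item The noise must be independent of the coupled pair $(X_0,Y_0)$, not merely of each marginal. This holds by construction, since the pair is fixed first and $L^\alpha$ is taken independent of it.
\item The restriction $p<\alpha$ is needed so that both $W_p$ distances are finite and the optimal coupling exists. Otherwise one argues with near-optimal couplings.
\end{itemize}

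Alternatively, the same conclusion follows without pathwise solutions, using the representation $X_t\overset{\mathcal L}=a(t)X_0+\gamma(t)Z$ with a common $Z$ independent of $(X_0,Y_0)$, together with $a^\alpha+\gamma^\alpha=1$.
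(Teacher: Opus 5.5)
Your proof is correct and is the paper's argument: you couple two forward solutions through common noise via \eqref{eq:app_strong}, so that $X_T-Y_T=a(T)(X_0-Y_0)$, and use the invariance of $\pi_\infty^\alpha$. The paper takes the infimum over arbitrary initial couplings instead of fixing an optimal one, which avoids your existence remark; in any case optimal couplings exist for the lower semicontinuous cost $\|x-y\|^p$ without any moment assumption.
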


\begin{proof}
For any coupling of the two initial laws, use the same forward noise.
Equation~\eqref{eq:app_strong} gives $X_T-X'_T=a(T)(X_0-X'_0)$.
Taking $L^p$ norms and the infimum over initial couplings proves the claim,
since the second marginal is invariant.
\end{proof}

\begin{lemma}[Backward propagation of the mixing error]\label{app:lem_mixing}
For every $p\in[1,\alpha)$ and $\varepsilon\in(0,T)$,
\begin{equation}\label{eq:app_mixing}
\mathcal{E}_1\;\le\;W_p(\pi_{\mathrm{data}},\pi_\infty^\alpha)\,\exp\Bigl(-\frac{1+\eta}{\alpha}\int_\varepsilon^T\beta(t)\,\frac{1-a(t)C_t}{\gamma(t)^\alpha}\,\mathrm{d} t\Bigr).
\end{equation}
If $\mathfrak A_\varepsilon<\infty$, then
\begin{equation}\label{eq:app_mixing_debt}
 \mathcal E_1\le
 e^{\mathfrak A_\varepsilon}
 W_p(\pi_{\mathrm{data}},\pi_\infty^\alpha)
 e^{-\frac1\alpha(B(T)-B(\varepsilon))}.
\end{equation}
Moreover, define $\mathfrak D_\varepsilon^{\mathrm{mix}} :=\int_\varepsilon^\infty\beta(t)
\left(\frac\eta\alpha-\chi_t\right)_+\,\mathrm d t$. If $\mathfrak
D_\varepsilon^{\mathrm{mix}}<\infty$, then
\begin{equation}\label{eq:app_mixing_optimal}
 \mathcal E_1\le
 e^{\mathfrak D_\varepsilon^{\mathrm{mix}}}
 W_p(\pi_{\mathrm{data}},\pi_\infty^\alpha)
 e^{-\frac{1+\eta}{\alpha}(B(T)-B(\varepsilon))}.
\end{equation}
\end{lemma}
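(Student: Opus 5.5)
We couple $\overleftarrow Z$ and $\overleftarrow Y$ synchronously: both solve \eqref{eq:backward} with the same backward L\'evy process $\overleftarrow L^\alpha$, and the initial pair $(\overleftarrow Z_0,\overleftarrow Y_0)$ is an optimal $W_p$ coupling of $\pi_T$ and $\pi_\infty^\alpha$. The noise is additive, so the difference $D_\tau:=\overleftarrow Z_\tau-\overleftarrow Y_\tau$ has no jumps and satisfies the pathwise ODE
\[
\tfrac{\mathrm d}{\mathrm d\tau}D_\tau=\bar b(\tau,\overleftarrow Z_\tau)-\bar b(\tau,\overleftarrow Y_\tau).
\]
This equation is well defined because $\bar b$ is Lipschitz on $[0,T-\varepsilon]$ by Proposition~\ref{app:prop_popov_wellposedness}. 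The one-sided Lipschitz bound \eqref{eq:osl} of Proposition~\ref{prop:dissipativity} then gives
\[
\tfrac{\mathrm d}{\mathrm d\tau}\|D_\tau\|^2\le2\bar L(\tau)\|D_\tau\|^2 .
\]
By Gr\"onwall, almost surely,
\[
\|D_{T-\varepsilon}\|\le\|D_0\|\exp\Bigl(\int_0^{T-\varepsilon}\bar L(\tau)\,\mathrm d\tau\Bigr).
\]

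We next compute the exponent. Changing variables $t=T-\tau$ gives
\[
\int_0^{T-\varepsilon}\bar L=\int_\varepsilon^T\beta(t)\Bigl(\tfrac1\alpha-(1+\eta)C_t^{(\alpha)}\Bigr)\,\mathrm dt .
\]
Here the $1/\alpha$ term stems from the linear part of $\bar b$. In the stated bound \eqref{eq:app_mixing} only the score part appears. I would therefore combine the backward contraction with the forward contraction of Lemma~\ref{app:lem_forward}: the initial error satisfies $\|D_0\|_{L^p}=W_p(\pi_T,\pi_\infty^\alpha)\le a(T)W_p(\pi_{\mathrm{data}},\pi_\infty^\alpha)$. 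We also have $a(T)=\exp(-\frac1\alpha B(T))\le\exp(-\frac1\alpha\int_\varepsilon^T\beta)$, and this factor cancels the $+\frac1\alpha\int_\varepsilon^T\beta$ term exactly. What remains is the exponent $-(1+\eta)\int_\varepsilon^T\beta C^{(\alpha)}_t\,\mathrm dt$, which is \eqref{eq:app_mixing}. Taking $L^p$ norms, with $p<\alpha$ ensuring that $\pi_\infty^\alpha$ has a finite $p$-th moment, yields the $W_p$ bound through the coupling $(\overleftarrow Z_{T-\varepsilon},\overleftarrow Y_{T-\varepsilon})$.

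The two refined bounds come from rewriting the same exponent in terms of $\chi_t$ from \eqref{eq:app_chi_direct}. Since $(1+\eta)C_t^{(\alpha)}=\chi_t+\frac1\alpha$, the total exponent (forward factor included) equals
\[
-\tfrac1\alpha B(T)+\int_\varepsilon^T\beta(t)\bigl(\tfrac1\alpha-\chi_t-\tfrac1\alpha\bigr)\,\mathrm dt .
\]
For \eqref{eq:app_mixing_debt}, we bound $-\chi_t\le(\chi_t)_-$ and integrate to get at most $\mathfrak A_\varepsilon$. The forward factor then contributes $e^{-B(T)/\alpha}\le e^{-(B(T)-B(\varepsilon))/\alpha}$. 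For \eqref{eq:app_mixing_optimal}, we write $-\chi_t\le-\frac\eta\alpha+(\frac\eta\alpha-\chi_t)_+$, which gives a total of at most
\[
-\tfrac1\alpha B(T)-\tfrac\eta\alpha\bigl(B(T)-B(\varepsilon)\bigr)+\mathfrak D^{\mathrm{mix}}_\varepsilon\le-\tfrac{1+\eta}{\alpha}\bigl(B(T)-B(\varepsilon)\bigr)+\mathfrak D^{\mathrm{mix}}_\varepsilon .
\]

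The main obstacle is the rigorous justification of the pathwise Gr\"onwall step. It requires that $\bar b$ be Lipschitz, so that $D$ is absolutely continuous, and that the integrals of $\beta(\chi_t)_\pm$ be measurable; both are supplied by Proposition~\ref{app:prop_popov_wellposedness} and Lemma~\ref{app:lem_ct_measurable}. One must also keep careful track of the bookkeeping between the forward factor $a(T)$ and the linear $1/\alpha$ term, so that the exponents come out exactly as stated.
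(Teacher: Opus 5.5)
Your proposal is correct and follows the paper's proof essentially step for step. You use a synchronous coupling and a pathwise Gr\"onwall argument via \eqref{eq:osl}, and you absorb the $\frac1\alpha$ linear contribution through the forward contraction of Lemma~\ref{app:lem_forward}; this is the same as the paper's dropping of $-B(\varepsilon)/\alpha$ from the exponent. Your two rewritings of the exponent $-\frac1\alpha B(T)-\int_\varepsilon^T\beta\chi_t$ then give \eqref{eq:app_mixing_debt} and \eqref{eq:app_mixing_optimal}, exactly as in the paper.
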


\begin{proof}
Realize the two well-posed backward SDEs with common noise and any coupling
of $(\pi_\infty^\alpha,\pi_T)$, independent of that noise.
Their difference $D$ is absolutely continuous, and \eqref{eq:osl} gives
$\frac{\mathrm d}{\mathrm{d} \tau}\|D_\tau\|^2
\le2\bar L(\tau)\|D_\tau\|^2$.
Gr\"onwall, the infimum over couplings, the marginal identity, and
Lemma~\ref{app:lem_forward} yield
\[
 \mathcal E_1\le W_p(\pi_{\mathrm{data}},\pi_\infty^\alpha)
 a(T)\exp\left(-\int_\varepsilon^T\beta(t)\chi_t\,\mathrm{d} t\right).
\]
The logarithm of this last factor equals
$-B(\varepsilon)/\alpha-(1+\eta)
\int_\varepsilon^T\beta(t)C_t^{(\alpha)}\,\mathrm{d} t$.
Dropping $-B(\varepsilon)/\alpha$ gives \eqref{eq:app_mixing}.
Writing its exponent as
$-[B(T)-B(\varepsilon)]/\alpha-\int_\varepsilon^T\beta\chi$
or as
$-(1+\eta)[B(T)-B(\varepsilon)]/\alpha
-\int_\varepsilon^T\beta(\chi-\eta/\alpha)$
then gives \eqref{eq:app_mixing_debt} and
\eqref{eq:app_mixing_optimal}, respectively.
\end{proof}

\subsection{Score and EI discretization error}

We first record a uniform moment bound for the ideal process, used to close the recursion.

\begin{proposition}[Uniform moment bound]\label{app:prop_moments}
Under Assumptions~\ref{ass:schedule} and~\ref{ass:denoiser}, for
$p\in[1,\alpha)$ and $\pi_{\mathrm{data}}\in\mathcal P_p(\mathbb R^d)$,
\begin{equation}\label{eq:app_bp}
\sup_{\tau\in[0,T-\varepsilon]}\mathbb{E}\bigl[\left\lVert \overleftarrow Y_\tau\right\rVert^p\bigr]^{1/p}\;\le\;\bigl(1+e^{\mathfrak A_\varepsilon}\bigr)\bigl(\mathbb{E}[\left\lVert X_0\right\rVert^p]^{1/p}+c_{d,\alpha,p}\bigr)=:B_\varepsilon^{(p)} .
\end{equation}
\end{proposition}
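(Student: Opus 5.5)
Couple $\overleftarrow Y$ with the exact process $\overleftarrow Z$ and use the triangle inequality in $L^p$:
\[
\|\overleftarrow Y_\tau\|_{L^p}\le\|\overleftarrow Z_\tau\|_{L^p}+\|\overleftarrow Y_\tau-\overleftarrow Z_\tau\|_{L^p}.
\]
The two processes are driven by the same backward noise $\overleftarrow L^\alpha$. Their initial laws are coupled as follows: take $X_0\sim\pi_{\mathrm{data}}$ and $Z,Z'$ independent $\mathrm{S}\alpha\mathrm{S}_d(1)$ variables. Set $\overleftarrow Z_0=a(T)X_0+\gamma(T)Z$, which has law $\pi_T$, and choose $\overleftarrow Y_0$ with law $\pi_\infty^\alpha$ via the synchronous forward coupling of Lemma~\ref{app:lem_forward}.

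\emph{First term.} By Proposition~\ref{prop:family}, whose hypotheses hold via Proposition~\ref{app:prop_popov_wellposedness} under Assumption~\ref{ass:denoiser}, we have $\mathrm{Law}(\overleftarrow Z_\tau)=\pi_{T-\tau}$. Lemma~\ref{app:lem_strong} then gives
\[
\|\overleftarrow Z_\tau\|_{L^p}\le a(T-\tau)\|X_0\|_{L^p}+\gamma(T-\tau)c_{d,\alpha,p}\le \|X_0\|_{L^p}+c_{d,\alpha,p},
\]
since $a,\gamma\le1$.

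\emph{Second term.} The difference $D_\tau=\overleftarrow Y_\tau-\overleftarrow Z_\tau$ has no noise, so it is absolutely continuous. The one-sided Lipschitz bound \eqref{eq:osl} gives $\frac{\mathrm d}{\mathrm d\tau}\|D_\tau\|\le\bar L(\tau)\|D_\tau\|$ pathwise, with $\bar L(\tau)=-\bar\beta(\tau)\chi_{T-\tau}$. Gr\"onwall then yields, almost surely,
\[
\|D_\tau\|\le \|D_0\|\exp\Bigl(\int_{T-\tau}^T\beta(t)(\chi_t)_-\,\mathrm dt\Bigr)\le e^{\mathfrak A_\varepsilon}\|D_0\|,
\]
because $T-\tau\ge\varepsilon$ and the positive part of $\chi$ only helps. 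Lemma~\ref{app:lem_transient} guarantees $\mathfrak A_\varepsilon<\infty$. It remains to bound $\|D_0\|_{L^p}$. With the coupling $\overleftarrow Y_0=a(T)X_0'+\gamma(T)Z$, where $X_0'\sim\pi_\infty^\alpha$ is chosen so that $(X_0,X_0')$ is independent of $Z$, the initial difference is $a(T)(X_0-X_0')$. Taking $X_0'$ independent of $X_0$ gives
\[
\|D_0\|_{L^p}\le a(T)\bigl(\|X_0\|_{L^p}+c_{d,\alpha,p}\bigr)\le \|X_0\|_{L^p}+c_{d,\alpha,p}.
\]
Summing the two terms gives exactly $(1+e^{\mathfrak A_\varepsilon})(\mathbb E[\|X_0\|^p]^{1/p}+c_{d,\alpha,p})$, uniformly in $\tau$.

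\emph{Main obstacle.} The delicate point is justifying the pathwise Gr\"onwall step and the coupling. We need strong well-posedness of both backward SDEs with common noise, which holds because $\bar b$ is globally Lipschitz on $[0,T-\varepsilon]$ by \eqref{eq:app_bLip}. The initial pair must be independent of $\overleftarrow L^\alpha$. Because the noise cancels in $D$, the $\alpha$-stable increments never need a $p$-th moment beyond $p<\alpha$; that restriction is used only to ensure $c_{d,\alpha,p}<\infty$.
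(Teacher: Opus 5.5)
Your proof is correct and follows the paper's argument. You drive $\overleftarrow Y$ and $\overleftarrow Z$ by the same noise, use the marginal identity $\mathrm{Law}(\overleftarrow Z_\tau)=\pi_{T-\tau}$ with Minkowski, and apply Gr\"onwall with the factor $e^{\mathfrak A_\varepsilon}$ to the gap. The only difference is that you exhibit the initial coupling $\bigl(a(T)X_0'+\gamma(T)Z,\;a(T)X_0+\gamma(T)Z\bigr)$ explicitly, whereas the paper takes an arbitrary coupling, infimizes to get $W_p(\pi_\infty^\alpha,\pi_T)$, and then applies Lemma~\ref{app:lem_forward} with an independent coupling; this yields the same bound, and both versions rely on uniqueness in law, so that the law of $\overleftarrow Y_\tau$ does not depend on the chosen coupling.
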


\begin{proof}
Lemma~\ref{app:lem_transient} gives $\mathfrak A_\varepsilon<\infty$.
Take any $\zeta\in\Pi(\pi_\infty^\alpha,\pi_T)$ and drive
$\overleftarrow Y$ and the matching-marginals process $\overleftarrow Z$
by the same stable noise, independent of their initial pair of law $\zeta$.
Proposition~\ref{app:prop_popov_wellposedness} identifies
$\mathrm{Law}(\overleftarrow Z_\tau)=\pi_{T-\tau}$, so Minkowski and the
forward representation give
$\|\overleftarrow Z_\tau\|_{L^p}\le\|X_0\|_{L^p}+c_{d,\alpha,p}$.
The synchronous gap bound and the triangle inequality yield
\[
 \|\overleftarrow Y_\tau\|_{L^p}
 \le \|X_0\|_{L^p}+c_{d,\alpha,p}
 +e^{\mathfrak A_\varepsilon}
 \left(\int\|y-z\|^p\,\zeta(\mathrm{d}y,\mathrm{d}z)\right)^{1/p}.
\]
The left-hand side is independent of $\zeta$. Taking the infimum gives
$W_p(\pi_\infty^\alpha,\pi_T)$ in the last term. Forward contraction
(Lemma~\ref{app:lem_forward}) and an independent coupling then bound it by
$W_p(\pi_\infty^\alpha,\pi_{\mathrm{data}})
\le\|X_0\|_{L^p}+c_{d,\alpha,p}$, proving \eqref{eq:app_bp}.
\end{proof}

We now bound $\mathcal{E}_2$, tracking the discretization gap $D_k:=\bar
X^{\theta,h}_{\tau_k}-\overleftarrow Y_{\tau_k}$ through a one-step contraction analysis and
a telescoping sum.

\begin{proposition}[Cumulative bound]\label{app:prop_cumul}
Under Assumptions~\ref{ass:schedule}--\ref{ass:time} and~\ref{ass:denoiser}, for a target either supported in $\bar B(0,R)$ or satisfying
Assumption~\ref{ass:heavy} with $\nu\ge1+\alpha$ (in which case $R$ is replaced by $M_1$), fix
arbitrary $0<\varepsilon<T$ and let $h\le h_0(\varepsilon,T)$ be small enough. Then
\begin{equation}\label{eq:app_e2}
\mathcal{E}_2\;\le\;\mathcal G_{\varepsilon,h}\Bigg\{(1+\eta)\bigl[\mathrm{err}+L^{\mathrm{time}}_\varepsilon h\,(1+B_\varepsilon^{(p)})\bigr]\,\bigl(B(T)-B(\varepsilon)\bigr)
+\frac{1+\eta}{\alpha}\sum_{k=0}^{N-1}\Bigl(\int_{\tau_k}^{\tau_{k+1}}\bar\beta(u)\,L^{\mathrm{res}}_{T-u}\,\mathrm{d}
u\Bigr)\,\Theta_k
\Bigg\}
\end{equation}
with $\displaystyle\Theta_k:=(B_\varepsilon^{(p)}+R)\Bigl[\frac{\Delta
B_k}{\alpha}+\frac{1+\eta}{\alpha}\int_{\tau_k}^{\tau_{k+1}}\bar\beta(v)\,\frac{1+a(T-v)C_{T-v}}{\gamma(T-v)^\alpha}\,\mathrm{d}
v\Bigr]+c_{d,\alpha,p}\,\eta^{1/\alpha}\,\Delta B_k^{1/\alpha}$, where $\Delta
B_k:=\int_{\tau_k}^{\tau_{k+1}}\bar\beta\le h\beta(T)$ and $\mathcal G_{\varepsilon,h}$ is
defined in \eqref{eq:app_grid_product} below.
\end{proposition}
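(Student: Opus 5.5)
We propose to compare the EI scheme with the ideal process $\overleftarrow Y$ step by step. Both are driven by the same backward noise $\overleftarrow L^\alpha$ and start from the same draw of $\pi_\infty^\alpha$, so $D_0=0$. We then telescope the one-step error recursion.

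Over $[\tau_k,\tau_{k+1}]$ we write $\overleftarrow Y$ in variation-of-constants form with respect to the dissipative linear part $-\frac{\eta}{\alpha}\bar\beta x$ of \eqref{eq:dissip_reform}. This gives
\[
\overleftarrow Y_{\tau_{k+1}}=\widetilde m_k\overleftarrow Y_{\tau_k}+\tfrac{1+\eta}{\alpha}\int_{\tau_k}^{\tau_{k+1}}\tfrac{\widetilde m(\tau_{k+1})}{\widetilde m(u)}\bar\beta(u)S^{(\alpha),\mathrm{res}}_{T-u}(\overleftarrow Y_u)\,\mathrm du+\text{(same stochastic convolution)}.
\]
The stochastic convolutions cancel exactly in $D_{k+1}$.

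Next we decompose the frozen integrand $h_\theta(\tau_k,\bar X_{\tau_k})-S^{\mathrm{res}}_{T-u}(\overleftarrow Y_u)$ into four pieces:
\begin{enumerate}[label=(\alph*)]
\item the learned-score error $\alpha\bigl(s_\theta-S^{(\alpha)}_{T-\tau_k}\bigr)(\bar X_{\tau_k})$, bounded in $L^p$ by $\alpha\,\mathrm{err}$ via Assumption~\ref{ass:score};
\item the spatial gap $S^{\mathrm{res}}_{T-\tau_k}(\bar X_{\tau_k})-S^{\mathrm{res}}_{T-\tau_k}(\overleftarrow Y_{\tau_k})$, which is linear in $D_k$;
\item the temporal freezing $S^{\mathrm{res}}_{T-\tau_k}-S^{\mathrm{res}}_{T-u}$ at $\overleftarrow Y_{\tau_k}$, bounded by $\alpha L^{\mathrm{time}}_\varepsilon h(1+\|\overleftarrow Y_{\tau_k}\|)$ via Assumption~\ref{ass:time}, with moments controlled by Proposition~\ref{app:prop_moments};
\item the path increment $S^{\mathrm{res}}_{T-u}(\overleftarrow Y_{\tau_k})-S^{\mathrm{res}}_{T-u}(\overleftarrow Y_u)$, bounded by $L^{\mathrm{res}}_{T-u}\|\overleftarrow Y_u-\overleftarrow Y_{\tau_k}\|$.
\end{enumerate}
Since $\widetilde m(\tau_{k+1})/\widetilde m(u)\le1$ and $\frac{1+\eta}{\alpha}\int\bar\beta\,\alpha=(1+\eta)\Delta B_k$, pieces (a) and (c) sum over $k$ to the first term in \eqref{eq:app_e2}.

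For (d) we bound $\|\overleftarrow Y_u-\overleftarrow Y_{\tau_k}\|_{L^p}$ by $\Theta_k$ directly from the SDE \eqref{eq:backward}. The drift increment splits into a linear part and a residual part. The linear part is $\frac{\eta}{\alpha}\bar\beta\|Y\|$, giving $\Delta B_k/\alpha$ times a moment bound. For the residual part, Tweedie \eqref{eq:tweedie} gives $\|S^{\mathrm{res}}_t(x)\|\le(\|x\|+a(t)\|m_t(x)\|)/\gamma(t)^\alpha$ up to the constant, and $\|m_t(x)\|\le\|m_t(0)\|+C_t\|x\|\le R+C_t\|x\|$. Together these yield the factor $(B^{(p)}_\varepsilon+R)(1+a C)/\gamma^\alpha$. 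The noise increment equals $(\eta\Delta B_k)^{1/\alpha}$ times a stable variable in law, by the scale computation of Lemma~\ref{app:lem_strong}; its $L^p$ norm is $c_{d,\alpha,p}\eta^{1/\alpha}\Delta B_k^{1/\alpha}$, finite since $p<\alpha$. In the heavy-tailed case we use $\|m_t(0)\|\le M_1$ instead.

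Piece (b) is the crux and the step we expect to be hardest. Combined with $\widetilde m_kD_k$, it gives a one-step map
\[
D_k\mapsto \widetilde m_kD_k+\tfrac{1+\eta}{\alpha}\Bigl(\int\cdots\bar\beta\Bigr)\bigl[S^{\mathrm{res}}(\bar X)-S^{\mathrm{res}}(\overleftarrow Y)\bigr].
\]
We need an $L^p$ gain factor $g_k=1-h\bar\beta\chi_{T-\tau_k}+O(h^2)$ for this map. Using only the Lipschitz constant would give a factor exceeding $1$ and would destroy the $T$-uniformity. Instead we will use the one-sided bound $C^{\mathrm{res}}_t$ from Proposition~\ref{prop:dissipativity} and expand $\|x+\delta F\|^2\le\|x\|^2+2\delta\langle x,F\rangle+\delta^2\mathrm{Lip}^2\|x\|^2$ pathwise. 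This is the point where we need $h\le h_0(\varepsilon,T)$, so that the quadratic term $h^2(L^{\mathrm{res}})^2$ is absorbed. Taking $L^p$ norms then gives $\|D_{k+1}\|_{L^p}\le g_k\|D_k\|_{L^p}+(\text{local errors})_k$.

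We define $\mathcal G_{\varepsilon,h}:=\sup_{j\le k}\prod_{i=j}^{k}g_i$, labelled \eqref{eq:app_grid_product}. It is finite uniformly in $T$ by the argument of Lemma~\ref{app:lem_transient}: the factors exceed $1$ only on the finite transient window of length controlled by $\mathfrak A_\varepsilon$, and all later factors are at most $1$. Unrolling the recursion from $D_0=0$ then bounds $\|D_N\|_{L^p}$, and hence $\mathcal E_2$, by $\mathcal G_{\varepsilon,h}$ times the sum of the local errors. That sum is exactly the bracket in \eqref{eq:app_e2}.
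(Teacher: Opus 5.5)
Your proposal is correct and is essentially the paper's proof: the same variation-of-constants comparison in which the stochastic convolutions cancel, the same pathwise quadratic expansion with the one-sided constant $C^{\mathrm{res}}$ giving deterministic one-step factors, and the same telescoping over a finite transient window. It also uses the same increment bound by $\Theta_k$. Note that this bound comes from the drift in its original form $\frac{\bar\beta}{\alpha}x+(1+\eta)\bar\beta S^{(\alpha)}$: its linear coefficient is $1/\alpha$, not $\eta/\alpha$ as you wrote, and that is what produces the $\Delta B_k/\alpha$ term.

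The only real difference is bookkeeping, with a trade-off. By freezing time in the spatial gap and evaluating the time-freezing error at $\overleftarrow Y_{\tau_k}$, you avoid an extra $\mathcal O(\Delta B_k^2)\|D_k\|_{L^p}$ term that the paper absorbs into its one-step factor, since it evaluates that error at $\bar X^{\theta,h}_{\tau_k}$. The paper, by keeping $S^{(\alpha),\mathrm{res}}_{T-u}$ in the gap, gets transient factors that sum exactly to the integral $\mathfrak A_\varepsilon$. Your left-endpoint factors should instead be controlled through the uniform bound $(\chi_t)_-\le K_\varepsilon$ of Lemma~\ref{app:lem_transient}, which gives a slightly different but still $T$-uniform $\mathcal G_{\varepsilon,h}$.
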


\begin{proof}
Both target classes belong to $\mathcal P_1(\mathbb R^d)$, so Assumption~\ref{ass:denoiser} and Proposition~\ref{app:prop_popov_wellposedness} verify Proposition~\ref{prop:family}. Lemma~\ref{app:lem_transient} also provides $t_+$ and $\mathfrak A_\varepsilon<\infty$.
Set $r_k(u):=\widetilde m(\tau_{k+1})/\widetilde m(u)\le1$ for $u\in[\tau_k,\tau_{k+1}]$.
Write $D_k:=\bar X^{\theta,h}_{\tau_k}-\overleftarrow Y_{\tau_k}$ for the discretization
gap. Subtracting the integral form of the exact SDE for $\overleftarrow Y$ from the scheme
\eqref{eq:app_ei} (the shared jump integral cancels) and inserting $\pm
S^{(\alpha),\mathrm{res}}_{T-u}(\bar X^{\theta,h}_{\tau_k})$ and $\pm
S^{(\alpha),\mathrm{res}}_{T-u}(\overleftarrow Y_{\tau_k})$ splits the one-step gap as
$D_{k+1}=A_k+R^{(1)}_k+R^{(2)}_k$, with
\begin{align}
A_k&:=\widetilde m_kD_k+\tfrac{1+\eta}{\alpha}\int_{\tau_k}^{\tau_{k+1}}r_k(u)\bar\beta(u)\bigl[S^{(\alpha),\mathrm{res}}_{T-u}(\bar X^{\theta,h}_{\tau_k})-S^{(\alpha),\mathrm{res}}_{T-u}(\overleftarrow Y_{\tau_k})\bigr]\,\mathrm{d} u,\label{eq:app_Ak}\\
R^{(1)}_k&:=\tfrac{1+\eta}{\alpha}\int_{\tau_k}^{\tau_{k+1}}r_k(u)\bar\beta(u)\bigl[S^{(\alpha),\mathrm{res}}_{T-u}(\overleftarrow Y_{\tau_k})-S^{(\alpha),\mathrm{res}}_{T-u}(\overleftarrow Y_u)\bigr]\,\mathrm{d} u,\label{eq:app_R1}\\
R^{(2)}_k&:=\tfrac{1+\eta}{\alpha}\int_{\tau_k}^{\tau_{k+1}}r_k(u)\bar\beta(u)\bigl[h_\theta(\tau_k,\bar X^{\theta,h}_{\tau_k})-S^{(\alpha),\mathrm{res}}_{T-u}(\bar X^{\theta,h}_{\tau_k})\bigr]\,\mathrm{d} u,\label{eq:app_R2}
\end{align}
The three terms are, respectively, a contraction of the previous gap ($A_k$), the
local fluctuation of the ideal process ($R^{(1)}_k$), and the modeling-plus-time-freezing
residual ($R^{(2)}_k$). We bound each in $L^p$.

\par\smallskip\noindent\textit{Step 1 (one-step contraction of $A_k$).}\, 
Expanding $\left\lVert A_k\right\rVert^2\le\delta_k^2\left\lVert D_k\right\rVert^2$ pathwise
from \eqref{eq:app_Ak}, the deterministic factor $\delta_k$ is
\begin{equation}\label{eq:app_deltak}
\delta_k^2=\underbrace{\widetilde m_k^2}_{(a)}
+\underbrace{2\widetilde
m_k\,\tfrac{1+\eta}{\alpha}\!\int_{\tau_k}^{\tau_{k+1}}\!r_k(u)\bar\beta(u)C^{\mathrm{res}}_{T-u}\,\mathrm{d} u}_{(b)}
+\underbrace{\Bigl(\tfrac{1+\eta}{\alpha}\int_{\tau_k}^{\tau_{k+1}}\!r_k(u)\bar\beta(u)L^{\mathrm{res}}_{T-u}\,\mathrm{d} u\Bigr)^2}_{(c)} .
\end{equation}
We bound the three terms without expanding $\bar\beta$ or the score constants at the left
endpoint, neither of which is assumed Lipschitz in time. Write
$I_k:=\int_{\tau_k}^{\tau_{k+1}}\bar\beta$, so that $\widetilde
m_k=e^{-\frac{\eta}{\alpha}I_k}$ and $r_k(u)=\exp\bigl(-\tfrac\eta\alpha\int_u^{\tau_{k+1}}\bar\beta\bigr)\in(0,1]$. The substitution
$v=\int_u^{\tau_{k+1}}\bar\beta$, for which $\,\mathrm{d} v=-\bar\beta(u)\,\mathrm{d} u$,
gives the \emph{exact} identity
\begin{equation}\label{eq:app_exactint}
\int_{\tau_k}^{\tau_{k+1}}r_k(u)\,\bar\beta(u)\,\mathrm{d} u=\int_0^{I_k}e^{-\frac\eta\alpha v}\,\mathrm{d} v=\frac\alpha\eta\bigl(1-\widetilde m_k\bigr).
\end{equation}
Set $J_k:=\int_{\tau_k}^{\tau_{k+1}}r_k(u)\bar\beta(u) \chi_{T-u}\,\mathrm d u$, and
$Q_k:=\frac{1+\eta}{\alpha} \int_{\tau_k}^{\tau_{k+1}}r_k(u)\bar\beta(u)
L^{\mathrm{res}}_{T-u}\,\mathrm d u$. Since $\frac{1+\eta}{\alpha}C_t^{\mathrm{res}}
=\frac\eta\alpha-\chi_t$ and $\frac\eta\alpha\int_{\tau_k}^{\tau_{k+1}}r_k(u)\bar\beta(u)\,\mathrm d u =1-\widetilde m_k$,
the factor in \eqref{eq:app_deltak} satisfies the exact identity
\begin{equation}\label{eq:app_delta_identity}
 \delta_k^2
 =1-(1-\widetilde m_k)^2-2\widetilde m_kJ_k+Q_k^2.
\end{equation}
Furthermore,
\[
 L_t^{\mathrm{res}}
 \le L_\varepsilon^{\mathrm{res}}
 :=1+\frac{1+a(\varepsilon)\bar C_\varepsilon}
 {\gamma(\varepsilon)^\alpha},
 \qquad
 Q_k\le C_LI_k,
 \qquad
 C_L:=\frac{1+\eta}{\alpha}L_\varepsilon^{\mathrm{res}}.
\]
For every step, \eqref{eq:app_delta_identity} gives
\begin{equation}\label{eq:app_delta_transient}
 \delta_k\le1+(J_k)_-+\frac12C_L^2I_k^2,
\end{equation}
because $\delta_k^2\le1+2(J_k)_-+C_L^2I_k^2$ and
$\sqrt{1+y}\le1+y/2$ for $y\ge0$.

If the direct-time cell
$\mathcal I_k:=[T-\tau_{k+1},T-\tau_k]$ is contained in $[t_+,\infty)$,
then \eqref{eq:app_eventual_half} and \eqref{eq:app_exactint} give
$J_k\ge(1-\widetilde m_k)/2$. Hence $\delta_k^2\le \widetilde m_k+C_L^2I_k^2$. If
$(\eta/\alpha)I_k\le1$, then
$1-\widetilde m_k\ge(\eta/(2\alpha))I_k$, so
$\delta_k^2\le1-(\eta/(2\alpha))I_k+C_L^2I_k^2$. Since the right-hand
side is at least $1/2$ under the stated condition,
$\sqrt{1+y}\le1+y/2$ for $y\ge-1$ gives
\begin{equation}\label{eq:app_delta_late_linear}
 \delta_k\le1-\frac{\eta}{4\alpha}I_k+\frac12C_L^2I_k^2.
\end{equation}
As $\delta_k$ is deterministic, taking $L^p$ norms in \eqref{eq:app_Ak} gives $\left\lVert
A_k\right\rVert_{L^p}\le\delta_k\left\lVert D_k\right\rVert_{L^p}$.

\par\smallskip\noindent\textit{Step 2 (the residual $R^{(2)}_k$ and closing the recursion).}\, 
At $\bar X^{\theta,h}_{\tau_k}$, split the bracket in \eqref{eq:app_R2}
into a modeling part and a time-freezing part,
\begin{equation*}
h_\theta(\tau_k,\cdot)-S^{(\alpha),\mathrm{res}}_{T-u}(\cdot)=\underbrace{\alpha\bigl(s_\theta(\tau_k,\cdot)-S^{(\alpha)}_{T-\tau_k}(\cdot)\bigr)}_{\text{Assumption~\ref{ass:score}: }
\|\cdot\|_{L^p}\le\,\alpha\,\mathrm{err}}
+\underbrace{\alpha\bigl(S^{(\alpha)}_{T-\tau_k}-S^{(\alpha)}_{T-u}\bigr)(\cdot)}_{\text{Assumption~\ref{ass:time}}}
.
\end{equation*}
Assumption~\ref{ass:time} bounds the second part by $\alpha
L^{\mathrm{time}}_\varepsilon(u-\tau_k)(1+\left\lVert \bar
X^{\theta,h}_{\tau_k}\right\rVert)$, and $\left\lVert \bar
X^{\theta,h}_{\tau_k}\right\rVert_{L^p}\le\left\lVert
D_k\right\rVert_{L^p}+B_\varepsilon^{(p)}$ by Proposition~\ref{app:prop_moments}. Since
$u-\tau_k\le h\le I_k/\beta_{\min}$ and $r_k(u)\le1$, the
part proportional to $\left\lVert D_k\right\rVert_{L^p}$ is at most $C_2I_k^2\left\lVert
D_k\right\rVert_{L^p}$ with $C_2:=(1+\eta)L^{\mathrm{time}}_\varepsilon/\beta_{\min}$ (the
outer weight $\tfrac{1+\eta}{\alpha}$ and the factor $\alpha$ of $h_\theta$ combining into
$1+\eta$); the remaining, $D_k$-independent term is
$\mathrm{Source}_k:=(1+\eta)\int_{\tau_k}^{\tau_{k+1}}\bar\beta(u)\bigl[\mathrm{err}+L^{\mathrm{time}}_\varepsilon(u-\tau_k)(1+B_\varepsilon^{(p)})\bigr]\,\mathrm{d}
u$. Set $\widetilde\delta_k:=\delta_k+C_2I_k^2$ and $K_1:=\frac12C_L^2+C_2$. For every
step,
\eqref{eq:app_delta_transient} gives $\widetilde\delta_k \le\exp\bigl((J_k)_-+K_1I_k^2\bigr)$. For a cell contained in $[t_+,\infty)$, \eqref{eq:app_delta_late_linear}
instead gives $\widetilde\delta_k\le1$ provided $I_k\le I_\star
:=\min\left\{\frac\alpha\eta,\frac{\eta}{4\alpha K_1}\right\}$.

Let $\mathcal K_{\mathrm{tr}}$ contain the cells not entirely included in
$[t_+,\infty)$. Since the grid is uniform in direct time, for $h\le1$ their
union is contained in $[\varepsilon,t_++1]$. Consequently, $\sum_{k\in\mathcal
K_{\mathrm{tr}}}(J_k)_- \le\mathfrak A_\varepsilon$, and $\sum_{k\in\mathcal
K_{\mathrm{tr}}}I_k^2 \le h\,\Xi_\varepsilon$ with $\Xi_\varepsilon:=\beta(t_++1)
\bigl(B(t_++1)-B(\varepsilon)\bigr)$. Here we used $(J_k)_-\le\int_{\mathcal
I_k}\beta(t)(\chi_t)_-\,\mathrm d t$,
$I_k\le h\beta(t_++1)$ on transient cells, and disjointness of the cells.
Thus every partial product of the one-step factors satisfies
\begin{equation}\label{eq:app_grid_product}
 \prod_j\widetilde\delta_j
 \le\mathcal G_{\varepsilon,h}
 :=\exp\bigl(\mathfrak A_\varepsilon+K_1h\Xi_\varepsilon\bigr).
\end{equation}
Since $I_k\le h\beta(T)$, it is enough to impose
\begin{equation}\label{eq:app_h0}
 h\le h_0(\varepsilon,T)
 :=\min\left\{1,\frac{I_\star}{\beta(T)},
 \frac{1}{K_1\Xi_\varepsilon}\right\},
\end{equation}
with the convention that the last term is $+\infty$ if
$K_1\Xi_\varepsilon=0$. Then
$\mathcal G_{\varepsilon,h}\le e^{\mathfrak A_\varepsilon+1}$, independently
of $T$.

\par\smallskip\noindent\textit{Step 3 (the fluctuation $R^{(1)}_k$).}\, 
Integrating the exact backward SDE over one step,
\begin{equation}\label{eq:app_Yincrement}
\overleftarrow Y_u-\overleftarrow Y_{\tau_k}=\int_{\tau_k}^{u}\bar\beta(v)\Bigl[\tfrac1\alpha\overleftarrow Y_v+(1+\eta) S^{(\alpha)}_{T-v}(\overleftarrow Y_v)\Bigr]\,\mathrm{d} v+\int_{\tau_k}^{u}\bigl(\eta\bar\beta(v)\bigr)^{1/\alpha}\,\mathrm{d}\overleftarrow L{}^\alpha_v .
\end{equation}
The score--denoiser identity bounds the score at the origin, $\left\lVert
S^{(\alpha)}_t(0)\right\rVert=\tfrac{a}{\alpha\gamma^\alpha}\left\lVert
m_t(0)\right\rVert\le\tfrac{a}{\alpha\gamma^\alpha}R\le\tfrac{1+aC_t}{\alpha\gamma^\alpha}R$
(as $a\le1$), so its at-most-linear growth gives $\left\lVert
S^{(\alpha)}_{T-v}(\overleftarrow
Y_v)\right\rVert_{L^p}\le\tfrac{1+a(T-v)C_{T-v}}{\alpha\gamma(T-v)^\alpha}(B_\varepsilon^{(p)}+R)$.
The drift term of \eqref{eq:app_Yincrement} is therefore bounded in $L^p$ by
$\int_{\tau_k}^{u}\bar\beta(v)\bigl[\tfrac{B_\varepsilon^{(p)}}{\alpha}+\tfrac{1+\eta}{\alpha}\tfrac{1+a(T-v)C_{T-v}}{\gamma(T-v)^\alpha}(B_\varepsilon^{(p)}+R)\bigr]\,\mathrm{d}
v\le(B_\varepsilon^{(p)}+R)\bigl[\tfrac{\Delta
B_k}{\alpha}+\tfrac{1+\eta}{\alpha}\int_{\tau_k}^{\tau_{k+1}}\bar\beta(v)\tfrac{1+a(T-v)C_{T-v}}{\gamma(T-v)^\alpha}\,\mathrm{d}
v\bigr]$; the stochastic integral in \eqref{eq:app_Yincrement} is
$\mathrm{S}\alpha\mathrm{S}_d$ with scale
$(\eta\int_{\tau_k}^u\bar\beta)^{1/\alpha}\le(\eta\,\Delta B_k)^{1/\alpha}$, contributing
$c_{d,\alpha,p}\,\eta^{1/\alpha}\Delta B_k^{1/\alpha}$. Hence $\left\lVert \overleftarrow
Y_u-\overleftarrow Y_{\tau_k}\right\rVert_{L^p}\le\Theta_k$, and by \eqref{eq:app_R1},
\begin{equation}\label{eq:app_R1bound}
\left\lVert R^{(1)}_k\right\rVert_{L^p}\le\frac{1+\eta}{\alpha}\Bigl(\int_{\tau_k}^{\tau_{k+1}}\bar\beta(u)\,L^{\mathrm{res}}_{T-u}\,\mathrm{d} u\Bigr)\Theta_k .
\end{equation}

\par\smallskip\noindent\textit{Step 4 (telescoping).}\, 
Combining Steps 1--3,
$\left\lVert D_{k+1}\right\rVert_{L^p}\le\widetilde\delta_k\left\lVert
D_k\right\rVert_{L^p}+\left\lVert R^{(1)}_k\right\rVert_{L^p}+\mathrm{Source}_k$. Starting
from $\left\lVert D_0\right\rVert_{L^p}=0$ (both processes start at $\pi_\infty^\alpha$,
synchronously coupled) and using \eqref{eq:app_grid_product},
\[
\left\lVert D_N\right\rVert_{L^p}\le \mathcal G_{\varepsilon,h}\Bigl(
\sum_{k=0}^{N-1}\mathrm{Source}_k+\sum_{k=0}^{N-1}\left\lVert R^{(1)}_k\right\rVert_{L^p}\Bigr).
\] In the source sum, factor
$\mathrm{err}$ and
$L^{\mathrm{time}}_\varepsilon h(1+B_\varepsilon^{(p)})$ out of
$\sum_k\int_{\tau_k}^{\tau_{k+1}}\bar\beta
=\int_0^{T-\varepsilon}\bar\beta=B(T)-B(\varepsilon)$; this is the first
term inside braces in \eqref{eq:app_e2}. The second sum, bounded by
\eqref{eq:app_R1bound}, is the second term. Finally
$\mathcal{E}_2\le\left\lVert D_N\right\rVert_{L^p}$ since $W_p$ is an
infimum over couplings.
\end{proof}

\subsection{Assembly: the explicit bound}

\begin{theorem}[Explicit non-asymptotic bound]\label{app:thm_main}
Under Assumptions~\ref{ass:schedule}--\ref{ass:time} and~\ref{ass:denoiser}, for a target supported in $\bar B(0,R)$, $p\in[1,\alpha)$,
arbitrary $0<\varepsilon<T$, and $h\le h_0(\varepsilon,T)$,
\begin{align}
W_p\bigl(\pi_{\mathrm{data}},\hat\pi^{(\beta,\theta,\eta,\varepsilon)}_N\bigr)
&\le \mathcal E_0+\mathcal E_1+\mathcal E_2,\label{eq:app_main}\\
\mathcal E_0
&\le \bigl(1-a(\varepsilon)\bigr)
\mathbb{E}[\left\lVert X_0\right\rVert^p]^{1/p}
+\gamma(\varepsilon)c_{d,\alpha,p},\notag\\
\mathcal E_1
&\le W_p(\pi_{\mathrm{data}},\pi_\infty^\alpha)\notag\\[-2pt]
&\quad\times\exp\Bigl(-\tfrac{1+\eta}{\alpha}
\int_\varepsilon^T\beta(t)
\tfrac{1-a(t)C_t}{\gamma(t)^\alpha}\,\mathrm{d} t\Bigr),\notag\\
\mathcal E_2
&\le \mathcal G_{\varepsilon,h}\Bigg\{
(1+\eta)\bigl[\mathrm{err}+L^{\mathrm{time}}_\varepsilon
h(1+B_\varepsilon^{(p)})\bigr]\bigl(B(T)-B(\varepsilon)\bigr)\notag\\
&\qquad+\frac{1+\eta}{\alpha}\sum_{k=0}^{N-1}
\Bigl(\int_{\tau_k}^{\tau_{k+1}}\bar\beta(u)
L^{\mathrm{res}}_{T-u}\,\mathrm{d} u\Bigr)\Theta_k
\Bigg\}.\notag
\end{align}
Here $\Theta_k$, $B_\varepsilon^{(p)}$ and
$\mathcal G_{\varepsilon,h}$ are as in Proposition~\ref{app:prop_cumul}.
\end{theorem}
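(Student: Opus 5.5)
The bound is an assembly of results already proved, so I would derive it in three steps and then check hypotheses. First, I would justify \eqref{eq:app_main} by the triangle inequality for $W_p$ applied along
\[
\pi_{\mathrm{data}}\to\pi_\varepsilon=\mathrm{Law}(\overleftarrow Z_{T-\varepsilon})\to\mathrm{Law}(\overleftarrow Y_{T-\varepsilon})\to\mathrm{Law}(\bar X^{\theta,h}_{T-\varepsilon})=\hat\pi_N^{(\beta,\theta,\eta,\varepsilon)}.
\]
The identity $\mathrm{Law}(\overleftarrow Z_{T-\varepsilon})=\pi_\varepsilon$ is the content of Proposition~\ref{prop:family}. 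Its well-posedness hypotheses are supplied by Proposition~\ref{app:prop_popov_wellposedness}, because a compactly supported target lies in $\mathcal P_1$ and Assumption~\ref{ass:denoiser} gives $\bar C_{\varepsilon,T}\le\bar C_\varepsilon<\infty$. This yields the decomposition \eqref{eq:decomp}.

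Next, I would bound each term with the lemma already proved for it. For $\mathcal E_0$, I would apply Lemma~\ref{app:lem_bias} directly; it requires only $p<\alpha$, so that $c_{d,\alpha,p}<\infty$. For $\mathcal E_1$, I would invoke \eqref{eq:app_mixing} of Lemma~\ref{app:lem_mixing}. This rests on the one-sided Lipschitz bound \eqref{eq:osl} (Proposition~\ref{prop:dissipativity}), which holds for all $t\in[\varepsilon,T]$ because $C_t\le\bar C_\varepsilon$, and on forward contraction (Lemma~\ref{app:lem_forward}). For $\mathcal E_2$, I would quote Proposition~\ref{app:prop_cumul} in the compact-support case with radius $R$. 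That proposition needs $h\le h_0(\varepsilon,T)$, the uniform moment bound $B_\varepsilon^{(p)}$ of Proposition~\ref{app:prop_moments} (valid since $\pi_{\mathrm{data}}\in\mathcal P_p$), and $\mathfrak A_\varepsilon<\infty$ from Lemma~\ref{app:lem_transient}.

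The main obstacle is consistency of the couplings and hypotheses rather than any new estimate. Each $W_p$ term must be bounded by its own synchronous coupling driven by the shared $\overleftarrow L^\alpha$. I would also check that the constants entering $\mathcal G_{\varepsilon,h}$ and $\Theta_k$ are finite: $L^{\mathrm{res}}_{T-u}\le L^{\mathrm{res}}_\varepsilon$, and the bound $\|m_t(0)\|\le R$ is what enters Step~3 of Proposition~\ref{app:prop_cumul}. Both follow from Assumption~\ref{ass:denoiser} and compact support. Once these are verified, summing the three bounds gives the stated inequality, and the constant-schedule Theorem~\ref{cor:rate} would then follow by substituting $\beta\equiv\beta_0$ and estimating the integrals explicitly.
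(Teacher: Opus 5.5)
Your proposal is correct and matches the paper's proof, which simply combines the triangle decomposition \eqref{eq:decomp} with Lemma~\ref{app:lem_bias}, the bound \eqref{eq:app_mixing} of Lemma~\ref{app:lem_mixing}, and Proposition~\ref{app:prop_cumul}. Your additional checks are the hypotheses those cited results already use internally: well-posedness via Proposition~\ref{app:prop_popov_wellposedness}, finiteness of $\mathfrak A_\varepsilon$ and $B_\varepsilon^{(p)}$, and $\|m_t(0)\|\le R$.
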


\begin{proof}
Combine the triangle decomposition \eqref{eq:decomp} with Lemmas~\ref{app:lem_bias}
and~\ref{app:lem_mixing} and Proposition~\ref{app:prop_cumul}.
\end{proof}

\subsection{Schedule corollaries and proof of Theorem~\ref{cor:rate}}\label{app:constant_rate}

We express \eqref{eq:app_main} through $B(t)=\int_0^t\beta$, with
$B_\varepsilon:=B(\varepsilon)$, $B_T:=B(T)$, the per-step injected noise $\Delta
B:=\max_k\int_{\tau_k}^{\tau_{k+1}}\bar\beta\le h\beta(T)$, and
$\mu_p:=\mathbb{E}[\left\lVert X_0\right\rVert^p]^{1/p}$. The change of variable
$\,\mathrm{d} B=\beta(t)\,\mathrm{d} t$, $\gamma(t)^\alpha=1-e^{-B(t)}$, gives
\begin{equation}\label{eq:app_changevar}
\frac{\beta(t)}{\gamma(t)^\alpha}\,\mathrm{d} t=\frac{\,\mathrm{d} B}{1-e^{-B}} .
\end{equation}

\begin{corollary}[General schedule, intrinsic scale]\label{app:cor_general}
Under the assumptions of Theorem~\ref{app:thm_main} and the resolution $\Delta B\le\min\{B_\varepsilon,1\}$, \begin{equation}\label{eq:app_general}
\begin{aligned}
W_p\bigl(\pi_{\mathrm{data}},\hat\pi^{(\beta,\theta,\eta,\varepsilon)}_N\bigr)
\le{}&c_0'\,B_\varepsilon^{1/\alpha}\\
&+c_1\,e^{-\rho_\star(B_T-B_\varepsilon)}
+c_2(B_T-B_\varepsilon)\mathrm{err}\\
&+c_3\,\frac{\Delta B}{B_\varepsilon}\\
&+c_4\left(B_T-B_\varepsilon
+\log\left(1+\frac1{B_\varepsilon}\right)\right)\Delta B^{1/\alpha},
\end{aligned}
\end{equation}
where, writing $w:=\frac{1+\eta}{\alpha}$,
$\bar w:=\max(1,w)$ and
$\mathcal G_\varepsilon:=e^{\mathfrak A_\varepsilon+1}$,
\[
\rho_\star=\frac{1+\eta}{\alpha},\qquad
c_0'=\mu_p+c_{d,\alpha,p},\qquad
c_1=e^{\mathfrak D_\varepsilon^{\mathrm{mix}}}
(\mu_p+c_{d,\alpha,p}),\qquad
c_2=\mathcal G_\varepsilon(1+\eta),
\] $c_3=8\,\mathcal G_\varepsilon w\,\bar w\, (B_\varepsilon^{(p)}+R)(2+\bar C_\varepsilon)^2$, \[
\begin{aligned}
c_4=\mathcal G_\varepsilon\Bigl[
&w\,\eta^{1/\alpha}c_{d,\alpha,p}(2+\bar C_\varepsilon)
+2w\,\bar w\,(B_\varepsilon^{(p)}+R)(2+\bar C_\varepsilon)^2\\
&+(1+\eta)L^{\mathrm{time}}_\varepsilon
(1+B_\varepsilon^{(p)})/\beta_{\min}
\Bigr].
\end{aligned}
\]
\end{corollary}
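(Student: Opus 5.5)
The plan is to start from Theorem~\ref{app:thm_main} and rewrite each of its three pieces in the intrinsic clock $B$, using the change of variables \eqref{eq:app_changevar} together with $a(t)=e^{-B(t)/\alpha}$ and $\gamma(t)^\alpha=1-e^{-B(t)}$.

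\textbf{Early stopping.} For $\mathcal E_0$ we use two elementary bounds:
\[
1-a(\varepsilon)=1-e^{-B_\varepsilon/\alpha}\le \min\{1,B_\varepsilon/\alpha\},
\qquad
\gamma(\varepsilon)=(1-e^{-B_\varepsilon})^{1/\alpha}\le B_\varepsilon^{1/\alpha}.
\]
On $B_\varepsilon\le1$ we also have $B_\varepsilon\le B_\varepsilon^{1/\alpha}$; on $B_\varepsilon>1$ we have $1\le B_\varepsilon^{1/\alpha}$. In both cases $\mathcal E_0\le(\mu_p+c_{d,\alpha,p})B_\varepsilon^{1/\alpha}$, which gives $c_0'$.

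\textbf{Initialization.} For $\mathcal E_1$ we use the sharp form \eqref{eq:app_mixing_optimal} of Lemma~\ref{app:lem_mixing}. We bound $W_p(\pi_{\mathrm{data}},\pi_\infty^\alpha)\le\mu_p+c_{d,\alpha,p}$ with the independent coupling. This leaves the requirement $\mathfrak D^{\mathrm{mix}}_\varepsilon<\infty$, which I expect to be the one genuinely non-bookkeeping point. Note that
\[
\frac\eta\alpha-\chi_t=(1+\eta)\Bigl(\frac1\alpha-C^{(\alpha)}_t\Bigr)=\frac{1+\eta}{\alpha\gamma^\alpha}\bigl(a C_t-a^\alpha\bigr)\le\frac{1+\eta}{\alpha\gamma(\varepsilon)^\alpha}\bigl(a(t)\bar C_\varepsilon+a(t)^\alpha\bigr).
\]
Hence
\[
\mathfrak D^{\mathrm{mix}}_\varepsilon\lesssim\int_{B_\varepsilon}^\infty e^{-B/\alpha}\,\mathrm dB<\infty,
\]
with a bound independent of $T$. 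This yields $c_1$ and $\rho_\star=(1+\eta)/\alpha$.

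\textbf{Score and discretization.} For $\mathcal E_2$, first use $h\le h_0$, so that $\mathcal G_{\varepsilon,h}\le\mathcal G_\varepsilon=e^{\mathfrak A_\varepsilon+1}$. The source term then gives $c_2(B_T-B_\varepsilon)\mathrm{err}$ directly. The time-freezing part is $L^{\mathrm{time}}_\varepsilon h(1+B^{(p)}_\varepsilon)(B_T-B_\varepsilon)$; since $h\le\Delta B/\beta_{\min}$, it is absorbed into the last term of $c_4$ using $\Delta B\le\Delta B^{1/\alpha}$, valid because $\Delta B\le1$. For the sum involving $\Theta_k$, bound
\[
L^{\mathrm{res}}_t\le 1+\frac{1+\bar C_\varepsilon}{\gamma(t)^\alpha}\le\frac{2+\bar C_\varepsilon}{\gamma(t)^\alpha},
\]
and similarly the integrand inside $\Theta_k$. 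Writing $g(B):=1/(1-e^{-B})$, each cell then contributes a product of the form
\[
\Bigl(\int_{\text{cell}}g\,\mathrm dB\Bigr)\Bigl[(B^{(p)}_\varepsilon+R)\,w\,\bar w\,(2+\bar C_\varepsilon)\Bigl(\Delta B+\int_{\text{cell}}g\,\mathrm dB\Bigr)+\eta^{1/\alpha}c_{d,\alpha,p}\,\Delta B^{1/\alpha}\Bigr].
\]
This splits into a noise part and a drift part.

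The noise part is controlled by $\Delta B^{1/\alpha}\int_{B_\varepsilon}^{B_T}g$. We use
\[
\int_{B_\varepsilon}^{B_T}\frac{\mathrm dB}{1-e^{-B}}=B_T-B_\varepsilon+\log\frac{1-e^{-B_T}}{1-e^{-B_\varepsilon}}\le B_T-B_\varepsilon+\log\Bigl(1+\frac1{B_\varepsilon}\Bigr)+\text{const}.
\]
The constant is absorbed, e.g.\ via $1/(1-e^{-x})\le1+1/x$. This produces the $c_4$ term with its logarithm.

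The drift part is a sum of squared cell integrals, $\sum_k\bigl(\int_{\text{cell}}g\bigr)^2$. Because $g$ is decreasing and $\Delta B\le B_\varepsilon$, on each cell
\[
\int_{\text{cell}}g\le\Delta B\,g(B_{\mathrm{left}})\le 2\,\Delta B\,g(B_{\mathrm{right}}),
\]
where the factor $2$ uses $g(x)\le 2g(2x)$ together with $B_{\mathrm{left}}\ge B_\varepsilon\ge\Delta B$. Therefore
\[
\sum_k\Bigl(\int_{\text{cell}}g\Bigr)^2\le 2\Delta B\int_{B_\varepsilon}^{B_T}g^2.
\]
Finally,
\[
\int_{B_\varepsilon}^{B_T}g^2\le B_T-B_\varepsilon+\int_{B_\varepsilon}^\infty(g^2-1)\lesssim B_T-B_\varepsilon+\log\Bigl(1+\frac1{B_\varepsilon}\Bigr)+\frac1{B_\varepsilon}.
\]
The $1/B_\varepsilon$ piece gives $c_3\,\Delta B/B_\varepsilon$, and the remainder goes into $c_4$ via $\Delta B\le\Delta B^{1/\alpha}$.

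This near-singular bookkeeping as $B\downarrow B_\varepsilon$ is where the constants $8$ and $2$ and the power $(2+\bar C_\varepsilon)^2$ arise. It is the step I expect to be most error-prone, though not conceptually hard.
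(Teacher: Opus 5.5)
Your proposal is correct and follows the paper's proof essentially step for step: the bias via $1-e^{-x}\le x$, the mixing term via \eqref{eq:app_mixing_optimal} with $\mathfrak D^{\mathrm{mix}}_\varepsilon<\infty$ from the decay of $a(t)$, the bound $\mathcal G_{\varepsilon,h}\le\mathcal G_\varepsilon$, the change of variables \eqref{eq:app_changevar}, and the noise/drift split of the $\Theta_k$ sum. The only deviation is in the drift sum, where you compare $\sum_k H_k^2$ (with $H_k$ the cell integral of $g$) to $2\Delta B\int_{B_\varepsilon}^{B_T}g^2\,\mathrm{d}B$ instead of using the paper's pointwise bound $H_k\le\Delta B_k\bigl(1+1/B(t_{k+1})\bigr)$; this gives coefficient $2\le 8$ on $\Delta B/B_\varepsilon$ plus an extra $\Delta B\log(1+1/B_\varepsilon)$ piece that $c_4$ already absorbs, and the cross term $\Delta B_k H_k$ in your per-cell product is removed by $\Delta B_k\le H_k$ (since $g\ge1$), exactly as in the paper.
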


\begin{proof}
\par\smallskip\noindent\textit{Bias.}\, The global inequalities
\[
1-a(\varepsilon)\le\min\{B_\varepsilon/\alpha,1\}
\le B_\varepsilon^{1/\alpha},
\qquad
\gamma(\varepsilon)\le\min\{B_\varepsilon,1\}^{1/\alpha}
\le B_\varepsilon^{1/\alpha}
\]
and Lemma~\ref{app:lem_bias} yield $\mathcal{E}_0\le c_0'B_\varepsilon^{1/\alpha}$.

\par\smallskip\noindent\textit{Mixing.}\,
By \eqref{eq:app_chi_direct}, $\left(\frac\eta\alpha-\chi_t\right)_+ =\frac{1+\eta}{\alpha}
\left(\frac{a(t)C_t-a(t)^\alpha}{\gamma(t)^\alpha}\right)_+ \le\frac{1+\eta}{\alpha}
\frac{a(t)C_t}{\gamma(t)^\alpha}$. Assumption~\ref{ass:denoiser}, the monotonicity of $\gamma$, and $\int_\varepsilon^\infty\beta(t)a(t)\,\mathrm d t=\alpha a(\varepsilon)$ yield
\begin{equation}\label{eq:app_compact_mix_deficit}
 \mathfrak D_\varepsilon^{\mathrm{mix}}
 \le(1+\eta)\bar C_\varepsilon
 \frac{a(\varepsilon)}{\gamma(\varepsilon)^\alpha}
 <\infty.
\end{equation}
Lemma~\ref{app:lem_mixing} gives the mixing term in
\eqref{eq:app_general}, while the independent coupling gives
$W_p(\pi_{\mathrm{data}},\pi_\infty^\alpha)
\le\mu_p+c_{d,\alpha,p}$.

\par\smallskip\noindent\textit{Score.}\,
The first term inside braces in \eqref{eq:app_e2} is
$(1+\eta)[\mathrm{err}+L^{\mathrm{time}}_\varepsilon
h(1+B_\varepsilon^{(p)})](B_T-B_\varepsilon)$.
After multiplication by $\mathcal G_{\varepsilon,h}\le\mathcal G_\varepsilon$,
the modeling part gives the coefficient $c_2$. The temporal-regularity part,
bounded via $h\le\Delta B/\beta_{\min}$ and
$\Delta B\le\Delta B^{1/\alpha}$ because $\Delta B\le1$, is absorbed into
the $c_4(B_T-B_\varepsilon)\Delta B^{1/\alpha}$ term below.

\par\smallskip\noindent\textit{Discretization.}\, Under compact support, $a(t)C_t\le\bar C_\varepsilon$, so, using $\gamma(t)^\alpha\le1$ and $\alpha>1$, both $L^{\mathrm{res}}_{t}=1+\frac{1+a(t)C_t}{\gamma(t)^\alpha}$ and $\frac1\alpha+\frac{1+a(t)C_t}{\gamma(t)^\alpha}$ are at most $\frac{2+\bar C_\varepsilon}{\gamma(t)^\alpha}$. Passing to direct time $t=T-u$ and using \eqref{eq:app_changevar}, each one-step regularity factor is bounded by $(2+\bar C_\varepsilon)H_k$, where $H_k:=\int_{B(t_{k+1})}^{B(t_k)}\frac{\,\mathrm{d} B}{1-e^{-B}}$, $t_k:=T-\tau_k$; the integration intervals partition $[B_\varepsilon,B_T]$. Since $e^{B_\varepsilon}\ge1+B_\varepsilon$ implies $\frac1{1-e^{-B_\varepsilon}}\le1+\frac1{B_\varepsilon}$, the antiderivative $B+\log(1-e^{-B})$ gives the global estimate $\sum_kH_k=\int_{B_\varepsilon}^{B_T}\frac{\,\mathrm{d} B}{1-e^{-B}} \le B_T-B_\varepsilon+\log\left(1+\frac1{B_\varepsilon}\right)$. Moreover $\Theta_k\le\bar w\,(B_\varepsilon^{(p)}+R)(2+\bar C_\varepsilon)H_k+c_{d,\alpha,p}\eta^{1/\alpha}\Delta B_k^{1/\alpha}$, since $\frac{\Delta B_k}{\alpha}+w\int_{\tau_k}^{\tau_{k+1}}\bar\beta\,\frac{1+aC}{\gamma^\alpha}\le\Delta B_k+w(1+\bar C_\varepsilon)H_k\le\bar w(2+\bar C_\varepsilon)H_k$ ($\Delta B_k\le H_k$, the integrand of $H_k$ being $\ge1$, and $\frac1\alpha\le1$); hence
\[
w\sum_k\Bigl(\int\bar\beta L^{\mathrm{res}}_{}\Bigr)\Theta_k\le w\,\bar
w\,(B_\varepsilon^{(p)}+R)(2+\bar
C_\varepsilon)^2\sum_kH_k^2+w\,\eta^{1/\alpha}c_{d,\alpha,p}(2+\bar
C_\varepsilon)\sum_kH_k\Delta B_k^{1/\alpha}.
\]
\emph{Drift part.} The inequality $\frac{1}{1-e^{-B}}\le1+\frac1B$ gives $H_k\le\Delta B_k(1+\frac{1}{B(t_{k+1})})$. The resolution $\Delta B\le B_\varepsilon$ entails $B(t_k)\le2B(t_{k+1})$, so $\frac{1}{B(t_{k+1})^2}\le\frac{4}{B(t_k)^2}$, and the monotonicity of $B\mapsto B^{-2}$ gives $\frac{\Delta B_k}{B(t_k)^2}\le\int_{B(t_{k+1})}^{B(t_k)}\frac{\,\mathrm{d} B}{B^2}$. Thus $\sum_kH_k^2\le2\Delta B\Bigl[(B_T-B_\varepsilon)+4\int_{B_\varepsilon}^{B_T}\frac{\,\mathrm{d} B}{B^2}\Bigr]\le2\Delta B\,(B_T-B_\varepsilon)+\frac{8\Delta B}{B_\varepsilon}$. The first piece, at most $2\Delta B^{1/\alpha}(B_T-B_\varepsilon)$ because $\Delta B\le1$,
joins the noise term; the second gives $c_3\Delta B/B_\varepsilon$.
\emph{Noise part.} The global integral estimate above gives $\sum_kH_k\Delta B_k^{1/\alpha} \le\Delta B^{1/\alpha} \left(B_T-B_\varepsilon+\log\left(1+\frac1{B_\varepsilon}\right)\right)$. Summing all five contributions and multiplying the score--discretization terms
by $\mathcal G_{\varepsilon,h}\le\mathcal G_\varepsilon$ gives
\eqref{eq:app_general} with the stated constants.
\end{proof}

\begin{proof}[\normalfont\bfseries Proof of Theorem~\ref{cor:rate}]
Set $\rho_\star=\beta_0\frac{1+\eta}{\alpha},\qquad
c_0=\beta_0^{1/\alpha}\bigl(\mu_p+c_{d,\alpha,p}\bigr)$. Since
Theorem~\ref{cor:rate} only requires $h$ to be small enough, decrease its
admissible threshold so that
$h\le\min\{h_0(\varepsilon,T),\varepsilon,\beta_0^{-1}\}$. Then
$\Delta B=\beta_0h\le\min\{B_\varepsilon,1\}, \qquad
B_\varepsilon=\beta_0\varepsilon$, so Corollary~\ref{app:cor_general} applies with
$B_T=\beta_0T$ and
$\beta_{\min}=\beta_0$. Its bias term is
$c_0\varepsilon^{1/\alpha}$. The mixing exponent is
$\rho_\star(T-\varepsilon)$; the score coefficient is
$\beta_0c_2$, and the drift term is
$c_3\beta_0h/(\beta_0\varepsilon)=c_3h/\varepsilon$. Finally, $\left(\beta_0(T-\varepsilon)
+\log\left(1+\frac1{\beta_0\varepsilon}\right)\right) (\beta_0h)^{1/\alpha} \le c_{\beta_0}
\left(1+T+\log\left(1+\frac1\varepsilon\right)\right)h^{1/\alpha}$ for a constant
$c_{\beta_0}<\infty$ depending only on $\beta_0$. After absorbing this factor
into $c_4$, we obtain \eqref{eq:rate}.
The constants $c_3,c_4$ inherited from Corollary~\ref{app:cor_general}
depend on $\varepsilon$ through $\bar C_\varepsilon$ and
$L^{\mathrm{time}}_\varepsilon$, and $h_0$ of \eqref{eq:app_h0} depends on
$(\varepsilon,T)$, including any horizon dependence of
$L^{\mathrm{time}}_\varepsilon$. At fixed $(\varepsilon,T)$, the
discretization contribution is $\mathcal O(h^{1/\alpha})$.
A joint accuracy or step-complexity bound also requires balancing
early stopping, initialization, and score error.
\end{proof}

\section{Proof of Proposition~\ref{prop:ct_decay}: Denoiser Regularity under Compact Support}\label{app:compact_decay}

The following properties of the isotropic stable kernel will also be used
in the heavy-tailed analysis. Writing $\varphi_{\alpha,d}=\varphi_\alpha$, the stable
specialization of \citet[Corollary~23]{BogdanGrzywnyRyznar2014} gives
$\varphi_{\alpha,d}(x)\asymp(1+\|x\|)^{-d-\alpha}$.
Moreover, \citet[Theorem~3]{KaletaSztonyk2015}, applied to the stable
L\'evy measure, gives
$|\partial^j \varphi_{\alpha,d}(x)|\le C(1+\|x\|)^{-d-\alpha}$ for $|j|\le2$.
Thus, with operator norm for matrices,
\[
 s:=\nabla\log \varphi_\alpha,\qquad
 \Lambda_1:=\sup_x\|s(x)\|<\infty,\qquad
 \Lambda_2:=\sup_x\|\nabla s(x)\|<\infty,\qquad \Lambda:=\Lambda_2.
\]
For the sharper derivative bound needed below, let $S$ be positive
$\alpha/2$-stable with $\mathbb{E}e^{-vS}=e^{-v^{\alpha/2}}$.
Gaussian subordination yields
\[
 \varphi_{\alpha,d}(r)=\mathbb{E}\bigl[(4\pi S)^{-d/2}e^{-r^2/(4S)}\bigr],
 \qquad \varphi_{\alpha,d}'(r)=-2\pi r \varphi_{\alpha,d+2}(r),\quad r>0.
\]
Differentiation is justified by domination on compact subsets of
$(0,\infty)$. The density bound in dimension $d+2$ gives
$|\varphi_{\alpha,d}'(r)|\le Cr(1+r)^{-d-\alpha-2}$ and radial monotonicity.
Finally, \citet[Lemma~2.1]{HultLindskog2007} gives
$\mathbb{P}(\|L_1^\alpha\|>r)\sim C_{d,\alpha}r^{-\alpha}$, where
$C_{d,\alpha}:=\nu_\alpha(\|z\|>1)$.
Radial monotonicity and comparison of annuli with radii $r$ and
$\lambda r$, followed by $\lambda\downarrow1$, give
\[
 \varphi_\alpha(re)\sim k_{d,\alpha}r^{-d-\alpha},\qquad
 k_{d,\alpha}=\frac{\alpha C_{d,\alpha}}{\omega_{d-1}},\qquad
 e\in\mathbb S^{d-1},\quad\omega_{d-1}=|\mathbb S^{d-1}|.
\]
All kernel constants depend only on $d$ and $\alpha$.

Assume now $R:=\sup_{x_0\in\mathrm{supp}\,\pi_{\mathrm{data}}}\|x_0\|<\infty$.

\begin{proof}[\normalfont\bfseries Proof of Proposition~\ref{prop:ct_decay}]
Let $\mu_{t,x}$ be the conditional law of $X_0$ given $X_t=x$ and set
$a=a(t)$, $\gamma=\gamma(t)$, $U=(x-aX_0)/\gamma$ and
$s=\nabla\log \varphi_\alpha$.
Differentiating the conditional-mean quotient, justified by compact support, gives
\begin{equation}\label{eq:app_ct_covid}
 \nabla m_t(x)=\gamma^{-1}
      \mathrm{Cov}_{\mu_{t,x}}(X_0,s(U)).
\end{equation}
Subtract $s((x-am_t(x))/\gamma)$ from the second covariance argument.
Since $s$ is $\Lambda$-Lipschitz,
\[
 \|\nabla m_t(x)\|_{\mathrm{op}}
 \le\frac{\Lambda a}{\gamma^2}
       \mathbb{E}_{\mu_{t,x}}\|X_0-m_t(x)\|^2
 \le\frac{\Lambda aR^2}{\gamma^2}
 \le\frac{d\Lambda aR^2}{\gamma^2}.
\]
This implies \eqref{eq:ct_decay}. Since $a\to0$ and $\gamma\to1$,
$C_t\to0$, $C_t^{(\alpha)}\to1/\alpha$, and
Assumption~\ref{ass:denoiser} holds. Lemma~\ref{app:lem_transient}
gives the remaining conclusion.
\end{proof}

\section{Proofs for Sections~\ref{sec:heavy} and~\ref{sec:global_comparison}: Heavy-Tailed Targets}\label{app:separation}

Assumption~\ref{ass:heavy} implies that, for suitable constants
$0<c_{\pi,-}\le c_{\pi,+}$ and $R_\pi\ge1$,
\begin{equation}\label{eq:a5prime}
c_{\pi,-}\lVert x\rVert^{-(d+\nu)}
\;\le\;\pi(x)\;\le\;
c_{\pi,+}\lVert x\rVert^{-(d+\nu)},
\qquad \lVert x\rVert\ge R_\pi,
\tag{\ref*{ass:heavy}$'$}
\end{equation}
because the asymptotic relation \eqref{eq:a5} is uniform in direction. We use
this density envelope throughout the appendix; the full asymptotic relation is
invoked only where an exact limit is required.

Write $\pi$ for the target density, $\mu_x$ for the conditional law of
$X_0$ given $X_t=x$, and $\nu_x$ for its image under
$u=(x-ax_0)/\gamma$. We use the notation $s,\Lambda_1,\Lambda_2$ and the
stable-kernel estimates from Appendix~\ref{app:compact_decay}, in particular
\begin{equation*}
c_\varphi(1+\left\lVert u\right\rVert)^{-(d+\alpha)}\;\le\;\varphi_\alpha(u)\;\le\;C_\varphi(1+\left\lVert
u\right\rVert)^{-(d+\alpha)} .
\end{equation*}

\subsection{Proof of Theorem~\ref{thm:main_ht}: Heavy-tailed bound}

The heavy-tailed transfer requires a bound on $\|m_t(0)\|$ together with the
uniform regularity and decay estimates of Proposition~\ref{prop:heavy_denoiser}.

\begin{lemma}[Uniform bound of $m_t$ at zero]\label{app:lem_mtzero}
Under Assumption~\ref{ass:heavy}, $\left\lVert
m_t(0)\right\rVert\le\mathbb{E}_\pi[\left\lVert X_0\right\rVert]=:M_1<\infty$ for all $t$.
\end{lemma}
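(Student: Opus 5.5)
The plan is to write the denoiser at the origin as a reweighted mean of $X_0$ and to show that the reweighting shifts mass toward small norms, so it can only decrease the first absolute moment. By Bayes' rule and the explicit transition density $\pi_t(x\mid x_0)=\gamma^{-d}\varphi_\alpha((x-ax_0)/\gamma)$, with $a=a(t)$ and $\gamma=\gamma(t)$,
\[
m_t(0)=\frac{\mathbb{E}\bigl[X_0\,w(X_0)\bigr]}{\mathbb{E}\bigl[w(X_0)\bigr]},\qquad w(x_0):=\varphi_\alpha\bigl(-ax_0/\gamma\bigr)=\varphi_\alpha\bigl(ax_0/\gamma\bigr).
\]
The second equality uses the symmetry of the isotropic law. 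The denominator is strictly positive because $\varphi_\alpha>0$, and the numerator is finite because $w\le\sup\varphi_\alpha<\infty$ and $M_1<\infty$. The latter holds since $\nu>1$ in Assumption~\ref{ass:heavy}: the density envelope \eqref{eq:a5prime} makes $\|x\|\pi(x)$ integrable at infinity, and $\pi$ is bounded on compacts.

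Next I take norms inside: $\|m_t(0)\|\le \mathbb{E}[\|X_0\|\,w(X_0)]/\mathbb{E}[w(X_0)]$. The key structural fact is that $\varphi_\alpha$ is radial and radially nonincreasing. This was recorded in Appendix~\ref{app:compact_decay} via Gaussian subordination, where $\varphi_{\alpha,d}'(r)=-2\pi r\varphi_{\alpha,d+2}(r)\le0$. Hence $w(x_0)=g(\|X_0\|)$ with $g(r):=\varphi_\alpha(ar e/\gamma)$ nonincreasing in $r$.

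Chebyshev's association inequality then finishes the proof. For the scalar random variable $\|X_0\|$, the functions $r\mapsto r$ (nondecreasing) and $r\mapsto g(r)$ (nonincreasing) are oppositely monotone, so
\[
\mathbb{E}\bigl[\|X_0\|\,g(\|X_0\|)\bigr]\le \mathbb{E}\|X_0\|\;\mathbb{E}\bigl[g(\|X_0\|)\bigr].
\]
To prove this, take an independent copy $X_0'$ and note that $(\|X_0\|-\|X_0'\|)(g(\|X_0\|)-g(\|X_0'\|))\le0$ pointwise, then take expectations; all terms are integrable since $g$ is bounded. Dividing by $\mathbb{E}[g(\|X_0\|)]>0$ gives $\|m_t(0)\|\le M_1$ for every $t>0$, uniformly in $t$.

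The only delicate point is radial monotonicity of the stable kernel. This is not automatic from the two-sided envelope alone, so I would rely explicitly on the subordination representation, which exhibits $\varphi_\alpha$ as a mixture of centered Gaussians, each radially decreasing. The rest is routine integrability.
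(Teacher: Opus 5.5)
Your proof is correct and follows the paper's argument: you write $m_t(0)$ as a reweighted mean with weight $w(R)=\varphi_\alpha(aR/\gamma)$, $R=\|X_0\|$, which is nonincreasing in $R$. You then move the norm inside and use the independent-copy identity $\mathbb{E}[(R-R')(w(R)-w(R'))]\le0$ to get $\mathbb{E}[Rw(R)]\le\mathbb{E}R\,\mathbb{E}w(R)$. Your use of the Gaussian-subordination representation for radial monotonicity is the same fact the paper records in Appendix~\ref{app:compact_decay}.
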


\begin{proof}
At $x=0$ the conditional law is weighted by
$w(R)=\varphi_\alpha(aR/\gamma)$, $R=\|X_0\|$, a nonincreasing function.
For an independent copy $R'$, the identity
$2\mathrm{Cov}(R,w(R))
=\mathbb{E}[(R-R')(w(R)-w(R'))]\le0$
gives
$\|m_t(0)\|\le\mathbb{E}[Rw(R)]/\mathbb{E}[w(R)]\le\mathbb{E}R$.
\end{proof}

\begin{proof}[\normalfont\bfseries Proof of Theorem~\ref{thm:main_ht}]
Use Lemma~\ref{app:lem_mtzero} in Step~3 of
Proposition~\ref{app:prop_cumul}. Since $p<\alpha<\nu$ also at $\nu=1+\alpha$,
$\mu_p<\infty$ and controls the moment terms in the bound.
Proposition~\ref{prop:heavy_denoiser} verifies
Assumption~\ref{ass:denoiser}, including at this boundary, and Lemma~\ref{app:lem_transient} gives
$\mathfrak A_\varepsilon<\infty$ and an eventual-contractivity time $t_+$
independent of $T$, while \eqref{eq:app_compact_mix_deficit} gives $\mathfrak D_\varepsilon^{\mathrm{mix}}<\infty$.
Therefore Proposition~\ref{app:prop_moments}, the cumulative bound and the
refined mixing estimate \eqref{eq:app_mixing_optimal} all apply, with $R$
replaced by $M_1$.
\end{proof}

\subsection{Proofs of Propositions~\ref{prop:heavy_denoiser} and~\ref{prop:brown_ct_divergence}: Denoiser regularity}

Spatial asymptotics give lower bounds on $C_t$; global upper bounds require separate estimates of the Jacobian.

\paragraph{Asymptotics of the denoiser $m_s$.} For $s\in[\varepsilon,T]$ let $f_s(u):=a(s)^{-d}\pi\bigl(u/a(s)\bigr)$ and $g_s(v):=\gamma(s)^{-d}\varphi_\alpha\bigl(v/\gamma(s)\bigr)$ denote the densities of $a(s)X_0$ and of $\gamma(s)Z$, and set
\begin{equation}\label{eq:app_DH}
\mathsf D_s:=f_s*g_s,
\qquad
\mathsf H_s(x):=\int_{\mathbb{R}^d}u\,f_s(u)\,g_s(x-u)\,\mathrm{d} u ,
\end{equation}
so that $\mathsf D_s$ is the density of $X_s$ and, by Bayes' formula,
\begin{equation}\label{eq:app_denoiser_ratio}
a(s)\,m_s(x)=\mathbb{E}\bigl[a(s)X_0\mid X_s=x\bigr]=\frac{\mathsf H_s(x)}{\mathsf D_s(x)} .
\end{equation}
Since $a(s)$ and $\gamma(s)$ range in compact subsets of $(0,1]$ as $s$ ranges over
$[\varepsilon,T]$, \eqref{eq:a5} and $\varphi_\alpha(z)\sim k_{d,\alpha}\left\lVert
z\right\rVert^{-(d+\alpha)}$ give, uniformly in $s\in[\varepsilon,T]$ and $e\in\mathbb
S^{d-1}$,
\begin{equation}\label{eq:app_fg_tail}
f_s(re)=\bigl(c_\pi a(s)^{\nu}+o(1)\bigr)r^{-(d+\nu)},
\qquad
g_s(re)=\bigl(k_{d,\alpha}\gamma(s)^{\alpha}+o(1)\bigr)r^{-(d+\alpha)},
\end{equation}
together with the uniform global envelopes
\begin{equation}\label{eq:app_env}
f_s(y)\le C\bigl(1+\left\lVert y\right\rVert\bigr)^{-(d+\nu)},
\qquad
g_s(y)\le C\bigl(1+\left\lVert y\right\rVert\bigr)^{-(d+\alpha)},
\end{equation}
and, because $\nu>1$, the uniform first moment
$\sup_{s\in[\varepsilon,T]}\int_{\mathbb{R}^d}\left\lVert u\right\rVert f_s(u)\,\mathrm{d}
u<\infty$.

The unweighted convolution principle is classical
\citep[Theorem~1.1]{BinghamGoldieOmey2006}; related conditional-mean
regimes for positive scalar variables are studied by
\citet[Section~3]{DenuitOrtegaRobert2025}. We give the weighted
$\mathbb R^d$ argument, including the uniformity in $s$ needed below.

\begin{lemma}[Two-centre convolution asymptotics]\label{app:lem_conv}
Uniformly in $s\in[\varepsilon,T]$ and $e\in\mathbb S^{d-1}$, as $r\to\infty$,
\begin{equation}\label{eq:app_D_asymp}
\mathsf D_s(re)=f_s(re)+g_s(re)+o\bigl(f_s(re)+g_s(re)\bigr),
\end{equation}
and, for the numerator $\mathsf H_s$ of the denoiser, the following two alternatives hold,
again uniformly in $s$ and $e$:
\begin{align}
\mathsf H_s(re)&=re\,f_s(re)+o\bigl(r f_s(re)\bigr), && 1<\nu<1+\alpha,\label{eq:app_H_asymp}\\
\left\lVert \mathsf H_s(re)\right\rVert&=\mathcal O\bigl(g_s(re)\bigr), && \nu\ge1+\alpha.\label{eq:app_H_bound}
\end{align}
\end{lemma}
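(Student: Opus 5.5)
We plan to prove both statements by a two-centre splitting of the convolution integrals. Fix $\delta\in(0,1/2)$ and, for $x=re$, split $\mathbb R^d$ into three regions: $A_1=\{\|u\|\le\delta r\}$ (the mass of $f_s$ near the origin, where $g_s(x-u)\approx g_s(x)$), $A_2=\{\|x-u\|\le\delta r\}$ (where $f_s(u)\approx f_s(x)$ and the $g_s$-mass concentrates), and the remainder $A_3$, where $\|u\|$ and $\|x-u\|$ are both $\ge\delta r$. The uniform tail relations \eqref{eq:app_fg_tail} and envelopes \eqref{eq:app_env} apply because $a(s),\gamma(s)$ range in compact subsets of $(0,1]$ for $s\in[\varepsilon,T]$.

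For \eqref{eq:app_D_asymp}: on $A_1$, uniform regular variation of $g_s$ gives $g_s(x-u)=(1+O(\delta)+o(1))g_s(x)$. Hence $\int_{A_1}f_s(u)g_s(x-u)\,\mathrm du=(1+O(\delta)+o(1))g_s(x)\int_{A_1}f_s$, and $\int_{A_1}f_s\to1$ uniformly. Symmetrically, $\int_{A_2}=(1+O(\delta)+o(1))f_s(x)$. On $A_3$ the envelopes bound the integrand by $C(\delta r)^{-(d+\nu)}(1+\|x-u\|)^{-(d+\alpha)}$, and a second bound uses the other factor; this yields $\int_{A_3}\le C_\delta(r^{-(d+\nu)}r^{-\alpha}+r^{-(d+\alpha)}r^{-\nu})=o(f_s(x)+g_s(x))$. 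Letting $r\to\infty$ and then $\delta\downarrow0$ gives \eqref{eq:app_D_asymp}.

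For $\mathsf H_s$ we use the same splitting with the extra weight $u$. On $A_2$, $u=x+O(\delta r)$, so this region contributes $(re+O(\delta r))f_s(x)(1+o(1))$, the leading term $re\,f_s(re)$. On $A_1$ the contribution is $g_s(x)(1+O(\delta))\int_{A_1}u f_s(u)\,\mathrm du$. Here the weighted integral is bounded uniformly by the first moment, since $\nu>1$, and tends to $\int uf_s=a(s)\mathbb E X_0$. So this piece is $O(g_s(x))$. On $A_3$, with $\|u\|$ large, the bound $\|u\|f_s(u)\le C(1+\|u\|)^{-(d+\nu-1)}$ reproduces the estimate above with $\nu$ replaced by $\nu-1>0$. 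This gives $O(r^{-(d+\nu-1)-\alpha}+r^{-(d+\alpha)-(\nu-1)})=O(r^{-(d+\alpha+\nu-1)})$, which is $o(rf_s)$ and, for $\nu\ge1$, $O(g_s)$. Now compare $rf_s(re)\asymp r^{-(d+\nu-1)}$ with $g_s\asymp r^{-(d+\alpha)}$. If $\nu<1+\alpha$, then $g_s=o(rf_s)$, which gives \eqref{eq:app_H_asymp}. If $\nu\ge1+\alpha$, then $rf_s=O(g_s)$, and the $A_2$ term is also $O(g_s)$, which gives \eqref{eq:app_H_bound}.

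The main obstacle is uniformity in both $s$ and $e$, specifically the approximation $g_s(x-u)\approx g_s(x)$ on $A_1$ and $f_s(u)\approx f_s(x)$ on $A_2$. Uniform convergence in \eqref{eq:app_fg_tail} only controls points at large radius, so we need that the ratio $f_s(y)/f_s(x)$ is close to $1$ when $\|y-x\|\le\delta r$. We will get this by writing both values as $(c+o(1))\|\cdot\|^{-(d+\nu)}$ with the leading constant independent of direction, using that \eqref{eq:a5} is radial with a direction-independent constant. The ratio is then $(\|x\|/\|y\|)^{d+\nu}(1+o(1))=1+O(\delta)+o(1)$. The same argument applies to $\varphi_\alpha$. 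The dependence on $s$ enters only through the compact ranges of $a(s)$ and $\gamma(s)$, so it is absorbed into the $o(1)$ terms.
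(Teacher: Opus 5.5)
Your proposal is correct and follows the paper's proof: the same three-region split around the two centres $0$ and $x=re$, with the near-origin piece giving $g_s(x)$ for $\mathsf D_s$ (and $O(g_s(x))$ for $\mathsf H_s$ via the uniform first moment), the near-$x$ piece giving $f_s(x)$ (and $x f_s(x)$), and the off-centre piece of order $r^{-(d+\nu+\alpha)}$ (resp.\ $r^{1-d-\nu-\alpha}$), followed by the same comparison of $r f_s$ with $g_s$. The only difference is cosmetic: you shrink the balls to radius $\delta r$ and send $\delta\downarrow0$ using the direction-independent tail constants, whereas the paper keeps radius $r/2$ and uses uniform dominated convergence with bounded kernel ratios.
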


\begin{proof}
Fix $x=re$ and split the convolution into
$A=\{\|u\|\le r/2\}$, $B=\{\|x-u\|\le r/2\}$ and their complement
$C$; $A\cap B$ is Lebesgue-null.
Uniformly in the time parameter and direction, the tail equivalents and
envelopes in \eqref{eq:app_fg_tail}--\eqref{eq:app_env} imply
\[
 \int_A f_s(u)g_s(x-u)\,\mathrm{d} u\sim g_s(x),\qquad
 \int_B f_s(u)g_s(x-u)\,\mathrm{d} u\sim f_s(x).
\]
Indeed, on $A$ the ratio $g_s(x-u)/g_s(x)$ is uniformly bounded and
tends to one for bounded $u$; the common integrable envelope for $f_s$
gives uniform dominated convergence. The argument on $B$ is identical
after setting $v=x-u$. On $C$, scaling $u=rw$ gives the bound
$Cr^{-d-\nu-\alpha}$, proving \eqref{eq:app_D_asymp}.
For the weighted numerator the same argument on $B$ gives
$\int_Bu f_s(u)g_s(x-u)\,\mathrm{d} u=xf_s(x)+o(rf_s(x))$,
since $\|(x-v)/r\|\le3/2$ there.
The contribution from $A$ is $O(g_s(x))$ by the common first-moment
bound, and that from $C$ is $O(r^{1-d-\nu-\alpha})$.
For $\nu<1+\alpha$ these two terms are $o(rf_s(x))$; for
$\nu\ge1+\alpha$, all three terms are $O(g_s(x))$.
This proves \eqref{eq:app_H_asymp}--\eqref{eq:app_H_bound}, including
the boundary $\nu=1+\alpha$, with the required uniformity.
\end{proof}

\begin{proposition}[Spatial asymptotic regimes of the denoiser]\label{app:prop_denoiser}
Under Assumption~\ref{ass:heavy} with $\nu>1$, the following hold uniformly in $s\in[\varepsilon,T]$ and
$e\in\mathbb S^{d-1}$.
\begin{itemize}\itemsep2pt
\item[(i)] If $1<\nu<\alpha$, then $m_s(re)=a(s)^{-1}re+o(r)$.
\item[(ii)] If $\nu=\alpha$, then $m_s(re)=\varkappa_{\alpha,s}\,re+o(r)$ with
$\varkappa_{\alpha,s}=\dfrac{c_\pi a(s)^{\alpha-1}}{c_\pi
a(s)^{\alpha}+k_{d,\alpha}\gamma(s)^{\alpha}}$.
\item[(iii)] If $\alpha<\nu<1+\alpha$, then
$m_s(re)=\Bigl(\dfrac{c_\pi
a(s)^{\nu-1}}{k_{d,\alpha}\gamma(s)^{\alpha}}+o(1)\Bigr)r^{\alpha+1-\nu}e$;
in particular $m_s$ is unbounded, but sublinear since $\alpha+1-\nu\in(0,1)$.
\item[(iv)] If $\nu\ge1+\alpha$, then $\sup_{s\in[\varepsilon,T]}\sup_{x}\left\lVert m_s(x)\right\rVert<\infty$; the boundary $\nu=1+\alpha$ is included.
\end{itemize}
Consequently, setting
\begin{equation}\label{eq:app_varkappa}
\varkappa_{\nu,s}:=
\begin{cases}
a(s)^{-1}, & 1<\nu<\alpha,\\[1mm]
\dfrac{c_\pi a(s)^{\alpha-1}}{c_\pi a(s)^{\alpha}+k_{d,\alpha}\gamma(s)^{\alpha}}, & \nu=\alpha,\\[3mm]
0, & \nu>\alpha,
\end{cases}
\end{equation}
one has the unified decomposition $m_s(x)=\varkappa_{\nu,s}\,x+\mathsf r_s(x)$ with a
\emph{uniformly sublinear} remainder,
\begin{equation}\label{eq:app_sublin}
\lim_{R\to\infty}\ \sup_{s\in[\varepsilon,T]}\ \sup_{\left\lVert x\right\rVert\ge R}\frac{\left\lVert \mathsf r_s(x)\right\rVert}{\left\lVert x\right\rVert}=0 .
\end{equation}
Moreover $m_s$ is $C^1$ in $x$, locally Lipschitz uniformly on compact space--time
cylinders, and of uniform linear growth, $\sup_{s\in[\varepsilon,T]}\left\lVert
m_s(x)\right\rVert\le C(1+\left\lVert x\right\rVert)$.
\end{proposition}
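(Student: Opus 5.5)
Starting from the Bayes representation \eqref{eq:app_denoiser_ratio}, $a(s)m_s(x)=\mathsf H_s(x)/\mathsf D_s(x)$, I would read off each regime from the two-centre asymptotics of Lemma~\ref{app:lem_conv}. By \eqref{eq:app_fg_tail}, $f_s(re)\asymp r^{-(d+\nu)}$ and $g_s(re)\asymp r^{-(d+\alpha)}$, uniformly in $s\in[\varepsilon,T]$ and $e\in\mathbb S^{d-1}$. So the ratio $f_s/g_s$ tends to $+\infty$, to $c_\pi a^\alpha/(k_{d,\alpha}\gamma^\alpha)$, or to $0$ according as $\nu<\alpha$, $\nu=\alpha$, or $\nu>\alpha$.

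\emph{Regimes (i)--(iii).} Here \eqref{eq:app_H_asymp} gives $\mathsf H_s(re)=re\,f_s(re)(1+o(1))$ and \eqref{eq:app_D_asymp} gives $\mathsf D_s(re)=(f_s+g_s)(re)(1+o(1))$, so
\[
a(s)m_s(re)=re\,\frac{f_s(re)}{f_s(re)+g_s(re)}+o(r).
\]
In regime (i) the fraction tends to $1$, giving (i). In regime (ii) it tends to $c_\pi a^\alpha/(c_\pi a^\alpha+k_{d,\alpha}\gamma^\alpha)$; dividing by $a$ yields $\varkappa_{\alpha,s}$. In regime (iii) the fraction is $(f_s/g_s)(1+o(1))=\frac{c_\pi a^\nu}{k_{d,\alpha}\gamma^\alpha}r^{\alpha-\nu}(1+o(1))$, which gives the stated power $r^{\alpha+1-\nu}$ after dividing by $a$. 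One point needs care: the error term $o(rf_s)$ must be $o(r\,f_s/(f_s+g_s)\cdot(f_s+g_s))$. This holds because it is relative to $f_s$ and we divide by $\mathsf D_s\ge f_s(1+o(1))$.

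\emph{Regime (iv).} By \eqref{eq:app_H_bound}, $\|\mathsf H_s\|=O(g_s)$, and $\mathsf D_s\ge g_s(1+o(1))$, so $\|m_s(x)\|\le C/a(s)$ for $\|x\|\ge R_0$. This holds uniformly because $a(s)\ge a(T)>0$ on $[\varepsilon,T]$. On the compact ball $\{\|x\|\le R_0\}$ boundedness follows from continuity: $\mathsf D_s$ is jointly continuous and positive, hence bounded below on $[\varepsilon,T]\times\bar B(0,R_0)$, and $\|\mathsf H_s\|\le \sup g_s\cdot\int\|u\|f_s<\infty$.

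The unified decomposition \eqref{eq:app_sublin} then follows directly. In cases (i) and (ii), $\mathsf r_s=o(r)$ uniformly. In case (iii), $\mathsf r_s=m_s$ grows like $r^{\alpha+1-\nu}$ with exponent less than $1$. In case (iv), $\mathsf r_s$ is bounded. The uniformity in $s$ and $e$ is inherited from Lemma~\ref{app:lem_conv}; the coefficients $\varkappa_{\nu,s}$ and $c_\pi a^{\nu-1}/(k\gamma^\alpha)$ are continuous and bounded on $[\varepsilon,T]$.

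For the regularity claims, I would differentiate under the integral: $\nabla\mathsf D_s=f_s*\nabla g_s$ and $\nabla\mathsf H_s(x)=\int u f_s(u)\nabla g_s(x-u)\,du^{\top}$. Since $\|\nabla\varphi_\alpha\|\le C(1+\|\cdot\|)^{-d-\alpha}$ (Appendix~\ref{app:compact_decay}) and the first moment of $f_s$ is uniformly finite, dominated convergence makes both derivatives jointly continuous. Because $\mathsf D_s>0$, $m_s$ is $C^1$ with Jacobian bounded on compact space--time cylinders, so it is locally Lipschitz there. Linear growth combines the far-field asymptotics above (all of the form $O(r)$) with boundedness on compacts. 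I expect the main obstacle to be keeping every $o(\cdot)$ uniform in $s$ when passing through the ratio, especially at the boundary $\nu=\alpha$, where neither $f_s$ nor $g_s$ dominates and the limit constant depends on $s$. Lemma~\ref{app:lem_conv} already supplies this uniformity, and continuity of $a,\gamma$ on the compact interval closes it.
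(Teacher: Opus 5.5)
Your proposal is correct and follows essentially the same route as the paper: you read off (i)--(iv) from $m_s=a(s)^{-1}\mathsf H_s/\mathsf D_s$ via Lemma~\ref{app:lem_conv} and \eqref{eq:app_fg_tail}, and you obtain boundedness on compacts, the $C^1$ and local Lipschitz property, and linear growth from joint continuity of $m_s$ and $\nabla m_s$, which is the content of Lemma~\ref{app:lem_ct_measurable}. One small fix is needed in (iii): dividing the error $o(rf_s)$ by $\mathsf D_s\ge f_s(1+o(1))$ only yields $o(r)$, which is too crude there, so you must use the full equivalent $\mathsf D_s=(f_s+g_s)(1+o(1))$ from \eqref{eq:app_D_asymp} to get the relative error $o\bigl(rf_s/(f_s+g_s)\bigr)=o(r^{\alpha+1-\nu})$.
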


\begin{proof}
By \eqref{eq:app_denoiser_ratio}, $m_s=a(s)^{-1}\mathsf H_s/\mathsf D_s$.
In Lemma~\ref{app:lem_conv}, the denominator is asymptotic to $f_s$ if
$\nu<\alpha$, to $f_s+g_s$ if $\nu=\alpha$, and to $g_s$ if
$\nu>\alpha$. Substitution of \eqref{eq:app_fg_tail} gives (i)--(iii);
for $\nu\ge1+\alpha$ the numerator bound gives (iv) outside a fixed ball.
Lemma~\ref{app:lem_ct_measurable} gives joint continuity of $m_s$ and
$\nabla m_s$, hence boundedness on compact cylinders and the remaining
part of (iv). The uniform equivalents imply \eqref{eq:app_sublin};
together with continuity on the complementary compact set, they also
give uniform linear growth. Joint continuity of the Jacobian gives the
stated local Lipschitz property.
\end{proof}

\begin{lemma}[Covariance identity]\label{app:lem_covid}
Under Assumption~\ref{ass:heavy}, for all $x$ and $t>0$, $m_t\in C^1$ and $\nabla
m_t(x)=-\frac1a\,\mathrm{Cov}_{\nu_x}\bigl(U,\,s(U)\bigr)$.
\end{lemma}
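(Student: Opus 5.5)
We write $m_t(x)=a^{-1}\,\mathsf H_t(x)/\mathsf D_t(x)$ as in \eqref{eq:app_denoiser_ratio}, but change variables to $u=(x-ax_0)/\gamma$. Then $a x_0 = x-\gamma u$, and the density of $X_t$ at $x$ is
\[
\mathsf D_t(x)=\int \pi(x_0)\,\gamma^{-d}\varphi_\alpha\bigl((x-ax_0)/\gamma\bigr)\,\mathrm{d}x_0 .
\]
Under $\mu_x$, the variable $U$ has law $\nu_x$, and $m_t(x)=a^{-1}\bigl(x-\gamma\,\mathbb E_{\nu_x}U\bigr)$. The plan is to differentiate the numerator $N(x):=\int x_0\,\pi(x_0)\varphi_\alpha((x-ax_0)/\gamma)\,\mathrm{d}x_0$ and the denominator $D(x):=\int \pi(x_0)\varphi_\alpha((x-ax_0)/\gamma)\,\mathrm{d}x_0$ under the integral sign, and then apply the quotient rule. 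We have $\nabla_x\varphi_\alpha((x-ax_0)/\gamma)=\gamma^{-1}\varphi_\alpha(u)\,s(u)^{\top}$. This gives
\[
\nabla m_t(x)=\gamma^{-1}\bigl(\mathbb E_{\mu_x}[X_0 s(U)^\top]-\mathbb E_{\mu_x}[X_0]\,\mathbb E_{\mu_x}[s(U)]^\top\bigr)=\gamma^{-1}\mathrm{Cov}_{\mu_x}(X_0,s(U)),
\]
which is exactly the identity \eqref{eq:app_ct_covid} obtained earlier under compact support. Substituting $X_0=(x-\gamma U)/a$, the deterministic part $x/a$ drops out of the covariance, and we obtain $\nabla m_t(x)=-a^{-1}\mathrm{Cov}_{\nu_x}(U,s(U))$.

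The real content is justifying differentiation under the integral and the finiteness of these moments without compact support. For the $x$-derivatives, we use $|\nabla\varphi_\alpha(u)|\le C(1+\|u\|)^{-d-\alpha}$ from \citet{KaletaSztonyk2015}, recorded in Appendix~\ref{app:compact_decay}. This is uniform in $u$, so for $x$ in a compact neighbourhood the integrands of $\nabla D$ and $\nabla N$ are dominated by $C\pi(x_0)$ and $C\|x_0\|\pi(x_0)$. These are integrable because $M_1<\infty$ for $\nu>1$ under Assumption~\ref{ass:heavy}. Dominated convergence then yields differentiability and continuity of $\nabla D$ and $\nabla N$, as in Lemma~\ref{app:lem_ct_measurable}. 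Since $D>0$ ($\pi$ is positive and $\varphi_\alpha>0$), $m_t\in C^1$.

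Finally, we check that the covariance under $\nu_x$ is well defined. Because $\Lambda_1=\sup\|s\|<\infty$, $s(U)$ is bounded. It remains to show $\mathbb E_{\nu_x}\|U\|<\infty$, which follows from $\|U\|\le(\|x\|+a\|X_0\|)/\gamma$ together with the finite conditional first moment of $X_0$ already shown in the proof of Proposition~\ref{prop:tweedie}. The product $U s(U)^\top$ is then integrable, so all expectations in the quotient rule are finite and the rearrangement is legitimate.

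I expect no serious obstacle. The only delicate point is the uniform domination of $\nabla\varphi_\alpha$ and of the weight $\|x_0\|$ simultaneously, and the kernel derivative bound and the first moment handle both.
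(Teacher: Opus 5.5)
Your proposal is correct and follows essentially the same route as the paper. Both arguments differentiate the conditional-mean quotient under the integral sign, with domination coming from the boundedness of $\nabla\varphi_\alpha$ and the finite first moment; this recovers $\gamma^{-1}\mathrm{Cov}_{\mu_x}(X_0,s(U))$ as in \eqref{eq:app_ct_covid}, and the substitution $X_0=(x-\gamma U)/a$ finishes the proof. You simply write out the domination, $C^1$-continuity and integrability checks that the paper compresses into a single line.
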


\begin{proof}
The differentiation in \eqref{eq:app_ct_covid} remains valid under the
finite first moment, by boundedness of $\nabla \varphi_\alpha$.
The affine relation $X_0=(x-\gamma U)/a$ therefore gives
$\nabla m_t(x)=-a^{-1}\mathrm{Cov}_{\nu_x}(U,s(U))$.
\end{proof}

\begin{proposition}[Uniform conditional moment]\label{app:prop_condmoment}
Under Assumption~\ref{ass:heavy}, for $1\le p\le\nu-\alpha$ and $0<\gamma_0\le\gamma_1$, $M^{(p)}_*:=\sup\bigl\{\mathbb{E}_{\mu_x}\left\lVert X_0\right\rVert^p:\ x\in\mathbb{R}^d,\ a\in(0,1),\ \gamma\in[\gamma_0,\gamma_1]\bigr\}<\infty$. \end{proposition}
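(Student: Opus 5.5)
Throughout, fix $x$, $a\in(0,1)$ and $\gamma\in[\gamma_0,\gamma_1]$. The conditional law has density proportional to $\pi(x_0)\varphi_\alpha((x-ax_0)/\gamma)$. We must show that
\[
\mathbb{E}_{\mu_x}\|X_0\|^p=\frac{\int\|x_0\|^p\pi(x_0)\varphi_\alpha((x-ax_0)/\gamma)\,\mathrm dx_0}{\int\pi(x_0)\varphi_\alpha((x-ax_0)/\gamma)\,\mathrm dx_0}
\]
is bounded uniformly in $(x,a,\gamma)$. The envelope of Appendix~\ref{app:compact_decay} gives
\[
\varphi_\alpha((x-ax_0)/\gamma)\asymp(1+\|x-ax_0\|/\gamma)^{-(d+\alpha)},
\]
with constants uniform for $\gamma\in[\gamma_0,\gamma_1]$. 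Since $\gamma$ is confined to a compact set, we may replace the weight by $w(y):=(1+\|x-y\|)^{-(d+\alpha)}$ with $y=ax_0$, at the cost of a multiplicative constant.

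\textbf{Step 1: lower bound on the denominator.} I would distinguish two regimes.
\begin{itemize}
\item If $\|x\|\le 1$ (or, more generally, bounded), restrict the integral to $\|x_0\|\le1$. There $\|x-ax_0\|\le 2$, so the denominator is at least $c\,\pi(\bar B(0,1))>0$, using positivity of the density.
\item For large $\|x\|$, use the same restriction together with $\|x-ax_0\|\le\|x\|+1$ to get the lower bound $c(1+\|x\|)^{-(d+\alpha)}$.
\end{itemize}
Hence in all cases the denominator is $\gtrsim (1+\|x\|)^{-(d+\alpha)}$.

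\textbf{Step 2: upper bound on the numerator.} Split the domain of $x_0$ into three regions.
\begin{itemize}
\item Region $A=\{\|ax_0-x\|\ge\|x\|/2\}$. Here $w\lesssim(1+\|x\|)^{-(d+\alpha)}$, and the contribution is at most $(1+\|x\|)^{-(d+\alpha)}\,\mathbb E\|X_0\|^p$. This is finite since $p\le\nu-\alpha<\nu$. The ratio with the denominator is therefore bounded.
\item Region $B=\{\|ax_0-x\|<\|x\|/2\}$. Here $\|x_0\|\asymp\|x\|/a$, which is large once $\|x\|$ exceeds a threshold; small $\|x\|$ is covered by the bounded case. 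Using the density envelope \eqref{eq:a5prime} and the substitution $y=ax_0$, the contribution is bounded by
\[
\int_B (\|x\|/a)^{p}\,c_{\pi,+}(\|x\|/a)^{-(d+\nu)}\,a^{-d}w(y)\,\mathrm dy \lesssim a^{\nu-p}\|x\|^{p-d-\nu},
\]
since $\int w\,\mathrm dy$ is finite and uniform in $x$.
\end{itemize}
Dividing the region-$B$ bound by the denominator gives $a^{\nu-p}\|x\|^{p-\nu}(1+\|x\|)^{\alpha}$. This is bounded because $a\le1$, $\nu\ge p$, and $p+\alpha-\nu\le0$. This last inequality is exactly the hypothesis $p\le\nu-\alpha$.

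\textbf{Main obstacle: uniformity in small $a$ and intermediate $\|x\|$.} The threshold beyond which \eqref{eq:a5prime} applies is $\|x_0\|\ge R_\pi$. In region $B$ we have $\|x_0\|\ge\|x\|/(2a)$, which is $\ge R_\pi$ as soon as $\|x\|\ge2R_\pi$, regardless of $a$. For $\|x\|\le 2R_\pi$, the denominator is bounded below by a constant and the numerator is bounded by $C\,\mathbb E\|X_0\|^p$. Both cases are therefore uniform in $a\in(0,1)$.

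I would double-check that the factor $a^{-d}$ from the change of variables is correctly compensated by $(\|x\|/a)^{-(d+\nu)}$. The combination yields $a^{\nu-p}\le1$, as claimed. I would also check that $\int w$ does not depend on $a$, which holds because $w$ is integrable in $y$ uniformly in $x$.
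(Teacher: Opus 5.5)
Your proposal is correct and follows essentially the same route as the paper. You lower-bound the normalizer by $c(1+\|x\|)^{-(d+\alpha)}$ via restriction to the unit ball, bound the numerator by $C\,\mathbb{E}\|X_0\|^p$ for bounded $x$, and for large $\|x\|$ split into a region where the stable kernel is $\mathcal O(\|x\|^{-(d+\alpha)})$ and a region where $\|x_0\|\gtrsim\|x\|/a$, yielding $C a^{\nu-p}\|x\|^{p-d-\nu}$ and hence the bounded ratio $\|x\|^{p+\alpha-\nu}$; the only cosmetic difference is that you split at $\|x-ax_0\|=\|x\|/2$ rather than at $\|ax_0\|=\|x\|/2$, which is immaterial.
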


\begin{proof}
Let $K=d+\alpha$ and omit the common likelihood factor $\gamma^{-d}$.
Uniformly in the stated parameters, the conditional normalizer satisfies
\[
 Z(x)=\int\pi(y)\varphi_\alpha((x-ay)/\gamma)\,\mathrm{d} y
       \ge c(1+\|x\|)^{-K},
\]
by restricting to $\|y\|\le1$. The numerator is at most $C M_p$ on
bounded sets of $x$, where $M_p=\mathbb{E}_\pi\|X_0\|^p<\infty$.
For $r=\|x\|\ge\max(2R_\pi,2)$, split it at $\|ay\|=r/2$.
The near contribution is at most $Cr^{-K}M_p$.
On the far region the power-law envelope gives
$\|y\|^p\pi(y)\le C(r/a)^{p-d-\nu}$; consequently
\begin{equation}\label{eq:app_pm_far}
 N_{p,\mathrm{far}}(x)
 \le C(r/a)^{p-d-\nu}
      \int_{\mathbb{R}^d}\varphi_\alpha((x-ay)/\gamma)\,\mathrm{d} y
 \le C a^{\nu-p}r^{p-d-\nu}.
\end{equation}
Dividing by $Z(x)$ bounds the conditional moment by
$C(M_p+a^{\nu-p}r^{p+\alpha-\nu})$, uniformly bounded because
$p+\alpha-\nu\le0$. This includes $p=\nu-\alpha$.
\end{proof}

\begin{proof}[\normalfont\bfseries Proof of Proposition~\ref{prop:heavy_denoiser}]
\par\smallskip\noindent\textit{Step 1 (spatial lower bounds).}\,
For $\nu<\alpha$, Proposition~\ref{app:prop_denoiser} gives
\[
 C_t\ge\frac{\|m_t(re)-m_t(0)\|}{r}\longrightarrow a(t)^{-1},
\]
which proves (i) without differentiating a spatial equivalent.
The same argument gives $C_t\ge\varkappa_{\alpha,t}$ at $\nu=\alpha$.

\par\smallskip\noindent\textit{Step 2 (lack of global regularity when $\alpha\le\nu<1+\alpha$).}\,
Fix such a $\nu$, choose
\[
\max\{0,2-\nu\}<\vartheta<1+\alpha-\nu,
\qquad 0<\lambda<\tfrac12,
\]
and, in dimension one, let
\begin{equation}\label{eq:app_osc_target}
\pi(y)=c(1+y^2)^{-(1+\nu)/2}
\left[1+\lambda(1+y^2)^{-\vartheta/2}\cos y\right],
\end{equation}
where $c>0$ normalizes the density. This density is smooth, positive, even, and satisfies Assumption~\ref{ass:heavy} with $c_\pi=c$.

We first record an elementary convolution estimate. Let $g$ be a one-dimensional symmetric stable density of index $\alpha$, and let $h$ satisfy
\[
\int h=0,\qquad \int |u h(u)|\,\mathrm d u<\infty,
\qquad |h(u)|\le C(1+|u|)^{-k},\qquad 2<k<2+\alpha.
\]
If $h(x)=b x^{-k}\sin(\omega x)+o(x^{-k})$ as $x\to+\infty$, with $\omega>0$, then
\begin{equation}\label{eq:app_osc_convolution}
(h*g)(x)=b\widehat g(\omega)x^{-k}\sin(\omega x)+o(x^{-k}).
\end{equation}
Indeed, on $|u|\le x/2$, subtract $g(x)$ and use $|g'(z)|\le C(1+|z|)^{-2-\alpha}$, established for the stable kernel in Appendix~\ref{app:compact_decay}. The resulting integral is $\mathcal O(x^{-2-\alpha})$; the subtracted term is $\mathcal O(x^{-k-\alpha})$ because $\int h=0$. On $|x-u|\le x/2$, set $v=x-u$. After multiplication by $x^k$, the integrand is bounded by $C g(v)$, and the expansion of $h(x-v)$ gives
$b\int\sin(\omega(x-v))g(v)\,\mathrm d v
=b\widehat g(\omega)\sin(\omega x)$,
with an error tending to zero by dominated convergence. On the remaining region $|u|,|x-u|>x/2$, the two envelopes give $\mathcal O(x^{-k-\alpha})$. These bounds prove \eqref{eq:app_osc_convolution}.

Now fix any $t>0$, abbreviate $a=a(t)$, $\gamma=\gamma(t)$, and use $f=f_t$, $g=g_t$, $\mathsf D=\mathsf D_t$, $\mathsf H=\mathsf H_t$ from \eqref{eq:app_DH}. Set
\[
A=ca^\nu,\qquad B=c\lambda a^{\nu+\vartheta},
\qquad G=k_{1,\alpha}\gamma^\alpha,
\qquad \omega=a^{-1},\qquad
\phi=\widehat g(\omega)=e^{-\gamma^\alpha\omega^\alpha}>0.
\]
Direct differentiation of \eqref{eq:app_osc_target} after rescaling gives
\[
f'(x)=-\omega B x^{-1-\nu-\vartheta}\sin(\omega x)
       +\mathcal O(x^{-2-\nu}),\qquad
(xf(x))'=-\omega B x^{-\nu-\vartheta}\sin(\omega x)
       +\mathcal O(x^{-1-\nu}).
\]
Both derivatives have integral zero and a finite absolute first moment, since $\nu+\vartheta>2$. Moreover, $\vartheta<1$ and $\nu+\vartheta<1+\alpha$, so \eqref{eq:app_osc_convolution} applies with $k=1+\nu+\vartheta$ and $k=\nu+\vartheta$, respectively. Differentiating the convolutions therefore yields
\begin{align*}
\mathsf D'(x)&=-\omega B\phi\,x^{-1-\nu-\vartheta}\sin(\omega x)
               +o(x^{-1-\nu-\vartheta}),\\
\mathsf H'(x)&=-\omega B\phi\,x^{-\nu-\vartheta}\sin(\omega x)
               +o(x^{-\nu-\vartheta}).
\end{align*}
Meanwhile, Lemma~\ref{app:lem_conv} gives
\[
\mathsf H(x)\sim A x^{-\nu},\qquad
\mathsf D(x)\sim
\begin{cases}
(A+G)x^{-1-\alpha},&\nu=\alpha,\\
Gx^{-1-\alpha},&\nu>\alpha.
\end{cases}
\]
Substitution into $m_t'=a^{-1}(\mathsf H'\mathsf D-\mathsf H\mathsf D')/\mathsf D^2$ gives
\begin{equation*}
m_t'(x)=-K_t x^{1+\alpha-\nu-\vartheta}\sin(\omega x)
         +o(x^{1+\alpha-\nu-\vartheta}),
\qquad
K_t=\frac{\omega B\phi}{a}
\begin{cases}
G/(A+G)^2,&\nu=\alpha,\\
1/G,&\nu>\alpha.
\end{cases}
\end{equation*}
Since $K_t>0$ and $1+\alpha-\nu-\vartheta>0$, evaluating along $\omega x=\pi/2+2\pi n$ proves $C_t=\infty$. The density \eqref{eq:app_osc_target} is independent of $t$, so this holds for every $t>0$, proving (ii).

This failure is not forced by the matched index itself: if $X_0\sim\mathrm S\alpha\mathrm S_1(\sigma)$, the fractional Tweedie formula gives
\[
m_t(x)=\frac{\sigma^\alpha a(t)^{\alpha-1}}
{\sigma^\alpha a(t)^\alpha+\gamma(t)^\alpha}\,x,
\qquad
C_t=\frac{\sigma^\alpha a(t)^{\alpha-1}}
{\sigma^\alpha a(t)^\alpha+\gamma(t)^\alpha}\longrightarrow0.
\]

\par\smallskip\noindent\textit{Step 3 (uniform decay for $\nu\ge1+\alpha$).}\,
Fix $\varepsilon>0$. Proposition~\ref{app:prop_condmoment} at $p=1$
gives $\sup_{t\ge\varepsilon,x}\mathbb{E}_{\mu_x}\|X_0\|\le M_*<\infty$.
Also $M_2<\infty$, since $\nu\ge1+\alpha>2$.
With $a=a(t)$, $\gamma=\gamma(t)$ and conditional density $w_x$, the
covariance identity gives
\begin{equation}\label{eq:app_cov_split}
 \nabla m_t(x)=\gamma^{-1}\int(y-m_t(x))
 [s((x-ay)/\gamma)-s(x/\gamma)]^\top w_x(y)\,\mathrm{d} y.
\end{equation}
For bounded $x$, and also on $\|ay\|\le\|x\|/2$ when $\|x\|$ is large,
the stable envelopes imply $w_x(y)\le C_\varepsilon\pi(y)$.
The Lipschitz bound on $s$ therefore bounds this contribution by
$C_\varepsilon a(M_2+M_*M_1)$.
On the remaining region, $r=\|x\|\ge\max(2R_\pi,2)$ and $\|ay\|>r/2$.
Equation~\eqref{eq:app_pm_far} with $p=1$, divided by the normalizer,
gives
\[
 \int_{\|ay\|>r/2}\|y\|w_x(y)\,\mathrm{d} y
 \le C_\varepsilon a^{\nu-1}r^{1+\alpha-\nu}.
\]
Since $\|y\|>1$ there, boundedness of $s$ bounds the remaining
contribution to \eqref{eq:app_cov_split} by the same expression up to a
constant. It is at most $C_\varepsilon a$ because $\nu-1>1$ and
$1+\alpha-\nu\le0$. Taking the supremum over $x$ proves (iii), including
$\nu=1+\alpha$.
\end{proof}

\begin{proof}[\normalfont\bfseries Proof of Proposition~\ref{prop:brown_ct_divergence}]
The Gaussian conditional-mean derivative identity
\citep[Proposition~1]{DytsoPoorShamai2021}, applied to $aX_0+\gamma Z$,
gives
$\nabla m_t(x)=(a/\gamma^2)\mathrm{Cov}(X_0\mid X_t=x)$.
The conditional second moments are finite by Gaussian damping.
For compact support, the covariance operator norm is at most $R^2$,
giving $C_t\le aR^2/\gamma^2\to0$.
Under Assumption~\ref{ass:heavy}, set $U=(aX_0-x)/\gamma$.
Its conditional density is proportional to
\[
 \rho_x(u)\varphi(u),\qquad
 \rho_x(u)=\frac{\pi((x+\gamma u)/a)}{\pi(x/a)},
\]
where $\varphi$ is the standard Gaussian density.
As $\|x\|\to\infty$, $\rho_x(u)\to1$ for bounded $u$.
On $\|u\|\le\|x\|/(2\gamma)$ it is uniformly bounded by the power-law
envelopes; on the complement it is at most $C\|x\|^{d+\nu}$.
Dominated convergence on the first region and Gaussian tail decay on
the second therefore give, for $w=1,u,uu^\top$,
$\int w(u)\rho_x(u)\varphi(u)\,\mathrm{d} u
\to\int w(u)\varphi(u)\,\mathrm{d} u$.
Thus $\mathrm{Cov}(U\mid X_t=x)\to I_d$ and
$\nabla m_t(x)\to a^{-1}I_d$.
It follows that $C_t\ge a(t)^{-1}\to\infty$ and
$C_t^{(2)}\le0$. All these spatial limits are uniform on compact
positive-time intervals. Gaussian domination also gives joint
continuity of the covariance there, hence its global spatial boundedness
uniformly on each such interval.
\end{proof}

\subsection{Proof of Corollary~\ref{cor:mixing}: Stable mixing}

\begin{proof}[\normalfont\bfseries Proof of Corollary~\ref{cor:mixing}]
Since $\nu\ge1+\alpha>\alpha$, both initial laws have finite $p$-th
moments. Proposition~\ref{prop:heavy_denoiser} gives
$C_t\le c_\varepsilon a(t)$ for $t\ge\varepsilon$, and hence
\[
 \mathfrak D_\varepsilon^{\mathrm{mix}}
 \le\frac{(1+\eta)c_\varepsilon}{\alpha\gamma(\varepsilon)^\alpha}
       \int_\varepsilon^\infty\beta(t)a(t)^2\,\mathrm{d} t
 =\frac{(1+\eta)c_\varepsilon a(\varepsilon)^2}
       {2\gamma(\varepsilon)^\alpha}<\infty,
\]
using $\mathrm d(a^2)=-(2/\alpha)\beta a^2\,\mathrm{d} t$.
Equation~\eqref{eq:app_mixing_optimal} proves the claim with
$C=e^{\mathfrak D_\varepsilon^{\mathrm{mix}}}$, independent of $T$.
\end{proof}

\paragraph{Brownian synchronous coupling.}
The same argument for the VP SDE produces
$\Phi^{(2)}(T):=\int_\varepsilon^T \beta(t)(1-a_tC_t)/\gamma_t^2\,\mathrm{d}t$.
Proposition~\ref{prop:brown_ct_divergence} gives $a_tC_t\ge1$, so
$\Phi^{(2)}(T)\le0$ and $e^{-\Phi^{(2)}(T)}\ge1$. Thus this synchronous bound is
non-contractive; it does not imply that Brownian mixing fails.

\subsection{Proof of Theorem~\ref{thm:gauss_init}: Brownian initialization error}\label{app:gauss_init}

We prove the exponential $W_p$ convergence of the Brownian model
(Theorem~\ref{thm:gauss_init}), for the \emph{reverse SDE}, so that the object compared with
the stable sampler of Corollary~\ref{cor:mixing} is of the same nature. We work in the
constant-schedule VP model $\,\mathrm{d} X_t=-\frac{\beta_0}{2}X_t\,\mathrm{d}
t+\sqrt{\beta_0}\,\mathrm{d} W_t$, with $a_t=e^{-\beta_0t/2}$, $\gamma_t^2=1-a_t^2$ and
stationary law $\pi_\infty^{(2)}=\mathcal N(0,I_d)$; the general schedule follows by a time
change. The standing hypotheses are those of Theorem~\ref{thm:gauss_init}:
\begin{equation}\label{eq:app_HB}
\text{(H)}\qquad \pi_{\mathrm{data}}\ \text{satisfies Assumption~\ref{ass:heavy} with }\nu>2 .
\end{equation}
In particular $M_2:=\mathbb{E}\left\lVert X_0\right\rVert^2<\infty$.

The sampler analyzed is the Brownian reverse SDE
\begin{equation}\label{eq:app_revvp}
\,\mathrm{d}\overleftarrow Y_\tau=\Bigl[\tfrac{\bar\beta(\tau)}{2}\overleftarrow Y_\tau+\bar\beta(\tau)\nabla\log \pi_{T-\tau}(\overleftarrow Y_\tau)\Bigr]\,\mathrm{d}\tau+\sqrt{\bar\beta(\tau)}\,\,\mathrm{d}\overleftarrow W_\tau ,
\end{equation}
run over $\tau\in[0,T-\varepsilon]$. Lemmas~\ref{app:lem_revwp} and~\ref{app:lem_revkernel}
show that \eqref{eq:app_revvp} is well posed from any initial law and that its transition
kernel $\mathsf K_{T,\varepsilon}$ is the forward conditional kernel,
\begin{equation}\label{eq:app_revkernel}
\mathsf K_{T,\varepsilon}(x,\cdot)=\mathrm{Law}\bigl(X_\varepsilon\mid X_T=x\bigr),\qquad\text{so that}\qquad \pi_T\mathsf K_{T,\varepsilon}=\pi_\varepsilon .
\end{equation}
The generated law is then $\mu_{T,\varepsilon}^{(2)}:=\pi_\infty^{(2)}\mathsf K_{T,\varepsilon}$; to
lighten notation, we abbreviate it as $\mu_T$ throughout this subsection and write
$\mathcal{E}_1=W_p(\mu_T,\pi_\varepsilon)$. Everything else follows from three ingredients:
a total-variation bound of order $a_T$ for $\left\lVert
\pi_T-\pi_\infty^{(2)}\right\rVert_{\mathrm{TV}}$, a uniform moment bound for $\mu_T$
obtained from the likelihood ratio $\pi_\infty^{(2)}/\pi_T$, and an interpolation between
total variation and $W_p$.

\begin{lemma}[Score regularity and well-posedness]\label{app:lem_revwp}
Under Assumption~\ref{ass:heavy} and for $\varepsilon>0$, write
$\Sigma_t(y):=\mathrm{Cov}_{\mu_y}(X_0)$ for the conditional covariance entering the
Jacobian of the denoiser $m_t$. Then $\pi_t\in C^\infty$, $\pi_t>0$, and
\begin{equation*}
\nabla\log \pi_t(y)=\frac{a_t\,m_t(y)-y}{\gamma_t^2},\qquad
\nabla^2\log \pi_t(y)=-\frac{1}{\gamma_t^2}I_d+\frac{a_t^2}{\gamma_t^4}\Sigma_t(y),
\end{equation*}
with $\Lambda_{\varepsilon,T}:=\sup_{t\in[\varepsilon,T]}\sup_y\left\lVert \nabla^2\log
\pi_t(y)\right\rVert<\infty$. Consequently the drift of \eqref{eq:app_revvp} is globally
Lipschitz in $y$, uniformly in $\tau\in[0,T-\varepsilon]$, so \eqref{eq:app_revvp} has a
unique strong non-explosive solution from every initial law, and its transition kernel
$\mathsf K_{T,\varepsilon}$ is well defined.
\end{lemma}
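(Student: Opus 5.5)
The approach is to treat the Gaussian forward transition exactly as in the proof of Proposition~\ref{prop:brown_ct_divergence}, with all bounds made uniform in $t\in[\varepsilon,T]$. For $t>0$ the marginal $\pi_t=f_t*\mathcal N(0,\gamma_t^2 I_d)$ is a convolution of the density $f_t$ of $a_tX_0$ with a Gaussian. Hence $\pi_t$ is $C^\infty$ and strictly positive, and differentiation under the integral is justified by Gaussian damping together with $M_1<\infty$ (indeed $M_2<\infty$, since $\nu>2$).

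\textbf{Formulas.} Differentiating $\log\pi_t$ once gives the classical Tweedie identity $\nabla\log\pi_t(y)=(a_tm_t(y)-y)/\gamma_t^2$. Differentiating a second time, and applying the conditional-mean derivative identity of \citet{DytsoPoorShamai2021} already used above, $\nabla m_t=(a_t/\gamma_t^2)\Sigma_t$, yields the Hessian formula.

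\textbf{Uniform Hessian bound.} This is the main step. It is equivalent to $\sup_{t\in[\varepsilon,T],\,y}\|\Sigma_t(y)\|<\infty$, since $\gamma_t\ge\gamma_\varepsilon>0$ and $a_t\le1$ there. I would reuse the argument in the proof of Proposition~\ref{prop:brown_ct_divergence}. Write $U=(a_tX_0-y)/\gamma_t$, so that $\Sigma_t(y)=(\gamma_t^2/a_t^2)\,\mathrm{Cov}(U\mid X_t=y)$. Under the envelopes \eqref{eq:a5prime}, the tilt $\rho_y(u)=\pi((y+\gamma_tu)/a_t)/\pi(y/a_t)$ is bounded on $\|u\|\le\|y\|/(2\gamma_t)$. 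On the complement it is at most $C\|y\|^{d+\nu}$, and this is killed by the Gaussian tail $e^{-\|y\|^2/(8\gamma_t^2)}$. So the second conditional moment of $U$ is bounded for $\|y\|$ large, uniformly in $t$, because $a_t,\gamma_t$ range in compact subsets of $(0,1]$.

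On the bounded set of $y$, joint continuity of $(t,y)\mapsto\Sigma_t(y)$ on the compact cylinder $[\varepsilon,T]\times\bar B(0,R')$ gives boundedness. That continuity follows by dominated convergence, since the Gaussian kernels are bounded below on compacts and $M_2<\infty$. The delicate point is the factor $\gamma_t^2/a_t^2$: it is harmless because $a_t\ge a_T>0$ on the fixed horizon. Together these give $\Lambda_{\varepsilon,T}<\infty$.

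\textbf{Consequences.} The reverse drift $\tfrac{\bar\beta}{2}y+\bar\beta\nabla\log\pi_{T-\tau}(y)$ is globally Lipschitz with constant at most $\|\beta\|_{L^\infty([\varepsilon,T])}(\tfrac12+\Lambda_{\varepsilon,T})$. It is also of linear growth, because $\|\nabla\log\pi_t(0)\|\le a_tM_1/\gamma_t^2$; bounding $\|m_t(0)\|\le M_1$ uses the same rearrangement argument as Lemma~\ref{app:lem_mtzero}, since the Gaussian weight is radially decreasing. Measurability in $\tau$ follows from the joint continuity above. Standard Itô theory for additive noise with Lipschitz, linear-growth drift then gives a unique strong non-explosive solution from any initial law, independent of the Brownian motion. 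Consequently the transition kernel $\mathsf K_{T,\varepsilon}$ is well defined as the law of the solution at $\tau=T-\varepsilon$ started from a point, and it is measurable in that starting point by continuous dependence on initial data.
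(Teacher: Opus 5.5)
Your proposal is correct and follows essentially the same route as the paper: Tweedie and the Dytso--Poor--Shamai identity for the two formulas, then the tilt argument from the proof of Proposition~\ref{prop:brown_ct_divergence} together with joint continuity on compact cylinders for $\Lambda_{\varepsilon,T}<\infty$, and finally a Lipschitz, linear-growth drift with additive noise for strong well-posedness. The differences are minor. You bound $\nabla\log\pi_t(0)$ explicitly through $\|m_t(0)\|\le M_1$, where the paper simply uses continuity. Your domination step on compact cylinders invokes $M_2<\infty$ (i.e.\ $\nu>2$), which the statement does not assume but which you do not need: for $t\in[\varepsilon,T]$ and $\|y\|\le R'$, the kernel $\varphi_{\gamma_t}(y-a_tx_0)$ is dominated by $C\exp\bigl(-(a_T\|x_0\|-R')_+^2/2\bigr)$, and this Gaussian damping already makes $\|x_0\|^2\pi(x_0)$ integrable against it for every $\nu>1$.
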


\begin{proof}
Gaussian Tweedie calculus and
\citet[Proposition~1]{DytsoPoorShamai2021} give
\[
 \nabla\log\pi_t(y)=\frac{a_tm_t(y)-y}{\gamma_t^2},
 \qquad
 \nabla^2\log\pi_t(y)=-\gamma_t^{-2}I_d
       +a_t^2\gamma_t^{-4}\Sigma_t(y).
\]
The proof of Proposition~\ref{prop:brown_ct_divergence} bounds
$\Sigma_t$ uniformly on $[\varepsilon,T]\times\mathbb{R}^d$.
Joint continuity also bounds $\nabla\log\pi_t(0)$ on that interval.
The backward drift is therefore uniformly Lipschitz with linear growth;
subtracting the additive Gaussian integral gives strong existence,
pathwise uniqueness and non-explosion.
\end{proof}

\begin{lemma}[The reverse kernel is the forward conditional kernel]\label{app:lem_revkernel}
Under \eqref{eq:app_HB}, \eqref{eq:app_revkernel} holds for $\pi_T$-almost every
$x$; in particular $\pi_T\mathsf K_{T,\varepsilon}=\pi_\varepsilon$, and
$\mu_T=\pi_\infty^{(2)}\mathsf K_{T,\varepsilon}$ is the law of the output of
\eqref{eq:app_revvp} initialized at $\pi_\infty^{(2)}$.
\end{lemma}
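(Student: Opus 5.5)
We need to show that the Brownian reverse SDE \eqref{eq:app_revvp}, started at a point $x$, produces at time $T-\varepsilon$ the forward conditional law $\mathrm{Law}(X_\varepsilon\mid X_T=x)$. The plan rests on Doob $h$-transform and time-reversal theory, combined with the pathwise uniqueness from Lemma~\ref{app:lem_revwp}.

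\textbf{Step 1: the reversed process solves the reverse SDE.} Consider the forward VP process on $[\varepsilon,T]$, started from $\pi_\varepsilon$, and its time reversal $\hat X_\tau:=X_{T-\tau}$.
\begin{itemize}
\item Under \eqref{eq:app_HB}, the relative entropy of the path law with respect to a reference Brownian motion is finite. This uses $M_2<\infty$ and the smoothness and positivity of $\pi_t$ from Lemma~\ref{app:lem_revwp}.
\item The time-reversal theorem of \citet{CattiauxConfortiGentilLeonard} then shows that $\hat X$ is a weak solution of \eqref{eq:app_revvp} with initial law $\pi_T$.
\item As a shortcut, the integrability $\int_\varepsilon^T\mathbb E\|\nabla\log\pi_t(X_t)\|^2\,\mathrm dt<\infty$ also follows directly. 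Indeed, $\nabla\log\pi_t(y)=(a_tm_t(y)-y)/\gamma_t^2$, the denoiser $m_t$ has linear growth, and $M_2<\infty$.
\end{itemize}

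\textbf{Step 2: identify the conditional laws.} By Lemma~\ref{app:lem_revwp}, the drift is globally Lipschitz. Hence pathwise uniqueness holds and, by Yamada--Watanabe, so does uniqueness in law from every initial law. In particular, the solution started from $\delta_x$ has law $\mathsf K_{T,\varepsilon}(x,\cdot)$ at time $T-\varepsilon$, and this kernel is measurable in $x$.

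Next, disintegrate the law of $\hat X$ with respect to its initial value $\hat X_0=X_T$. The reversed path is a solution driven by a Brownian motion $\overleftarrow W$ that is independent of $\hat X_0$; this independence is part of the conclusion of the reversal theorem. Therefore, for $\pi_T$-a.e.\ $x$, the conditional law of $\hat X$ given $\hat X_0=x$ is a weak solution started at $x$. Uniqueness identifies it with the law of the solution from $\delta_x$. Evaluating at $\tau=T-\varepsilon$ gives \eqref{eq:app_revkernel}, and integrating against $\pi_T$ yields $\pi_T\mathsf K_{T,\varepsilon}=\pi_\varepsilon$.

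\textbf{Step 3: the generated law.} The claim $\mu_T=\pi_\infty^{(2)}\mathsf K_{T,\varepsilon}$ is just the Markov property of the strong solution started from an independent initial law $\pi_\infty^{(2)}$. One technical point: \eqref{eq:app_revkernel} is known only $\pi_T$-a.e., while $\mu_T$ is defined through the everywhere-defined kernel $\mathsf K_{T,\varepsilon}$. Since $\pi_\infty^{(2)}\ll\pi_T$ (both have positive densities), no ambiguity arises.

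\textbf{Main obstacle.} The delicate step is verifying the hypotheses of the time-reversal theorem, namely finite entropy and the independence of the reversed noise from $\hat X_0$. If the finite-entropy route proves cumbersome, an alternative is to use the classical Haussmann--Pardoux criterion: it requires the drift to be locally Lipschitz, which holds here, and $\int_\varepsilon^T\int_{\|y\|\le R}|\nabla\pi_t|^2/\pi_t<\infty$. The latter is immediate from the smoothness and strict positivity of $\pi_t$ on compacts, uniformly for $t\in[\varepsilon,T]$. This gives $\hat X$ as a diffusion with the reversed generator, and uniqueness of the martingale problem then finishes the argument as in Step 2.
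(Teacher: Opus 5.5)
Your proposal is correct and follows essentially the same route as the paper: finite path entropy, the time-reversal theorem of Cattiaux--Conforti--Gentil--L\'eonard to identify the reverse drift, uniqueness from the Lipschitz drift of Lemma~\ref{app:lem_revwp} to identify the kernel $\pi_T$-a.e., and positivity of $\pi_T$ to extend the identity $\pi_\infty^{(2)}$-a.e. The only minor difference is the reference measure: the paper uses the stationary Ornstein--Uhlenbeck law, so the path entropy reduces directly to $\mathrm{KL}(\pi_{\mathrm{data}}\,\|\,\pi_\infty^{(2)})$, whereas your Brownian reference needs a Girsanov computation whose finiteness rests on boundedness of $\pi_\varepsilon$ and $M_2<\infty$ rather than on smoothness and positivity of $\pi_t$.
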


\begin{proof}
Let $P$ be the forward Ornstein--Uhlenbeck path law and $R$ the same
process started from its Gaussian invariant law. Their conditional path
laws given $X_0$ agree, so
$H(P\mid R)=\mathrm{KL}(\pi_{\mathrm{data}}\mid\pi_\infty^{(2)})<\infty$,
by boundedness of $\pi$ and its finite second moment.
The stationary Gaussian reference satisfies Hypotheses~1.10 of
\citet[arXiv version~3]{CattiauxConfortiGentilLeonard};
their Theorem~1.12 identifies
the reverse drift as $\beta_0 y/2+\beta_0\nabla\log\pi_{T-\tau}(y)$.
Restricting to $[0,T-\varepsilon]$ and using
Lemma~\ref{app:lem_revwp} identifies the reverse transition kernel with
$\mathrm{Law}(X_\varepsilon\mid X_T=x)$, $\pi_T$-almost everywhere.
Since $\pi_T$ is positive, the identity also holds
$\pi_\infty^{(2)}$-almost everywhere. Propagating the two initial laws
gives the two asserted kernel identities.
\end{proof}

\begin{lemma}[TV bound]\label{app:lem_tvquad}
Under \eqref{eq:app_HB}, let
$K_0:=\mathrm{KL}(\pi_{\mathrm{data}}\,\|\,\pi_\infty^{(2)})<\infty$.
Then
\[
 \|\mu_T-\pi_\varepsilon\|_{\mathrm{TV}}
 \le\|\pi_T-\pi_\infty^{(2)}\|_{\mathrm{TV}}
 \le\sqrt{K_0/2}\,a_T.
\]
\end{lemma}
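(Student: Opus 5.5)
The plan has two steps: compare $\mu_T$ with $\pi_\varepsilon$ through the common Markov kernel, then bound the forward total-variation gap by Pinsker's inequality combined with exponential KL contraction of the Ornstein--Uhlenbeck semigroup.

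\emph{First inequality.} By Lemma~\ref{app:lem_revkernel}, $\mu_T=\pi_\infty^{(2)}\mathsf K_{T,\varepsilon}$ and $\pi_\varepsilon=\pi_T\mathsf K_{T,\varepsilon}$, with the same kernel in both cases. Total variation does not increase under a Markov kernel. Indeed, for any measurable $A$,
\[
|\mu_T(A)-\pi_\varepsilon(A)|=\Bigl|\int \mathsf K_{T,\varepsilon}(x,A)\,(\pi_\infty^{(2)}-\pi_T)(\mathrm dx)\Bigr|\le\|\pi_\infty^{(2)}-\pi_T\|_{\mathrm{TV}},
\]
since $0\le\mathsf K_{T,\varepsilon}(\cdot,A)\le1$. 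One point needs care: Lemma~\ref{app:lem_revkernel} identifies the kernel only $\pi_T$-a.e. Here I use the kernel of the well-posed SDE from Lemma~\ref{app:lem_revwp}, which is defined for every $x$. Both identities then hold for that single kernel, so nothing is lost.

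\emph{Second inequality.} Pinsker's inequality gives $\|\pi_T-\pi_\infty^{(2)}\|_{\mathrm{TV}}\le\sqrt{\mathrm{KL}(\pi_T\|\pi_\infty^{(2)})/2}$. It then suffices to show
\[
\mathrm{KL}(\pi_T\|\pi_\infty^{(2)})\le a_T^2K_0 .
\]
This is the classical exponential entropy decay for the OU semigroup towards $\mathcal N(0,I_d)$. The standard Gaussian satisfies a log-Sobolev inequality with constant $1$. With the generator normalization $\frac{\beta_0}{2}(\Delta-x\cdot\nabla)$, the de~Bruijn identity gives $\frac{\mathrm d}{\mathrm dt}\mathrm{KL}=-\frac{\beta_0}{2}I(\pi_t\|\pi_\infty^{(2)})$, where $I$ is the relative Fisher information. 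Log-Sobolev gives $I\ge2\,\mathrm{KL}$, so
\[
\frac{\mathrm d}{\mathrm dt}\mathrm{KL}(\pi_t\|\pi_\infty^{(2)})\le-\beta_0\,\mathrm{KL}(\pi_t\|\pi_\infty^{(2)}),
\]
hence $\mathrm{KL}(\pi_T\|\pi_\infty^{(2)})\le e^{-\beta_0T}K_0=a_T^2K_0$ \citep{BakryGentilLedoux2014}. If one prefers to avoid justifying the derivative computation for a heavy-tailed initial law, there is an alternative. The Mehler representation $X_T=a_TX_0+\gamma_TZ$ exhibits the OU semigroup as a Gaussian channel, and the strong data processing inequality for the Gaussian channel at the invariant law yields the same contraction factor $a_T^2$. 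A further route is the semigroup form of entropy decay in \citet[Thm.~5.2.1]{BakryGentilLedoux2014}, valid for any initial law with finite entropy. Taking square roots gives $\sqrt{K_0/2}\,a_T$.

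\emph{Finiteness of $K_0$ and main obstacle.} Finiteness of $K_0$ was already noted in the proof of Lemma~\ref{app:lem_revkernel}. We have $K_0=\int\pi\log\pi+\tfrac12 M_2+\tfrac d2\log(2\pi)$. The first term is bounded above because $\pi$ is bounded, and bounded below because $\pi$ has a finite second moment (entropy is at most the Gaussian entropy). The second term is finite since $M_2<\infty$ when $\nu>2$. The step I expect to require the most care is rigorously justifying the entropy-decay rate $e^{-\beta_0T}$ for a non-smooth, heavy-tailed initial law. I would handle it by citing the semigroup result for arbitrary finite-entropy data. If needed, I would instead apply the argument to $\pi_\delta$ for small $\delta>0$ and let $\delta\to0$, using lower semicontinuity of KL together with continuity $\mathrm{KL}(\pi_\delta\|\pi_\infty^{(2)})\to K_0$.
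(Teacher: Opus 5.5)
Your proposal is correct and follows essentially the same route as the paper. Both use contraction of total variation under the common reverse kernel $\mathsf K_{T,\varepsilon}$ from Lemma~\ref{app:lem_revkernel}, then Pinsker combined with Gaussian log-Sobolev entropy decay $\mathrm{KL}(\pi_T\|\pi_\infty^{(2)})\le a_T^2K_0$ in the VP time normalization, with $K_0\le\log\|\pi\|_\infty+M_2/2+(d/2)\log(2\pi)$ from boundedness of $\pi$ and $\nu>2$; your added care about the $\pi_T$-a.e.\ kernel identification and finite-entropy initial data matches what the paper covers via positivity of $\pi_T$ and \citet[Theorem~5.2.1]{BakryGentilLedoux2014}.
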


\begin{proof}
The first inequality is contraction of total variation under
$\mathsf K_{T,\varepsilon}$, by Lemma~\ref{app:lem_revkernel}.
Gaussian log-Sobolev entropy decay and Pinsker
\citep[Theorem~5.2.1, Eq.~(5.2.2), and Proposition~5.5.1]{BakryGentilLedoux2014}
give the second inequality, with the VP normalization
$\mathrm{KL}(\pi_T\,\|\,\pi_\infty^{(2)})\le a_T^2K_0$.
Here $K_0\le\log\|\pi\|_\infty+M_2/2+(d/2)\log(2\pi)<\infty$.
\end{proof}

\begin{lemma}[Density comparison]\label{app:lem_density}
If $M_2<\infty$, then $\pi_\infty^{(2)}(x)\le
K_Te^{\lambda_T\left\lVert x\right\rVert^2}\pi_T(x)$ with
$\lambda_T=\frac{a_T}{2(1-a_T)}\to0$ and
$K_T=\gamma_T^d\,e^{\lambda_T M_2}\to1$.
\end{lemma}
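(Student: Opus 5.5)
The plan is to bound $\pi_T$ from below pointwise by an explicit Gaussian, using the mixture representation $\pi_T(x)=\mathbb E\bigl[\varphi_{\gamma_T}(x-a_TX_0)\bigr]$. Here $\varphi_\gamma(z)=(2\pi\gamma^2)^{-d/2}e^{-\|z\|^2/(2\gamma^2)}$, and the representation comes from the VP analogue of Lemma~\ref{app:lem_strong}, $X_T\overset{\mathcal L}=a_TX_0+\gamma_TZ$. We then divide $\pi_\infty^{(2)}(x)=(2\pi)^{-d/2}e^{-\|x\|^2/2}$ by this lower bound. Abbreviate $a=a_T$ and $\gamma=\gamma_T$, so that $\gamma^2=1-a^2=(1-a)(1+a)$.

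\emph{Step 1 (splitting the quadratic).} For any $c>0$, Young's inequality gives $\|x-ay\|^2\le(1+c)\|x\|^2+(1+c^{-1})a^2\|y\|^2$. We choose $c=a$, which yields
\[
\|x-ay\|^2\le(1+a)\|x\|^2+(1+a)a\|y\|^2 .
\]
Dividing by $2\gamma^2=2(1-a)(1+a)$, the factor $1+a$ cancels and we get
\[
\varphi_\gamma(x-ay)\ge(2\pi\gamma^2)^{-d/2}\exp\Bigl(-\frac{\|x\|^2}{2(1-a)}\Bigr)\exp\bigl(-\lambda_T\|y\|^2\bigr),
\qquad \lambda_T=\frac{a}{2(1-a)} .
\]
The choice $c=a$ is exactly what makes the two exponents come out clean; this is the only step requiring any thought.

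\emph{Step 2 (averaging over $X_0$).} Take expectations in $y=X_0$. By Jensen applied to the convex map $u\mapsto e^{-\lambda_T u}$, we have $\mathbb E\bigl[e^{-\lambda_T\|X_0\|^2}\bigr]\ge e^{-\lambda_TM_2}$, and this is finite because $M_2<\infty$. Hence
\[
\pi_T(x)\ge(2\pi)^{-d/2}\gamma^{-d}e^{-\lambda_TM_2}\exp\Bigl(-\frac{\|x\|^2}{2(1-a)}\Bigr).
\]

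\emph{Step 3 (ratio).} Dividing, we obtain
\[
\frac{\pi_\infty^{(2)}(x)}{\pi_T(x)}\le\gamma^de^{\lambda_TM_2}\exp\Bigl(\|x\|^2\Bigl[\frac1{2(1-a)}-\frac12\Bigr]\Bigr)=K_Te^{\lambda_T\|x\|^2},
\]
since $\frac1{2(1-a)}-\frac12=\frac{a}{2(1-a)}=\lambda_T$. Finally, Assumption~\ref{ass:schedule} gives $a_T\to0$, so $\lambda_T\to0$ and $\gamma_T\to1$, and therefore $K_T\to1$.
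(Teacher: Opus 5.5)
Your proof is correct and follows the paper's argument: the same Young splitting $\|x-aX_0\|^2\le(1+a)(\|x\|^2+a\|X_0\|^2)$, Jensen's inequality over $X_0$, and division of the standard Gaussian density by the resulting lower bound on $\pi_T$. One small wording fix: $\mathbb E\bigl[e^{-\lambda_T\|X_0\|^2}\bigr]\le 1$ is finite automatically, and the role of $M_2<\infty$ is to make the Jensen lower bound $e^{-\lambda_T M_2}$ strictly positive.
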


\begin{proof}
For $a=a_T\in(0,1)$, Young's inequality gives
$\|x-aX_0\|^2\le(1+a)(\|x\|^2+a\|X_0\|^2)$.
Jensen's inequality and $\gamma_T^2=(1-a)(1+a)$ therefore yield
\[
 \pi_T(x)\ge(2\pi\gamma_T^2)^{-d/2}
 \exp\!\left[-\frac{\|x\|^2+aM_2}{2(1-a)}\right].
\]
Dividing the standard Gaussian density by this lower bound gives
$K_Te^{\lambda_T\|x\|^2}$ without a centering assumption.
\end{proof}

\begin{lemma}[Uniform moments of the generated law]\label{app:lem_unifmom}
Assume \eqref{eq:app_HB} and let $q\in[1,\nu)$. Choose $r>1$ with $qr<\nu$ and set
$r'=r/(r-1)$. Then for every $T$ with $r'a_T<1$,
\[
\mathbb{E}_{\mu_T}\left\lVert
Y\right\rVert^q\;\le\;\bigl(\mathbb{E}\left\lVert
X_\varepsilon\right\rVert^{qr}\bigr)^{1/r}\,
\Bigl[K_T^{\,r'-1}\bigl(1-2(r'-1)\lambda_T\bigr)^{-d/2}\Bigr]^{1/r'},
\]
with
$K_T,\lambda_T$ as in Lemma~\ref{app:lem_density}. In particular $\sup_{T\ge
T_q,\,0<\varepsilon<T}\mathbb{E}_{\mu_T}\left\lVert Y\right\rVert^q<\infty$ for a threshold $T_q$ determined
by, for instance, $r'a_{T_q}\le\frac12$, and the bound holds \emph{for every early-stopping time $0<\varepsilon<T$}, since
$a_\varepsilon,\gamma_\varepsilon\le1$ makes $\mathbb{E}\left\lVert
X_\varepsilon\right\rVert^{qr}$ bounded independently of $\varepsilon$.
\end{lemma}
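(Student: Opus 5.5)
The plan is a change of measure from $\mu_T=\pi_\infty^{(2)}\mathsf K_{T,\varepsilon}$ to $\pi_\varepsilon=\pi_T\mathsf K_{T,\varepsilon}$ at the level of the initial laws, followed by Hölder's inequality. By Lemma~\ref{app:lem_revkernel},
\[
\mathbb{E}_{\mu_T}\|Y\|^q=\int \pi_\infty^{(2)}(x)\,\Phi(x)\,\mathrm dx,\qquad
\Phi(x):=\int\|y\|^q\,\mathsf K_{T,\varepsilon}(x,\mathrm dy)=\mathbb{E}\bigl[\|X_\varepsilon\|^q\mid X_T=x\bigr],
\]
where $\Phi$ is defined $\pi_T$-a.e. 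Since $\pi_T>0$, it is also defined $\pi_\infty^{(2)}$-a.e. I will write $\pi_\infty^{(2)}=\ell\,\pi_T$ with likelihood ratio $\ell:=\pi_\infty^{(2)}/\pi_T$. Hölder's inequality with exponents $(r,r')$ under $\pi_T$ then gives
\[
\mathbb{E}_{\mu_T}\|Y\|^q=\mathbb{E}_{\pi_T}[\ell\,\Phi]\le \bigl(\mathbb{E}_{\pi_T}\Phi^r\bigr)^{1/r}\bigl(\mathbb{E}_{\pi_T}\ell^{r'}\bigr)^{1/r'}.
\]
For the first factor, conditional Jensen ($r\ge1$) gives $\Phi(x)^r\le\mathbb{E}[\|X_\varepsilon\|^{qr}\mid X_T=x]$. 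Integrating against $\pi_T$ yields $\mathbb{E}_{\pi_T}\Phi^r\le\mathbb{E}\|X_\varepsilon\|^{qr}$, which produces the factor $(\mathbb{E}\|X_\varepsilon\|^{qr})^{1/r}$.

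For the second factor, write $\mathbb{E}_{\pi_T}\ell^{r'}=\int \pi_\infty^{(2)}\,\ell^{r'-1}\,\mathrm dx$. Lemma~\ref{app:lem_density} gives $\ell(x)\le K_Te^{\lambda_T\|x\|^2}$, so
\[
\mathbb{E}_{\pi_T}\ell^{r'}\le K_T^{r'-1}\int (2\pi)^{-d/2}e^{-\|x\|^2/2}\,e^{(r'-1)\lambda_T\|x\|^2}\,\mathrm dx=K_T^{r'-1}\bigl(1-2(r'-1)\lambda_T\bigr)^{-d/2}.
\]
This Gaussian integral is finite exactly when $2(r'-1)\lambda_T<1$, i.e.\ $(r'-1)a_T<1-a_T$, which is equivalent to $r'a_T<1$ — precisely the stated condition. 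Raising to the power $1/r'$ gives the claimed bound.

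For the uniform statement, use $X_\varepsilon\overset{\mathcal L}=a_\varepsilon X_0+\gamma_\varepsilon Z$ with $a_\varepsilon,\gamma_\varepsilon\le1$ and $Z$ standard Gaussian. Minkowski's inequality gives $\|X_\varepsilon\|_{L^{qr}}\le\|X_0\|_{L^{qr}}+\|Z\|_{L^{qr}}$, which is finite since $qr<\nu$ and is independent of $\varepsilon$. If $r'a_{T_q}\le\tfrac12$, then for $T\ge T_q$ the schedule monotonicity ($a_T$ decreasing) gives $\lambda_T\le\lambda_{T_q}$ and $K_T\le \gamma_T^d e^{\lambda_{T_q}M_2}\le e^{\lambda_{T_q}M_2}$. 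Moreover $2(r'-1)\lambda_T\le (r'-1)a_T/(1-a_T)\le \tfrac{1/2-a_T}{1-a_T}<1$ stays bounded away from $1$, so the bracket is uniformly bounded. The main technical point is the a.e.\ identification of $\Phi$ and the measurability needed for the change of measure. Both follow from Lemma~\ref{app:lem_revkernel} together with positivity of $\pi_T$; the rest is Hölder, Jensen and a Gaussian integral.
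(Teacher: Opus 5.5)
Your proposal is correct and follows essentially the paper's argument: a change of measure through the likelihood ratio $\pi_\infty^{(2)}/\pi_T$, H\"older with exponents $(r,r')$, Lemma~\ref{app:lem_density} plus a Gaussian integral for the second factor, and $X_\varepsilon=a_\varepsilon X_0+\gamma_\varepsilon Z$ for the first. The only cosmetic difference is that the paper applies H\"older directly to $\mathbb{E}[\|X_\varepsilon\|^q R_T(X_T)]$ on the joint law of $(X_\varepsilon,X_T)$, which makes your conditional Jensen step unnecessary.
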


\begin{proof}
Let $R_T=\pi_\infty^{(2)}/\pi_T$. The reverse-kernel identity and
H\"older give
\[
 \mathbb{E}_{\mu_T}\|Y\|^q
 =\mathbb{E}[\|X_\varepsilon\|^qR_T(X_T)]
 \le(\mathbb{E}\|X_\varepsilon\|^{qr})^{1/r}
       (\mathbb{E} R_T(X_T)^{r'})^{1/r'}.
\]
Lemma~\ref{app:lem_density} and Gaussian integration give
\[
 \mathbb{E} R_T(X_T)^{r'}
 =\int R_T^{r'-1}\pi_\infty^{(2)}
 \le K_T^{r'-1}
       \left(1-2(r'-1)\lambda_T\right)^{-d/2},
\]
provided $2(r'-1)\lambda_T<1$, equivalently $r'a_T<1$.
The representation $X_\varepsilon=a_\varepsilon X_0+\gamma_\varepsilon Z$
and $qr<\nu$ bound the first factor independently of $\varepsilon$.
Taking $r'a_{T_q}\le1/2$ bounds the second uniformly for $T\ge T_q$.
\end{proof}

\begin{proposition}[Constant-schedule Brownian $W_p$ convergence]\label{app:prop_gaussinit_const}
Assume \eqref{eq:app_HB}. In the constant-schedule VP model, for $p\in[1,\nu)$ and any $q\in(p,\nu)$ there is $K_{p,q}<\infty$, independent of the early-stopping time $0<\varepsilon<T$, such that for $T$ large enough $\mathcal{E}_1=W_p(\mu_T,\pi_\varepsilon)\le K_{p,q}\,e^{-\frac{q-p}{2pq}\beta_0T}$. \end{proposition}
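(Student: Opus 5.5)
The plan is the interpolation between total variation and $W_p$ announced before Lemma~\ref{app:lem_revwp}. Lemma~\ref{app:lem_tvquad} supplies the TV decay of order $a_T$, and Lemma~\ref{app:lem_unifmom} supplies uniform $q$-th moments for $\mu_T$. For $\pi_\varepsilon$, the representation $X_\varepsilon=a_\varepsilon X_0+\gamma_\varepsilon Z$ gives $\mathbb E\|X_\varepsilon\|^q\le 2^{q-1}(\mathbb E\|X_0\|^q+\mathbb E\|Z\|^q)$ uniformly in $\varepsilon$, which is finite since $q<\nu$.

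The key inequality is the standard truncation bound for two probability measures $\mu,\nu$ with finite $q$-th moments. Build a coupling of $(\mu,\nu)$ that keeps the common part $\mu\wedge\nu$ on the diagonal and couples the remaining mass of size $\delta:=\|\mu-\nu\|_{\mathrm{TV}}$ independently. Under this coupling, $X\neq Y$ only on an event of probability $\delta$. On that event, $\|X-Y\|^p\le 2^{p-1}(\|X\|^p+\|Y\|^p)$, and Hölder with exponents $q/p$ and $q/(q-p)$ then gives
\[
W_p(\mu,\nu)^p\le \mathbb E\bigl[\|X-Y\|^p\mathbf 1_{X\ne Y}\bigr]\le 2^{p-1}\bigl(\mathbb E\|X\|^q+\mathbb E\|Y\|^q\bigr)^{p/q}\,\delta^{1-p/q}.
\]
The moments entering here are those of the full marginals, since the restricted marginals are dominated by $\mu$ and $\nu$. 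Equivalently, one can cite the bound $W_p^p\le 2^{p-1}\int\|x\|^p\,|\mu-\nu|(\mathrm dx)$ followed by Hölder.

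Apply this with $\mu=\mu_T$, $\nu=\pi_\varepsilon$ and $\delta\le\sqrt{K_0/2}\,a_T$. This yields
\[
\mathcal E_1\le C\,a_T^{(q-p)/(pq)}=C\,e^{-\frac{q-p}{2pq}\beta_0T},
\]
because $a_T=e^{-\beta_0T/2}$. For the ``$T$ large enough'' threshold, take $T\ge T_q$ as in Lemma~\ref{app:lem_unifmom} and fix a choice of $r$ with $qr<\nu$. The constant $K_{p,q}$ collects $2^{(p-1)/p}$, the uniform moment bounds and $(K_0/2)^{(q-p)/(2pq)}$. None of these depends on $\varepsilon$: Lemma~\ref{app:lem_unifmom} is uniform in $\varepsilon$, and so is the forward moment bound.

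The main obstacle is the uniform moment control of $\mu_T$. Its tails are not a priori known, and the reverse dynamics is non-contractive in the Brownian case. This is exactly what Lemma~\ref{app:lem_unifmom} resolves through the likelihood ratio $\pi_\infty^{(2)}/\pi_T$, so here it only needs to be invoked. The remaining care point is the coupling and Hölder step, where the TV mass must be kept with exponent $1-p/q$; this is what produces the rate $\frac{q-p}{2pq}$.
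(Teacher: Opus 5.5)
Your proposal is correct and is essentially the paper's proof: the common-part coupling giving $W_p^p\le 2^{p-1}\int\|x\|^p\,\mathrm d|\mu-\nu|$, then H\"older, then Lemmas~\ref{app:lem_tvquad} and~\ref{app:lem_unifmom} for the TV decay and the uniform moments. There is one harmless slip in your displayed bound. Applying H\"older to the two terms separately yields $(\mathbb E\|X\|^q)^{p/q}+(\mathbb E\|Y\|^q)^{p/q}$, and this is only bounded by $2^{1-p/q}(\mathbb E\|X\|^q+\mathbb E\|Y\|^q)^{p/q}$ (two antipodal unit Dirac masses violate your bound), so the prefactor should be $2^{p-p/q}$ rather than $2^{p-1}$. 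This is what the paper's factor $(2\|\mu-\nu\|_{\mathrm{TV}})^{1-p/q}$ accounts for, and it only changes $K_{p,q}$.
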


\begin{proof}
Couple the common part of two probability measures identically and their
residual parts arbitrarily. Since
$\|x-y\|^p\le2^{p-1}(\|x\|^p+\|y\|^p)$, this gives
$W_p(\mu,\nu)^p\le2^{p-1}\int\|x\|^p\,\mathrm d|\mu-\nu|$.
H\"older with respect to $|\mu-\nu|\le\mu+\nu$ then gives
\[
 W_p(\mu,\nu)^p
 \le2^{p-1}
    \left(\int\|x\|^q\,\mathrm d(\mu+\nu)\right)^{p/q}
    \left(2\|\mu-\nu\|_{\mathrm{TV}}\right)^{1-p/q}.
\]
Apply this with $(\mu,\nu)=(\mu_T,\pi_\varepsilon)$.
Lemmas~\ref{app:lem_tvquad} and~\ref{app:lem_unifmom} give
$W_p\le K_{p,q}a_T^{1/p-1/q}
=K_{p,q}e^{-(q-p)\beta_0T/(2pq)}$, uniformly in $\varepsilon$.
\end{proof}

\begin{lemma}[Reduction by time change]\label{app:lem_timechange}
In the VP model the joint law of $(X_\varepsilon,X_T)$ --- hence the kernel $\mathsf
K_{T,\varepsilon}$, the marginals $\pi_\varepsilon$, $\pi_T$ and the generated law $\mu_T$
--- depends on $(\varepsilon,T)$ only through $\bigl(B(\varepsilon),B(T)\bigr)$.
Consequently the pair $(\mu_T,\pi_\varepsilon)$ for a schedule $\beta$ is that of the unit
schedule at horizon $B(T)$ and early-stopping time $B(\varepsilon)$.
\end{lemma}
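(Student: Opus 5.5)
The plan is to write the VP solution explicitly and show that every object named in the statement is a functional of the joint law of $(X_0,X_\varepsilon,X_T)$, and that this joint law depends on $(\varepsilon,T)$ only through $(B(\varepsilon),B(T))$. By variation of constants,
\[
X_t=a_tX_0+\int_0^t\frac{a_t}{a_s}\sqrt{\beta(s)}\,\mathrm{d}W_s,\qquad a_t=e^{-B(t)/2}.
\]
Gaussian increments are independent across disjoint intervals, so for $\varepsilon<T$ the pair is Markov. We have $X_\varepsilon=a_\varepsilon X_0+\gamma_\varepsilon Z_1$ and $X_T=(a_T/a_\varepsilon)X_\varepsilon+\sigma Z_2$, with $Z_1,Z_2$ standard Gaussian and independent of $X_0$ and of each other. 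The variance is computed exactly as in Lemma~\ref{app:lem_strong}:
\[
\sigma^2=\int_\varepsilon^T\beta(s)e^{-(B(T)-B(s))}\,\mathrm{d}s=1-e^{-(B(T)-B(\varepsilon))}.
\]
Since $\gamma_\varepsilon^2=1-e^{-B(\varepsilon)}$ as well, the joint law of $(X_0,X_\varepsilon,X_T)$ is a fixed function of $(B(\varepsilon),B(T))$ and $\pi_{\mathrm{data}}$.

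Next I compare with the unit schedule $\beta\equiv1$ at forward times $B(\varepsilon)$ and $B(T)$. Its coefficients are $a_{B(\varepsilon)}=e^{-B(\varepsilon)/2}$, $\gamma^2_{B(\varepsilon)}=1-e^{-B(\varepsilon)}$, $a_{B(T)}/a_{B(\varepsilon)}=e^{-(B(T)-B(\varepsilon))/2}$ and $\sigma^2=1-e^{-(B(T)-B(\varepsilon))}$, which coincide with the ones above. Hence the joint laws agree. An alternative route is the deterministic time change $s\mapsto B(s)$, under which $\int_0^t\sqrt{\beta}\,\mathrm{d}W$ becomes a Brownian motion at time $B(t)$ by Dambis--Dubins--Schwarz, a deterministic version. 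The explicit Gaussian computation is more direct, however, so I would use it.

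Finally I transfer the identity to the derived objects. The marginals $\pi_\varepsilon,\pi_T$ are projections of the joint law. The kernel $\mathsf K_{T,\varepsilon}(x,\cdot)=\mathrm{Law}(X_\varepsilon\mid X_T=x)$ is a disintegration of the law of $(X_\varepsilon,X_T)$. This disintegration is unique only $\pi_T$-a.e., so I need to choose a canonical version. I take the Bayes density version $\pi_T(x)^{-1}\pi_\varepsilon(y)\,p(x\mid y)$, whose Gaussian transition density $p$ depends only on $B(T)-B(\varepsilon)$. This version is the one identified with the reverse SDE kernel by Lemma~\ref{app:lem_revkernel}. Both sides of that identity are continuous in $x$, since by Lemma~\ref{app:lem_revwp} the reverse SDE has a Lipschitz drift and hence Feller continuity, and $\pi_T>0$. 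So the identity holds for every $x$, and it therefore holds against $\pi_\infty^{(2)}$ as well. Then $\mu_T=\pi_\infty^{(2)}\mathsf K_{T,\varepsilon}$ depends only on $(B(\varepsilon),B(T))$, and coincides with the unit-schedule object.

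The main, though mild, obstacle is this a.e.-versus-everywhere issue for the kernel. The fix is to insist on the explicit Bayes-formula version and to invoke Lemma~\ref{app:lem_revkernel} together with positivity of $\pi_T$.
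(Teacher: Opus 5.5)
Your proof is correct and follows the paper's argument. The explicit Ornstein--Uhlenbeck transitions, with mean multiplier $e^{-[B(t)-B(s)]/2}$ and variance $1-e^{-[B(t)-B(s)]}$, identify the joint law of $(X_\varepsilon,X_T)$ with that of the unit schedule at $(B(\varepsilon),B(T))$, and the kernel, the marginals and $\mu_T$ are then read off from that joint law, exactly as the paper does. Your Feller-continuity upgrade to everywhere equality of the kernels is valid but not needed: $\pi_T$ and $\pi_\infty^{(2)}$ both have everywhere positive densities, so they are mutually absolutely continuous, and $\pi_\infty^{(2)}\mathsf K_{T,\varepsilon}$ is therefore the same for every $\pi_T$-a.e.\ version of the disintegration, which is how Lemma~\ref{app:lem_revkernel} already handles it.
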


\begin{proof}
The Ornstein--Uhlenbeck transition from $s$ to $t$ has mean multiplier
$e^{-[B(t)-B(s)]/2}$ and variance $1-e^{-[B(t)-B(s)]}$.
Consequently the joint law of $(X_\varepsilon,X_T)$ is that of the unit
schedule at $(B(\varepsilon),B(T))$. The same is true of its conditional
kernel and the two propagated laws.
\end{proof}

\begin{proof}[\normalfont\bfseries Proof of Theorem~\ref{thm:gauss_init}]
Apply Proposition~\ref{app:prop_gaussinit_const} with unit schedule at
$(B(\varepsilon),B(T))$, using Lemma~\ref{app:lem_timechange}.
Its constants and threshold are uniform in the early-stopping time, and
$B(T)\to\infty$.
\end{proof}

\section{Proofs for Section~\ref{sec:tce}: Tail Regimes and Extreme Quantiles}\label{app:tce}

The two consequences of Assumption~\ref{ass:heavy} used in this appendix are
the density envelope \eqref{eq:a5prime} and $\overline G_{\pi_{\mathrm{data}}}(r) \sim
c'_\pi r^{-\nu}, \qquad c'_\pi=\frac{c_\pi\omega_{d-1}}{\nu}$, where locally
$c_\pi:=c_{\pi_{\mathrm{data}}}$, obtained by integrating the
uniform asymptotic relation \eqref{eq:a5} in polar
coordinates. The former controls the denoiser estimates, while the latter
identifies the radial tail index and constant. Both hold here with $\nu>1$.
Let
$\mu_{\tau}=\mathrm{Law}(\overleftarrow Y_{\tau})$ denote the output at
$\tau\in(0,T-\varepsilon]$ from the appropriate reference law, and write
$\mu_{\tau}^{(\alpha)}$ and $\mu_{\tau}^{(2)}$ when the stable and Brownian models must be distinguished. Both
models are treated under Assumption~\ref{ass:heavy}, with $\nu>1$ and
fixed $0<\varepsilon<T<\infty$.

\subsection{Proof of Proposition~\ref{prop:tce}: Tail-regime dichotomy}

\begin{proof}[\normalfont\bfseries Proof of Proposition~\ref{prop:tce}]
For (i), the survival representation and Karamata's theorem
\citep[Theorem~12(ii)]{KeveiRegularVariation} give
\[
 e_\mu(u)=\frac{\int_u^\infty\overline G_\mu(r)\,\mathrm{d} r}
                 {\overline G_\mu(u)}
 \sim\frac{u}{\vartheta-1}.
\]
For (ii), put $V=\|G+R\|$ and $Q=\|G\|$.
The bound $|V-Q|\le K$ implies
\[
 (V-u)^+\le2K\mathbf1_{\{V>u\}}+(Q-u-K)^+,
 \qquad \{Q>u+K\}\subseteq\{V>u\}.
\]
Consequently $e_\mu(u)\le2K+e_Q(u+K)$, without any independence
assumption. For $G\sim\mathcal{N}(0,\sigma^2I_d)$, the incomplete-gamma
asymptotic \citep[Eq.~8.11.2]{NISTDLMF} gives
$\mathbb{P}(Q>v)\sim c v^{d-2}e^{-v^2/(2\sigma^2)}$ and
$e_Q(v)\sim\sigma^2/v$.
Thus $e_\mu(u)=\mathcal O(1)$ and $\Gamma(\mu)=0$.
\end{proof}

\subsection{Brownian tail bounds for Proposition~\ref{prop:tail_separation}}

\begin{lemma}[Bounded smoothed Brownian score]\label{app:lem_flow}
Under Assumption~\ref{ass:heavy}, for every $0<\varepsilon<T<\infty$,
\[
 S_{\varepsilon,T}:=
 \sup_{s\in[\varepsilon,T],\,y\in\mathbb R^d}
 \|\nabla\log\pi_s^{(2)}(y)\|<\infty.
\]
\end{lemma}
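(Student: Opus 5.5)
By the Gaussian Tweedie identity of Lemma~\ref{app:lem_revwp}, $\nabla\log\pi_s^{(2)}(y)=(a_s m_s(y)-y)/\gamma_s^2$. Since $\gamma_s^2\ge\gamma_\varepsilon^2>0$ on $[\varepsilon,T]$, it suffices to bound $\|a_s m_s(y)-y\|$ uniformly in $(s,y)$. Writing $U=(a_sX_0-y)/\gamma_s$ as in the proof of Proposition~\ref{prop:brown_ct_divergence}, we have $a_sm_s(y)-y=\gamma_s\,\mathbb E[U\mid X_s=y]$. The task therefore reduces to showing that the conditional mean of $U$, whose conditional density is proportional to $\rho_y(u)\varphi(u)$, stays bounded uniformly over $s\in[\varepsilon,T]$ and $y\in\mathbb R^d$.

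I would split the argument into a compact region and a far region. On $\{\|y\|\le R_0\}\times[\varepsilon,T]$, joint continuity of $(s,y)\mapsto\nabla\log\pi_s^{(2)}(y)$ gives boundedness. This continuity holds because $\pi_s^{(2)}=f_s*\varphi_{\gamma_s}$ is smooth and positive, with numerator and denominator continuous by dominated convergence under Gaussian damping and $M_1<\infty$. On the far region, the proof of Proposition~\ref{prop:brown_ct_divergence} already shows $\int w(u)\rho_y(u)\varphi(u)\,\mathrm du\to\int w\varphi$ for $w=1,u$ as $\|y\|\to\infty$. This gives $\mathbb E[U\mid X_s=y]\to0$, and the task is to make that convergence uniform in $s\in[\varepsilon,T]$. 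The relevant quantities are $a_s\in[a_T,a_\varepsilon]$ and $\gamma_s\in[\gamma_\varepsilon,\gamma_T]$, both in compact subsets of $(0,1)$. I would then redo the two-region estimate with these uniform parameters.

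On $\|u\|\le\|y\|/(2\gamma_s)$, the envelope \eqref{eq:a5prime} gives $\rho_y(u)\le C$ uniformly, with $C$ depending only on $c_{\pi,\pm}$ and the ratio bound $\|y+\gamma u\|\ge\|y\|/2$. Pointwise, $\rho_y(u)\to1$ holds uniformly in $s$, because \eqref{eq:a5} is uniform in direction and $(y+\gamma_s u)/a_s$ differs from $y/a_s$ by a relative $o(1)$. On the complement, $\rho_y\le C\|y\|^{d+\nu}$ since $\pi$ is bounded and $\pi(y/a_s)\ge c(\|y\|/a_T)^{-(d+\nu)}$. The Gaussian weight $\int_{\|u\|>\|y\|/(2\gamma_T)}(1+\|u\|)\varphi$ decays like $e^{-c\|y\|^2}$, which kills this polynomial factor. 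Together these show that the numerator and denominator converge uniformly in $s$, that the denominator stays bounded away from zero, and hence that the conditional mean is bounded for $\|y\|\ge R_0$.

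The main obstacle is the uniformity in $s$ in the pointwise limit $\rho_y\to1$. To handle it, I would argue by contradiction along sequences $(s_n,y_n)$ with $\|y_n\|\to\infty$, extracting $s_n\to s_*$. Dominated convergence with the uniform envelope above then yields the same limit, since the uniform $o(1)$ in \eqref{eq:a5} does not depend on the scale $a_s$.
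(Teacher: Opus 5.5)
Your proposal is correct and follows essentially the same route as the paper. The paper also writes the score as $\gamma_s^{-1}\mathbb E[U\mid X_s=y]$ and gets its uniform vanishing at spatial infinity from the weighted Gaussian-convolution argument of Proposition~\ref{prop:brown_ct_divergence}. It then concludes on the remaining compact set by joint continuity; your version simply spells out the uniformity in $s\in[\varepsilon,T]$ (the envelope bounds, and the uniform pointwise limit $\rho_y\to1$), which the paper only asserts.
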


\begin{proof}
Set $U=(a_sX_0-y)/\gamma_s$. The weighted Gaussian-convolution argument
in the proof of Proposition~\ref{prop:brown_ct_divergence} gives
$\mathbb E[U\mid X_s=y]\to0$ as $\|y\|\to\infty$, uniformly for
$s\in[\varepsilon,T]$. Differentiating the Gaussian kernel yields
\[
 \nabla\log\pi_s^{(2)}(y)
 =\gamma_s^{-1}\mathbb E[U\mid X_s=y].
\]
Since $\gamma_s$ is bounded away from zero on this interval, the score
tends to zero uniformly at spatial infinity. Gaussian convolution and
positivity give joint continuity, hence boundedness on the remaining
compact set.
\end{proof}

\begin{lemma}[Brownian-plus-bounded decomposition]\label{app:lem_gaussplusbdd}
Under Assumption~\ref{ass:heavy}, fix $0<\varepsilon<T<\infty$ and
consider the Brownian backward SDE
$\,\mathrm{d}\overleftarrow Y_u=\bigl[\frac{\bar\beta(u)}{2}\overleftarrow
Y_u+\bar\beta(u)\nabla\log \pi_{T-u}^{(2)}(\overleftarrow
Y_u)\bigr]\,\mathrm{d}u+\sqrt{\bar\beta(u)}\,\,\mathrm{d} W_u$, $\overleftarrow
Y_0\sim\mathcal N(0,I_d)$ independently of $W$.
Then for each $\tau\in(0,T-\varepsilon]$,
\[
\overleftarrow Y_{\tau}=G_{\tau}+R_{\tau},\qquad G_{\tau}\sim\mathcal N\bigl(0,\sigma_{\tau}^2I_d\bigr),\quad \sigma_{\tau}^2=2e^{2H(\tau)}-1,\quad
\left\lVert R_{\tau}\right\rVert\le K_{\tau}:=2S_{\varepsilon,T}\bigl(e^{H(\tau)}-1\bigr)\ \text{a.s.},
\]
with $H(\tau):=\frac12\int_0^{\tau}\bar\beta(v)\,\mathrm{d} v$ (and $R_{\tau}$ adapted, not independent
of $G_{\tau}$).
\end{lemma}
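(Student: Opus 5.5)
Write $b(u,y):=\bar\beta(u)\nabla\log\pi^{(2)}_{T-u}(y)$ for the score part of the drift, and treat the linear part $\frac{\bar\beta(u)}{2}\overleftarrow Y_u$ by variation of constants. With $H(u)=\frac12\int_0^u\bar\beta$, the integrating factor $e^{-H(u)}$ gives
\[
\overleftarrow Y_\tau=e^{H(\tau)}\overleftarrow Y_0+\int_0^\tau e^{H(\tau)-H(u)}\sqrt{\bar\beta(u)}\,\mathrm dW_u+\int_0^\tau e^{H(\tau)-H(u)}\,b(u,\overleftarrow Y_u)\,\mathrm du .
\]
By Lemma~\ref{app:lem_revwp} this is a well-defined strong solution. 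We then set $G_\tau$ to be the sum of the first two terms and $R_\tau$ to be the last integral.

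For $G_\tau$, the initial value $\overleftarrow Y_0\sim\mathcal N(0,I_d)$ is independent of $W$, and the Wiener integral has a deterministic integrand. Hence $G_\tau$ is centered Gaussian and isotropic. Its variance per coordinate is
\[
e^{2H(\tau)}+\int_0^\tau e^{2H(\tau)-2H(u)}\bar\beta(u)\,\mathrm du .
\]
Substituting $\mathrm d(2H)=\bar\beta\,\mathrm du$, the integral equals $e^{2H(\tau)}\bigl(1-e^{-2H(\tau)}\bigr)=e^{2H(\tau)}-1$. So $\sigma_\tau^2=2e^{2H(\tau)}-1$, as claimed.

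For $R_\tau$, note that $u\in[0,\tau]$ corresponds to forward time $T-u\in[\varepsilon,T]$. Lemma~\ref{app:lem_flow} therefore gives $\|b(u,y)\|\le\bar\beta(u)S_{\varepsilon,T}$ uniformly in $y$, pathwise. It follows that
\[
\|R_\tau\|\le S_{\varepsilon,T}\int_0^\tau e^{H(\tau)-H(u)}\bar\beta(u)\,\mathrm du=2S_{\varepsilon,T}\bigl(e^{H(\tau)}-1\bigr)=K_\tau
\]
almost surely, again by the substitution $\mathrm dH=\bar\beta/2$. The process $R_\tau$ is adapted as a functional of the path, but it is not independent of $G_\tau$; the statement allows this.

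The only real input is the uniform bound on the smoothed Brownian score, which is Lemma~\ref{app:lem_flow}. Beyond that, I expect the main care to be bookkeeping. First, the time reparametrization must map reverse times $u\le\tau\le T-\varepsilon$ into $[\varepsilon,T]$, which is what lets Lemma~\ref{app:lem_flow} apply. Second, the variation-of-constants formula must be justified for this strong solution; this is immediate because the drift is Lipschitz in $y$ and bounded in $u$.
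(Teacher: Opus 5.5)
Your proposal is correct and follows the paper's proof essentially line by line. Variation of constants splits $\overleftarrow Y_\tau$ into a Gaussian part (scaled initial condition plus a Wiener integral with deterministic integrand, giving per-coordinate variance $e^{2H(\tau)}+(e^{2H(\tau)}-1)$) and the score integral, which Lemma~\ref{app:lem_flow} bounds by $S_{\varepsilon,T}\int_0^\tau e^{H(\tau)-H(u)}\bar\beta(u)\,\mathrm du=2S_{\varepsilon,T}(e^{H(\tau)}-1)$. Your explicit checks are details the paper leaves implicit: that reverse times $u\le\tau\le T-\varepsilon$ map into $[\varepsilon,T]$, and that a strong solution exists via Lemma~\ref{app:lem_revwp}.
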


\begin{proof}
Variation of constants gives
\[
 G_{\tau}=e^{H(\tau)}\overleftarrow Y_0+
     \int_0^{\tau}e^{H(\tau)-H(u)}\sqrt{\bar\beta(u)}\,\mathrm{d} W_u,
 \quad
 R_{\tau}=\int_0^{\tau}e^{H(\tau)-H(u)}\bar\beta(u)
       \nabla\log\pi_{T-u}^{(2)}(\overleftarrow Y_u)\,\mathrm{d} u.
\]
The two terms defining $G_{\tau}$ are independent Gaussians; their summed
per-coordinate variance is $e^{2H(\tau)}+(e^{2H(\tau)}-1)=\sigma_{\tau}^2$.
Lemma~\ref{app:lem_flow} gives
$\|R_{\tau}\|\le S_{\varepsilon,T}\int_0^{\tau}e^{H(\tau)-H(u)}
\bar\beta(u)\,\mathrm{d} u
=2S_{\varepsilon,T}(e^{H(\tau)}-1)=K_{\tau}$.
\end{proof}

\begin{corollary}[Brownian relative excess]\label{app:cor_gauss_tce}
Under Assumption~\ref{ass:heavy}, for every $\tau\in(0,T-\varepsilon]$ the Brownian
generated law satisfies $e_{\mu_{\tau}^{(2)}}(u)=\mathcal O(1)$ and $\Gamma(\mu^{(2)}_{\tau})=0$,
and its radial quantiles grow at most at square-root logarithmic order:
$q_\kappa(\mu^{(2)}_{\tau})=\mathcal
O_d\bigl(\sigma_{\tau}\sqrt{\log\frac{1}{1-\kappa}}\,\bigr)+K_{\tau}$ for $\kappa$ close to $1$.
\end{corollary}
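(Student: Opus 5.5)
The plan is to read everything off the Brownian-plus-bounded decomposition of Lemma~\ref{app:lem_gaussplusbdd}. Fix $\tau\in(0,T-\varepsilon]$ and write $\overleftarrow Y_\tau=G_\tau+R_\tau$, where $G_\tau\sim\mathcal N(0,\sigma_\tau^2I_d)$ is a nondegenerate isotropic Gaussian ($\sigma_\tau^2=2e^{2H(\tau)}-1\ge1$). The remainder satisfies $\|R_\tau\|\le K_\tau$ almost surely, with $K_\tau<\infty$ because Lemma~\ref{app:lem_flow} gives a finite smoothed-score bound $S_{\varepsilon,T}$.

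For the mean excess, this decomposition is exactly the hypothesis of Proposition~\ref{prop:tce}(ii), which needs no independence between $G$ and $R$. That proposition gives directly $e_{\mu_\tau^{(2)}}(u)=\mathcal O(1)$ and hence $\Gamma(\mu^{(2)}_\tau)=0$.

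For the quantiles, I would first use $\|\overleftarrow Y_\tau\|\le\|G_\tau\|+K_\tau$, which gives $\overline G_{\mu_\tau^{(2)}}(r)\le\mathbb P(\|G_\tau\|>r-K_\tau)$ for $r>K_\tau$. Next, I bound the Gaussian radial tail by $\mathbb P(\|G_\tau\|>v)\le c_d(1+(v/\sigma_\tau)^{d})e^{-v^2/(2\sigma_\tau^2)}$, or more simply by $2^{d/2}e^{-v^2/(4\sigma_\tau^2)}$ via the moment generating function of a $\chi^2_d$ variable. Choosing $v=2\sigma_\tau\sqrt{\log(2^{d/2}/(1-\kappa))}$ makes the right-hand side at most $1-\kappa$. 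Therefore $F_{\mu_\tau^{(2)}}(v+K_\tau)\ge\kappa$, and by the definition of $q_\kappa$ we get $q_\kappa\le v+K_\tau$. For $\kappa$ close to $1$ the term $\log 2^{d/2}$ is absorbed, which yields $q_\kappa(\mu^{(2)}_\tau)=\mathcal O_d\bigl(\sigma_\tau\sqrt{\log\frac1{1-\kappa}}\bigr)+K_\tau$.

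The argument is routine. The only point needing care is keeping the constant in the Gaussian radial tail explicit and dimension-dependent only, so that the $\mathcal O_d$ is uniform in $\kappa$.
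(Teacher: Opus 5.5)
Your proposal is correct and is essentially the paper's own proof. The paper applies Proposition~\ref{prop:tce}(ii) to the Gaussian-plus-bounded decomposition of Lemma~\ref{app:lem_gaussplusbdd}, then inverts the bound $\mathbb P(\|\overleftarrow Y_\tau\|>r)\le 2^{d/2}e^{-(r-K_\tau)^2/(4\sigma_\tau^2)}$, obtained from $\|\overleftarrow Y_\tau\|\le\|G_\tau\|+K_\tau$ and the $\chi^2_d$ moment generating function, exactly as you do; your explicit choice of $v$ simply writes out the inversion step that the paper leaves implicit.
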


\begin{proof}
Proposition~\ref{prop:tce}(ii) applies to
Lemma~\ref{app:lem_gaussplusbdd}. Moreover, for $r>K_{\tau}$,
\[
 \mathbb{P}(\|\overleftarrow Y_{\tau}\|>r)
 \le2^{d/2}e^{-(r-K_{\tau})^2/(4\sigma_{\tau}^2)},
\]
by $\|\overleftarrow Y_{\tau}\|\le\|G_{\tau}\|+K_{\tau}$ and
$\mathbb{E}e^{\|G_{\tau}\|^2/(4\sigma_{\tau}^2)}=2^{d/2}$.
Inverting this upper bound gives the stated quantile estimate.
\end{proof}

\subsection{Stable tail bounds for Proposition~\ref{prop:tail_separation}}

Throughout this subsection $\mu_{\tau}=\mathrm{Law}(\overleftarrow Y_{\tau})$ is the law
at a fixed generation time $\tau\in(0,T-\varepsilon]$. The normalization of
Appendix~\ref{app:compact_decay} gives
$\nu_\alpha(\|z\|>r)=C_{d,\alpha}r^{-\alpha}$ and
$\mathbb{P}(\|L_1^\alpha\|>r)\sim C_{d,\alpha}r^{-\alpha}$.

\paragraph{Scope.}
The fixed-time tail results in this subsection hold under Assumption~\ref{ass:heavy} for every $\nu>1$.
Corollary~\ref{cor:mixing} and the non-asymptotic generation bound require
$\nu\ge1+\alpha$, including equality. Proposition~\ref{prop:heavy_denoiser}
gives the corresponding global regularity regimes.

The spatial asymptotics of $m_s$ and its uniformly sublinear remainder were established in Proposition~\ref{app:prop_denoiser}; we use that result below.

The next lemma requires only uniform sublinearity of the drift remainder, not
independence from the stable leading term.

\begin{lemma}[Tail stability under a uniformly sublinear drift]\label{app:lem_sublinear}
Fix $\tau>0$ and consider
\begin{equation}\label{eq:app_abstract_sde}
\,\mathrm{d} \mathcal X_u=\bigl(\lambda(u)\mathcal X_u+\mathsf r(u,\mathcal X_u)\bigr)\,\mathrm{d} u+\sigma(u)\,\mathrm{d} L^\alpha_u,
\qquad \mathcal X_0\sim\mathrm{S}\alpha\mathrm{S}_d(1)\ \text{independent of }L^\alpha,
\end{equation}
where $\lambda,\sigma$ are bounded deterministic measurable scalar functions on $[0,\tau]$, and
$\mathsf r$ is jointly measurable, locally Lipschitz in $x$ uniformly on compact subsets of
$[0,\tau]\times\mathbb{R}^d$, of linear growth, and uniformly sublinear:
$\lim_{R\to\infty}\sup_{u\in[0,\tau]}\sup_{\left\lVert x\right\rVert\ge R}\left\lVert \mathsf
r(u,x)\right\rVert/\left\lVert x\right\rVert=0$.
Then \eqref{eq:app_abstract_sde} has a unique non-explosive strong solution and, with
$E(\tau,u):=\exp\Bigl(\int_u^{\tau}\lambda(v)\,\mathrm{d} v\Bigr), \qquad
\rho_{\tau}^\alpha:=E(\tau,0)^\alpha+\int_0^{\tau}E(\tau,u)^\alpha\left\lvert\sigma(u)\right\rvert^\alpha\,\mathrm{d}
u$, one has $\mathbb{P}(\left\lVert \mathcal X_{\tau}\right\rVert>R)\sim
C_{d,\alpha}\rho_{\tau}^\alpha R^{-\alpha}$ as $R\to\infty$.
\end{lemma}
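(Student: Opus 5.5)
Existence and uniqueness follow the pattern of Proposition~\ref{app:prop_popov_wellposedness}. We subtract the additive stochastic integral $\int_0^u\sigma\,\mathrm dL^\alpha$ and obtain a pathwise Carath\'eodory ODE. Its vector field $\lambda(u)x+\mathsf r(u,x)$ is locally Lipschitz and of linear growth, so Gr\"onwall gives a unique global strong solution with no explosion.

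For the tail, we split off the exactly stable leading term by variation of constants:
\[
\mathcal X_\tau=\mathcal W_\tau+\mathcal R_\tau,\qquad
\mathcal W_\tau:=E(\tau,0)\mathcal X_0+\int_0^\tau E(\tau,u)\sigma(u)\,\mathrm dL^\alpha_u,\qquad
\mathcal R_\tau:=\int_0^\tau E(\tau,u)\,\mathsf r(u,\mathcal X_u)\,\mathrm du .
\]
$\mathcal X_0$ is independent of $L^\alpha$, and the characteristic-function computation of Lemma~\ref{app:lem_strong} applies. Hence $\mathcal W_\tau\overset{\mathcal L}=\rho_\tau L_1^\alpha$, and so $\mathbb P(\|\mathcal W_\tau\|>R)\sim C_{d,\alpha}\rho_\tau^\alpha R^{-\alpha}$. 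It therefore suffices to show that $\mathcal R_\tau$ is negligible in the tail, even though it depends on $\mathcal W_\tau$.

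The key quantitative step is a pathwise bound on $\mathcal R_\tau$. Fix $\delta>0$. Uniform sublinearity together with linear growth gives $\|\mathsf r(u,x)\|\le \delta\|x\|+K_\delta$ for all $u\in[0,\tau]$ and all $x$. Write $\mathcal X_u=\mathcal W_u+\mathcal R_u$, with $\mathcal W_u$ the analogous leading term at time $u$, and apply Gr\"onwall on $[0,\tau]$. This gives
\[
\sup_{u\le\tau}\|\mathcal R_u\|\le c_\tau\bigl(\delta\,\mathcal S_\tau+K_\delta\bigr)e^{c_\tau\delta},\qquad \mathcal S_\tau:=\sup_{u\le\tau}\|\mathcal W_u\|,
\]
with $c_\tau$ depending only on $\sup|\lambda|$ and $\tau$. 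We then need $\mathbb P(\mathcal S_\tau>R)=O(R^{-\alpha})$. We would get this from $\|\mathcal W_u\|\le C(\|\mathcal X_0\|+\sup_{u\le\tau}\|\int_0^u E(u,v)\sigma\,\mathrm dL^\alpha\|)$. Since $E(u,v)=E(u,0)/E(v,0)$, the supremum reduces to the supremum of a stable stochastic integral $\int_0^u E(v,0)^{-1}\sigma(v)\,\mathrm dL^\alpha_v$ with a bounded deterministic integrand. Its maximal tail is $O(R^{-\alpha})$ by the classical maximal inequality for L\'evy processes with regularly varying jumps (big jumps dominate; cf.\ \citet{HultLindskog2007}). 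Alternatively, one can treat the jumps larger than $R/2$ directly, which occur with probability $O(R^{-\alpha})$, and bound the truncated martingale remainder by Doob's $L^2$ inequality.

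To conclude, we sandwich the tail. For $\theta\in(0,1)$,
\[
\mathbb P(\|\mathcal X_\tau\|>R)\le \mathbb P(\|\mathcal W_\tau\|>(1-\theta)R)+\mathbb P(\|\mathcal R_\tau\|>\theta R),
\]
and symmetrically $\mathbb P(\|\mathcal X_\tau\|>R)\ge \mathbb P(\|\mathcal W_\tau\|>(1+\theta)R)-\mathbb P(\|\mathcal R_\tau\|>\theta R)$. By the pathwise bound, $\mathbb P(\|\mathcal R_\tau\|>\theta R)\le\mathbb P(\mathcal S_\tau>\theta R/(2c_\tau\delta e^{c_\tau\delta}))$ once $R$ is large relative to $K_\delta$. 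This is at most $C(\delta/\theta)^\alpha R^{-\alpha}$. Multiplying by $R^\alpha$ and letting $R\to\infty$, then $\delta\downarrow0$, then $\theta\downarrow0$, yields $R^\alpha\,\mathbb P(\|\mathcal X_\tau\|>R)\to C_{d,\alpha}\rho_\tau^\alpha$.

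The main obstacle is the maximal tail estimate $\mathbb P(\mathcal S_\tau>R)=O(R^{-\alpha})$, uniformly in the relevant parameters. Without the supremum, the dependence between $\mathcal R_\tau$ and $\mathcal W_\tau$ cannot be handled through the Gr\"onwall feedback. The jump-truncation argument is the route I would try first.
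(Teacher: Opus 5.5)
Your proposal is correct and follows the paper's proof essentially step for step: the same leading term $\mathcal W$, the same Gr\"onwall bound $\sup_{u\le\tau}\|\mathcal X_u-\mathcal W_u\|\le A_\delta\sup_{u\le\tau}\|\mathcal W_u\|+B_\delta$, the same reduction of the maximal estimate to $M_u=\int_0^u E(v,0)^{-1}\sigma(v)\,\mathrm dL^\alpha_v$, and the same sandwich argument with $\delta\downarrow0$ and then $\theta\downarrow0$. The only difference is in how you close your ``main obstacle'': the paper observes that $M$ is an isotropic stable process run on the deterministic clock $\int_0^u|E(v,0)^{-1}\sigma(v)|^\alpha\,\mathrm dv$ and applies a standard stable maximal bound, whereas your jump-truncation plus Doob $L^2$ argument proves the same $O(R^{-\alpha})$ maximal tail directly.
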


\begin{proof}
Subtracting the additive noise integral reduces the equation to a
Carath\'eodory equation; local Lipschitz continuity and linear growth give
strong existence, uniqueness and non-explosion.
On the same probability space set
\[
 \mathcal W_u=E(u,0)\mathcal X_0+
          \int_0^u E(u,v)\sigma(v)\,\mathrm{d} L_v^\alpha.
\]
The stable-integral characteristic function gives
$\mathcal W_{\tau}\sim\mathrm S\alpha\mathrm S_d(\rho_{\tau})$.
The marginal tail constant agrees with that of the stable L\'evy measure
\citep[Lemma~2.1]{HultLindskog2007}, hence
\begin{equation}\label{eq:app_lincomp_tail}
 \mathbb{P}(\|\mathcal W_{\tau}\|>R)
 \sim C_{d,\alpha}\rho_{\tau}^\alpha R^{-\alpha}.
\end{equation}
Writing
$\mathcal W_u=E(u,0)(\mathcal X_0+M_u)$, the process
$M_u=\int_0^uE(v,0)^{-1}\sigma(v)\,\mathrm{d} L_v^\alpha$
has the law of a stable process at the deterministic clock
$\int_0^u|E(v,0)^{-1}\sigma(v)|^\alpha\,\mathrm{d} v$.
The stable maximal bound \citep[Theorem~13.1]{Schilling2016} therefore gives
\begin{equation}\label{eq:app_Hmax}
 H_{\tau}:=\limsup_{R\to\infty}R^\alpha
          \mathbb{P}\left(\sup_{u\le \tau}\|\mathcal W_u\|>R\right)<\infty.
\end{equation}
For any $\delta>0$, uniform sublinearity gives
$\|\mathsf r(u,x)\|\le\delta\|x\|+K_\delta$.
Variation of constants and Gr\"onwall imply
\[
 \sup_{u\le \tau}\|\mathcal X_u-\mathcal W_u\|
 \le A_\delta\sup_{u\le \tau}\|\mathcal W_u\|+B_\delta,
 \quad
 A_\delta=C_E\delta \tau e^{C_E\delta \tau},\quad
 B_\delta=C_EK_\delta \tau e^{C_E\delta \tau},
\]
where $C_E=\sup_{v\le u\le \tau}E(u,v)<\infty$.
For every $\zeta>0$, \eqref{eq:app_Hmax} bounds the limsup of
$R^\alpha\mathbb{P}(\|\mathcal X_{\tau}-\mathcal W_{\tau}\|>\zeta R)$ by
$H_{\tau}(A_\delta/\zeta)^\alpha$.
Letting $\delta\downarrow0$ yields
\begin{equation}\label{eq:app_negligible}
 R^\alpha\mathbb{P}(\|\mathcal X_{\tau}-\mathcal W_{\tau}\|>\zeta R)\longrightarrow0.
\end{equation}
Sandwich $\mathbb{P}(\|\mathcal X_{\tau}\|>R)$ between the tails of
$\|\mathcal W_{\tau}\|$ at $(1\pm\zeta)R$, with the error in
\eqref{eq:app_negligible}. Equation~\eqref{eq:app_lincomp_tail} and
$\zeta\downarrow0$ give the asserted constant.
\end{proof}

\begin{lemma}[Linear-plus-sublinear decomposition of the backward drift]\label{app:lem_stable_lead}
For $u\in[0,T-\varepsilon]$ set
\begin{equation*}
\ell_0(u):=\bar\beta(u)\Bigl(\tfrac1\alpha-\tfrac{1+\eta}{\alpha}\gamma(T-u)^{-\alpha}\Bigr)<0,
\qquad
c(u):=\tfrac{1+\eta}{\alpha}\,\bar\beta(u)\,a(T-u)\,\gamma(T-u)^{-\alpha}>0 ,
\end{equation*}
and, with $\varkappa_{\nu,s}$ as in \eqref{eq:app_varkappa},
\begin{equation}\label{eq:app_ellnu}
\ell_\nu(u):=\ell_0(u)+c(u)\,\varkappa_{\nu,T-u}
=\begin{cases}
\dfrac{\bar\beta(u)}{\alpha}, & 1<\nu<\alpha,\\[3mm]
\bar\beta(u)\Bigl[\dfrac1\alpha-\dfrac{1+\eta}{\alpha}\cdot\dfrac{k_{d,\alpha}}{c_\pi a(T-u)^{\alpha}+k_{d,\alpha}\gamma(T-u)^{\alpha}}\Bigr], & \nu=\alpha,\\[4mm]
\ell_0(u), & \nu>\alpha .
\end{cases}
\end{equation}
Then the backward drift of \eqref{eq:backward} splits as
\begin{equation}\label{eq:app_WA}
\bar b(u,x)=\ell_\nu(u)\,x+\widetilde{\mathsf r}(u,x),
\qquad
\widetilde{\mathsf r}(u,x):=c(u)\,\mathsf r_{T-u}(x),
\end{equation}
where $\ell_\nu$ and $c$ are bounded on $[0,T-\varepsilon]$ and $\widetilde{\mathsf r}$ is
locally Lipschitz, of linear growth, and uniformly sublinear in the sense of
\eqref{eq:app_sublin}. Setting
\begin{equation}\label{eq:app_rho}
E_\nu(\tau,u):=\exp\Bigl(\int_u^{\tau}\ell_\nu(v)\,\mathrm{d} v\Bigr),
\qquad
\rho_{\nu,\tau}^\alpha:=E_\nu(\tau,0)^\alpha+\eta\int_0^{\tau}E_\nu(\tau,u)^\alpha\,\bar\beta(u)\,\mathrm{d}u\ \in(0,\infty),
\end{equation}
the hypotheses of Lemma~\ref{app:lem_sublinear} hold with $\lambda=\ell_\nu$ and
$\sigma(u)=\bigl(\eta\bar\beta(u)\bigr)^{1/\alpha}$.
\end{lemma}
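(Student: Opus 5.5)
The proof is essentially algebraic, using the fractional Tweedie formula (Proposition~\ref{prop:tweedie}) together with the decomposition $m_s(x)=\varkappa_{\nu,s}x+\mathsf r_s(x)$ of Proposition~\ref{app:prop_denoiser}. We would proceed in three steps.

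\emph{Step 1: expand the drift.} Writing $t=T-u$, \eqref{eq:tweedie} gives $S_t^{(\alpha)}(x)=\bigl(a(t)m_t(x)-x\bigr)/(\alpha\gamma(t)^\alpha)$. Substituting into \eqref{eq:backward_drift} yields
\[
\bar b(u,x)=\bar\beta(u)\Bigl(\tfrac1\alpha-\tfrac{1+\eta}{\alpha}\gamma(t)^{-\alpha}\Bigr)x+\tfrac{1+\eta}{\alpha}\bar\beta(u)a(t)\gamma(t)^{-\alpha}m_t(x)=\ell_0(u)x+c(u)m_t(x).
\]
Inserting $m_t=\varkappa_{\nu,t}x+\mathsf r_t$ gives \eqref{eq:app_WA} with $\ell_\nu=\ell_0+c\varkappa_{\nu,T-u}$.

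\emph{Step 2: verify the closed forms in \eqref{eq:app_ellnu}.} The sign $\ell_0<0$ follows from $\gamma^\alpha\le1<1+\eta$. The three cases are then direct checks.
\begin{itemize}
\item For $\nu<\alpha$: $c\,a^{-1}=\tfrac{1+\eta}{\alpha}\bar\beta\gamma^{-\alpha}$, which cancels the second term of $\ell_0$ and leaves $\bar\beta/\alpha$.
\item For $\nu=\alpha$: put everything over $\gamma^\alpha(c_\pi a^\alpha+k_{d,\alpha}\gamma^\alpha)$. The bracket becomes $-\tfrac{1+\eta}{\alpha}\bigl[1-c_\pi a^\alpha/(c_\pi a^\alpha+k_{d,\alpha}\gamma^\alpha)\bigr]\gamma^{-\alpha}$, which simplifies to $-\tfrac{1+\eta}{\alpha}k_{d,\alpha}/(c_\pi a^\alpha+k_{d,\alpha}\gamma^\alpha)$.
\item For $\nu>\alpha$: $\varkappa_{\nu,t}=0$, so $\ell_\nu=\ell_0$.
\end{itemize}

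\emph{Step 3: boundedness, regularity, and the hypotheses of Lemma~\ref{app:lem_sublinear}.} For $u\in[0,T-\varepsilon]$ we have $t\in[\varepsilon,T]$, so $\gamma(t)\ge\gamma(\varepsilon)>0$, $a\le1$, and $\bar\beta\le\beta(T)$ by Assumption~\ref{ass:schedule}. Hence $\ell_0$, $c$ and $\varkappa_{\nu,t}$ are bounded (note $\varkappa\le a^{-1}\le a(T)^{-1}$), and all are measurable, indeed continuous.

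The properties of $\widetilde{\mathsf r}=c\,\mathsf r_{T-u}$ are inherited from Proposition~\ref{app:prop_denoiser} because $c$ is bounded:
\begin{itemize}
\item local Lipschitz continuity on compact cylinders holds, since $\mathsf r_s=m_s-\varkappa_{\nu,s}x$ with $m_s$ being $C^1$ and locally Lipschitz;
\item linear growth follows from the uniform linear growth of $m_s$;
\item uniform sublinearity is exactly \eqref{eq:app_sublin}, multiplied by $\sup c<\infty$.
\end{itemize}
Joint measurability of $(u,x)\mapsto\widetilde{\mathsf r}$ follows from the joint continuity in Lemma~\ref{app:lem_ct_measurable}.

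With $\sigma=(\eta\bar\beta)^{1/\alpha}$, which is bounded, we get $|\sigma|^\alpha=\eta\bar\beta$, so the scale $\rho_\tau$ of Lemma~\ref{app:lem_sublinear} coincides with \eqref{eq:app_rho}. Finiteness of $\rho$ follows from boundedness of $\ell_\nu$, and positivity from $E_\nu(\tau,0)>0$.

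The only mildly delicate point is Step 3: making sure the uniformity in $s\in[\varepsilon,T]$ from Proposition~\ref{app:prop_denoiser} transfers correctly under the time reversal $s=T-u$. This is immediate once we note that the map $u\mapsto T-u$ sends $[0,T-\varepsilon]$ onto $[\varepsilon,T]$.
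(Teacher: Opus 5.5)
Your proposal is correct and follows the same route as the paper. Like the paper, you use the fractional Tweedie identity to write $\bar b(u,x)=\ell_0(u)x+c(u)\,m_{T-u}(x)$, substitute the decomposition $m_s=\varkappa_{\nu,s}x+\mathsf r_s$ from Proposition~\ref{app:prop_denoiser}, and use boundedness of the coefficients (from $\gamma(T-u)\ge\gamma(\varepsilon)>0$) to transfer the regularity, linear growth and uniform sublinearity to $\widetilde{\mathsf r}$; you additionally spell out the three case computations, the measurability, and the identification of $\rho_{\nu,\tau}$, which the paper leaves implicit.
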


\begin{proof}
The fractional Tweedie identity gives
$\bar b(u,x)=\ell_0(u)x+c(u)m_{T-u}(x)$.
Substitution of
$m_s(x)=\varkappa_{\nu,s}x+\mathsf r_s(x)$ from
Proposition~\ref{app:prop_denoiser} gives
\eqref{eq:app_WA} and the three coefficients in \eqref{eq:app_ellnu}.
All coefficients are bounded on $[0,T-\varepsilon]$ since
$\gamma(T-u)\ge\gamma(\varepsilon)>0$;
$\ell_0<0$ follows from $\gamma^\alpha<1+\eta$.
Multiplication by the bounded factor $c$ preserves the local regularity,
linear growth and uniform sublinearity of the remainder, so
Lemma~\ref{app:lem_sublinear} applies with the announced coefficients.
\end{proof}

\begin{theorem}[Exact tail of the stable generated law]\label{app:thm_stable_rv}
Assume Assumption~\ref{ass:heavy} for some $\nu>1$. Then for every $\tau\in(0,T-\varepsilon]$,
\begin{equation}\label{eq:app_rv}
\overline G_{\mu_{\tau}}(r)\;\sim\;c_{\tau}\,r^{-\alpha}\quad(r\to\infty),
\qquad
c_{\tau}:=C_{d,\alpha}\,\rho_{\nu,\tau}^\alpha\in(0,\infty),
\end{equation}
with $\rho_{\nu,\tau}$ given by \eqref{eq:app_rho} (written $\rho_{\tau}$ in the main text). In
particular the generated law is power-law of index $\alpha$ \emph{for every target
index $\nu>1$}, and, by Proposition~\ref{prop:tce},
$\Gamma(\mu^{(\alpha)}_{\tau})=\frac1{\alpha-1}$.
\end{theorem}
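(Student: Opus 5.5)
The plan is to obtain the exact tail directly by combining Lemma~\ref{app:lem_stable_lead} with Lemma~\ref{app:lem_sublinear}. The stable generated law is $\mu_\tau=\mathrm{Law}(\overleftarrow Y_\tau)$, where $\overleftarrow Y$ solves \eqref{eq:backward} with $\overleftarrow Y_0\sim\pi_\infty^\alpha=\mathrm S\alpha\mathrm S_d(1)$, independent of $\overleftarrow L^\alpha$. By \eqref{eq:app_WA}, this equation is exactly of the abstract form \eqref{eq:app_abstract_sde} on $[0,\tau]\subseteq[0,T-\varepsilon]$. The identification is $\lambda=\ell_\nu$, $\mathsf r=\widetilde{\mathsf r}$, and $\sigma(u)=(\eta\bar\beta(u))^{1/\alpha}$. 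The initial law is the standard stable one, as Lemma~\ref{app:lem_sublinear} requires.

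First I check the hypotheses of Lemma~\ref{app:lem_sublinear}, restricting everything to $[0,\tau]$. The coefficients $\lambda$ and $\sigma$ are bounded and measurable, because $\bar\beta$ is continuous on the compact interval and $\gamma(T-u)\ge\gamma(\varepsilon)>0$. The remainder $\widetilde{\mathsf r}(u,x)=c(u)\mathsf r_{T-u}(x)$ inherits from Proposition~\ref{app:prop_denoiser} the following properties: joint measurability, local Lipschitz continuity on compact cylinders, linear growth, and uniform sublinearity \eqref{eq:app_sublin}. The uniformity holds because \eqref{eq:app_sublin} is uniform in $s=T-u\in[\varepsilon,T]$ and $c$ is bounded. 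Lemma~\ref{app:lem_stable_lead} already records all of this.

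Lemma~\ref{app:lem_sublinear} then gives $\mathbb P(\|\overleftarrow Y_\tau\|>r)\sim C_{d,\alpha}\rho^\alpha r^{-\alpha}$, where
\[
\rho^\alpha=E_\nu(\tau,0)^\alpha+\int_0^\tau E_\nu(\tau,u)^\alpha\,\eta\bar\beta(u)\,\mathrm du .
\]
This is exactly $\rho_{\nu,\tau}^\alpha$ from \eqref{eq:app_rho}, so $c_\tau=C_{d,\alpha}\rho_{\nu,\tau}^\alpha$. This constant is finite because $E_\nu$ is bounded on $[0,\tau]^2$. It is positive because $E_\nu(\tau,0)>0$ and $C_{d,\alpha}>0$; the integral term is nonnegative. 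Since $\overline G_{\mu_\tau}(r)$ is precisely this probability, \eqref{eq:app_rv} follows.

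Finally, $\alpha>1$ and $c_\tau>0$, so Proposition~\ref{prop:tce}(i) with $\vartheta=\alpha$ gives $\Gamma(\mu_\tau^{(\alpha)})=1/(\alpha-1)$. The argument never uses $\nu\ge1+\alpha$ or global Lipschitz regularity of the drift: in the regime $\alpha\le\nu<1+\alpha$ the quantity $C_t$ may be infinite, but well-posedness comes from local Lipschitz continuity plus linear growth, inside Lemma~\ref{app:lem_sublinear}. The main point to watch is that the process is really driven by the same independent stable noise with a genuinely $\mathrm S\alpha\mathrm S_d(1)$ initialization, and that $\ell_\nu$ is defined through the correct $\varkappa_{\nu,s}$ in each case. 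In particular, at $\nu=\alpha$ the linear part absorbs the nonvanishing slope $\varkappa_{\alpha,s}$, and the remainder is sublinear only after that slope has been subtracted.
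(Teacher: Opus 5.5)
Your proposal is correct and follows the paper's proof: Lemma~\ref{app:lem_stable_lead} provides the linear-plus-uniformly-sublinear splitting. That splitting verifies the hypotheses of Lemma~\ref{app:lem_sublinear} with $\lambda=\ell_\nu$, $\sigma=(\eta\bar\beta)^{1/\alpha}$ and the $\mathrm{S}\alpha\mathrm{S}_d(1)$ initialization, yielding $c_\tau=C_{d,\alpha}\rho_{\nu,\tau}^\alpha$, and Proposition~\ref{prop:tce}(i) with $\vartheta=\alpha>1$ then gives $\Gamma(\mu_\tau^{(\alpha)})=1/(\alpha-1)$. Your extra checks, namely finiteness and positivity of $c_\tau$ and well-posedness from local Lipschitz continuity plus linear growth when $C_t$ may be infinite, are left implicit in the paper but are consistent with it.
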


\begin{proof}
Lemma~\ref{app:lem_stable_lead} verifies every hypothesis of
Lemma~\ref{app:lem_sublinear}, with independent stable initialization.
The latter gives \eqref{eq:app_rv} with
$c_{\tau}=C_{d,\alpha}\rho_{\nu,\tau}^\alpha$ at each fixed generation time.
Proposition~\ref{prop:tce}(i) gives $\Gamma(\mu_{\tau}^{(\alpha)})=1/(\alpha-1)$.
\end{proof}

\subsection{Proofs of Lemma~\ref{lem:emr_asymp} and Proposition~\ref{prop:tail_separation}}

\begin{proof}[\normalfont\bfseries Proof of Lemma~\ref{lem:emr_asymp}]
Asymptotic inversion of regularly varying functions
\citep[Theorem~16]{KeveiRegularVariation} gives
$q_\kappa(\mu)\sim(c_\mu/(1-\kappa))^{1/\alpha}$.
This applies to generalized quantiles as well; either inverse convention
has the same equivalent. Apply the same result to $\pi$ and take the ratio
to obtain \eqref{eq:emr_asymp}.
\end{proof}

\begin{proof}[\normalfont\bfseries Proof of Proposition~\ref{prop:tail_separation}]
The tail regimes follow from Corollary~\ref{app:cor_gauss_tce} and
Theorem~\ref{app:thm_stable_rv}.
The Brownian quantile grows at most as
$\sqrt{\log(1/(1-\kappa))}$, whereas the target quantile grows as
$(c'_\pi/(1-\kappa))^{1/\nu}$, so their ratio vanishes.
For the stable output, apply Lemma~\ref{lem:emr_asymp} with tail constants
$(c_{\tau},c'_\pi)$ and indices $(\alpha,\nu)$.
\end{proof}
\section{Numerical Protocol for Figures~\ref{fig:c_t}--\ref{fig:app_numerical_tails}}\label{app:experiments}

All reported experiments are one-dimensional, synthetic, and training-free.
We set \(\beta(t)\equiv1\), and use \(\alpha=1.5\) and \(\eta=0.5\) for
the stable backward runs. Scores are not learned: the model-specific
continuous scores are evaluated numerically from exact forward characteristic
functions. We distinguish deterministic numerical approximations, such as
\(\widehat C_t(40)\), from empirical Monte-Carlo quantities, such as
\(\widehat W_p\) and the tail estimators.

\subsection{Common setup and score evaluation}\label{app:exp:score}

On the backward grid
\(\tau_k=kh\), \(t_k=T-\tau_k\), \(h=(T-\varepsilon)/N\), the stable EI
scheme is
\begin{equation}\label{eq:app_ei_num}
  Y_{k+1}=e^{-\lambda h}Y_k
  +\frac{1-e^{-\lambda h}}{\lambda}F(Y_k,t_k)
  +(1-e^{-\eta h})^{1/\alpha}Z_k,
  \qquad Z_k\sim\mathrm{S}\alpha\mathrm{S}_1(1),
\end{equation}
where
\begin{equation*}
  \lambda=\frac{\eta}{\alpha},
  \qquad
  F(x,t)=\frac{1+\eta}{\alpha}
  \bigl(x+\alpha S_t^{(\alpha)}(x)\bigr).
\end{equation*}
Thus the linear term and its stable stochastic convolution are integrated
exactly, while \(F\) is frozen at the left endpoint. In particular, the
noise scale in~\eqref{eq:app_ei_num} uses the same \(\eta\) as the drift. The
VP baseline uses the same splitting principle with
\(\lambda=1/2\), \(F_{\mathrm{VP}}(x,t)=x+s_t(x)\), and Gaussian noise of
variance \(1-e^{-h}\). No trajectory is clipped or projected.

For a Student-\(t_\nu\) target with characteristic function \(\phi_\nu\), the
forward characteristic functions are
\begin{align*}
  \phi_t^{(\alpha)}(u)
  &=\phi_\nu(e^{-t/\alpha}u)
    \exp\bigl(-(1-e^{-t})|u|^\alpha\bigr),
    \\
  \phi_t^{(\mathrm{VP})}(u)
  &=\phi_\nu(e^{-t/2}u)
    \exp\bigl(-\tfrac12(1-e^{-t})u^2\bigr).
\end{align*}
Fourier inversion of these exact expressions provides the density and the
numerator of the corresponding score. Separate space--time tables are built
for every pair of target and forward model and interpolated during
propagation. Stable tables use \(2^{19}\) Fourier modes and VP tables use
\(2^{18}\); the main tables have \(129\times8193\) points for the tail
experiment and \(257\times4097\) points for the initialization experiment.
Independent evaluations and tables with doubled Fourier and interpolation
resolutions are used as controls. Outside the tabulated spatial range, the
known score asymptotics are used rather than truncating the state.

\begin{table}[!htbp]
\centering\small
\caption{Parameters of the three reported experiments. Stable backward runs
use \(\eta=0.5\).}
\label{tab:specs}
\setlength{\tabcolsep}{5pt}
\renewcommand{\arraystretch}{1.18}
\begin{tabular}{>{\raggedright\arraybackslash}p{2.4cm}>{\raggedright\arraybackslash}p{3.2cm}>{\raggedright\arraybackslash}p{4.4cm}>{\raggedright\arraybackslash}p{5.2cm}}
\toprule
Experiment & Target and models & Time discretization
& Sampling and displayed quantity\\
\midrule
Denoiser modulus
& Student-\(t_4\); stable \(\alpha=1.5\), VP
& \(64\) times in \([0.05,4.5]\); no backward dynamics
& deterministic \(\widehat C_t(40)\), no particles or seeds\\
Initialization
& Student-\(t_4\); stable \((\alpha,\eta)=(1.5,0.5)\), VP
& \(\varepsilon=0.05\), \(h=0.0125\),
  \(T\in\{0.1,0.15,0.25,0.5,1,2\}\)
& \(12\) seeds, \(n=16384\);
  \(I_p\) for \(p\in\{1.1,1.4\}\)\\
Tail coverage
& Student-\(t_\nu\), \(\nu\in\{1.2,1.5,3\}\);
  stable \(\alpha=1.5\), VP only for \(\nu=1.5\)
& \(T=2\), \(\varepsilon=0.5\), \(N=80\)
& \(12\) seeds, \(n=262144\);
  Hill index, quantile ratio, and tail constant\\
\bottomrule
\end{tabular}

\end{table}

\subsection{Figure~\ref{fig:c_t}: Deterministic denoiser modulus}\label{app:exp:ct}

In this section, we describe how the Lipschitz modulus of the denoiser is computed in Figure \ref{fig:c_t}, for Brownian and stable dynamics in the case of a Student-\(t_4\) target. 
In the Student-\(t_4\) case,
\(\phi_4(u)=2u^2K_2(2|u|)\) for \(u\ne0\), with \(\phi_4(0)=1\).
Using the convention
\(\widehat f(u)=\int_{\mathbb R}f(x)e^{ixu}\,\mathrm{d}x\), the stable
fractional-score numerator has multiplier
\(-iu|u|^{\alpha-2}\), while the ordinary VP-score numerator has multiplier
\(-iu\). A further spatial derivative contributes another factor \(-iu\).
Fourier inversion therefore gives \(p_t,g_t,p_t',g_t'\), from which
\begin{equation*}
  S_t'(x)=\frac{g_t'(x)}{p_t(x)}
  -\frac{g_t(x)p_t'(x)}{p_t(x)^2}.
\end{equation*}
The denoiser derivative is then evaluated from the Tweedie identities,
\begin{equation*}
  m_{t,\alpha}'(x)
  =\frac{1+\alpha(1-e^{-t})S_{t,\alpha}'(x)}{e^{-t/\alpha}},
  \qquad
  m_{t,\mathrm{VP}}'(x)
  =\frac{1+(1-e^{-t})s_t'(x)}{e^{-t/2}}.
\end{equation*}
Figure~\ref{fig:c_t} reports
\begin{equation}\label{eq:app_ct_truncated}
  \widehat C_t(40)
  =\max_{x\in\mathcal G\cap[-40,40]}|\widehat m_t'(x)|.
\end{equation}
Here $\mathcal G$ is the evaluation grid and $\widehat m_t'$ is the
numerically computed derivative. This is a truncated, discretized maximum,
not a closed-form evaluation of
\(\sup_{\mathbb R}|m_t'|\).

The main Fourier grids use \((L,N_{\mathrm{FFT}})=(8192,2^{20})\) for the
stable model and \((1024,2^{19})\) for VP. At seven times,
\(N_{\mathrm{FFT}}\) is doubled at fixed \(L\), leaving
\eqref{eq:app_ct_truncated} stable at the displayed resolution. The maxima
also agree on the nested domains \(|x|\le8,16,24,40\), with every detected
maximizer lying in the innermost domain. Since the calculation does not involve
random samples, Figure~\ref{fig:c_t} shows no error bars.

\subsection{Figure~\ref{fig:exp2}: Initialization error}\label{app:exp:init}

For each \(T\), model, and seed, two independent EI populations are propagated
to \(t=\varepsilon\). One starts from the model-specific stationary reference,
\(\mathrm{S}\alpha\mathrm{S}_1(1)\) or \(\mathcal N(0,1)\); the other starts
from an exact draw of the forward marginal
\begin{equation*}
  X_T\overset{\mathcal L}=
  e^{-T/\alpha}X_0+(1-e^{-T})^{1/\alpha}Z
\end{equation*}
on the stable side, with the Gaussian analogue on the VP side. The two arms
use the same score table and step size but independent initial samples and
innovations.

For equal-size one-dimensional samples \(X,Y\), empirical Wasserstein distance
is computed exactly by sorting,
\begin{equation*}
  \widehat W_p(X,Y)
  =\left(\frac1n\sum_{i=1}^n
  |X_{(i)}-Y_{(i)}|^p\right)^{1/p}.
\end{equation*}
The displayed initialization proxy is
\begin{equation*}
  I_p(T)=\widehat W_p\bigl(
  Y_{\varepsilon,h}^{\mathrm{stat}},
  Y_{\varepsilon,h}^{\pi_T}\bigr),
  \qquad p\in\{1.1,1.4\}.
\end{equation*}
The exact--exact reference
\begin{equation*}
  F_p(T)=\widehat W_p(X_\varepsilon^{(1)},X_\varepsilon^{(2)})
\end{equation*}
uses two independent exact samples from the same \(\pi_\varepsilon\).
It measures finite-sample resolution, and is displayed on the final figure as an empirical sampling baseline to account for the quality of the statistical estimation in the experiment.

Figure~\ref{fig:exp2} shows medians over \(12\) seeds with
pointwise \(90\%\) percentile-bootstrap intervals for the median, obtained by
resampling seed labels jointly across horizons. At \(T=2\), temporal refinements \(N=156,312,624\) and a
doubled score table control the EI and interpolation errors. The choices
\(p=1.1\) and \(p=1.4\) remain below \(\alpha=1.5\).

\subsection{Figures~\ref{fig:numerical_tails} and~\ref{fig:app_numerical_tails}: Tail coverage}\label{app:exp:tails}

Each Student-\(t_\nu\) target determines its own forward marginal, score table,
reverse drift, and target quantiles. Stable runs use
\(\nu\in\{1.2,1.5,3\}\), while the VP baseline is included only at
\(\nu=1.5\). The main runs start from the model-specific stationary
reference and use \(T=2\), \(\varepsilon=0.5\), and \(N=80\), with
\(n=262144\) terminal samples per seed.
Figure~\ref{fig:app_numerical_tails} collects three complementary diagnostics:
the generated tail index, the scale of generated extreme quantiles relative
to the target, and the tail constant relative to the EI prediction.

\paragraph{Tail index.}
Let \(R_{(1)}\ge\cdots\ge R_{(n)}\) be the descending order statistics of
the generated radii \(R_i=|Y_i|\). Panel~(a) reports the reciprocal Hill
estimate \citep{Hill1975}
\begin{equation*}
  \widehat\alpha_k
  =\left(\frac1k\sum_{j=1}^k
  \log\frac{R_{(j)}}{R_{(k+1)}}\right)^{-1},
\end{equation*}
which estimates the power-law exponent from logarithmic excesses above
the threshold \(R_{(k+1)}\). It is evaluated at the pre-specified fractions
\begin{equation*}
  k/n\in\{0.003,0.004,0.006,0.008,0.012,0.018,0.03\},
\end{equation*}
with principal fraction \(k/n=0.006\).
Smaller fractions use fewer, more extreme observations; comparison across
fractions assesses sensitivity to the threshold.
The reference value is \(\alpha=1.5\) for all three stable outputs,
irrespective of the target index \(\nu\).

\paragraph{Extreme-quantile ratio.}
Writing \(s=1-\kappa\), panel~(b) reports
\begin{equation*}
  \widehat{\mathcal M}_\nu(s)
  =\frac{\widehat q_{1-s}(|Y|)}
  {q_{1-s}(|T_\nu|)},
\end{equation*}
where \(\widehat q\) is the empirical quantile of the generated radii
and \(T_\nu\) is a Student-\(t_\nu\) random variable.
The denominator is evaluated from the Student CDF \(F_\nu\) through
\(q_{1-s}(|T_\nu|)=F_\nu^{-1}(1-s/2)\), without a simulated target sample.
The horizontal axis decreases to the right, so the rightmost points
correspond to more extreme quantiles.

For the stable outputs, the predicted slope of
\(\log\mathcal M_\nu(s)\) against \(\log s\) is
\(1/\nu-1/\alpha\). Thus, as \(s\downarrow0\), the ratio should
decrease for \(\nu<\alpha\), approach a finite nonzero constant for
\(\nu=\alpha\), and increase for \(\nu>\alpha\).
At index match, this constant need not equal one: equality additionally
requires matching tail constants.
Slopes are fitted separately within each seed over the pre-specified
extreme-quantile window, then aggregated across seeds.

\paragraph{Tail constant.}
For a putative \(\alpha\)-tail, panel~(c) uses
\begin{equation*}
  \widehat c_k=(k/n)R_{(k+1)}^\alpha.
\end{equation*}
This follows from
\(\mathbb P(|Y|>r)\sim cr^{-\alpha}\), replacing the survival probability
at \(R_{(k+1)}\) by \(k/n\).
The displayed quantity is
\(\widehat c_k/c_{N,h}^{\mathrm{EI}}\), where
\(c_{N,h}^{\mathrm{EI}}\) is the tail constant predicted for the actual
finite-step EI chain.

More precisely, let \(G_k\) denote the deterministic part of one step
of~\eqref{eq:app_ei_num}, and set
\(r_k=\lim_{x\to\infty}G_k(x)/x\).
For these symmetric targets, the same asymptotic slope applies at both
tails. If \(c_k\) is the two-sided \(\alpha\)-tail constant before that
step, asymptotic linearity and addition of independent regularly varying
tails \citep[Corollary~7]{FossKorshunovZachary2009} give
\begin{equation*}
  c_{k+1}=|r_k|^\alpha c_k
  +C_{1,\alpha}(1-e^{-\eta h}),
  \qquad c_0=C_{1,\alpha},
\end{equation*}
where
\(C_{1,\alpha}=\lim_{r\to\infty}
r^\alpha\mathbb P(|Z|>r)\) for
\(Z\sim\mathrm{S}\alpha\mathrm{S}_1(1)\).
We set \(c_{N,h}^{\mathrm{EI}}=c_N\).
The horizontal level one in panel~(c) therefore tests agreement with the
discretized chain's tail constant; it does not assert equality with the
target tail constant or the continuous-time limit.
Panels~(a) and~(c) concern only the stable samplers.

\begin{figure}[!htbp]
  \centering
  \includegraphics[width=\linewidth]{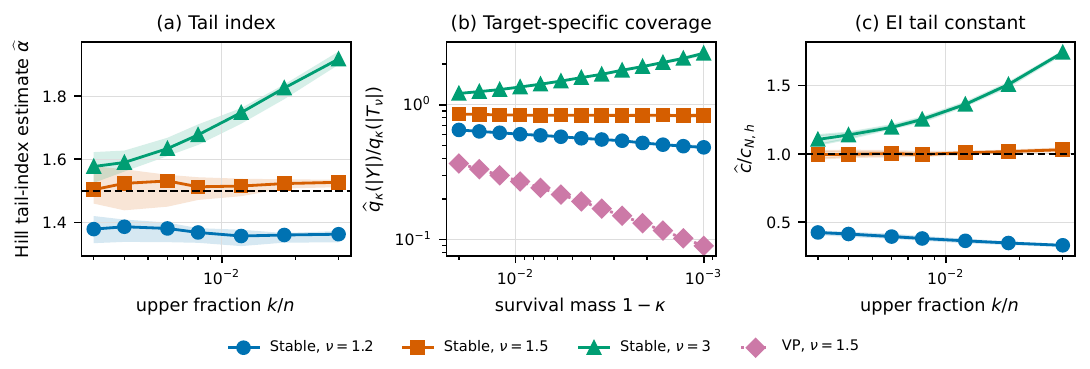}
  \caption{Tail diagnostics for Student-\(t_\nu\) targets with stable
  \((\alpha,\eta)=(1.5,0.5)\), \(T=2\), \(\varepsilon=0.5\), and \(N=80\).
  \textbf{(a)} Hill tail-index estimate; the dashed reference is
  \(\alpha=1.5\).
  \textbf{(b)} Generated-to-target extreme-quantile ratio; the Brownian
  VP baseline is included for \(\nu=1.5\).
  \textbf{(c)} Estimated tail constant normalized by the finite-step
  EI prediction; the dashed reference is one.
  Curves are medians over \(12\) independent seeds, and shaded bands
  show the inter-seed \(16\%\)--\(84\%\) range.}
  \label{fig:app_numerical_tails}
\end{figure}

\paragraph{Interpretation.}
At index match, \(\nu=\alpha=1.5\), the three diagnostics are mutually
consistent: the Hill estimate remains close to \(1.5\), the
extreme-quantile ratio is approximately constant, and the normalized
tail constant remains close to one. The quantile plateau is nevertheless
below one over the displayed range, so generated extreme quantiles remain
smaller than their target counterparts. This illustrates the distinction
between matching the tail index and reproducing the target's extreme
scales: agreement with the EI tail constant in panel~(c) does not remove
the residual target discrepancy measured in panel~(b).

Away from index match, panel~(b) exhibits the predicted directions:
the ratio decreases for \(\nu=1.2<\alpha\) and increases for
\(\nu=3>\alpha\) as the quantiles become more extreme.
The other diagnostics are less conclusive. For \(\nu=1.2\), the Hill
estimates and normalized tail constants remain below their references;
for \(\nu=3\), both drift upward as larger upper fractions are included.
Thus the displayed thresholds do not yield joint stabilization of the
tail-index and tail-constant estimates. These observations support the
qualitative quantile comparison but do not resolve all asymptotic
predictions at the available sample size.

For the Brownian baseline at \(\nu=1.5\), the quantile ratio decreases
markedly as \(s\downarrow0\), whereas the stable ratio remains approximately
flat. This illustrates increasingly undersized Brownian extreme quantiles,
consistently with Proposition~\ref{prop:tail_separation}.

\paragraph{Numerical and statistical controls.}
Temporal refinements \(N\in\{40,80,160\}\), independent high-resolution
score tables, exact-\(\pi_T\) initialization, and direct
\(\mathrm{S}\alpha\mathrm{S}\), Pareto, and Gaussian samples provide
numerical and estimator controls.
The displayed inter-seed bands describe variability across replications
and are not confidence intervals; seed-level bootstrap diagnostics use
\(10000\) resamples.
At index match, the displayed diagnostics are qualitatively consistent
with the predicted tail index, a constant quantile ratio, and the EI
tail constant. Away from index match, threshold drift prevents an
equally conclusive joint interpretation. Pickands estimates, generalized Pareto fits,
and mean-excess ratios are retained as supplementary checks.

\subsection{Implementation and numerical controls}\label{app:exp:controls}

Stable variables use the convention
\(\mathbb E[e^{iuZ}]=e^{-|u|^\alpha}\) and are generated with the
Chambers--Mallows--Stuck construction \citep{ChambersMallowsStuck1976}.
Random seeds are independent across
reported replications; common random numbers are used only for paired
resolution controls. Propagation costs \(O(nN)\), and each one-dimensional
Wasserstein evaluation costs \(O(n\log n)\) because of sorting.

The reported propagation runs were produced with
JAX~\citep{SoftwareBradbury2018JAX} in CPU \texttt{float64}; the
deterministic denoiser calculation uses NumPy~\citep{SoftwareHarris2020NumPy}
and SciPy~\citep{SoftwareVirtanen2020SciPy} FFTs in \texttt{float64}. The environment was Windows~11 with Python~3.12.10,
JAX/JAXLIB~0.7.2, NumPy~2.3.3, SciPy~1.16.2, and \(16\) logical CPU cores.
No GPU result is reported. Configurations, seeds, score-table hashes, raw
seed-level summaries, and figure-generation commands are included in the
supplementary code, available at
\url{https://github.com/ainsi333/stable-sbgm-experiments}.

The project licenses for the stated versions are
\href{https://github.com/jax-ml/jax/blob/jax-v0.7.2/LICENSE}{Apache-2.0 for JAX},
\href{https://github.com/numpy/numpy/blob/v2.3.3/LICENSE.txt}{BSD-3-Clause for NumPy},
and \href{https://github.com/scipy/scipy/blob/v1.16.2/LICENSE.txt}{BSD-3-Clause for SciPy}.
These identify the upstream project licenses; bundled third-party components
retain their own applicable licenses and notices.

\end{document}